\definecolor{DarkGreen}{rgb}{0.1,0.5,0.1}
\definecolor{DarkRed}{rgb}{0.5,0.1,0.1}
\definecolor{DarkBlue}{rgb}{0.1,0.1,0.5}
\newtheorem*{rep@theorem}{\rep@title}
\newcommand{\newreptheorem}[2]{	\newenvironment{rep#1}[1]{		\def\rep@title{#2 \ref{##1}}		\begin{rep@theorem}}		{\end{rep@theorem}}}
\newtheorem{theorem}{Theorem}[section]
\newtheorem{lemma}[theorem]{Lemma}
\newtheorem{claim}[theorem]{Claim}
\newtheorem{corollary}[theorem]{Corollary}
\newtheorem{proposition}[theorem]{Proposition}
\theoremstyle{definition}
\newtheorem{definition}[theorem]{Definition}
\numberwithin{equation}{section}
\newcommand{\IGNORE}[1]{}
\newcommand{\citep}{\cite}
\newcommand{\citet}{\cite}
\newcommand\E{\mathbb{E}}
\newcommand\wh{\widehat}
\newcommand\inner[1]{\langle #1 \rangle}
\newcommand\poly{\operatorname{poly}}
\newcommand\Proj{\operatorname{Proj}}
\newcommand{\Exp}{\mathop{\mathbb E}\displaylimits}
\newcommand{\allzero}{{\bf 0}}
\newcommand{\allones}{{\bf 1}}
\newcommand{\pmi}{\textup{PMI}}
\newcommand{\pmitensor}{\textup{PMIT}}
\newcommand{\norm}[1]{\lVert#1\rVert}
\newcommand{\Norm}[1]{\left\lVert#1\right\rVert}
\newcommand{\Id}{\textup{Id}}
\newcommand{\Qminushalf}{(Q^+)^{1/2}}
\newcommand{\minushalf}[1]{(#1^+)^{1/2}}
\newcommand{\spb}{\textup{spectrally bounded~}}
\newcommand{\ac}{\textup{approximately orthonormal~}}
\newcommand{\mper}{\,.}
\newcommand\R{\mathbb{R}}
\newcommand\N{\mathcal{N}}
\newcommand{\weight}{W}
\newcommand{\spr}{p}
\newcommand{\bs} {s}
\newcommand{\bd} {d}
\newcommand{\bw} {W}
\newcommand{\bfr} {F}
\newcommand{\svdU}{U} 
\newcommand{\svdV}{V} 
\newcommand{\subsvd}{S}
\newcommand{\prior}{\rho}
\newcommand{\fo}{A}
\newcommand{\so}{\Gamma}
\newcommand{\taylor}{P}
\newcommand{\samples}{N}
\let\epsilon=\varepsilon
\numberwithin{equation}{section}
\newcommand\MYcurrentlabel{xxx}
\newcommand{\MYstore}[2]{	\global\expandafter \def \csname MYMEMORY #1 \endcsname{#2}}
\newcommand{\MYload}[1]{	\csname MYMEMORY #1 \endcsname}
\newcommand{\MYnewlabel}[1]{	\renewcommand\MYcurrentlabel{#1}	\MYoldlabel{#1}}
\newcommand{\MYdummylabel}[1]{}
\newcommand{\torestate}[1]{		\let\MYoldlabel\label	\let\label\MYnewlabel	#1	\MYstore{\MYcurrentlabel}{#1}		\let\label\MYoldlabel}
\newcommand{\restatetheorem}[1]{		\let\MYoldlabel\label
	\let\label\MYdummylabel
	\begin{theorem*}[Restatement of \prettyref{#1}]
		\MYload{#1}
	\end{theorem*}
	\let\label\MYoldlabel
}
\newcommand{\restatelemma}[1]{		\let\MYoldlabel\label
	\let\label\MYdummylabel
	\begin{lemma*}[Restatement of \prettyref{#1}]
		\MYload{#1}
	\end{lemma*}
	\let\label\MYoldlabel
}
\newcommand{\restateprop}[1]{		\let\MYoldlabel\label
	\let\label\MYdummylabel
	\begin{proposition*}[Restatement of \prettyref{#1}]
		\MYload{#1}
	\end{proposition*}
	\let\label\MYoldlabel
}
\newcommand{\restatefact}[1]{		\let\MYoldlabel\label
	\let\label\MYdummylabel
	\begin{fact*}[Restatement of \prettyref{#1}]
		\MYload{#1}
	\end{fact*}
	\let\label\MYoldlabel
}
\newcommand{\restate}[1]{		\let\MYoldlabel\label
	\let\label\MYdummylabel
	\MYload{#1}
	\let\label\MYoldlabel
}
\def\shownotes{0}  \ifnum\shownotes=1
\newcommand{\authnote}[2]{$\ll$\textsf{\footnotesize #1 notes: #2}$\gg$}
\newcommand{\authnote}[2]{}
\title{Provable learning of Noisy-or Networks}
\author{Sanjeev Arora\thanks{Princeton University, Computer Science Department. \texttt{arora@cs.princeton.edu}} \and Rong Ge\thanks{Duke University, Computer Science Department. \texttt{rongge@cs.duke.edu}} \and Tengyu Ma \thanks{Princeton University, Computer Science Department. \texttt{tengyu@cs.princeton.edu}}\and Andrej Risteski\thanks{Princeton University, Computer Science Department. \texttt{risteski@princeton.edu}}}
\begin{document}
	\maketitle
\newcommand{\bern}{\textup{Ber}}
\begin{abstract}
	\normalsize
Many machine learning applications use latent variable models to explain structure in data, whereby  visible variables (= coordinates of the given datapoint) are explained as a probabilistic function of some hidden variables. Finding parameters with the maximum likelihood is NP-hard even in very simple settings. In recent years, provably efficient algorithms were nevertheless developed for models with linear structures: topic models, mixture models, hidden markov models, etc. These algorithms use matrix or tensor decomposition, and make some reasonable assumptions about the parameters of the underlying model. 

But matrix or tensor decomposition seems of  little use when the latent variable model has nonlinearities. The current paper shows how to make progress: tensor decomposition is applied for learning the single-layer {\em noisy or} network, which is a textbook example of a Bayes net, and used for example in the classic QMR-DT  software for diagnosing which disease(s) a patient may have by observing the symptoms he/she exhibits.

The technical novelty here, which should be useful in other settings in future, is analysis of tensor decomposition in presence of systematic error (i.e., where the noise/error is correlated with the signal, and doesn't decrease as number of samples goes to infinity). This requires rethinking all steps of tensor decomposition methods from the ground up. 

For simplicity our analysis is stated assuming that the network parameters were chosen from a probability distribution but the method seems more generally applicable. 
 
\end{abstract}

\pagenumbering{gobble}

\newpage

\pagenumbering{arabic}

\section{Introduction}
\label{s:intro}

Unsupervised learning is important and potentially very powerful because of the availability of  the huge amount of unlabeled data --- often several orders of magnitudes more than the labeled data in many domains. {\em Latent variable models},  a popular approach in unsupervised learning,  model the latent structures in data: the \textquotedblleft structure\textquotedblright corresponds to some hidden variables, which probabilistically determine the values of the visible coordinates in data. 
{\em Bayes nets}  model the dependency structure of latent and observable variables via a directed graph. 
Learning parameters of a latent variable model given data samples is often seen as a {\em canonical} definition of unsupervised learning. Unfortunately, finding parameters with the maximum likelihood is NP-hard even in very simple settings. 
However, in practice many of these models can be learnt reasonably well using algorithms without polynomial runtime guarantees, such as expectation-maximization algorithm, Markov chain Monte Carlo, and variational inference.  
Bridging this gap between theory and practice is an important research goal.

\sloppy Recently it has become possible to use matrix and tensor decomposition methods to design polynomial-time algorithms to learn some simple latent variable models such as topic models~\cite{arora2012learning,arora2013practical}, sparse coding models~\cite{arora2015simple,MSS16}, mixtures of Gaussians~\cite{hsu2013learning,ge2015learning}, hidden Markov models~\cite{mossel2005learning}, etc. These algorithms are guaranteed to work if the model parameters satisfy some  conditions, which are reasonably realistic. In fact, matrix and tensor decomposition are a natural tool to turn to since they appear to be a sweet spot for theory whereby non-convex NP-hard problems can be solved provably under relatively clean and interpretable assumptions. But the above-mentioned recent results suggest that such methods apply only to solving latent variable models that are linear: specifically, they need the marginal of the observed variables conditioned on the hidden variables to depend linearly on the hidden variables. But many settings seem to call for nonlinearity in the model.  For example,  Bayes nets in many domains involve highly nonlinear operations on the latent variables, and could even have multiple layers. The study of neural networks also runs into nonlinear models such as restricted Boltzmann machines (RBM)~\cite{smolensky1986information,hinton2006reducing}. Can matrix factorization (or related tensor factorization) ideas help for learning nonlinear models?

This paper takes a first step by developing methods to apply tensor factorization to learn possibly the simplest nonlinear model, a {\em single-layer noisy-or network.}
This is a direct graphical model with hidden variable $d\in \{0,1\}^m$, and observation node $s\in \{0,1\}^n$. The hidden variables $d_1,\dots, d_m$ are independent and assumed to have Bernoulli distributions. The conditional distribution $\Pr[s \vert d]$ is parameterized by a non-negative weight matrix $W\in \R^{n\times m}$. We use $W^{i}$ to denote the $i$-row of $W$. Conditioned on $d$, the observations $s_1,\dots, s_n$ are assume to be independent  with distribution
\begin{align}
\label{eqn:noisyor1}\Pr\left[s_i = 0\mid d\right] = \prod_{j=1}^m \exp(-W_{ij}d_j) =\exp(-\inner{W^i, d})\mper
\end{align}
We see that $1-\exp(-W_{ji}d_j)$ can be thought of as the probability that $d_j$ activates symptom $s_i$, and $s_i$ is activated if one of $d_j$'s activates it --- which explains the name of the model, noisy-or.  
It follows that the conditional distribution $s\mid d$ is 
\begin{equation*}
\Pr[\bs \mid \bd] = \prod_{i=1}^n \left(1-\exp(-\langle \bw^i, \bd \rangle)\right)^{s_i} \left(\exp(-\langle \bw^i, \bd \rangle)\right)^{1-s_i} \mper
\end{equation*}

One canonical use of this model is to model the relationship between diseases and symptoms, as in the classical human-constructed tool for medical diagnosis called {\em Quick Medical  Reference} (QMR-DT) by (Miller et al.\cite{miller1982internist}, Shwe et al. \cite{shwe1991empirical}) This textbook example (\cite{jordan1999introduction}) of a Bayes net captures relationships between $570$ binary disease variables (latent variables) and $4075$ observed binary symptom variables, with $45,470$ directed edges, and the $W_{ij}$'s are small integers.\footnote{We thank Randolph Miller and Vanderbilt University for providing the current version of this network for our research.} The name \textquotedblleft noisy-or \textquotedblright derives from the fact that the probability that the OR of $m$ independent binary variables 
$y_1, y_2, \ldots, y_m$  is $1$  is exactly $1 -\prod_j(\Pr[y_j =0])$.  
  Noisy-or models are implicitly using this expression; specifically, for the $i$-th symptom we are considering the OR of $m$  events where the $j$th event is \textquotedblleft Disease $j$ does not cause symptom $i$\textquotedblright\, and its probability  is  $\exp(-\bw_{ij} d_j)$. Treating these events as independent leads to expression~(\ref{eqn:noisyor1}).
 
 The parameters of the QMR-DT network were hand-estimated by consulting human experts, but it is an interesting research problem whether such networks can be created in an automated way using only samples of patient data (i.e., the $s$ vectors).  Previously there were no approaches for this that work even heuristically at the required problem size ($n = 4000$). (This learning problem should not be confused with the simpler problem of {\em infering} the latent variables given the visible ones, which is also hard but has seen more work, including reasonable heuristic methods~\cite{jordan1999introduction}).
Halpern et al.\cite{halpern2013unsupervised,jernite2013discovering} have designed some algorithms for this problem. However, their first paper \cite{halpern2013unsupervised} assumes the graph structure is given. The second paper \cite{jernite2013discovering} requires the Bayes network to be quartet-learnable, which is a strong assumption on the structure of the network.
Finally, the problem of finding a ``best-fit'' Bayesian network according to popular metrics\footnote{Researchers resort to these metrics as when the graph structure is unknown, multiple structures may have the same likelihood, so maximum likelihood is not appropriate.} has been shown to be NP-complete by \cite{chickering1996learning} even when all of the hidden variables are also observed.  

\paragraph{Our algorithm and analysis.} Our algorithm uses Taylor expansion ---on a certain correlation measure called PMI, whose use in this context is new---to convert the problematic exponential into an infinite sum, where we can ignore all but the first two terms.
This brings the problem into the realm of tensor decomposition but with several novel twists having to do with systematic error (see the overview in Section~\ref{s:overview}). Our recovery algorithm 
makes several assumptions about $W$, the matrix of connection weights, listed in Section~\ref{sec:main_results}. We verified some of these on the QMR-DT network, but the other assumptions are asymptotic in nature. Thus the cleanest description of our algorithm is in a clean average-case setting.
First, we assume all latent variables are iid Bernoulli $\bern(\rho)$ for some $\rho$, which should be thought of as small  (In the  QMR-DT application, $\rho$ is like $O(1/m)$.)
Next we assume that the ground truth $W\in \R^{n\times m}$ is created by nature by picking its entries in iid fashion using the following random process:
\[
\bw_{ij} = \begin{cases}
0, & \text{with probability } 1 - p \\
\widetilde{W}_{ij}, & \text{with probability } p 
\end{cases}
\] 
where $\widetilde{W}_{ij}$'s are upper bounded by  $\nu_u$
for some constant $\nu_u$ and are identically distributed according to a distribution $\mathcal{D}$ which satisfies that for some constant $\nu_l > 0$, 
\begin{align}\Exp_{\widetilde{W}_{ij}\sim \mathcal{D}}\left[\exp(-\widetilde{W}^2_{ij})\right] \le 1 -\nu_l \mper\label{eqn:tildeWij}
\end{align}
The condition~\eqref{eqn:tildeWij} intuitively requires that $\widetilde{W}_{ij}$ is bounded away from 0. We will assume that $p\le 1/3$ and $\nu_u =O(1), \nu_l = \Omega(1)$. (Again, these are realistic for QMR-DT setting).

\begin{theorem}[Informally stated]
	There exists a polynomial time algorithm (Algorithm~\ref{alg:main}) that, given polynomially many samples from the noisy OR network described in the previous paragraph, recovers the weight matrix $W$ with $\widetilde{O}(\rho\sqrt{pm})$ relative error in $\ell_2$-norm in each column. 
\end{theorem}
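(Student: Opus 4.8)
The plan is to reduce parameter recovery (up to the inherent permutation of hidden coordinates) to a low-rank tensor decomposition by Taylor-expanding a PMI-type statistic, and then to run a robust tensor decomposition that tolerates an error term which does \emph{not} vanish as the number of samples grows.

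\textbf{Step 1: an algebraic identity.} For observed coordinates $i,k$ (resp.\ $i,k,l$), let $\pmi_2(i,k)=\log\Pr[s_i=s_k=0]-\log\Pr[s_i=0]-\log\Pr[s_k=0]$ and let $\pmi_3(i,k,l)$ be the analogous third-order inclusion--exclusion combination of log-probabilities. Because $\Pr[\text{a block of }s\text{'s}=0\mid \bd]$ factorizes over the hidden coordinates $j$, each of these statistics is itself a sum over $j$, and expanding $\log(1-x)$ while writing $a_{ij}:=1-\exp(-\bw_{ij})$ for the activation probabilities yields
\begin{align*}
\pmi_2 \;=\; \rho\, AA^\t + E_2, \qquad \pmi_3 \;=\; \rho\sum_{j=1}^m a_j^{\otimes 3} + E_3,
\end{align*}
where $A=(a_{ij})\in\R^{n\times m}$, $a_j$ is its $j$-th column, and $E_2,E_3$ collect all contributions of order $\rho^2$ and higher. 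Crucially, the PMI centering forces $E_2,E_3$ to again be sums over a \emph{single} $j$ of rank-one (tensor) terms with essentially the same sparsity and magnitude profile as the signal, so $\|E_2\|_{\mathrm{op}}=O(\rho)\cdot\|\rho AA^\t\|_{\mathrm{op}}$ and $\flnorm{E_3}=O(\rho)\cdot\flnorm{\rho\sum_j a_j^{\otimes 3}}$: the systematic error is a $\Theta(\rho)$-fraction of the signal and is permanent.

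\textbf{Step 2: structure of $A$ and statistical error.} Under the random model for $\bw$, each $a_j$ is a bounded monotone reparametrization of a $p$-sparse vector with nonzero entries in $[\Omega(1),\,1-e^{-\nu_u}]$, so concentration gives $\|a_j\|_2^2=\Theta(pn)$, pairwise incoherence, and $A^\t A\approx pn\,\Id_m + p^2 n(\allones\allones^\t-\Id_m)$; hence the restricted condition number of $A$ is $\Theta(\sqrt{pm})$, which is the origin of the $\sqrt{pm}$ in the statement. On the statistical side, every probability appearing above is at least $\exp(-O(\rho pm))$, which in the regime of interest is $1/\poly(n)$, so polynomially many samples suffice to estimate $\wh{\pmi}_2$ and $\wh{\pmi}_3$ to inverse-polynomial additive error per entry, and by a union bound to small operator- and $\flnorm{\cdot}$-norm error overall.

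\textbf{Step 3: whitening, robust tensor power method, inversion.} Use $\wh{\pmi}_2$ to construct a whitening map $Q$ with $Q^\t(\rho AA^\t)Q\approx \Id_m$, so that the vectors $u_j:=\rho^{1/2}Q^\t a_j$ are approximately orthonormal and $\wh{\pmi}_3(Q,Q,Q)\approx \rho^{-1/2}\sum_j u_j^{\otimes 3}+(\text{error})$, the error being $\|Q\|^3$ times the sum of the systematic, sampling, and whitening-mismatch errors, hence an $O(\rho)$-fraction of the common component weight $\rho^{-1/2}$. Run the tensor power method to recover the $u_j$ up to a permutation (signs and scales are pinned down since $a_j\ge \allzero$, with $\rho$ itself estimated from the single-coordinate marginals), un-whiten to $\hat a_j:=\rho^{-1/2}(Q^+)^\t\hat u_j$, and set $\widehat\bw_{ij}:=-\log(1-\hat a_{ij})$; since $a_{ij}\le 1-e^{-\nu_u}$ is bounded away from $1$, this last map is bi-Lipschitz on the relevant range, so per-column $\ell_2$ error is preserved up to constants. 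Propagating the bounds — an $O(\rho)$ relative error on the whitened tensor, hence an $O(\rho)$ error on each $u_j$, amplified by the un-whitening operator norm $\approx(\rho p^2 nm)^{1/2}$ and rescaled by $\rho^{-1/2}$, giving absolute error $\widetilde{O}(\rho p\sqrt{nm})$, i.e.\ relative error $\widetilde{O}(\rho\sqrt{pm})$ after dividing by $\|a_j\|_2\approx\sqrt{pn}$ — yields the theorem.

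\textbf{The main obstacle.} The crux is the robust tensor decomposition of Step 3. Standard perturbation analyses of whitening and of the tensor power method assume the perturbation is adversarial but eventually negligible, whereas here it is a fixed $\Theta(\rho)$-fraction of the signal and is \emph{correlated} with it. One must therefore redo these analyses so that they (i) stay tight in the constant-relative-error regime, (ii) measure error in the $\{1\}\{2,3\}$ flattening norm so that \emph{per-column} rather than merely aggregate guarantees come out, and (iii) never invoke genericity of the perturbation. A secondary, pervasive burden is carrying the incoherence and conditioning facts about the random matrix $A$ through all of whitening, power iteration, and un-whitening with only polylogarithmic losses.
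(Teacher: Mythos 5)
Your high-level plan --- Taylor-expand the PMI second and third moments, whiten, decompose, and pay attention to the permanent systematic error --- matches the paper's. But two concrete gaps remain, and both are exactly where the paper's technical work lives.

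First, the identity $\pmi_2 = \rho AA^\top + E_2$ is \emph{false on the diagonal}: the product form of $\log\Pr[s_i=0\wedge s_k=0]$ used to factorize over hidden coordinates requires $i\neq k$, so $\pmi_{ii}$ has a completely different form, and the same holds for repeated-index entries of $\pmi_3$. You cannot just zero the diagonal, because the diagonal of $AA^\top$ carries an $\Omega(1)$-fraction of the spectrum. Your Step 3 whitening map ``constructed from $\wh{\pmi}_2$'' is therefore undefined as written. The paper circumvents this by partitioning $[n]$ into $S_a,S_b,S_c$, working with the off-diagonal block $\pmi_{S_a,S_b}$ etc.\ and the sub-tensor $\pmitensor_{S_a,S_b,S_c}$, and then recovering $AA^\top$ from the asymmetric product $\Sigma_{ab}[\Sigma_{bc}^\top]_m^+\Sigma_{ca}$ with a dedicated perturbation theorem (their Theorem 2.10). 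Nothing in your proposal plays this role.

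Second, your Step 1 error bound --- $\lVert E_2\rVert_{\mathrm{op}}=O(\rho)\lVert\rho AA^\top\rVert_{\mathrm{op}}$ and the analogous $\flnorm{\cdot}$ bound for $E_3$ --- is true but is the \emph{wrong} comparison for whitening. After applying $Q=(\rho AA^\top)^{-1/2}$, the error gets hit by $\lVert Q\rVert^2\approx 1/(\rho\sigma_m(AA^\top))$, so an operator-norm-relative-to-operator-norm bound degrades by the condition number $\kappa(AA^\top)=\Theta(mp)$. Following your own arithmetic in Step 3, the whitened error becomes $\Theta(\rho mp)$ rather than $\tilde O(\rho)$, which after un-whitening gives a final relative error larger than the claimed $\tilde O(\rho\sqrt{pm})$ by roughly $mp$. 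What actually saves the argument is that the systematic error $E$ is dominated by the signal \emph{direction by direction}, not merely in norm: the paper's ``spectral boundedness'' condition $E\preceq\tau(SS^\top+\sigma_m(SS^\top)\Id)$ with $\tau=\tilde O(1)$ is the right quantity, and Lemma 2.6 (that this $\tau$-boundedness survives whitening with only a $(2\tau)^{3/2}$ loss) is the lemma your Step 3 implicitly relies on without stating. Your ``main obstacle'' paragraph gestures at this, but the precise error measure you chose in Step 1 does not capture it and would not close the argument.
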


Recall that we mostly thought of the prior of the diseases $\rho$ as being on the order $O(1/m)$. This means that even if $p$ is on the order of $1$, our relative error bound equals to $O(1/\sqrt{m}) \ll 1$.

\section{Preliminaries and overview}\label{s:overview}
We denote by $\allzero$ the all-zeroes vector and  $\allones$ the all-ones vector. 
$A^{+}$ will denote the Moore-Penrose pseudo-inverse of a matrix $A$, and for symmetric matrices $A$, we use $A^{-1/2}$ as a shorthand for $(A^+)^{1/2}$. The least \textit{non-zero} singular value of matrix $A$ is denoted $\sigma_{\min}(A)$.
 
For matrices $A, B$ we define the Kronecker product $\otimes$ as $(A\otimes B)_{ijkl} = A_{ij}B_{kl}.$ A useful identity is that $(A \otimes  B)\cdot (C \otimes D) = (AC) \otimes (BD)$ whenever the matrix multiplications are defined. 
Moreover, $A_i$ will denote the i-th column of matrix $A$ and $A^i$ the i-th row of matrix $A$.
 
 We write $A\lesssim B$ if there exists a universal constant $c$ such that $A\le cB$ and we define $\gtrsim $ similarly.  

 The pointwise mutual information 
of two binary-valued random variables $x$ and $y$ is $PMI2(x, y) \triangleq \log \frac{\Exp[xy]}{\Exp[x]\Exp[y]}$. Note that it is positive
iff $\Exp[x, y] > \Exp[x]\Exp[y]$ and thus is used as a measure of correlation in many fields. This concept can be extended in more than one way to a triple of boolean variables $x, y, z$ and  we use 
\begin{equation}
PMI3(x, y, z) \triangleq \log \frac{\Exp[xy]\Exp[yz]\Exp[zx]}{\Exp[xyz]\Exp[x]\Exp[y]\Exp[z]}. \label{eqn:PMItriple}
\end{equation}
(We will sometimes shorten PMI3 and PMI2 to PMI when this causes no confusion.)
\subsection{The Algorithm in a nutshell}
\label{subsec:taylorexpansion}
Our algorithm is given polynomially many samples from the model (each sample describing which  symptoms are or are not present in a particular patient). It starts by computing the following matrix $n\times n$ $\pmi$ and  and 
$n\times n \times n$ tensor $\pmitensor$, which tabulate
the correlations among all pairs and triples of symptoms (specifically, the indicator random variable for the symptom being absent):
\begin{align}
\pmi_{ij} &\triangleq  PMI2(1-s_i, 1-s_j).   \\
\pmitensor_{i, j, k} & \triangleq  PMI3(1-s_i, 1-s_j, 1-s_j)
\end{align}

The next proposition makes the key observation that the above matrix and tensor are close to rank $m$, which we recall is much smaller than $n$. Here
and elsewhere, for a matrix or vector $A$ we use $\exp(A)$ for  the matrix or vector obtained by taking the \textit{entries-wise} exponential. For convenience, we define $F, G\in \R^{n\times m}$ as
\begin{align}
F & \triangleq 1- \exp(-W) \label{eqn:def-F}\\
G &\triangleq 1-\exp(-2W)\mper \label{eqn:def-G}
\end{align}
 
\begin{proposition}[Informally stated]\label{prop:lowrank} Let  $F_k, G_k$ denote the $k$th columns of the above $F, G$. 
	Then, \begin{align}
\pmi & \approx \rho \left(FF^{\top} + \rho GG^{\top}\right) \label{eqn:2}
\quad = \quad \rho \sum_{k=1}^m  F_kF_k^{\top} + \rho^2 \sum_{k=1}^m   G_k G_k^{\top}  \\
\\	\pmitensor &\approx  \rho \Big(\underbrace{\sum_{k=1}^m   \bfr_k \otimes \bfr_k \otimes \bfr_k}_{:=S} + \underbrace{\rho \sum_{k=1}^m   G_k\otimes G_k \otimes G_k}_{:=E}\Big)\mper \label{eqn:3}
	\end{align}
\end{proposition}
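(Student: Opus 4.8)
The plan is to reduce both approximations to a computation per latent coordinate and then expand the logarithm. Since the coordinates $\bd_\ell$ are independent and $\Pr[\bs_i = 0 \mid \bd] = \exp(-\langle \bw^i, \bd\rangle)$, for every $T \subseteq [n]$ we have the exact product formula
\begin{equation*}
\Pr\Big[\,\bs_i = 0 \text{ for all } i \in T\,\Big] \;=\; \prod_{\ell=1}^{m}\Big(1 - \rho\big(1 - \prod_{i \in T}(1 - F_{i\ell})\big)\Big)\,,
\end{equation*}
using $\exp(-\bw_{i\ell}) = 1 - F_{i\ell}$ and $\Exp[\exp(-a\bd_\ell)] = 1 - \rho(1-e^{-a})$. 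Write $H^{ij}_\ell := 1 - (1-F_{i\ell})(1-F_{j\ell})$ and $H^{ijk}_\ell := 1-(1-F_{i\ell})(1-F_{j\ell})(1-F_{k\ell})$. Substituting the product formula into the definitions of $PMI2$ and $PMI3$, every entry of $\pmi$ and of $\pmitensor$ becomes a sum over $\ell \in [m]$ of a fixed signed combination of terms $\log(1-\rho\,\xi)$, each $\xi$ being one of $F_{i\ell}, H^{ij}_\ell, H^{ijk}_\ell \in [0,1)$. Expanding with $\log(1-x) = -\sum_{t\ge1} x^t/t$ (legitimate as $\rho<1$) and interchanging the sums, each entry equals $\sum_{\ell}\sum_{t\ge1}\frac{\rho^t}{t}\,c_t(\ell)$, with $c_t(\ell)$ a universal polynomial in the $\ell$-indexed entries of $F$.

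The core of the argument is that the lowest-order $c_t$'s telescope into exactly the claimed shapes. For $\pmi$: $c_1(\ell) = F_{i\ell}+F_{j\ell}-H^{ij}_\ell = F_{i\ell}F_{j\ell}$, so the $t=1$ term is exactly $\rho(FF^\top)_{ij}$; using $G_{i\ell} = 1-(1-F_{i\ell})^2$ one verifies $(H^{ij}_\ell)^2 = (F_{i\ell}-F_{j\ell})^2 + G_{i\ell}G_{j\ell}$, hence $c_2(\ell) = 2F_{i\ell}F_{j\ell} - G_{i\ell}G_{j\ell}$ and the $t=2$ term is $\rho^2(FF^\top)_{ij} - (\rho^2/2)(GG^\top)_{ij}$ --- matching the rank-$m$ shape of \eqref{eqn:2} up to constants/signs that the informal ``$\approx$'' suppresses (the $\rho^2 FF^\top$ piece is a lower-order perturbation of the leading term). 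For $\pmitensor$: the triple in \eqref{eqn:PMItriple} is tailored so the pairwise pieces cancel, and a short expansion gives $c_1(\ell) = F_{i\ell}F_{j\ell}F_{k\ell}$ and $c_2(\ell) = 2F_{i\ell}F_{j\ell}F_{k\ell} - G_{i\ell}G_{j\ell}G_{k\ell}$, so the $t=1$ term is $\rho S_{ijk}$ and the $t=2$ term is $\rho^2 S_{ijk} - (\rho^2/2)\sum_\ell G_{i\ell}G_{j\ell}G_{k\ell}$, again matching \eqref{eqn:3}. Conveniently $c_t(\ell)=0$ whenever one of $\bw_{i\ell},\bw_{j\ell},\bw_{k\ell}$ vanishes (the $\ell$-th factors of the seven marginals in \eqref{eqn:PMItriple} then pair off and cancel; similarly for $\pmi$), so $\pmitensor$ and $\pmi$ inherit the support pattern of $S$ and $FF^\top$. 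For $t\ge3$ the exact form is not needed: from $|H^{ijk}_\ell - H^{ij}_\ell| \le F_{k\ell}$ and $|x^t-y^t|\le t|x-y|$ on $[0,1]$, a term-by-term comparison gives $|c_t(\ell)| \lesssim t\min(F_{i\ell},F_{j\ell},F_{k\ell})$, so $\sum_{t\ge3}\frac{\rho^t}{t}|c_t(\ell)| \lesssim \rho^3\min(F_{i\ell},F_{j\ell},F_{k\ell})$ and the full $t\ge3$ tail is $\lesssim \rho^3\sum_\ell\min(\cdots)$, of strictly lower order (for $\rho$ small) than both the $\rho S$ signal and the $\rho^2 E$ correction. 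Summing the three pieces gives \eqref{eqn:2} and \eqref{eqn:3}.

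I expect the delicate part to be quantitative rather than conceptual. Unlike sampling noise, the gap between $\pmi,\pmitensor$ and their rank-$m$ surrogates is \emph{systematic}: it is a property of the population PMI and does not shrink with more data, so the bounds on the $t=2$ and $t\ge3$ pieces have to be tight in $\rho$, in $\min_\ell(\cdots)$, and in the number of coordinates on which the relevant weights are simultaneously nonzero, in order to survive the downstream tensor decomposition; guaranteeing the error is small \emph{relative} to the rank-$m$ part is precisely where the assumptions on $W$ (e.g.\ that its nonzero entries are bounded away from $0$, \eqref{eqn:tildeWij}) enter. A second subtlety is the diagonal of $\pmi$: since $1-\bs_i \in \{0,1\}$, $\pmi_{ii} = -\log\Pr[\bs_i=0] = \rho\sum_\ell F_{i\ell} + O(\rho^2)$, which already disagrees with $\rho(FF^\top)_{ii} = \rho\sum_\ell F_{i\ell}^2$ at first order; this diagonal mismatch is itself part of the systematic error and must be zeroed out (or carried along explicitly) before the matrix is handed to the decomposition step.
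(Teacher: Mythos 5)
Your proof is correct and follows exactly the paper's approach (Lemma~\ref{claim:moments} and Proposition~\ref{p:firstspectral}): marginalize the latent coordinates to get a product formula for the joint zero-probabilities, Taylor-expand the logarithm of each factor in powers of $\rho$, and identify the $t=1,2$ coefficients with the $F_kF_k^\top$- and $G_kG_k^\top$-shaped terms while bounding the $t\ge3$ tail. The paper's bookkeeping is slightly different---it uses a binomial-coefficient identity to rewrite the full expansion as an exact series $\sum_{l\ge1}(-1)^{l+1}\frac{1}{l}\left(\frac{\rho}{1-\rho}\right)^l\sum_k(\tilde{\taylor}_l)_k\otimes(\tilde{\taylor}_l)_k\otimes(\tilde{\taylor}_l)_k$ with $\tilde{\taylor}_l=1-\exp(-lW)$, which gives closed-form third- and fourth-order terms later needed for Theorem~\ref{thm:deterministic}---but your direct computation of $c_1,c_2$ (including the identity $(H^{ij}_\ell)^2=(F_{i\ell}-F_{j\ell})^2+G_{i\ell}G_{j\ell}$) plus your tail estimate is a valid proof of this informal proposition, and your remarks about the error being systematic and the diagonal mismatching already at first order correctly anticipate Difficulties~1 and~3 from Section~\ref{s:overview}.
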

The  proposition is proved later (with precise statement) in  Section~\ref{sec:pmi} by computing the moments by marginalization and using Taylor expansion to approximate the log of the moments, and ignoring terms $\rho^3$ and smaller. (Recall that $\rho$ is the probability that a patient has a particular disease, which should be small, of the order of $O(1/n)$. The dependence of the final error upon $\rho$ appears in Section~\ref{sec:main_results}.) 
Since the tensor $\pmitensor$ can be estimated to arbitrary accuracy given enough samples, the natural idea to recover the model parameters $W$ is to use Tensor Decomposition. This is what our algorithm does as well, except the following difficulties have to be overcome.

\noindent{\em Difficulty 1:} Suppose in equation~\eqref{eqn:3} we view the first summand $S$, which is rank $m$ with components $F_k$'s
 as the {\em signal} term. In all previous polynomial-time algorithms for tensor decomposition, the tensor is  required to  have the form
 $\sum_{k=1}^m   \bfr_k \otimes \bfr_k \otimes \bfr_k + \text{\em noise}$.   To make our problem fit this template we could consider the second summand $E$ as the \textquotedblleft noise\textquotedblright, especially since it is multiplied by $\rho \ll 1$ which tends  to make $E$ have smaller norm than $S$.
 But this is naive and incorrect, since $E$ is a very structured matrix: it is more appropriate viewed as {\em systematic error}. (In particular this error doesn't go down in norm as the number of samples goes to infinity.)
 In order to do tensor decomposition in presence of such systematic error, we will  need both a delicate error analysis and a very robust tensor decomposition algorithm. These will be outlined in Section~\ref{subsec:bias}.

\noindent{\em Difficulty 2:} To get our problem into a form suitable for tensor decomposition requires a {\em whitening} step, which uses the robust estimate of the whitening matrix from the second moment matrix.
In this case, the whitening matrix has to be extracted out of the $\pmi$ matrix, which itself suffers from
a  systematic error. This also is not handled in previous works, and requires a delicate control of the error. See Section~\ref{subsec:whitening} for more discussion.

\noindent{\em Difficulty 3:} There is another source of inexactness in equation~\eqref{eqn:3}, namely the approximation is only true for those entries with distinct indices --- for example, the diagonal entry $\pmi_{ii}$ has completely different formula from that for $\pmi_{ij}$ when $i\neq j$. This  will complicate the algorithm, as described in Subsections~\ref{subsec:bias} and~\ref{subsec:whitening}.  

The next few Subsections sketch how to overcome these difficulties, and the details appear in the rest of the paper. 

\subsection{Recovering matrices in presence of systematic error}\label{subsec:relative}
In this Section we recall the classical method of approximately recovering a matrix given noisy estimates of its entries. We discuss how to adapt that method to our setting where the error in the estimates is {\em systematic} and does not go down even with many more samples. The next section sketches an extension of this method to tensor decomposition with systematic error. 
 
In the classical setting, there is an unknown $n\times n$ matrix $S$ of rank $m$ and we are given $S+E$ where $E$ is an error matrix.  The method to recover $S$ is to compute the best rank-$m$ approximation to $S+E$. 
The quality of this approximation was studied by Davis and Kahan~\cite{davis1970rotation} and Wedin~\cite{wedin1972perturbation}, and many subsequent authors. The
quality of the recovery depends upon the ratio $||E||/\sigma_m(S)$, where 
$\sigma_m(\cdot)$ denotes $m$-th largest singular value and $||\cdot ||$ denotes the spectral norm.  To make this familiar lemma fit our setting more exactly, we will phrase the problem as trying to recover a matrix $S$ 
given noisy estimate 
$SS^{\top} +E$. Now one can only recover $S$ up to rotation, and the following lemma describes the
error in the Davis-Kahan recovery. 
It also plays a key role in the error analysis of the usual algorithm for tensor decomposition. 

\begin{lemma} In the above setting, let $K, \widehat{K}$ the subspace of the top $m$ eigenvectors of $SS^{\top}$ and $SS^{\top} +E$. Let $\epsilon$ be such that $\|E\| \leq \epsilon \cdot \sigma_m(SS^{\top})$. Then
$	\Norm{\Id_{K} - \Id_{\widehat{K}}}\lesssim \epsilon $ where $\Id$ is the identity transformation on the subspace in question.
\end{lemma}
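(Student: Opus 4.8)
The plan is to recognize this as a direct consequence of the Davis--Kahan $\sin\Theta$ theorem once the relevant eigen-gap of $SS^{\top}$ has been pinned down, together with the standard dictionary between projector differences and principal angles. First I would dispose of the uninteresting regime: if $\epsilon \ge 1/4$, the conclusion is essentially vacuous, since for any two orthogonal projections onto $m$-dimensional subspaces one always has $\Norm{\Id_K - \Id_{\widehat K}} \le \sqrt{2} \le 4\sqrt{2}\,\epsilon$; so from now on assume $\epsilon < 1/4$.

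Write $M := SS^{\top}$, a PSD matrix of rank $m$, with eigenvalues $\lambda_1 \ge \cdots \ge \lambda_m = \sigma_m(SS^{\top}) > 0 = \lambda_{m+1} = \cdots = \lambda_n$. Thus $K = \range(S)$ is exactly the span of the top $m$ eigenvectors, and it is separated from its orthogonal complement by a spectral gap of $\sigma_m(SS^{\top})$ (the gap is ``one-sided'' because the rest of the spectrum is $0$, but that is all Davis--Kahan needs). By Weyl's inequality applied to $\widehat{M} := M + E$, every eigenvalue of $\widehat{M}$ lies within $\|E\| \le \epsilon\,\sigma_m(SS^{\top})$ of the corresponding eigenvalue of $M$. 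Hence the $m$-th largest eigenvalue of $\widehat{M}$ is at least $(1-\epsilon)\,\sigma_m(SS^{\top})$ while its $(m+1)$-st largest is at most $\epsilon\,\sigma_m(SS^{\top})$, so $\widehat{K}$ (the top-$m$ eigenspace of $\widehat{M}$) is well-defined and is separated from the remaining spectrum of $\widehat{M}$ by a gap of at least $(1-2\epsilon)\,\sigma_m(SS^{\top}) > 0$.

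Now I would apply the Davis--Kahan $\sin\Theta$ theorem to the spectral projectors onto $K$ and $\widehat{K}$: the sines of the principal angles between $K$ and $\widehat{K}$ satisfy $\|\sin\Theta(K,\widehat{K})\|_{op} \le \|E\| / \mathrm{gap} \le \epsilon\,\sigma_m(SS^{\top}) / \big((1-2\epsilon)\,\sigma_m(SS^{\top})\big) = \epsilon/(1-2\epsilon)$. Finally, invoke the classical identity that for two orthogonal projections onto subspaces of the same finite dimension, $\Norm{\Id_K - \Id_{\widehat{K}}} = \|\sin\Theta(K,\widehat{K})\|_{op}$ (immediate from the CS decomposition of the pair of subspaces). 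Combining the two displays gives $\Norm{\Id_K - \Id_{\widehat{K}}} \le \epsilon/(1-2\epsilon) \le 2\epsilon$ for $\epsilon < 1/4$, i.e. the claimed $\lesssim \epsilon$.

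The only step requiring genuine care — and the one most prone to off-by-constant slips — is the bookkeeping of the perturbed eigen-gap: one must verify that after adding $E$ the top-$m$ block and the bottom-$(n-m)$ block of the spectrum of $\widehat{M}$ really do stay separated, which is exactly why a hypothesis slightly stronger than $\epsilon < 1/2$ is forced (and why I first peeled off the large-$\epsilon$ case), since with no surviving gap neither is $\widehat{K}$ unambiguously defined nor does the $\sin\Theta$ theorem apply. Everything else is a black-box appeal to Weyl and Davis--Kahan.
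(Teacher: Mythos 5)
Your proof is correct and follows exactly the route the paper has in mind: the paper states this lemma as a recall of the classical Davis--Kahan / Wedin perturbation bounds and does not supply its own derivation, and you carry out precisely that argument (Weyl to control the perturbed spectrum and ensure $\widehat K$ is well-separated, $\sin\Theta$ theorem for the angle bound, and the CS-decomposition identity $\Norm{\Id_K - \Id_{\widehat K}} = \norm{\sin\Theta(K,\widehat K)}$). The only cosmetic remark is that one can sharpen the trivial bound $\Norm{\Id_K-\Id_{\widehat K}}\le\sqrt 2$ to $\le 1$ for equal-rank orthogonal projections, and the $\sin\Theta$ gap can be taken as $(1-\epsilon)\sigma_m(SS^\top)$ rather than $(1-2\epsilon)\sigma_m(SS^\top)$, but neither affects the $\lesssim\epsilon$ conclusion.
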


The Lemma thus treats $||E||/\sigma_m(SS^{\top})$ as the definition of {\em noise/signal} ratio.
Before we generalize the definition and the algorithm to handle systematic error it is good to get some intuition, from looking at (\ref{eqn:2}):  $\pmi \approx \rho(FF^T + \rho GG^T)$.
 Thinking of the first term as signal and the second as error, let's check how bad is the noise/signal ratio defined in Davis-Kahan. The \textquotedblleft signal\textquotedblright is $\sigma_m(FF^{\top})$, which is smaller than $n$ since the 
  trace of $FF^{\top}$ is of the order of $mn$ in our probabilistic model for the weight matrix. 
  The \textquotedblleft noise\textquotedblright\ is the norm of $\rho GG^{\top}$, which is large since the $G_k$'s are nonnegative vectors with entries of the order of 1, and therefore 
   the quadratic form $\inner{\frac{1}{\sqrt{n}}\allones, \rho GG^{\top}\frac{1}{\sqrt{n}}\allones}$ can be as large as $\rho \sum_k \inner{G_k, \frac{1}{\sqrt{n}}\allones}^2\approx \rho mn$. Thus the Davis-Kahan noise/signal ratio is $\rho m$, and so when $\rho m \ll 1$, it allows recovering the subspace of $F$ with error $O(\rho m)$. Note that this is a vacuous bound since $\rho$ needs to be at least $1/m$ so that the hidden variable $d$ contains 1 non-zero entry in average. 
We'll argue that this error is too pessimistic and  we can in fact drive the estimation error down to close to $\rho$.  
\begin{definition}[spectral boundedness]\label{def:spb}
	Let $n\ge m$.  Let $E \in \R^{n\times n}$ be a symmetric matrix and $S \in \R^{n\times m}$. Then, we say $E$ is $\tau$-\spb by $S$ if 
	\begin{align}
		E\preceq \tau (SS^{\top}+ \sigma_m(SS^{\top})\cdot\Id_{n}) \label{eqn:1}
	\end{align}
	The smallest such $\tau$ is the \textquotedblleft error/signal ratio\textquotedblright~for this 
	recovery problem. 
\end{definition}

 This definition differs from Davis-Kahan's because of the $\tau SS^{\top}$ term on the  right hand side of (\ref{eqn:1}).
  This allows, for any unit vector $x$, the quadratic form value 
 $x^T E x$ to be as large as $\tau (x^TSS^{\top} x  + \sigma_m(SS^{\top}))$.
Thus for example the $\allones$ vector no longer causes a large noise/signal ratio since both quadtratic forms $FF^{\top}$ and $GG^{\top}$ have large values on it. 

This new error/signal ratio is no larger than the Davis-Kahan ratio, but can potentially be much smaller.
Now we show how to do a better analysis of the Davis-Kahan recovery in terms of it.
The proof of this theorem appears in Section~\ref{sec:proofs_proof_overview}.

\begin{theorem}[matrix perturbation theorem for systematic error]\label{thm:relative-davis-kahan}
	Let $n\ge m$. Let $S\in \R^{n\times m}$ be  of full rank. Suppose positive semidefinite matrix $E\in \R^{n\times n}$ is $\epsilon$-\spb by $S\in \R^{n\times m}$ for $\epsilon \in (0,1)$.  Let $K, \widehat{K}$ the subspace of the top $m$ eigenvectors of $SS^{\top}$ and $SS^{\top} +E$. Then, 
	\begin{align}
		\Norm{\Id_{K} - \Id_{\widehat{K}}}\lesssim \epsilon \mper\nonumber
	\end{align}
\end{theorem}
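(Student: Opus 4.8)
The plan is to run a Davis--Kahan-type argument, but to replace the usual ``$\|E\|$ versus $\sigma_m$'' bookkeeping by one that respects the spectral-boundedness relation, so that no factor of the condition number $\sigma_1(S)/\sigma_m(S)$ ever leaks in (keeping that factor out is exactly the improvement the theorem promises over a black-box perturbation bound). Write $M=SS^\top$, $\widehat M=M+E$, let $\lambda_1\ge\cdots\ge\lambda_m>0$ be the eigenvalues of $M$ (so $\sigma_m(SS^\top)=\lambda_m$ and $K=\range(S)$ is the top-$m$ eigenspace of $M$), and block every matrix with respect to $K\oplus K^\perp$: let $D\succeq\lambda_m\Id$ be the $K$-block of $M$, and $E_{11},E_{12},E_{21},E_{22}$ the blocks of $E$.

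First I would extract three consequences of $\epsilon$-spectral-boundedness. (i) Restricted to $K^\perp$ it reads $E_{22}\preceq\epsilon\lambda_m\Id$; by Courant--Fischer this forces $\lambda_{m+1}(\widehat M)\le\epsilon\lambda_m$, while $E\succeq 0$ and Weyl give $\lambda_m(\widehat M)\ge\lambda_m$, so $\widehat M$ has a spectral gap $\ge(1-\epsilon)\lambda_m$ at coordinate $m$ and $\widehat K$ is well defined. (ii) From $\widehat M|_{\widehat K^\perp}\preceq\epsilon\lambda_m\Id$ together with $\Proj_{\widehat K^\perp}E\Proj_{\widehat K^\perp}\succeq0$ one gets $\Proj_{\widehat K^\perp}SS^\top\Proj_{\widehat K^\perp}\preceq\epsilon\lambda_m\Proj_{\widehat K^\perp}$, hence $\|S^\top\Proj_{\widehat K^\perp}\|\le\sqrt{\epsilon\lambda_m}$ and, via $\|S^+\|=\lambda_m^{-1/2}$, the crude bound $\|\Proj_{\widehat K^\perp}\Proj_K\|\le\sqrt\epsilon$; in particular $\widehat K$ is the graph of some linear $T\colon K\to K^\perp$ with $\|\Proj_K-\Proj_{\widehat K}\|=\sin\theta_{\max}\le\|T\|\lesssim\sqrt\epsilon$. (iii) Restricted to $K$ it gives $E_{11}\preceq\epsilon(D+\lambda_m\Id)\preceq 2\epsilon D$, so $\|E_{11}^{1/2}D^{-1/2}\|^2=\|D^{-1/2}E_{11}D^{-1/2}\|\le 2\epsilon$; and since $E\succeq0$ forces $E_{21}=E_{22}^{1/2}CE_{11}^{1/2}$ for a contraction $C$, we obtain the key estimate $\|E_{21}D^{-1}\|\le\|E_{22}^{1/2}\|\,\|E_{11}^{1/2}D^{-1/2}\|\,\|D^{-1/2}\|\le\sqrt2\,\epsilon$. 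This is the one place spectral-boundedness really pays off: the content of the cross block in any eigendirection of $M$ is matched to that eigenvalue, so dividing by $D$ does not blow up.

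The core of the proof is then the invariant-subspace (Riccati) equation for $T$. Setting $\widetilde D=D+E_{11}\succeq\lambda_m\Id$ and equating the two ways $\widehat M$ acts on the graph of $T$ yields $T\widetilde D-E_{22}T=E_{21}-TE_{12}T$, i.e.\ the fixed-point form $T=E_{21}\widetilde D^{-1}+E_{22}T\widetilde D^{-1}-TE_{12}T\widetilde D^{-1}$. Iterating the linear part and using $\|E_{22}\|\le\epsilon\lambda_m$, $\|\widetilde D^{-1}\|\le\lambda_m^{-1}$, and $\|E_{21}\widetilde D^{-1}\|\le\sqrt2\,\epsilon$ (the computation of (iii) with $\widetilde D$ in place of $D$, using $E_{11}\preceq 2\epsilon\widetilde D$) gives a telescoping bound $\|T\|\le\frac{1}{1-\epsilon}\big(\sqrt2\,\epsilon+\|TE_{12}T\widetilde D^{-1}\|\big)$, the $k$-th level contributing only a factor $\epsilon^{k}$. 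Finally I would absorb the quadratic self-interaction term: writing $E_{12}=E_{11}^{1/2}C^\top E_{22}^{1/2}$ and distributing the $\widetilde D$-weights through it the same way, that term obeys an estimate of size $\lesssim\epsilon\,\|T\|^2$ up to the same mechanism, and feeding in the crude bound $\|T\|\lesssim\sqrt\epsilon$ of (ii) makes it genuinely lower order; one is left with a self-improving inequality of the form $\|T\|\le c\,\epsilon\,(1+\|T\|^2)$, whose only admissible root for $\epsilon$ below an absolute constant is $\|T\|\lesssim\epsilon$. Hence $\|\Id_K-\Id_{\widehat K}\|\le\|T\|\lesssim\epsilon$.

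The main obstacle is precisely the step that keeps the condition number out. A black-box Davis--Kahan bound would insert $\|\Proj_{K^\perp}E\Proj_K\|=\|E_{21}\|$, which genuinely can be as large as $\epsilon\sqrt{\lambda_1\lambda_m}$, and dividing by the gap $\lambda_m$ then yields only $\epsilon\sqrt{\lambda_1/\lambda_m}$. The cure is to never estimate $E_{21}$ (or $E_{12}$) in isolation but always in the combination $E_{21}\widetilde D^{-1}$ (resp.\ with matching $\widetilde D$-weights), exploiting $E_{11}\preceq 2\epsilon\widetilde D$, and to tame the quadratic Riccati term by bootstrapping off the easy $\sqrt\epsilon$ estimate. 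Everything else---verifying the block identities, the contraction and convergence of the fixed-point iteration, and the dictionary between $\|T\|$, $\sin\theta_{\max}$ and $\|\Id_K-\Id_{\widehat K}\|$---is routine.
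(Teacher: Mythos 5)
Your overall route is genuinely different from the paper's. The paper proves the theorem by writing $E=RR^\top$, decomposing $R=SC+B$ with $\range(B)\perp\range(S)$, and then showing algebraically that $SS^\top+E=(SP+BC^\top P^{-1})(SP+BC^\top P^{-1})^\top+(\text{spectrally small})$ with $P=(\Id+CC^\top)^{1/2}$, after which two applications of Wedin's theorem finish. Your plan instead blocks everything against $K\oplus K^\perp$, writes the invariant subspace as a graph and attacks the resulting Riccati equation $T\widetilde D - E_{22}T = E_{21}-TE_{12}T$. Steps (i)--(iii), the PSD factorization $E_{21}=E_{22}^{1/2}CE_{11}^{1/2}$, and the linear (Sylvester) iteration with contraction rate $\|E_{22}\|\|\widetilde D^{-1}\|\le\epsilon$ are all sound, and the estimate $\|E_{21}\widetilde D^{-1}\|\lesssim\epsilon$ is exactly the place where spectral boundedness beats raw operator-norm bookkeeping.

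There is, however, a concrete gap in the treatment of the quadratic Riccati term. In $TE_{12}T\widetilde D^{-1}$, the factor $E_{11}^{1/2}$ in $E_{12}=E_{11}^{1/2}C^\top E_{22}^{1/2}$ sits between the two copies of $T$, \emph{not} adjacent to $\widetilde D^{-1}$, so the cancellation mechanism that worked for $E_{21}\widetilde D^{-1}$ is unavailable. Bounding the factors separately with only $\|T\|\lesssim\sqrt{\epsilon}$ and $\|E_{11}^{1/2}\|\le\sqrt{2\epsilon\lambda_1}$ yields
$\|TE_{12}T\widetilde D^{-1}\| \le \|T\|^2 \sqrt{2\epsilon\lambda_1}\cdot\sqrt{\epsilon\lambda_m}\cdot\lambda_m^{-1} = \sqrt{2}\,\epsilon\,\|T\|^2\sqrt{\lambda_1/\lambda_m}$,
so the self-improving inequality you actually get is $\|T\|\le c\epsilon\bigl(1+\sqrt{\lambda_1/\lambda_m}\,\|T\|^2\bigr)$, which still carries the condition number: for $\lambda_1/\lambda_m \gg \epsilon^{-2}$ this does not close. ``Distributing the $\widetilde D$-weights the same way'' cannot mean anything literal here because there is only one $\widetilde D^{-1}$ in the expression, and it is three factors away from $E_{11}^{1/2}$.

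The fix is available inside your own step (ii), but you do not extract it: the inequality $\|S^\top\Proj_{\widehat K^\perp}\|\le\sqrt{\epsilon\lambda_m}$ is strictly stronger than $\|\Proj_{\widehat K^\perp}\Proj_K\|\le\sqrt{\epsilon}$ and actually gives the \emph{weighted} a priori bound $\|TD^{1/2}\|\lesssim\sqrt{\epsilon\lambda_m}$ (writing the $K$-block of $S$ as $D^{1/2}$ times an isometry, $\|TD^{1/2}\|=\|TS_1\|\lesssim\|S^\top\Proj_{\widehat K^\perp}\|$ once $\|T\|\lesssim 1$). If you then run the Riccati argument in the variable $U=T\widetilde D^{1/2}$ rather than $T$ — multiply the equation by $\widetilde D^{-1/2}$ on the right so the quadratic term becomes $U(\widetilde D^{-1/2}E_{11}^{1/2})C^\top E_{22}^{1/2}U\widetilde D^{-1}$ — the factor $\widetilde D^{-1/2}E_{11}^{1/2}$ is now adjacent and of norm $\le\sqrt{2\epsilon}$, so the inequality becomes $(1-\epsilon)v\le\sqrt{2}\epsilon(1+v^2)$ for $v=\|U\|/\sqrt{\lambda_m}$, with the a priori $v\lesssim\sqrt{\epsilon}$ coming from the weighted bound; this is condition-number-free and closes to $\|T\|\lesssim\epsilon$. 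As written, however, the proposal feeds the weak bound $\|T\|\lesssim\sqrt{\epsilon}$ into the wrong quantity and the claimed estimate $\lesssim\epsilon\|T\|^2$ on the quadratic term does not follow.
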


Finally, we should consider what this new definition of noise/signal ratio achieves. The next proposition (whose proof appears in Section~\ref{s:random}) shows that that under the generative model for $W$ sketched earlier, $\tau = O(\log n)$.
Therefore, $\sqrt{\rho}G$ is $\tilde{O}(\rho)$-bounded by $F$, and the recovery error of the subspace of $F$ from $FF^{\top} +\rho GG^{\top}$ is $\tilde{O}(\rho)$ (instead of $O(\rho m)$ using Davis-Kahan). 
\begin{proposition}\label{prop:generative_noise_signal_ratio}
	Under the generative model for $W$, w.h.p, the matrix $G = 1-\exp(-2W)$ is $\tau$-\spb by $F=1-\exp(-W)$, with $\tau = \tilde{O}(1)$. 
\end{proposition}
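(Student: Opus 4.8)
The plan is to verify the inequality $\sqrt{\rho}\,GG^\top \preceq \tau(FF^\top + \sigma_m(FF^\top)\Id_n)$ for $\tau = \tilde O(1)$ — more precisely, to show $GG^\top \preceq \tau(FF^\top + \sigma_m(FF^\top)\Id_n)$, since the scalar $\sqrt\rho \le 1$ only helps. The spectral boundedness condition is equivalent to requiring that for every unit vector $x$, $\|G^\top x\|^2 \lesssim \tau(\|F^\top x\|^2 + \sigma_m(FF^\top))$. So I would fix a unit vector $x \in \R^n$ and bound $\|G^\top x\|^2 = \sum_k \langle G_k, x\rangle^2$ in terms of $\|F^\top x\|^2 = \sum_k \langle F_k, x\rangle^2$ plus a baseline term. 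The key structural fact is the entrywise relation between $F$ and $G$: since $G_{ij} = 1 - \exp(-2W_{ij}) = (1-\exp(-W_{ij}))(1+\exp(-W_{ij})) = F_{ij}(1+\exp(-W_{ij}))$, we have $F_{ij} \le G_{ij} \le 2F_{ij}$ pointwise. Thus $G = F + F\odot\exp(-W)$ where $\odot$ is the Hadamard product, and columnwise $G_k$ is a pointwise distortion of $F_k$ by a bounded factor in $[1,2]$.

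Naively, $F_{ij} \le G_{ij} \le 2F_{ij}$ gives $\langle G_k, x\rangle \le 2\langle F_k, |x|\rangle$ for nonnegative $x$, but that is not enough because of sign cancellations in $x$ and because $\langle F_k, |x|\rangle$ can be much larger than $\langle F_k, x\rangle$. The better route is to write $G_k = F_k + h_k$ where $h_k = F_k \odot \exp(-W_k)$, so $\|G^\top x\|^2 \le 2\|F^\top x\|^2 + 2\sum_k \langle h_k, x\rangle^2 = 2\|F^\top x\|^2 + 2\|H^\top x\|^2$ with $H$ the matrix of columns $h_k$. It then suffices to show $HH^\top \preceq \tilde O(1)\cdot(FF^\top + \sigma_m(FF^\top)\Id_n)$, and in fact I would aim for the cruder bound $HH^\top \preceq \|H H^\top\| \Id_n \preceq \tilde O(1)\,\sigma_m(FF^\top)\,\Id_n$, i.e. show that $\|H\|^2 = \|HH^\top\| \lesssim \tilde O(1)\,\sigma_m(FF^\top)$. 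This is now a statement about random matrices: under the generative model, $W$ has iid entries that are $0$ with probability $1-p$ and drawn from $\mathcal D$ (bounded by $\nu_u$) with probability $p$, so $F$ has iid entries that are $0$ with probability $1-p$ and bounded in $[1-\exp(-\nu_u), 1)$ with probability $p$ on the nonzero part; similarly $H$ has iid entries with the same support pattern and bounded magnitude.

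Both $\|H\|$ (upper bound) and $\sigma_m(FF^\top)$ (lower bound on the $m$-th singular value, equivalently $\sigma_{\min}(F)^2$) are controlled by standard non-asymptotic random matrix theory for matrices with iid bounded, mean-$\Theta(p)$ entries. Writing $F = p J + \tilde F$ where $J$ is all-ones and $\tilde F$ is centered, and similarly for $H$: the rank-one part $pJ$ contributes a spike of size $\Theta(p\sqrt{nm})$ to both, and the centered parts have spectral norm $O(\sqrt{p}(\sqrt n + \sqrt m)) = O(\sqrt{pn})$ since $n \ge m$, by matrix Bernstein / Vershynin-type bounds (using that entries are bounded, hence sub-Gaussian with variance proxy $O(p)$). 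For the lower bound on $\sigma_m(F)$ I would use that the centered matrix $\tilde F$ has $\sigma_m(\tilde F) = \Omega(\sqrt{pn})$ w.h.p. (singular values of tall iid matrices concentrate, e.g. via Litvak–Pajor–Rudelson–Tomczak-Jaegermann or a net argument exploiting $n \gg m$), and that $F$'s singular values other than the top one are close to those of $\tilde F$ up to the rank-one perturbation by Weyl. So $\sigma_m(FF^\top) = \Omega(pn)$ and $\|HH^\top\| = O(p^2 nm + pn) = O(pn)$ when $pm = O(1)$ — hmm, but $pm$ need not be $O(1)$. When $pm \gg 1$, the rank-one spike $p^2 nm$ dominates in $\|HH^\top\|$; however that spike lives in the direction $\approx \allones$, which is exactly the direction where $FF^\top$ also has a spike of size $\Theta(p^2 nm)$. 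So the correct decomposition is to handle the $\allones$-spike separately (bounding $h_k = \Theta(1)\cdot f_k$-style comparison on the rank-one parts directly, using $H\allones \le 2 F\allones$ entrywise $\Rightarrow \langle \allones, HH^\top\allones\rangle \le 4\langle\allones, FF^\top\allones\rangle \le 4\langle\allones, (FF^\top + \sigma_m \Id)\allones\rangle$) and absorb the centered residual of $H$ into the $\sigma_m(FF^\top)\Id_n$ term via $\|\tilde H\|^2 = O(pn) = O(\sigma_m(FF^\top))$.

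The main obstacle I anticipate is the lower bound $\sigma_m(FF^\top) = \Omega(pn)$, i.e. ensuring $F$ is well-conditioned on its row space after removing the mean spike: standard results give this cleanly when the entry distribution is genuinely sub-Gaussian with variance $\asymp p$ and $n \gtrsim m\,\mathrm{polylog}$, but care is needed because the entries are a mixture (mass $1-p$ at zero) rather than, say, Gaussian, so I would invoke a version of the smallest-singular-value bound for iid subgaussian tall matrices (which only needs bounded fourth moment / subgaussian tails, both satisfied here since entries are bounded) and combine with a Weyl/Cauchy interlacing argument to strip off the $pJ$ rank-one term. The condition~\eqref{eqn:tildeWij} on $\mathcal D$ enters here to guarantee the nonzero entries of $F$ are bounded away from $0$ (hence genuine variance $\Omega(p)$ in each coordinate), which is what makes $\sigma_m$ large; without it $F$ could degenerate. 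Once these two random-matrix estimates are in hand, assembling them into the spectral-boundedness inequality with $\tau = \tilde O(1)$ (the polylog coming from high-probability bounds on the spectral norm of the centered parts, which need a union bound / $\epsilon$-net at scale $\log n$) is routine.
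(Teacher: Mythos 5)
Your high-level plan --- reduce to bounding $HH^{\top}$ against $FF^{\top}+\sigma_m(FF^{\top})\Id$, split off the all-ones spike, and control the remainder by a random-matrix spectral-norm bound --- is the same structural idea the paper uses in Section~\ref{s:random} (Lemmas~\ref{l:boundevals},~\ref{l:spectralbound_taylor},~\ref{l:spectralupper},~\ref{l:spectrallower}), so this is essentially the right route, not a different one. However, the assembly step you call ``routine'' has a genuine gap. The two estimates you propose to establish are both \emph{upper} bounds: (a) $\Id_{\allones}HH^{\top}\Id_{\allones}\preceq C\,\Id_{\allones}FF^{\top}\Id_{\allones}$ (from $0\le h_{ij}\le F_{ij}$) and (b) $\|\Id_{-\allones}H\|^2 \lesssim np\log n \lesssim \sigma_m(FF^{\top})$. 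Putting them together gives $HH^{\top}\preceq C'\bigl(\Id_{\allones}FF^{\top}\Id_{\allones} + \sigma_m(FF^{\top})\Id\bigr)$, but you still need $\Id_{\allones}FF^{\top}\Id_{\allones}\preceq \tau\bigl(FF^{\top}+\sigma_m(FF^{\top})\Id\bigr)$, and that is \emph{not} automatic --- for a PSD matrix $P$ and projector $\Pi$ it is false in general that $\Pi P\Pi\preceq P$. Concretely, there exist unit vectors $x$ in $\ker(F^{\top})$ with a nonzero component $\alpha$ along $\allones$; for those, $x^{\top}FF^{\top}x=0$ while $\alpha^2\langle e,FF^{\top}e\rangle\asymp \alpha^2 mnp^2$, and you must argue $\alpha^2 mnp^2\lesssim \sigma_m(FF^{\top})\asymp np\log n$, i.e.\ that any kernel vector has $\alpha\lesssim\sqrt{\log n/(pm)}$. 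Your cited lower bound $\sigma_m(FF^{\top})=\Omega(pn)$ says nothing about the \emph{direction} of the kernel and hence cannot supply this. What is actually needed is a matching \emph{lower} bound on $FF^{\top}$ showing that the rank-one matrix $mp^2 J$ is dominated by $FF^{\top}+O(np\log n)\Id$; this is the paper's Lemma~\ref{l:spectrallower}, proved by a second Chernoff/Bernstein argument showing $\langle \allones,FF^{\top}\allones\rangle=\Omega(mn^2p^2)$ and controlling $\Id_{-\allones}FF^{\top}\Id_{-\allones}$, using condition~\eqref{eqn:tildeWij} to keep the nonzero entries bounded away from zero. Without this piece, your two upper bounds cannot be assembled into the spectral-boundedness inequality.

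A secondary, more easily repaired issue: when you write $H=(\text{mean spike})+\tilde H$ and expand $HH^{\top}$, the cross-terms $(\text{mean})\tilde H^{\top}+\tilde H(\text{mean})^{\top}$ have spectral norm of order $p^{3/2}n\sqrt{m}$, which exceeds $\sigma_m(FF^{\top})\asymp np$ by a factor $\sqrt{pm}$, so bounding them by their spectral norm alone fails when $pm\gg 1$. The fix is the AM--GM inequality $\Id_{-\allones}P\Id_{\allones}+\Id_{\allones}P\Id_{-\allones}\preceq \tfrac14\Id_{\allones}P\Id_{\allones}+4\Id_{-\allones}P\Id_{-\allones}$ (applied to $P=HH^{\top}$, valid since $P\succeq 0$), which is what the paper uses to absorb the cross-terms into the two block-diagonal pieces before bounding each one. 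You should make this explicit, since the naive spectral-norm bound on the cross-terms does not close.
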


 Empirically, we can compute the $\tau$ value for the weight matrix $W$ in the QMR-DT dataset~\cite{shwe1991empirical}, which is a textbook application of noisy OR network. For the QMR-DT dataset, $\tau$ is under $6$. This implies that the recovery error of the subspace of $F$ guaranteed by Theorem~\ref{thm:relative-davis-kahan} is bounded by $O(\tau \rho) \approx \rho$, whereas the error bound by Davis-Kahan is $O(\rho m)$.

\subsection{Tensor decomposition with systematic error}\label{subsec:bias}
\newcommand{\barQa}{\bar{Q}_a}
\newcommand{\barQb}{\bar{Q}_b}
\newcommand{\barQc}{\bar{Q}_c}

Now we extend the insight from the matrix case to tensor recovery under systematic error.
In turns out condition~\eqref{eqn:1} is also a good measure of error/signal for the tensor recovery problem of~\eqref{eqn:3}. Specifically, if $G$ is $\tau$-bounded by $F$, then we can recover the components $F_k$'s from the $\pmitensor$ with column-wise error $O(\rho \tau^{3/2}\sqrt{m})$. This requires a non-trivial algorithm (instead of SVD), and the additional gain is that we can recover $F_k$'s individually, instead of only obtaining the subspace with the PMI matrix.  

First we recall the prior state of the art for the error analysis of tensor decomposition with Davis-Kahan type bounds. 
The best error bounds involve measuring the magnitude of the noise matrix $Z$ in a new way.
For any $n_1 \times n_2 \times n_3$ tensor $T$, we define the $\norm{\cdot}_{\{1\}\{2, 3\}}$ norm as
\begin{align}
\Norm{T}_{\{1\}\{2, 3\}}:= \sup_{\substack{x\in \R^{n_1}, y\in \R^{n_2n_3}\\ \|x\|=1,\|y\|=1}} \sum_{\substack{i\in [n_1] \\(j,k)\in [n_2]\times [n_3]}}x_{i} y_{jk}T_{ijk}\mper \label{eqn:def-spectral}
\end{align}
Note that this norm is in fact the spectral norm of the flattening of the tensor (into a $n_1\times n_2n_3$ dimensional matrix). 
This norm is larger than the injective norm\footnote{The injective norm of the tensor $T$ is defined as $
	\Norm{T}_{\{1\}\{2, 3\}}:= \sup_{\substack{x\in \R^{n_1}, y\in \R^{n_2} z\in \R^{n_3}\\ \|x\|=1,\|y\|=1,\|z\|=1}} \sum_{\substack{i\in [n_1],  j\in [n_2], k\in [n_3]}}x_{i} y_{j}z_kT_{ijk}\mper \nonumber
$}, but recently~\cite{MSS16} shows that $\epsilon$-error in this norm implies $O(\epsilon)$-error in the recovery guarantees of the components,
whereas if one uses injective norm, the guarantees often pick up an dimension-dependent factor~\cite{anandkumar2014tensor}.  We define $\norm{\cdot}_{\{2\}\{1,3\}}$ norm similarly. As is customary in tensor decomposition, the theorem is stated for tensors of a special form, where the components $\{u_i\},\{v_i\},\{w_i\}$ are  orthonormal families of vectors. This can be ensured without loss of generality using a procedure called whitening that uses the  2nd moment matrix.

\begin{theorem}[{Extension of~\cite[Theorem 10.2]{MSS16}}]\label{thm:main_orthogonal_alg_overview}\label{thm:main_orthogonal_alg}
There is a polynomial-time algorithm (Algorithm~\ref{alg:orthogonal} later) which has the following guarantee. 
	Suppose tensor $T$ is of the form 
	\begin{align}
	T = \sum_{i=1}^{r}u_i\otimes v_i\otimes w_i + Z\nonumber
	\end{align}
where $\{u_i\},\{v_i\},\{w_i\}$ are three collections of orthonormal vectors in $\R^d$, and $\Norm{Z}_{\{1\}\{2, 3\}}\le \epsilon$, $\Norm{Z}_{\{2\}\{1, 3\}}\le \epsilon$. 
Then, it returns $\{(\tilde{u}_i,\tilde{v}_i,\tilde{w}_i)\}$ in polynomial time that is $O(\epsilon)$-close to $\{(u_i,v_i,w_i)\}$ in $\ell_2$ norm up to permutation. \footnote{Precisely, here we meant that there exists a permutation $\pi$ such that for every $i$, $\max\{\norm{\tilde{u}_{\pi(i)}-u_i}, \norm{\tilde{v}_{\pi(i)}-v_i}, \norm{\tilde{w}_{\pi(i)}-w_i}\}\le O(\epsilon)$} \end{theorem}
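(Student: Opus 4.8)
The plan is to reduce the tensor decomposition to a short sequence of matrix eigenvalue problems obtained by random contractions, in the spirit of Jennrich's algorithm, while carrying out \emph{every} perturbation estimate in terms of the two flattening norms $\Norm{\cdot}_{\{1\}\{2,3\}},\Norm{\cdot}_{\{2\}\{1,3\}}$ rather than the injective norm --- this matches the hypotheses exactly and is exactly what keeps dimension-dependent factors out of the final bound. \emph{Step 1 (reduce to $r$ dimensions).} The mode-$1$ flattening of the signal part $\sum_i u_i\otimes v_i\otimes w_i$ is the $d\times d^2$ matrix $\sum_i u_i(v_i\otimes w_i)^\top$; since $\{v_i\otimes w_i\}_{i\le r}$ is an orthonormal family, this matrix has exactly $r$ nonzero singular values, all equal to $1$, with left singular vectors $u_i$. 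The corresponding flattening of $Z$ has spectral norm $\Norm{Z}_{\{1\}\{2,3\}}\le\epsilon$, so by Weyl's inequality and the Davis--Kahan/Wedin theorem \cite{davis1970rotation,wedin1972perturbation} the span $\hat U\in\R^{d\times r}$ of the top $r$ left singular vectors of the mode-$1$ flattening of $T$ is $O(\epsilon)$-close to $\mathrm{span}\{u_i\}$; the same argument using $\Norm{Z}_{\{2\}\{1,3\}}\le\epsilon$ produces $\hat V$ close to $\mathrm{span}\{v_i\}$. Project $T$ into these subspaces in modes $1$ and $2$, i.e.\ set $\tilde T_{abk}:=\sum_{ij}\hat U_{ia}\hat V_{jb}T_{ijk}\in\R^{r\times r\times d}$. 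Its signal part is $\sum_i(\hat U^\top u_i)\otimes(\hat V^\top v_i)\otimes w_i$, where $\{\hat U^\top u_i\}$ are $r$ \emph{nearly}-orthonormal vectors in $\R^r$ --- hence an $O(\epsilon)$-perturbation of an orthonormal basis, so no separate whitening step is required --- and likewise $\{\hat V^\top v_i\}$, while its noise $\tilde Z=(\hat U^\top,\hat V^\top,\Id)\cdot Z$ still satisfies $\Norm{\tilde Z}_{\{1\}\{2,3\}},\Norm{\tilde Z}_{\{2\}\{1,3\}}\le\epsilon$. The residual non-orthonormality, of size $O(\epsilon)$, is folded into the noise from here on.

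\emph{Step 2 (random contraction and the key noise bound).} For a standard Gaussian vector $g\in\R^d$ form the contraction $A:=\tilde T(\Id,\Id,g)\in\R^{r\times r}$, i.e.\ $A_{ab}=\sum_k\tilde T_{abk}g_k$. Its signal part is $\sum_i\langle w_i,g\rangle\,\tilde u_i\tilde v_i^\top$ with $\tilde u_i=\hat U^\top u_i$, $\tilde v_i=\hat V^\top v_i$ nearly orthonormal --- approximately a singular value decomposition with singular values $\langle w_i,g\rangle$. The decisive estimate is that the noise part $\tilde Z(\Id,\Id,g)=\sum_{k=1}^d g_k S_k$, $S_k:=\tilde Z(\Id,\Id,e_k)$, has spectral norm $\tildO{\epsilon}$: by matrix Khintchine/Bernstein its norm is at most $\tildO{1}$ times $\max\{\Norm{\sum_k S_kS_k^\top}^{1/2},\Norm{\sum_k S_k^\top S_k}^{1/2}\}$, and $\sum_k S_kS_k^\top=\tilde Z_{(1)}\tilde Z_{(1)}^\top$, $\sum_k S_k^\top S_k=\tilde Z_{(2)}\tilde Z_{(2)}^\top$ have spectral norm at most $\epsilon^2$ by Step 1. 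This is the single point where the flattening norm is genuinely used: the injective norm does not control these variance matrices, which is the origin of the dimension-dependent losses in earlier analyses. Moreover, by Gaussian anti-concentration and a union bound, with high probability all $|\langle w_i,g\rangle|$ lie in $[1/\poly(r),\polylog(r)]$, and --- since the $w_i$ are distinct unit vectors, so that $\{|\langle w_i,g\rangle|\}$ is a family of i.i.d.\ Gaussian magnitudes --- with constant probability its largest and second-largest entries differ by at least $1/\polylog(r)$.

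\emph{Step 3 (extract the components).} Recover one component at a time by a matrix power iteration with deflation: with a fresh standard Gaussian $g$, project $A=\tilde T(\Id,\Id,g)$ onto the orthogonal complements of the already-recovered left and right directions, and extract its top left/right singular-vector pair. On the not-yet-recovered indices the singular values are still an i.i.d.\ Gaussian-magnitude family, so with constant probability the top gap is $\gtrsim1/\polylog(r)$, and then by Wedin's theorem the extracted pair is $\tildO{\epsilon}$-close to some $(\tilde u_{i^\ast},\tilde v_{i^\ast})$. Because all three families are (nearly) orthonormal, deflating a recovered component changes the residual by only $O(\epsilon^2)$ in spectral norm, so errors do not accumulate over the $O(r\log r)$ restarts needed to collect all $r$ components (the recovered singular value labels the component and flags duplicates). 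Undoing the projection of Step 1 gives $\hat u_i=\hat U\tilde u_i$, $\hat v_i=\hat V\tilde v_i$ with $\Norm{\hat u_i-u_i},\Norm{\hat v_i-v_i}=O(\epsilon)$; finally $\hat w_i:=T(\hat u_i,\hat v_i,\Id)\in\R^d$ satisfies $\Norm{\hat w_i-w_i}=O(\epsilon)$, since $\Norm{Z(\hat u_i,\hat v_i,\cdot)}\le\Norm{Z}_{\{1\}\{2,3\}}\le\epsilon$ and the cross-component contributions $(u_j^\top\hat u_i)(v_j^\top\hat v_i)w_j$ with $j\ne i$ are higher order. Sorting the recovered triples by their singular values fixes the single permutation $\pi$.

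\emph{Main obstacle.} The substance of the proof is the error bookkeeping in Steps 2--3: one must show that none of the operations in the pipeline --- the random contractions, the deflations, the un-projection --- loses more than a $\polylog(r)$ factor, so that the output error is $O(\epsilon)$ (treating the unavoidable $\polylog(r)$ factors as constants, which is harmless since the target accuracy is $\ll1$ anyway) rather than $\poly(r)\,\epsilon$ or $\poly(d)\,\epsilon$. Two ingredients carry this and are where the real content lies: (i) the matrix-concentration bound of Step 2, which upgrades flattening-norm control of $\tilde Z$ to $\tildO{\epsilon}$ spectral-norm control of \emph{every} random contraction --- a statement that genuinely fails for the injective norm; and (ii) the second-order, $O(\epsilon^2)$, nature of deflation errors, which holds precisely because all three families are (nearly) orthonormal, so the accumulated deflation error stays negligible rather than compounding. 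These are exactly the refinements over a naive Davis--Kahan analysis, and constitute the extension of \cite[Theorem 10.2]{MSS16} from the symmetric orthogonal case to the present asymmetric, two-sided flattening-noise setting.
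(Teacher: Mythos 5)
Your overall plan (random Gaussian contraction of the third mode, Wedin for the resulting near-rank-one matrix, and --- crucially --- controlling $\|Z(\Id,\Id,g)\|$ via matrix concentration in terms of $\|Z\|_{\{1\}\{2,3\}}$ and $\|Z\|_{\{2\}\{1,3\}}$) is the same engine as the paper's: your Step~2 estimate $\|\tilde Z(\Id,\Id,g)\|\lesssim \sqrt{\log d}\,\epsilon$ is exactly what the paper invokes from \cite[Theorem 6.5]{MSS16}, and the separation-by-Gaussian-gap plus Wedin logic is the same. Your Step~1 (project modes 1 and 2 to the top-$r$ singular subspaces to reduce to an $r\times r\times d$ problem with nearly-orthonormal $\tilde u_i,\tilde v_i$) is a sound simplification the paper does not make, and your observation that the flattening norms can only decrease under mode projections is correct.

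The genuine divergence, and the gap, is your Step~3. The paper never deflates. Its algorithm repeats $O(d^{1+\delta}\log d)$ independent Gaussian contractions on the \emph{unmodified} tensor $T$; for each fresh $g$ the event ``$\langle g,w_1\rangle \ge (1+\delta/3)t$ while $\langle g,w_j\rangle\le t$ for $j\neq1$'' holds with probability $d^{-(1+\delta)}\log^{-O(1)}d$, which yields an $\Omega(\delta t)$ spectral gap so Wedin isolates $(u_1,v_1)$ to accuracy $O((\tau+\epsilon)/\delta)$; the injective-norm check in Line~5 certifies the candidate and discards duplicates. Because the tensor is never modified, there is no error accumulation question at all. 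Your proposal instead projects the contraction matrix $A$ onto the orthogonal complement of the already-found directions and claims ``deflating a recovered component changes the residual by only $O(\epsilon^2)$.'' This is where the argument breaks. The $i=1$ contribution after deflating $(\hat u_1,\hat v_1)$ is indeed $O(\epsilon^2)$, but the cross terms are not: for each \emph{not-yet-recovered} $i$, $\Pi_{\hat u_1^\perp}\tilde u_i = \tilde u_i - \langle\hat u_1,\tilde u_i\rangle\hat u_1$ with $|\langle\hat u_1,\tilde u_i\rangle| = O(\epsilon)$ (coming both from $\|\hat u_1-\tilde u_1\|=O(\epsilon)$ and from the $O(\epsilon)$ non-orthonormality of the $\tilde u$'s). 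These first-order cross terms assemble into a rank-one matrix $\hat u_1\,x^\top$ with $\|x\|\approx\sqrt{\log r}\cdot\bigl(\sum_{i\ge 2}\langle\hat u_1,\tilde u_i\rangle^2\bigr)^{1/2}=O(\epsilon\sqrt{\log r})$, not $O(\epsilon^2)$. Summed over $k$ deflations, the accumulated perturbation is of the form $\hat U_k X^\top$ where nothing forces $\|X\|$ to stay at $O(\epsilon)$ --- the naive bound is $\|X\|_F=O(\sqrt{k}\,\epsilon\sqrt{\log r})$ --- so after all $r$ deflations the effective noise can be $\tilde\Omega(\sqrt{r}\,\epsilon)$, far exceeding the target accuracy. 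To salvage deflation one needs a genuine argument that the $x_j$'s are collectively small, which is the nontrivial content you are skipping. This is precisely the trap the paper's no-deflation design avoids; either switch to the restart-and-verify scheme, or supply the missing accumulation lemma.

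A secondary, smaller issue: your gap bound (``with constant probability the top gap is $\gtrsim 1/\polylog(r)$'') gives a final accuracy of $\tilde O(\epsilon)$, whereas the paper's tunable parameter $\delta$ gives $O((\tau+\epsilon)/\delta)$ with running time $d^{1+\delta}\log^{O(1)}d$, i.e.\ a clean constant-factor loss at the price of a small polynomial slowdown. Your ``treat $\polylog$ as a constant'' remark is defensible in spirit but is not the same statement; if you want to match the paper's $O(\epsilon)$, adopt their probability/accuracy trade-off via $\delta$.
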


But in our setting the noise tensor has systematic error. An analog of  
Theorem~\ref{thm:relative-davis-kahan} in this setting is complicated because even the   whitening step 
is nontrivial.
Recall also the inexactness in Proposition~\ref{prop:lowrank} due to the diagonal terms, which we earlier called {\em Difficulty 3}. We address this difficulty in the algorithm by setting up the problem using a sub-tensor of the PMI tensor. Let $S_a,S_b,S_c$ be a uniformly random equipartition of the set of indices $[n]$. 
Let \begin{align}
a_k = F_{k,S_a}, \quad b_k = F_{k,S_b}, \quad c_k  = F_{k,S_c}\,,\label{eqn:def-abc}
\end{align} where $F_{k,S}$ denotes the restriction of vector $F_k$ to subset $S$. 
Moreover, let 
\begin{align}
\gamma_k = G_{k,S_a}, \quad \delta_k = G_{k,S_b}, \quad \theta_k  = F_{k,S_c}\mper\label{eqn:def-gamma}
\end{align}
Then, since the sub-tensor $\pmitensor_{S_a,S_b,S_c}$ only contains entries with distinct indices, we can use Taylor expansion (see Lemma~\ref{claim:moments}) to obtain that 
\begin{align}
\pmitensor_{S_a,S_b,S_c} = \rho \sum_{k\in [m]} a_k\otimes b_k\otimes c_k + \rho^2 \sum_{k\in [m]} \gamma_k\otimes \delta_k\otimes \theta_k + \textup{higher order terms}\mper \nonumber
\end{align}
Here the second summand on the RHS corresponds to the second order term in the Taylor expansion. It turns out that the higher order terms are multiplied by $\rho^3$ and thus have negligible Frobenius norm, and therefore discussion below  will focus on the first two summands. 

For simplicity, let $T =   \pmitensor_{S_a,S_b,S_c}$. Our goal is to recover the components $a_k,b_k,c_k$ from the approximate low-rank tensor $T$. 

The first step is to whiten the components $a_k$'s, $b_k$'s and $c_k$'s. Recall that $a_k = F_{k,S_a}$ is a non-negative vector. This implies the matrix $A = [a_1,\dots, a_m]$ 
must have a significant contribution in the direction of the vector $\allones$, and thus is far away from being well-conditioned. For the purpose of this section, we assume for simplicity that we can access the covariance matrix defined by the vector $a_k$'s, 
\begin{align}
\bar{Q}_a := AA^{\top} = \sum_{k\in [m]} a_k a_k^{\top}\mper  \label{eqn:Qdef}\end{align}
Similarly we assume the access of $\bar{Q}_b$ and $\bar{Q}_c$ which are defined analogously. In Section~\ref{subsec:whitening} we discuss how to obtain approximately these three matrices.

Then, we can  compute the whitened tensor by applying transformation $\minushalf{\barQa}, \minushalf{\barQb}, \minushalf{\barQc}$ along the three modes of the tensor $T$, 
\begin{align}
\minushalf{\barQa}\otimes \minushalf{\barQb}\otimes \minushalf{\barQc} \cdot T & = \rho \sum_{k\in [m]} \minushalf{\barQa}a_k\otimes \minushalf{\barQb}b_k\otimes \minushalf{\barQc} c_k\nonumber \\
& + \underbrace{\rho^2 \sum_{k\in [m]} \minushalf{\barQa}\gamma_k \otimes \minushalf{\barQb}\delta_k\otimes \minushalf{\barQc}\theta_k}_{:=Z} +\textup{negligible terms}\nonumber
\end{align}

Now the first summand is a low rank orthogonal tensor, since $\minushalf{\barQa}a_k$'s are orthonormal vectors. However, the term $Z$ is a systematic error and we use the following Lemma to control its $\norm{\cdot }_{\{1\}\{2,3\}}$ norm. 
\begin{lemma}\label{lem:norm-to-boundedness}
		Let $n\ge m$ and $A,B,C\in \R^{n\times m}$ be full rank matrices and let $\Gamma,\Delta, \Theta\in \R^{d\times \ell}$ .
 Let $\gamma_i,\delta_i,\theta_i$ be the $i$-th column of $\Gamma, \Delta, \Theta$, respectively. Let $\bar{Q}_a  = AA^{\top}, \bar{Q}_b = BB^{\top}, \barQc = CC^{\top}$.
 Suppose $\Gamma\Gamma^{\top}$ (and $\Delta\Delta^{\top}$, $\Theta\Theta^{\top}$) is $\tau$-spectrally bounded by $A$ (and $B$, $C$ respectively), then, 
	\begin{align}
		\Norm{\sum_{i\in [\ell]}  \minushalf{\barQa}\gamma_i\otimes\minushalf{\barQb} \delta_i\otimes  \minushalf{\barQc}\theta_i}_{\{1\}\{2, 3\}}\le (2\tau)^{3/2}\mper\nonumber
	\end{align}
\end{lemma}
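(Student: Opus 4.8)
The plan is to use the fact (noted just above the statement) that $\Norm{\cdot}_{\{1\}\{2,3\}}$ is the spectral norm of the mode-$1$ flattening, and to reduce that to a short product of operator norms, each controlled by the spectral‑boundedness hypothesis. Write $M_a := \minushalf{\bar Q_a}$, $M_b := \minushalf{\bar Q_b}$, $M_c := \minushalf{\bar Q_c}$ (all symmetric PSD), and let $\Gamma=[\gamma_1\cdots\gamma_\ell]$, $\Delta=[\delta_1\cdots\delta_\ell]$, $\Theta=[\theta_1\cdots\theta_\ell]$. The mode‑$1$ flattening of $\sum_{i}M_a\gamma_i\otimes M_b\delta_i\otimes M_c\theta_i$ is the matrix $(M_a\Gamma)\big((M_b\Delta)\odot(M_c\Theta)\big)^{\top}$, where $\odot$ is the Khatri--Rao (columnwise Kronecker) product; equivalently, pairing with a unit vector $x$ and a unit-Frobenius-norm matrix $Y$ gives $\langle Y,\ (M_b\Delta)\,\diag\!\big((M_a\Gamma)^{\top}x\big)\,(M_c\Theta)^{\top}\rangle$, so that
\begin{align}
\Norm{\textstyle\sum_{i}M_a\gamma_i\otimes M_b\delta_i\otimes M_c\theta_i}_{\{1\}\{2,3\}}\ \le\ \sup_{\|x\|=1}\ \Norm{(M_b\Delta)\,\diag\!\big((M_a\Gamma)^{\top}x\big)\,(M_c\Theta)^{\top}}_F .\nonumber
\end{align}

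\textbf{Key per-mode estimate.} I claim $\|M_a\Gamma\|\le\sqrt{2\tau}$, and symmetrically $\|M_b\Delta\|\le\sqrt{2\tau}$, $\|M_c\Theta\|\le\sqrt{2\tau}$, as well as the columnwise bound $\|M_c\theta_i\|\le\sqrt{2\tau}$ for every $i$. To see this, conjugate the boundedness relation $\Gamma\Gamma^{\top}\preceq\tau\big(AA^{\top}+\sigma_m(AA^{\top})\Id_n\big)$ by the symmetric matrix $M_a$:
\begin{align}
M_a\Gamma\Gamma^{\top}M_a\ \preceq\ \tau\Big(M_a AA^{\top}M_a + \sigma_m(AA^{\top})\,M_a^2\Big).\nonumber
\end{align}
Here $M_a AA^{\top}M_a=\minushalf{\bar Q_a}\,\bar Q_a\,\minushalf{\bar Q_a}=\Proj_{\range A}$ is the orthogonal projection onto $\range(A)$, while $M_a^2=\bar Q_a^{+}$ has every nonzero eigenvalue equal to $1/\sigma_i(\bar Q_a)\le 1/\sigma_m(\bar Q_a)$, so $\sigma_m(AA^{\top})\,M_a^2\preceq\Proj_{\range A}$ as well. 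Hence $M_a\Gamma\Gamma^{\top}M_a\preceq 2\tau\,\Proj_{\range A}\preceq 2\tau\,\Id$, i.e.\ $\|M_a\Gamma\|^2\le 2\tau$. The columnwise bound follows from exactly the same computation applied to $\theta_i\theta_i^{\top}\preceq\Theta\Theta^{\top}\preceq\tau(CC^\top+\sigma_m(CC^\top)\Id)$.

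\textbf{Combining.} Write $D_x=\diag\!\big((M_a\Gamma)^{\top}x\big)$. Then $\|D_x(M_c\Theta)^{\top}\|_F^2=\sum_i (D_x)_{ii}^2\,\|M_c\theta_i\|^2\le(\max_i\|M_c\theta_i\|^2)\,\|(M_a\Gamma)^{\top}x\|^2\le (2\tau)(2\tau)$, using $\|(M_a\Gamma)^{\top}x\|\le\|M_a\Gamma\|\le\sqrt{2\tau}$; and then $\|(M_b\Delta)D_x(M_c\Theta)^{\top}\|_F\le\|M_b\Delta\|\cdot\|D_x(M_c\Theta)^{\top}\|_F\le\sqrt{2\tau}\cdot 2\tau=(2\tau)^{3/2}$, uniformly in $x$. (Alternatively, one can finish via $\|(M_b\Delta)\odot(M_c\Theta)\|^2=\lambda_{\max}\big((\Delta^{\top}M_b^2\Delta)\circ(\Theta^{\top}M_c^2\Theta)\big)\le\|M_b\Delta\|^2\max_i\|M_c\theta_i\|^2\le(2\tau)^2$ by the Schur‑product bound $\lambda_{\max}(X\circ Y)\le\lambda_{\max}(X)\max_i Y_{ii}$ for PSD $X,Y$, and then $\|(M_a\Gamma)((M_b\Delta)\odot(M_c\Theta))^{\top}\|\le\|M_a\Gamma\|\cdot\|(M_b\Delta)\odot(M_c\Theta)\|\le\sqrt{2\tau}\cdot 2\tau$.) The main obstacle — really the only place with content — is the per-mode estimate: one must handle the pseudo-inverse carefully so that $\sigma_m(\bar Q_a)\,M_a^2\preceq\Proj_{\range A}$, which is exactly where the conventions $\sigma_m=$ least \emph{nonzero} singular value and $\range(M_a)=\range(A)$ are used; the rest is the flattening identity and submultiplicativity of the spectral/Frobenius norms.
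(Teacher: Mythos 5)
Your proof is correct and matches the paper's approach. The paper does not give a single self-contained proof of this lemma; instead it factors it into Lemma~\ref{lem:consequence_spb} (which establishes exactly your per-mode estimate $\|\minushalf{\bar Q_a}\Gamma\|\le\sqrt{2\tau}$ by conjugating the spectral-boundedness inequality by $\minushalf{Q}$, just as you do) and Lemma~\ref{lem:tensor_norm_bound} (which bounds the flattening norm by a product of spectral norms times a max-column-norm, via a Cauchy--Schwarz step followed by the observation $\delta_i\delta_i^\top\preceq(\max_i\|\delta_i\|^2)\Id$ — equivalent to your Schur-product/diagonal-sandwich step). The pieces are then combined in the proof of Theorem~\ref{thm:main_alg}. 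Your in-line version, organized around the Khatri--Rao form $(M_a\Gamma)\big((M_b\Delta)\odot(M_c\Theta)\big)^{\top}$, is a presentational repackaging of the same argument rather than a different route; one small advantage of your phrasing is that it makes explicit the columnwise bound $\|M_c\theta_i\|\le\sqrt{2\tau}$ and the Frobenius-norm mechanism that the paper leaves implicit in the inequality $\Norm{\Delta}_{1\to 2}\le\Norm{\Delta}$.
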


Lemma~\ref{lem:norm-to-boundedness} shows that to give an upper bound on the $\Norm{\cdot }_{\{1\}\{2, 3\}}$ norm of the error tensor $Z$,
it suffices to  
show that the square of the components of the error, namely, $\Gamma\Gamma^{\top}, \Delta\Delta^{\top}, \Theta\Theta^{\top}$ are $\tau$-\spb by the components of the signal $A,B,C$ respectively. This will imply that $\norm{Z}_{\{1\}\{2, 3\}}\le (2\tau)^{3/2} \rho^2$. 

Recall that $A$ and $\Gamma$ are two sub-matrices of $F$ and $G$. We have shown that $GG^{\top}$ is $\tau$-spectrally bounded by $F$ in Proposition~\ref{prop:generative_noise_signal_ratio}. It follows straightforwardly that the random sub-matrices also have the same property. 
\begin{proposition}\label{prop:random-boundedness}
In the setting of this section, under the generative model for $W$, w.h.p, we have that $\Gamma\Gamma^{\top}$ is $\tau$-\spb by $A$ with $\tau = O(\log n)$. The same is true for the other two modes. 
\end{proposition}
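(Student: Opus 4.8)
The plan is to reduce this to Proposition~\ref{prop:generative_noise_signal_ratio} by a coordinate-restriction argument that costs only a constant factor in $\tau$. The starting point is the observation that, since $a_k=(F_k)_{S_a}$ and $\gamma_k=(G_k)_{S_a}$ are the restrictions of the columns of $F$ and $G$ to the index set $S_a$, the matrix $AA^{\top}=\sum_k a_ka_k^{\top}$ is precisely the principal submatrix $(FF^{\top})_{S_a,S_a}$, and likewise $\Gamma\Gamma^{\top}=(GG^{\top})_{S_a,S_a}$. Restricting a PSD inequality to a coordinate subspace preserves it, so the bound $GG^{\top}\preceq \tau(FF^{\top}+\sigma_m(FF^{\top})\,\Id_{n})$ supplied by Proposition~\ref{prop:generative_noise_signal_ratio} immediately gives
\begin{align}
\Gamma\Gamma^{\top}\ \preceq\ \tau\big(AA^{\top}+\sigma_m(FF^{\top})\,\Id_{|S_a|}\big)\mper\nonumber
\end{align}
The right-hand side is almost of the form required by Definition~\ref{def:spb} for $\Gamma\Gamma^{\top}$ to be spectrally bounded by $A$; the only mismatch is that it carries $\sigma_m(FF^{\top})$ in place of $\sigma_m(AA^{\top})$. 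By Cauchy interlacing one always has $\sigma_m(AA^{\top})\le\sigma_m(FF^{\top})$, so all that remains is the reverse inequality up to a constant: $\sigma_m(FF^{\top})\lesssim\sigma_m(AA^{\top})$ w.h.p.

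To get this I would use that in the generative model the rows of $F$ are i.i.d., so $F^{\top}F=\sum_{i\in[n]}(F^i)^{\top}F^i$ is a sum of $n$ i.i.d.\ rank-one PSD $m\times m$ matrices, and --- conditioning on $F$ and using that the equipartition is chosen independently of the data --- $A^{\top}A=\sum_{i\in S_a}(F^i)^{\top}F^i$ is the sub-sum over a uniformly random subset of size $n/3$. A matrix Chernoff bound for sums of independent (resp.\ sampled-without-replacement) PSD matrices then gives $\lambda_{\min}(A^{\top}A)\ge\tfrac12\cdot\tfrac{|S_a|}{n}\,\lambda_{\min}(F^{\top}F)=\Omega(\lambda_{\min}(F^{\top}F))$ with failure probability $\exp(-\Omega(\lambda_{\min}(F^{\top}F)/R))$, where $R=\max_i\|F^i\|^2$ is the uniform operator-norm bound on the summands. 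Since $\sigma_m(FF^{\top})=\lambda_{\min}(F^{\top}F)$, and the analysis in Section~\ref{s:random} shows this quantity is large in the generative model while a union bound over the $n$ rows bounds $R=\max_i\|F^i\|^2$ (each $\|F^i\|^2$ is stochastically dominated by $\mathrm{Bin}(m,p)$, hence $\tilde{O}(pm)$), so that $\sigma_m(FF^{\top})\gg R\log n$ under the standing assumption that $n$ is sufficiently large relative to $m$, the bound holds with high probability. Feeding $\sigma_m(FF^{\top})\lesssim\sigma_m(AA^{\top})$ back into the displayed inequality yields $\Gamma\Gamma^{\top}\preceq O(\tau)\big(AA^{\top}+\sigma_m(AA^{\top})\,\Id_{|S_a|}\big)$, i.e.\ $\Gamma\Gamma^{\top}$ is $O(\tau)$-spectrally bounded by $A$; since Proposition~\ref{prop:generative_noise_signal_ratio} gives $\tau=O(\log n)$, this is still $O(\log n)$. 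The statements for $\Delta$ versus $B$ and $\Theta$ versus $C$ follow by the identical argument, as $S_b$ and $S_c$ play symmetric roles.

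The main obstacle is the random-submatrix step: the conclusion rests on $\sigma_m(FF^{\top})$ dominating the largest squared row norm of $F$ by a logarithmic factor, which is what keeps the smallest nonzero singular value of a random $(n/3)$-row restriction within a constant factor of that of $F$. This is not automatic and relies on the quantitative spectral lower bound for $F$ proved in Section~\ref{s:random} (entries of order $1$ on a $p$-fraction of positions, and $n$ large relative to $m$); modulo that input, the rest is bookkeeping. A minor but genuine point is that the equipartition samples coordinates without replacement, so one should invoke the without-replacement form of the matrix Chernoff inequality, or reduce to the with-replacement case in the standard way.
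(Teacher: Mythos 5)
Your proof is correct, but it takes a genuinely different route from the paper's. The paper never actually carries out the reduction you sketch: instead, Proposition~\ref{prop:random-boundedness} (and Proposition~\ref{prop:generative_noise_signal_ratio} alongside it, as the $S_a=[n]$ special case) is established directly in Section~\ref{s:random} via Lemma~\ref{l:spectralbound_taylor}, whose proof decomposes $\taylor_{l,S_a}$ along the all-ones direction and its orthogonal complement and applies separate matrix Chernoff/Bernstein bounds to each block (Lemmas~\ref{l:spectralupper} and~\ref{l:spectrallower}), together with the singular-value bounds of Lemma~\ref{l:boundevals}. Crucially, the randomness there is taken over the entries of $W$ for an \emph{arbitrary} index set $S_a$ of size $\Omega(n)$ -- no random-submatrix argument appears.

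Your approach, by contrast, does exactly what the paper's informal remark after Proposition~\ref{prop:random-boundedness} alludes to but never spells out: restrict the PSD inequality $GG^{\top}\preceq\tau(FF^{\top}+\sigma_m(FF^{\top})\Id_n)$ to the principal block $S_a\times S_a$, and then fix the single mismatch -- $\sigma_m(FF^{\top})$ versus $\sigma_m(AA^{\top})$ -- by a concentration bound for random coordinate restrictions. That step is precisely where the nontrivial content lives, and you correctly identify both the input it needs (a lower bound on $\lambda_{\min}(F^{\top}F)$ relative to $\max_i\|F^i\|^2$, which is available from Lemma~\ref{l:boundevals} and the incoherence estimates, under the standing $n\gtrsim m\log n$ assumption) and the technical wrinkle (sampling without replacement, which also shows up in the paper's Lemma~\ref{lem:incoherentconc}). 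One small simplification worth noting: you can avoid the without-replacement Chernoff argument entirely by observing that Lemma~\ref{l:boundevals} already gives, for a \emph{fixed} $S_a$ with $|S_a|=\Omega(n)$, that $\sigma_m(AA^{\top})\gtrsim np$ w.h.p.\ over $W$, while $\sigma_m(FF^{\top})\le\frac1m\operatorname{tr}(F^{\top}F)\lesssim np$ w.h.p.\ by a trace bound; together these give $\sigma_m(FF^{\top})\lesssim\sigma_m(AA^{\top})$ without exploiting the randomness of the partition. Overall your route is conceptually cleaner -- it isolates the one genuine fact about random restrictions and reduces the rest to PSD calculus -- while the paper's route re-proves the concentration from scratch on the sub-index set and thereby sidesteps any dependence between $W$ and the partition.
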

\noindent Using Proposition~\ref{prop:random-boundedness} and Lemma~\ref{lem:norm-to-boundedness}, we have that 
\begin{align}
\Norm{Z}_{\{1\}\{2,3\}} \lesssim \rho^2 \log^{3/2}(n)\mper\nonumber\end{align}

\noindent Then using Theorem~\ref{thm:main_orthogonal_alg_overview} on the tensor $\minushalf{\barQa}\otimes \minushalf{\barQb}\otimes \minushalf{\barQc} \cdot T $, we can recover the components $\minushalf{\barQa}a_k$'s, $\minushalf{\barQb}b_k$'s, and $\minushalf{\barQc}c_k$'s. This will lead us to recover $a_k$,$b_k$ and $c_k$, and finally to recover the weight matrix $W$. 

\subsection{Robust whitening}\label{subsec:whitening}

In the previous subsection, we assumed the access to $\barQa,\barQb,\barQc$ (defined in (\ref{eqn:Qdef})) which turns out to be highly non-trivial. A priori, using equation~\eqref{eqn:2}, noting that $A = [F_{1,S_a},\dots, F_{m,S_a}]$, we have
\begin{align}
\pmi_{S_a,S_a}/\rho \approx \barQa + \textrm{error}\mper\nonumber
\end{align}
However, this approximation can be arbitrarily bad for the diagonal entries of $\pmi$ since equation~\eqref{eqn:2} only works for entries with distinct indices. (Recall that this is why we divided the indices set into $S_a,S_b,S_c$ and studied the asymmetric tensor in the previous subsection). Moreover, the diagonal of the matrix $\barQa$ contributes to its spectrum significantly and therefore we cannot get meaningful bounds (in spectral norm) by ignoring the diagonal entries. 

This issue turns out to arise in most of the previous tensor papers and the solution was to compute $AA^{\top}$ by using the asymmetric moments $AB^{\top}, BC^{\top},CA^{\top}$, 
\begin{align}
AA^{\top} = (AB^{\top}) (CB^{\top})^{+}(CA^{\top})\mper\nonumber
\end{align}
Typically $AB^{\top}, BC^{\top},CA^{\top}$ can be estimated with arbitrarily small error (as number of samples go to infinity) and therefore the equation above leads to accurate estimate to $AA^{\top}$. However, in our case the errors in the estimate $\pmi_{\subsvd_a,\subsvd_b}\approx AB^{\top}$, $\pmi_{\subsvd_b,\subsvd_c}\approx BC^{\top}$, $\pmi_{\subsvd_c,\subsvd_a}\approx CA^{\top}$ are systematic. Therefore, we need to use a more delicate analysis to control how the error accumulates in the estimate,  
\begin{align}
\barQa \approx \pmi_{\subsvd_a,\subsvd_b}\cdot \pmi^{-1}_{\subsvd_b,\subsvd_c} \cdot \pmi_{\subsvd_c,\subsvd_a}\mper \nonumber
\end{align}

Here again, to get an accurate bound, we need to understand how the error in $\pmi_{\subsvd_a,\subsvd_b} -AB^{\top}$ behaves relatively compared with $AB^{\top}$ in a direction-by-direction basis. We generalized Definition~\ref{def:spb} to capture the asymmetric spectral boundedness of the error by the signal. 
 
\begin{definition}[Asymmetric spectral boundedness]\label{def:asymmetric}Let $n\ge m$ and $B,C\in \R^{n\times m}$.  We say a matrix  $E\in \R^{n\times n}$ is $\epsilon$-spectrally bounded by $(B,C)$ if $E$ can be written as:
	\begin{align}
	E = B\Delta_1 C^\top + B\Delta_2^\top + \Delta_3 C^\top + \Delta_4\mper\label{eqn:102}
	\end{align}
	Here $\Delta_1\in \R^{m\times m}$, $\Delta_2,\Delta_3 \in \R^{n\times m}$ and $\Delta_4\in \R^{n\times n}$ are matrices whose spectral norms are bounded by: $\|\Delta_1\| \le \epsilon$, $\|\Delta_2\|\le \epsilon\sigma_{min}(C)$, $\|\Delta_3\| \le \epsilon \sigma_{min}(B)$ and $\|\Delta_4\| \le \epsilon \sigma_{min}(B)\sigma_{min}(C)$.
\end{definition} 

Let $K$ be the column subspace of $B$ and $H$ be the column subspace of $C$. Then we have $\Delta_1 = B^+E(C^{\top})^+$, $\Delta_2= B^+E\Id_{H^{\perp}} $, $\Delta_3 = \Id_{K^{\perp}} E(C^{\top})^+$, $\Delta_4 = \Id_{K^{\perp}} E\Id_{H^{\perp}}$. Intuitively, they measure the relative relationship between $E$ and $B,C$ in different subspaces. For example, $\Delta_1$ is the relative perturbation in the column subspace of $K$ and row subspace of $H$.  When $B=C$, this is equivalent to the definition in the symmetric setting (this will be clearer in the proof of Theorem~\ref{thm:relative-davis-kahan}).

\begin{theorem}[Robust whitening theorem]
	\label{thm:whiten} 
		Let $n\ge m$ and $A,B,C\in \R^{n\times m}$. Suppose $\Sigma_{ab}, \Sigma_{bc},\Sigma_{ca}\in \R^{n\times n}$ are of the form, 
	$$
	\Sigma_{ab} = AB^\top + E_{ab},~~\Sigma_{bc} = BC^\top + E_{bc},~~\textup{and}~~\Sigma_{ca} = CA^\top + E_{ca}.
	$$
	where $E_{ab}, E_{bc}, E_{ca}$ are $\epsilon$-spectrally bounded by $(A,B)$, $(B,C)$, $(C,A)$ respectively.
	Then,  the matrix matrix  
	$$Q_a= \Sigma_{ab}[\Sigma_{bc}^\top]_m^+\Sigma_{ca}$$
	is a good approximation of $AA^{\top}$ in the sense that $Q_a = \Sigma_{ab}[\Sigma_{bc}^\top]_m^+\Sigma_{ca}-AA^{\top}$ is $O(\epsilon)$-\spb by $A$. Here $[\Sigma]_m$ denotes the best rank-$m$ approximation of $\Sigma$. 
\end{theorem}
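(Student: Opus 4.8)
The guiding principle is that $Q_a$ is a perturbation of its noiseless counterpart, which is \emph{exactly} $AA^\top$, so the theorem is an error‑propagation statement whose only real difficulty is ensuring that no condition‑number factors of $A,B,C$ appear. First I would record the exact identity: since $B,C$ are full rank, $CB^\top$ has rank exactly $m$, hence $[\Sigma_{bc}^\top]_m = CB^\top$ and $[\Sigma_{bc}^\top]_m^+ = (CB^\top)^+ = (B^\top)^+ C^+$ in the noiseless case; using $B^\top(B^\top)^+ = \Id_m$ and $C^+C = \Id_m$ one checks $AB^\top(CB^\top)^+CA^\top = AA^\top$, or equivalently the crucial fact $B^\top(CB^\top)^+C = \Id_m$. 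The plan is then: (i) show the perturbed middle factor still satisfies $B^\top[\Sigma_{bc}^\top]_m^+ C = \Id_m + \Xi_0$ with $\|\Xi_0\|\lesssim\epsilon$, together with finer relative control of $[\Sigma_{bc}^\top]_m^+$ (its column and row spaces lie within principal angle $O(\epsilon)$ of $\mathrm{col}(B)$ and $\mathrm{col}(C)$, and $\|[\Sigma_{bc}^\top]_m^+\|\lesssim 1/\sigma_m(CB^\top)$); then (ii) substitute the decompositions of $E_{ab},E_{ca}$ from Definition~\ref{def:asymmetric} into $Q_a = (AB^\top+E_{ab})[\Sigma_{bc}^\top]_m^+(CA^\top+E_{ca})$, expand, and bound each resulting term.

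For step (i) I would run the asymmetric, pseudo‑inverse analog of the relative Davis--Kahan argument behind Theorem~\ref{thm:relative-davis-kahan}. Transposing Definition~\ref{def:asymmetric}, $E_{bc}^\top$ is $\epsilon$‑spectrally bounded by $(C,B)$, so writing $E_{bc}^\top = C\Delta_1^\top B^\top + (\text{terms supported off }\mathrm{col}(C)\text{ or }\mathrm{col}(B))$, the block inside $\mathrm{col}(C)\otimes\mathrm{col}(B)$ is only a \emph{relative}-$\epsilon$ perturbation of $CB^\top$, while the off‑support pieces are small in the absolute sense dictated by the $\sigma_{\min}$‑scaled norm bounds. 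Hence the top‑$m$ singular subspaces of $\Sigma_{bc}^\top$ move by $O(\epsilon)$ in principal angle with \emph{no} $\kappa$ factor, its $m$‑th singular value stays within $1\pm O(\epsilon)$ of $\sigma_m(CB^\top)$, and a short computation then gives $B^\top[\Sigma_{bc}^\top]_m^+C = \Id_m + O(\epsilon)$. I expect this to be the main obstacle: obtaining condition‑number‑free, relative control of the pseudo‑inverse of a rank‑$m$ truncation is the asymmetric analog of Theorem~\ref{thm:relative-davis-kahan} and requires the full strength of the four‑piece decomposition in Definition~\ref{def:asymmetric}.

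For step (ii), expand $Q_a$ into the $5\times 5$ products coming from $AB^\top + A\Delta_1^{ab}B^\top + A(\Delta_2^{ab})^\top + \Delta_3^{ab}B^\top + \Delta_4^{ab}$ on the left, the five analogous pieces of $\Sigma_{ca}$ on the right, and $[\Sigma_{bc}^\top]_m^+$ in the middle. The single ``main'' product $AB^\top[\Sigma_{bc}^\top]_m^+CA^\top$ equals $A(\Id_m + O(\epsilon))A^\top = AA^\top + A\Phi A^\top$ with $\|\Phi\|\lesssim\epsilon$ by step (i). Each remaining product is classified by how many of the two outer $A$‑factors it retains: products keeping $A$ on the left and $A^\top$ on the right give $A\Phi' A^\top$ with $\|\Phi'\|\lesssim\epsilon$ (inner matrix $=$ a $\Delta$‑piece of relative norm $\le\epsilon$ times $B^\top[\Sigma_{bc}^\top]_m^+C = \Id + O(\epsilon)$, and similar); products keeping $A$ on only one side give $A\Psi^\top$ or $\Psi A^\top$ with $\|\Psi\|\lesssim\epsilon\,\sigma_{\min}(A)$; products keeping $A$ on neither side give $\Xi$ with $\|\Xi\|\lesssim\epsilon\,\sigma_{\min}(A)^2$. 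The subtle point is the last two classes: a term such as $\Delta_4^{ab}[\Sigma_{bc}^\top]_m^+CA^\top$ is far too large if bounded by submultiplicativity alone, but $\Delta_4^{ab}$ has row space in $\mathrm{col}(B)^\perp$ while $[\Sigma_{bc}^\top]_m^+$ has column space within angle $O(\epsilon)$ of $\mathrm{col}(B)$, so only an $O(\epsilon)$ fraction of $[\Sigma_{bc}^\top]_m^+$ survives the product; this ``orthogonal‑complement killing'', combined with the $\sigma_{\min}$‑scaled norms of Definition~\ref{def:asymmetric} and the scale of $[\Sigma_{bc}^\top]_m^+$ from step (i), is precisely what removes all condition‑number factors.

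Finally, collecting the three classes gives (after taking the symmetric part, as the algorithm does) $Q_a - AA^\top = A\Phi A^\top + A\Psi_2^\top + \Psi_3 A^\top + \Xi$ with $\|\Phi\|\lesssim\epsilon$, $\|\Psi_2\|,\|\Psi_3\|\lesssim\epsilon\,\sigma_{\min}(A)$, and $\|\Xi\|\lesssim\epsilon\,\sigma_{\min}(A)^2 = \epsilon\,\sigma_m(AA^\top)$. Using $A\Psi^\top + \Psi A^\top \preceq \epsilon AA^\top + \epsilon^{-1}\Psi\Psi^\top \preceq \epsilon AA^\top + \epsilon\,\sigma_m(AA^\top)\Id$ together with $A\Phi A^\top \preceq \epsilon AA^\top$ and $\Xi\preceq\epsilon\,\sigma_m(AA^\top)\Id$ (and the same bounds for the negatives) yields $\pm(Q_a - AA^\top)\preceq O(\epsilon)\big(AA^\top + \sigma_m(AA^\top)\Id\big)$, i.e.\ $Q_a - AA^\top$ is $O(\epsilon)$‑spectrally bounded by $A$ in the sense of Definition~\ref{def:spb}. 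Throughout, one assumes $\epsilon$ below a universal constant so that the rank‑$m$ truncation is well defined and the perturbation bounds of step (i) apply.
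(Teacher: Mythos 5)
Your overall plan mirrors the paper's: show the truncated pseudo-inverse in the middle is relatively close to $(CB^\top)^+$, then expand the outer product and bound term-by-term using the $\sigma_{\min}$-scaled norms of Definition~\ref{def:asymmetric} plus orthogonality ("orthogonal-complement killing"). You correctly single out the crucial identity $B^\top(CB^\top)^+C = \Id_m$ and its perturbed version $B^\top[\Sigma_{bc}^\top]_m^+C = \Id_m + O(\epsilon)$ as the heart of the matter. The bookkeeping is slightly different (you expand directly on the four-piece decompositions of $E_{ab}, E_{ca}$, getting $5\times 5$ products, whereas the paper first re-packages $\Sigma_{ab}, \Sigma_{ca}$ via Lemma~\ref{lem:represent} into a ``main term plus absolutely-small remainder'' form so that only $2\times 2\times 2 = 8$ products appear), but this is cosmetic.

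There is, however, a gap in what you claim step (i) delivers versus what step (ii) requires. In step (i) you record only: (a) $B^\top[\Sigma_{bc}^\top]_m^+ C = \Id + O(\epsilon)$, (b) the column/row spaces of $[\Sigma_{bc}^\top]_m^+$ are within angle $O(\epsilon)$ of $\mathrm{col}(B)$ and $\mathrm{col}(C)$, and (c) $\|[\Sigma_{bc}^\top]_m^+\|\lesssim 1/\sigma_m(CB^\top)$. These are not enough to kill all condition-number factors in step (ii). Consider your own example term $\Delta_4^{ab}[\Sigma_{bc}^\top]_m^+CA^\top$: you want $\|\Delta_4^{ab}[\Sigma_{bc}^\top]_m^+C\| \lesssim \epsilon\,\sigma_{\min}(A)$. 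Using only (b) and (c), orthogonal-complement killing gives $\|\Delta_4^{ab}[\Sigma_{bc}^\top]_m^+\|\lesssim\epsilon\cdot\epsilon\sigma_{\min}(A)\sigma_{\min}(B)/(\sigma_{\min}(B)\sigma_{\min}(C)) = \epsilon^2\sigma_{\min}(A)/\sigma_{\min}(C)$; multiplying by $\|C\|$ then produces $\epsilon^2\sigma_{\min}(A)\cdot\kappa(C)$. The stray $\kappa(C)$ is exactly the condition number you set out to avoid. What is actually needed is the one-sided relative bound $\big\|\big([\Sigma_{bc}^\top]_m^+ - (CB^\top)^+\big)\,C\big\|\lesssim\epsilon/\sigma_{\min}(B)$ (and its transpose, $\|B^\top([\Sigma_{bc}^\top]_m^+ - (CB^\top)^+)\|\lesssim\epsilon/\sigma_{\min}(C)$); with it, $\Delta_4^{ab}(CB^\top)^+C = \Delta_4^{ab}(B^\top)^+ = 0$ and the remainder is $\epsilon^2\sigma_{\min}(A)$ with no $\kappa$. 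These one-sided bounds are precisely the content of the paper's Lemma~\ref{lem:mainperturbterm}, and they do not follow from the subspace-angle and operator-norm statements alone; deriving them is where most of the work lies (the paper routes it through Lemma~\ref{lem:represent}, which re-expresses the relatively-bounded perturbation as a structured one plus an absolutely-small term, and then a hands-on pseudo-inverse perturbation computation, Lemma~\ref{lem:maintermsimple}). So your step (i) needs to be strengthened to explicitly claim and prove the four relative bounds on $[\Sigma_{bc}^\top]_m^+ - (CB^\top)^+$ sandwiched by $B^\top$ and $C$ on each side; once that is in place, your step (ii) goes through.
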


The theorem is non-trivial even if the we have an absolute error assumption, that is, even if $\norm{E_{bc}}\le \tau\sigma_{min}(B)\sigma_{min}(C)$, which is stronger condition than $E_{bc}$ is $\tau$-\spb by $(B,C)$. 
Suppose we establish bounds on $\norm{\Sigma_{ab} - AB^{\top}}$, $\|\Sigma_{bc}^{+\top} - (BC^\top)^{+} \|$ and  $\norm{\Sigma_{ab} - AB^{\top}}$ individually, and then putting them together in the obvious way to control the error $\Sigma_{ab}[\Sigma_{bc}^\top]_m^+\Sigma_{ca} -  AB^{\top}(BC^{\top})^+ CA^{\top}$. Then the error will be too large for us. This is because standard matrix perturbation theory gives that $\|\Sigma_{bc}^{-\top} - (BC^\top)^{-1} \|$ can be bounded by $O\left(\|E_{bc}\|\|(BC^\top)^{-1}\|^2\right)\lesssim$
 $\epsilon/[\sigma_{min}(B)\sigma_{min}(C)]$, which is tight.  Then we multiply the error with the norm of the rest of the two terms, the error will be roughly $\epsilon\cdot \frac{\sigma_{\max}(B)\sigma_{\max}(C)}{\sigma_{\min}(B)\sigma_{\min}(C)}$. That is, we will loss a condition number of $B,C$, which can be dimension dependent for our case. 

The fix to this problem is to avoid bounding each term in $\Sigma_{ab}[\Sigma_{bc}^\top]_m^+\Sigma_{ca}$ individually. To do this, we will take the cancellation of these terms into account.  
 Technically, we re-decompose the product $\Sigma_{ab}[\Sigma_{bc}^\top]_m^+\Sigma_{ca}$ into a new product of three matrices $(\Sigma_{ab}B^{+}) (B[\Sigma_{bc}^\top]_m^+C)(C^+\Sigma_{ca})$, and then bound the error in each of these terms instead. See Section~\ref{s:whitening} for details. 

As a corollary, we conclude that the whitened vectors $(Q_a^+)^{1/2}a_i$'s are indeed approximately orthonormal.
\begin{corollary}
	In the setting of Theorem~\ref{thm:whiten}, we have that $
	(Q_a^+)^{1/2}A$ contains approximately orthonormal vectors as columns, in the sense that 
	\begin{align}
	\norm{(Q_a^+)^{1/2}AA^{\top}(Q_a^+)^{1/2}-\Id}\lesssim \epsilon\mper \nonumber
	\end{align}
\end{corollary}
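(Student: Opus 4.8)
The plan is to reduce the corollary to a single spectral-norm bound on the ``relative error'' $(Q_a^+)^{1/2}(Q_a-AA^\top)(Q_a^+)^{1/2}$, and then to obtain that bound from the spectral-boundedness conclusion of Theorem~\ref{thm:whiten} once $\sigma_m(Q_a)$ is controlled from below. Write $M=AA^\top$, $N=Q_a-M$, $W_a=(Q_a^+)^{1/2}$, and $\widehat{K}=\range(Q_a)$; I take $Q_a$ to be symmetric positive semidefinite (as the statement presupposes, symmetrizing if necessary), so ``$\Id$'' in the corollary is $\Id_{\widehat{K}}$, the orthogonal projector onto $\widehat{K}$, because $W_a$ annihilates $\widehat{K}^\perp$. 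By Theorem~\ref{thm:whiten}, $N$ is $O(\epsilon)$-spectrally bounded by $A$, which I use in the two-sided form $\pm N\preceq c\epsilon(M+\sigma_m(M)\Id_n)$ for a universal constant $c$, and I assume $c\epsilon<\tfrac14$. The reduction is the exact identity $W_aMW_a-\Id_{\widehat{K}}=W_a(M-Q_a)W_a=-W_aNW_a$, which uses only $W_aQ_aW_a=\Id_{\widehat{K}}$; so it suffices to prove $\|W_aNW_a\|\lesssim\epsilon$.

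Next I would establish $\sigma_m(Q_a)\gtrsim\sigma_m(M)$. Restricting $-N\preceq c\epsilon(M+\sigma_m(M)\Id_n)$ to $K=\range(A)$ and using $x^\top Mx\ge\sigma_m(M)$ for unit $x\in K$ gives $x^\top Q_ax\ge(1-c\epsilon)x^\top Mx-c\epsilon\sigma_m(M)\ge(1-2c\epsilon)\sigma_m(M)=:\mu>0$. Since $\dim K=m$ while $\mathrm{rank}(Q_a)\le m$ (it is built from a rank-$m$ pseudo-inverse), the orthogonal projection $\Id_{\widehat{K}}$ restricted to $K$ must be a bijection onto $\widehat{K}$ — if $\Id_{\widehat{K}}x=0$ for some $0\neq x\in K$ then $x^\top Q_ax=(\Id_{\widehat{K}}x)^\top Q_a(\Id_{\widehat{K}}x)=0$, contradicting $\mu>0$ — so $\mathrm{rank}(Q_a)=m$, and every unit $y\in\widehat{K}$ is $\Id_{\widehat{K}}x$ for some $x\in K$ with $\|x\|\ge\|\Id_{\widehat{K}}x\|=1$, whence $y^\top Q_ay=x^\top Q_ax\ge\mu\|x\|^2\ge\mu$. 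Thus $\sigma_m(Q_a)\ge\mu$, and hence $\sigma_m(M)\,Q_a^+\preceq(1-2c\epsilon)^{-1}\Id_{\widehat{K}}\preceq 2\Id_{\widehat{K}}$.

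Then the conclusion follows by a short computation in the PSD order. Conjugating $\pm N\preceq c\epsilon(M+\sigma_m(M)\Id_n)$ by $W_a$ and using $W_a\Id_nW_a=Q_a^+$, $W_aMW_a=\Id_{\widehat{K}}-W_aNW_a$, and $\sigma_m(M)Q_a^+\preceq 2\Id_{\widehat{K}}$, I get, writing $R:=W_aNW_a$, that $\pm R\preceq c\epsilon(3\Id_{\widehat{K}}-R)$. Rearranging the two inequalities gives $R\preceq\tfrac{3c\epsilon}{1+c\epsilon}\Id_{\widehat{K}}$ and $-R\preceq\tfrac{3c\epsilon}{1-c\epsilon}\Id_{\widehat{K}}$, so $\|R\|\le\tfrac{3c\epsilon}{1-c\epsilon}=O(\epsilon)$; with the identity above this is $\|W_aMW_a-\Id_{\widehat{K}}\|=O(\epsilon)$, as claimed. (The same $O(\epsilon)$ bound then transfers to the $m\times m$ Gram matrix $A^\top Q_a^+A$, which has the same nonzero spectrum.)

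The step I expect to be the main obstacle is the lower bound $\sigma_m(Q_a)\gtrsim\sigma_m(AA^\top)$. The spectral-boundedness guarantee carries the full-rank term $\sigma_m(M)\Id_n$, so merely bounding $\|N\|$ and invoking Weyl's inequality would not rule out $Q_a$ having very small, or even vanishing, small eigenvalues without paying a factor of the condition number of $A$ — exactly the loss we are trying to avoid. The argument above sidesteps this by simultaneously using that $Q_a$ is PSD, has rank at most $m$, and is bounded below as a quadratic form on the $m$-dimensional space $K$: these pin $\range(Q_a)$ to lie ``over'' $K$ and carry the lower bound from $K$ to $\widehat{K}$. (Alternatively one could locate $\range(Q_a)$ via Theorem~\ref{thm:relative-davis-kahan} applied to the PSD matrix $N+c\epsilon(M+\sigma_m(M)\Id_n)$.) Everything else is routine manipulation of the relation $\preceq$.
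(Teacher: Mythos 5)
Your proof is correct, and the paper itself states this corollary without supplying a proof, so there is no ``official'' argument to compare against. Your approach is a sound and fairly tight way to fill that gap. The reduction to $\norm{(Q_a^+)^{1/2}N(Q_a^+)^{1/2}}$ via the identity $(Q_a^+)^{1/2}Q_a(Q_a^+)^{1/2}=\Id_{\widehat K}$ is standard; the genuinely non-trivial step is the one you single out, namely getting $\sigma_m(Q_a)\gtrsim\sigma_m(AA^\top)$ from spectral boundedness without a Weyl-type bound (which would cost a condition number of $A$). Your dimension/rank argument --- $\mathrm{rank}(Q_a)\le m$, the quadratic form of $Q_a$ is bounded below by $(1-2c\epsilon)\sigma_m(AA^\top)$ on the $m$-dimensional subspace $K=\range(A)$, hence $\Id_{\widehat K}|_K$ is a bijection and the lower bound transports to $\widehat K$ --- is exactly the right way to pin down $\range(Q_a)$ and is the same mechanism that makes the paper's Lemma~\ref{lem:consequence_spb} work. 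The subsequent bootstrap $\pm R\preceq c\epsilon(3\Id_{\widehat K}-R)$, which avoids circularity in bounding $(Q_a^+)^{1/2}AA^\top(Q_a^+)^{1/2}$, is a nice touch. Two small remarks: you correctly interpret the spectral-boundedness conclusion of Theorem~\ref{thm:whiten} in the two-sided form $\pm N\preceq c\epsilon(AA^\top+\sigma_m(AA^\top)\Id)$; Definition~\ref{def:spb} is written as a one-sided semidefinite inequality, but the proof of the theorem (via Claim~\ref{clm:tool} applied with both signs of $\Delta$ and $\Gamma$) does in fact establish both inequalities, and indeed for a non-sign-definite error such as $Q_a-AA^\top$ the two-sided reading is the only meaningful one. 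You also rightly flag the need to take $Q_a$ symmetric PSD for $(Q_a^+)^{1/2}$ to make sense; if one literally symmetrizes $\Sigma_{ab}[\Sigma_{bc}^\top]_m^+\Sigma_{ca}$ the rank could a priori rise to $2m$, so to keep $\mathrm{rank}(Q_a)\le m$ one should project onto the top-$m$ PSD part rather than naively averaging with the transpose --- a minor hygiene point the paper glosses over as well.
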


Therefore we have found an approximate whitening matrix for $A$ even though we do not have access to the diagonal entries.

\section{Main Algorithms and Results}\label{sec:main_results}

As sketched in Section~\ref{s:overview}, our main algorithm (Algorithm~\ref{alg:main}) uses tensor decomposition on the PMI tensor. In this section, we describe the different steps and how the fit together. Subsequently, all steps will be analyzed in separate sections. 

\begin{algorithm}\caption{Learning Noisy-Or Networks via Decomposing PMI Tensor}\label{alg:main} 
	{\bf Inputs: } $N$ samples generated from a noisy-or network, disease prior $\rho$
	
	{\bf Outputs: } Estimate of weight matrix $\widehat{W}$. 

	\begin{enumerate}
		\item Compute the empirical PMI matrix and tensor $\widehat{\pmi}$, $\widehat{\pmitensor}$ using equation~\eqref{eq:pluginest}. 
		\item Choose a random equipartition $S_a, S_b, S_c$ of $[n]$. 
		\item Obtain approximate whitening matrices for $\widehat{\pmitensor}$ via Algorithm \ref{a:alg1} for the partitioning $S_a, S_b, S_c$
		\item Run robust tensor-decomposition Algorithm \ref{a:alg2} to obtain vectors $\hat{a}_i, \hat{b}_i, \hat{c}_i, i \in [m]$  
		\item 		 Let $Y_{i}$ be the concatenation of the three vectors $\allones - (\frac{1-\rho}{\rho})^{1/3}\hat{a}_i, \allones -(\frac{1-\rho}{\rho})^{1/3}\hat{b}_i,  \allones - (\frac{1-\rho}{\rho})^{1/3}\hat{c}_i$. (Recall that $\hat{a}_i, \hat{b}_i, \hat{c}_i$ are of dimension $n/3$ each.)
	  \item {\bf Return} $\widehat{W}$, where
		\[
		\widehat{W}_{i,j} = \begin{cases} 
		-\log ((Y_i)_j), & \text{if } (Y_i)_j > \exp(-\nu_u) : \\ 
		\exp(-\nu_u), & \text{otherwise } 
		\end{cases} 
		\]
			\end{enumerate}
\end{algorithm}

\begin{theorem}[Main theorem, random weight matrix] \label{thm:main-random}
	Suppose the true $W$ is generated from the random model in Section~\ref{s:intro} with $\prior \spr m \le c$ for some sufficiently small constant $c$. Then given $N = \poly(n,1/p/,1/\rho)$ number of examples, Algorithm~\ref{alg:main} returns 
a weight matrix $\widehat{W}$ in polynomial time  that satisfies 
	$$\forall i\in [m], \|\widehat{W}_i - W_i\|_2 \le \widetilde{O}(\eta\sqrt{\spr n})\,,	\label{t:finalguarantee}$$
	where 
	$\eta  = \tilde{O}\left( \sqrt{m \spr} \rho \right) $. 
\end{theorem}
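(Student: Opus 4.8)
The plan is to propagate a single chain of error estimates through Algorithm~\ref{alg:main}, invoking the tools of Section~\ref{s:overview} in order: the robust whitening theorem (Theorem~\ref{thm:whiten}), the norm-to-boundedness lemma (Lemma~\ref{lem:norm-to-boundedness}), the robust orthogonal tensor decomposition (Theorem~\ref{thm:main_orthogonal_alg}), and the probabilistic bounds of Propositions~\ref{prop:lowrank}, \ref{prop:generative_noise_signal_ratio} and~\ref{prop:random-boundedness}. \emph{Step 1 (from samples to population structure).} I would first argue that with $N=\poly(n,1/p,1/\rho)$ samples the empirical $\widehat\pmi,\widehat\pmitensor$ agree with their population versions up to $\rho/\poly(n)$ in entrywise and flattening-spectral norm. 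The point is that the hypothesis $\rho p m\le c$ forces every marginal $\Pr[s_i=0]=\prod_j(1-\rho F_{ij})$ (and every pair/triple marginal) to be $\Theta(1)$ --- the relevant exponent is the in-degree $\approx pm$ of a symptom --- so the logarithms defining PMI are $O(1)$-Lipschitz in the estimated probabilities and standard Bernstein bounds suffice. This reduces everything to the population quantities, whose structure on the equipartition blocks is, by the precise form of Proposition~\ref{prop:lowrank}: $\pmi_{S_a,S_b}=\rho AB^\top+E_{ab}$ with $a_k=F_{k,S_a}$, $b_k=F_{k,S_b}$, where $E_{ab}$ is dominated by $\rho^2\Gamma\Delta^\top$ ($\Gamma=[G_{1,S_a},\dots]$, etc.); and $\pmitensor_{S_a,S_b,S_c}=c'\rho\sum_k a_k\otimes b_k\otimes c_k + c'\rho^2\sum_k\gamma_k\otimes\delta_k\otimes\theta_k + O(\rho^3)$ with $c'=\tfrac{1}{1-\rho}$.

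\emph{Steps 2--3 (whitening and decomposition).} By Proposition~\ref{prop:random-boundedness} the matrices $\Gamma\Gamma^\top,\Delta\Delta^\top,\Theta\Theta^\top$ are $O(\log n)$-spectrally bounded by $A,B,C$; combined with $\sigma_{\min}(A),\sigma_{\min}(B),\sigma_{\min}(C)=\tilde\Omega(\sqrt{pn})$ (from the generative model), this lets me check that $E_{ab}$ is $O(\rho\log n)$-spectrally bounded by $(A,B)$ in the sense of Definition~\ref{def:asymmetric}, and likewise for $(B,C)$, $(C,A)$. Theorem~\ref{thm:whiten}, applied with the effective factors $\sqrt\rho A,\sqrt\rho B,\sqrt\rho C$, then produces whitening matrices $Q_a,Q_b,Q_c$ with $Q_a-\rho AA^\top$ $O(\rho\log n)$-spectrally bounded by $A$, and the corollary following Theorem~\ref{thm:whiten} shows the columns $(Q_a^+)^{1/2}a_k$ are orthonormal up to $O(\rho\log n)$. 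Applying $(Q_a^+)^{1/2}\otimes(Q_b^+)^{1/2}\otimes(Q_c^+)^{1/2}$ to $\pmitensor_{S_a,S_b,S_c}$ gives $\Lambda\sum_k\bar a_k\otimes\bar b_k\otimes\bar c_k+Z$ with $\bar a_k=(Q_a^+)^{1/2}a_k$, $\Lambda=\Theta(\rho^{-1/2})$, and $Z$ collecting the whitened second-order Taylor term, the whitening imperfection, the $O(\rho^3)$ remainder, and the sampling error. Lemma~\ref{lem:norm-to-boundedness} with Proposition~\ref{prop:random-boundedness} bounds the $\{1\}\{2,3\}$ and $\{2\}\{1,3\}$ norms of the whitened second-order term by $O(\rho^{1/2}\log^{3/2}n)$; I would verify that replacing $\rho AA^\top$ by $Q_a$ perturbs this only at lower order (using the $O(\rho\log n)$-boundedness of $Q_a-\rho AA^\top$) and that the other contributions to $Z$ are smaller, so $\|Z\|_{\{1\}\{2,3\}},\|Z\|_{\{2\}\{1,3\}}\lesssim\rho^{1/2}\log^{3/2}n$. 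Feeding $\Lambda^{-1}$ times this tensor to Theorem~\ref{thm:main_orthogonal_alg} (Step~4 of Algorithm~\ref{alg:main}) recovers the components, matched under one permutation with signs fixed by $\Lambda>0$, with whitened-space $\ell_2$ error $O(\rho\log^{3/2}n)$.

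\emph{Step 4 (un-whiten, invert).} Un-whitening multiplies the whitened components by $Q_a^{1/2}$ and applies the Step~5 rescaling $(\tfrac{1-\rho}{\rho})^{1/3}$, which exactly restores the overall normalization: the factor $c'=\tfrac1{1-\rho}$ of the $\pmitensor$ decomposition, the symmetric split of the whitened eigenvalue $\Lambda$ across the three modes, and the whitening factors cancelling. Tracking all powers of $\rho$, the net amplification of the $O(\rho\log^{3/2}n)$ whitened error is $\sigma_{\max}(A)$, which under the generative model equals $\tilde\Theta(p\sqrt{mn})$ (the $\allones$-direction dominates the spectrum of $A$), so $\|(\tfrac{1-\rho}{\rho})^{1/3}\hat a_i - F_{i,S_a}\|_2\le\tilde O(\rho p\sqrt{mn})=\tilde O(\eta\sqrt{pn})$ with $\eta=\tilde O(\sqrt{mp}\rho)$, and the same on $S_b,S_c$; hence $Y_i$ approximates $\exp(-W_{\cdot i})$ with $\|Y_i-\exp(-W_{\cdot i})\|_2\le\tilde O(\eta\sqrt{pn})$. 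Finally $W_{ji}\le\nu_u$ gives $\exp(-W_{ji})\ge e^{-\nu_u}=\Omega(1)$, so $x\mapsto-\log x$ is $O(1)$-Lipschitz on the valid range and the truncation in Step~6 only clamps out-of-range estimates toward the truth; therefore $\|\widehat W_i-W_i\|_2\lesssim\|Y_i-\exp(-W_{\cdot i})\|_2\le\tilde O(\eta\sqrt{pn})$, which is the claim.

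\emph{Main obstacle.} The delicate part is Steps~2--3. First, the error in $\Sigma_{ab}[\Sigma_{bc}^\top]_m^+\Sigma_{ca}$ must be controlled without losing a condition-number factor $\sigma_{\max}(A)/\sigma_{\min}(A)$ (here $\approx\sqrt{mp}$, which would ruin the bound); this is exactly why Theorem~\ref{thm:whiten} re-decomposes the product as $(\Sigma_{ab}B^+)(B[\Sigma_{bc}^\top]_m^+C)(C^+\Sigma_{ca})$ before estimating, and one must verify the systematic-error hypotheses of that theorem hold for our $E_{ab},E_{bc},E_{ca}$. Second, one must show that using the \emph{imperfect} $Q_a,Q_b,Q_c$ instead of $AA^\top,BB^\top,CC^\top$ still leaves an error tensor with flattening norm $O(\rho^{1/2}\polylog n)$ --- i.e.\ that $\tau$-spectral boundedness of the error's components is essentially preserved under the imperfect whitening --- since Lemma~\ref{lem:norm-to-boundedness} is stated for the exact whitening. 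Everything else is the bookkeeping sketched above.
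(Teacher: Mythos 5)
Your proposal follows the paper's proof essentially step for step: pass to population PMI via Lipschitzness of log (Lemma~\ref{l:samplematrix}), use the Taylor expansion of the PMI tensor and the spectral-boundedness of the higher-order terms (Propositions~\ref{p:firstspectral} and~\ref{p:thirdspectral}), invoke the robust whitening theorem~\ref{thm:whiten} with the incoherence transfer of Theorem~\ref{t:asymmetricspec}, run the robust decomposition Theorem~\ref{thm:main_alg}, and finish by un-whitening, rescaling by $(\tfrac{1-\rho}{\rho})^{1/3}$, and taking $\log$ with $\exp(-W_{ij})\ge e^{-\nu_u}$. The only cosmetic difference is bookkeeping: you introduce an explicit normalization $\Lambda\approx\rho^{-1/2}$ with $Q_a\approx\rho AA^\top$, whereas Algorithm~\ref{a:alg1} bakes the $\rho^{-1/3}$ factor into $Q_a$ so that the whitened tensor's leading coefficient is already $1$ (needed for the threshold test in Algorithm~\ref{alg:orthogonal}); both lead to $\tilde O(\rho p\sqrt{mn})$, and the two ``obstacles'' you flag (no condition-number loss in the whitening, and transferring $\tau$-boundedness by $A$ to boundedness by $Q_a^{1/2}$) are exactly what Theorem~\ref{thm:whiten} and Lemma~\ref{l:approx_evals} in the paper resolve.
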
 
\noindent Note that the column $\ell_2$ norm of $W_i$ is on the order of $\sqrt{pn}$, and thus $\eta$ can be thought of as the relative error in $\ell_2$ norm. Note also that 
$\Pr[s_i=0] = 1 - \Pr[s_i=1] \approx  1 - \spr m \prior$, so $\prior \spr m = o(1)$ is necessary purely for sample complexity reasons.   
Finally, we can also state a result with a slightly weaker guarantee, but with only deterministic assumptions on the weight matrix $W$. Recall that $F = 1-\exp(-W)$ and $G = 1-\exp(-2W)$. 
We will also define third and fourth-order terms $H = 1 - \exp(-3W)$, $L = 1 - \exp(-4W)$.  

We also define the incoherence of a matrix $F$. Roughly speaking, it says that the left singular vectors of $F$ don't correlate with any of the natural basis vector much more than the average. 

\begin{definition}[Incoherence:]
	Let $F\in \R^{n\times m}$ have singular value decomposition $F = U\Sigma V^{\top}$. We say $F$ is $\mu$-incoherent if $\max_{i}\Norm{U_i} \le  \sqrt{\mu m/n}$.  where $U_i$ is the $i$-th row of $U$.
	\label{d:incoherence}
\end{definition}
We assume the weight matrix $W$ satisfies the following deterministic assumptions, \\
1.  $GG^{\top}, HH^{\top}, LL^{\top}$ is $\tau$-spectrally bounded by $F$ for $\tau \ge 1$. \\
2. $F$ is $\mu$-incoherent with $\mu \leq \widetilde{O}(\sqrt{n/m})$.\\
3. If $\max_i \|F_i\|_0 \leq p n$, with high probability over the choice of a subset $S_a, |S_a| = n/3$, $\sigma_{\min}(F_{S_a})\gtrsim \sqrt{np}$ and $\prior \spr m \le c$ for some sufficiently small constant $c$.  
\begin{theorem}[Main theorem, deterministic weight matrix] \label{thm:deterministic}	Suppose the matrix $W$ satisfies the conditions 1-3 above. Given polynomial number of samples, 
				Algorithm~\ref{alg:main} returns $\widehat{W}$ in polynomial time, s.t.  			$$\forall i\in [m], ~~\|\widehat{W}_i - W_i\|_2 \le \widetilde{O}(\eta \sqrt{np})\mper$$ 
	for 
	$\eta =  \sqrt{mp} \rho \tau^{3/2} $
					\label{t:finaldeterministic}
\end{theorem}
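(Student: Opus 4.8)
The plan for Theorem~\ref{thm:deterministic} is to run the error analysis of Algorithm~\ref{alg:main} by composing the three structural results of Section~\ref{s:overview} --- the low-rank Taylor expansion of the PMI matrix and tensor (Proposition~\ref{prop:lowrank}), the robust whitening theorem (Theorem~\ref{thm:whiten}), and the robust orthogonal tensor-decomposition theorem (Theorem~\ref{thm:main_orthogonal_alg}) --- and then inverting the map $W\mapsto 1-\exp(-W)$ at the very end. The whole point is to keep every error \emph{relative}, in the spectrally-bounded sense of Definitions~\ref{def:spb} and~\ref{def:asymmetric}, so that a dimension-dependent condition number never multiplies it: concretely, I aim to show each intermediate object is $\widetilde{O}(\rho\tau)$-spectrally-bounded-close to its noiseless counterpart, and that this compounds to the claimed per-column $\ell_2$-error $\widetilde{O}(\eta\sqrt{np})$ with $\eta=\sqrt{mp}\,\rho\,\tau^{3/2}$.

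\emph{Moments, sampling, and whitening (Steps 1--3).} First I would apply the precise form of Proposition~\ref{prop:lowrank} (the Taylor/moment lemma, Lemma~\ref{claim:moments}) to the blocks cut out by the equipartition $S_a,S_b,S_c$: with $A=F_{S_a}$, $\Gamma=G_{S_a}$, and likewise $B,\Delta$ on $S_b$ and $C,\Theta$ on $S_c$, each off-diagonal block $\widehat{\pmi}_{S_a,S_b}$, rescaled by the (known) signal coefficient, equals $AB^{\top}+\rho\,\Gamma\Delta^{\top}+(\text{a }\rho^2\text{-order term in }H,L)+(\text{sampling error})$, and $\widehat{\pmitensor}_{S_a,S_b,S_c}$ equals, after the analogous rescaling, $\sum_k a_k\otimes b_k\otimes c_k+\rho\sum_k\gamma_k\otimes\delta_k\otimes\theta_k+(\rho^2\text{-tail})+(\text{sampling error})$. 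Since $\rho p m=o(1)$, every symptom-absent probability appearing in the PMI is bounded away from $0$ and $1$, so $\log$ is $O(1)$-Lipschitz there; a Chernoff plus union bound then shows that $N=\poly(n,1/p,1/\rho)$ samples make the sampling errors negligible next to the systematic $\rho$-order term, hence they can be absorbed into the $\Delta_i$ of Definition~\ref{def:asymmetric}. To invoke Theorem~\ref{thm:whiten} I must check that $E_{ab}:=\widehat{\pmi}_{S_a,S_b}/(\text{coeff})-AB^{\top}$ is $\widetilde{O}(\rho\tau)$-spectrally bounded by $(A,B)$: the $\rho\,\Gamma\Delta^{\top}$ piece is handled by assumption~1 (that $GG^{\top}$ is $\tau$-spectrally bounded by $F$), the $\rho^2$-tail by the $HH^{\top},LL^{\top}$ clauses of assumption~1 (the residual beyond order four being negligible by a crude entrywise bound), and in both cases assumption~3 is what lets me replace $\sigma_m(FF^{\top})$ in Definition~\ref{def:spb} by $\sigma_m(AA^{\top})=\sigma_{\min}(F_{S_a})^2\gtrsim np$, using $\sigma_m(FF^{\top})\le\|F\|_F^2/m\lesssim np$. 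Assumption~2 (incoherence of $F$, passed to the sub-blocks) is used to control the effect of the incorrect/absent diagonal entries --- ``Difficulty~3'' of the overview. Theorem~\ref{thm:whiten} then returns $Q_a,Q_b,Q_c$ with $Q_a-AA^{\top}$ (etc.) $O(\epsilon)$-spectrally bounded, $\epsilon=\widetilde{O}(\rho\tau)$, so that by its Corollary $(Q_a^+)^{1/2}A$ has columns orthonormal up to $O(\epsilon)$, and similarly in the other two modes.

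\emph{Tensor decomposition, un-whitening, and inversion (Steps 4--6).} Applying $(Q_a^+)^{1/2}\otimes(Q_b^+)^{1/2}\otimes(Q_c^+)^{1/2}$ to the rescaled $\widehat{\pmitensor}_{S_a,S_b,S_c}$ produces an orthonormal rank-$m$ tensor $\sum_k\bar u_k\otimes\bar v_k\otimes\bar w_k$ plus an error $Z$ that collects: the whitened second-order term, whose $\flnorm{\cdot}$ and $\{2\}\{1,3\}$ norms are $\le(2\tau)^{3/2}\rho$ by Lemma~\ref{lem:norm-to-boundedness} --- whose hypothesis ``$\Gamma\Gamma^{\top}$ etc.\ $\tau$-spectrally bounded by $A$'' is Proposition~\ref{prop:random-boundedness} in the random case and here again follows from assumption~1 restricted to $S_a$ together with assumption~3; the $\rho^2$-order Taylor tail; the tensor sampling error; and the error from using the approximate $Q$'s in place of the exact $\barQa,\barQb,\barQc$. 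Theorem~\ref{thm:main_orthogonal_alg} then recovers $\{(\bar u_k,\bar v_k,\bar w_k)\}$ up to permutation with $\ell_2$-error $O(\|Z\|)=\widetilde{O}(\rho\tau^{3/2})$. Un-whitening gives $\hat{a}_k=(\text{scalar})\,Q_a^{1/2}\bar u_k\approx a_k$; there is no sign/scale ambiguity since $a_k,b_k,c_k\ge 0$ and $\rho$ is known, which is exactly what Step~5's rescaling by the known factor pins down. The un-whitening multiplies the $\ell_2$-error by $\|Q_a^{1/2}\|$; relative to $\|a_k\|\ge\sigma_{\min}(F_{S_a})\gtrsim\sqrt{np}$, and using $\|F\|_F^2\lesssim mpn$ together with assumptions~2--3, this loses a factor $\widetilde{O}(\sqrt{mp})$, so each $\widehat{F}_k$ (the concatenation of $\hat{a}_k,\hat{b}_k,\hat{c}_k$) satisfies $\|\widehat{F}_k-F_k\|_2\le\widetilde{O}(\eta)\|F_k\|_2$. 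Finally, since $W_{kj}\le\nu_u=O(1)$ we have $1-F_{kj}=e^{-W_{kj}}=\Omega(1)$, so $t\mapsto-\log(1-t)$ is $O(1)$-Lipschitz on the relevant interval, and the clipping in Step~6 (which caps each estimated weight at the a priori bound $\nu_u$) only improves the estimate; hence $\|\widehat{W}_k-W_k\|_2\lesssim\|\widehat{F}_k-F_k\|_2\le\widetilde{O}(\eta\,\|F_k\|_2)\le\widetilde{O}(\eta\sqrt{pn})$, using $\|F_k\|_2\le\sqrt{pn}$ from the column sparsity in assumption~3.

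\emph{Main obstacle.} The delicate part is not any individual quoted theorem but the propagation of error through the \emph{approximate} whitening: Algorithm~\ref{alg:main} never sees the exact covariances $\barQa,\barQb,\barQc$ of Section~\ref{subsec:bias}, only $Q_a,Q_b,Q_c$ that are $O(\epsilon)$-close in the relative sense, and a naive bound when forming $(Q_a^+)^{1/2}$ and the whitened tensor would multiply $\epsilon$ by the (dimension-dependent) condition number of $F_{S_a}$. Showing instead that the whitened tensor's $\flnorm{\cdot}$ and $\{2\}\{1,3\}$ norms inflate by only an $O(1)$ factor --- by tracking the cancellations between $(Q_a^+)^{1/2}$ and the signal/error components simultaneously, exactly as in the re-decomposition trick following Theorem~\ref{thm:whiten} --- is the crux, and it is the reason the whole analysis must be carried through with the spectral-boundedness bookkeeping of Definitions~\ref{def:spb}--\ref{def:asymmetric} rather than plain operator norms. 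Everything else is composition of the quoted results by the triangle inequality.
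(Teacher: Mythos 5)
Your proposal follows the paper's own proof route almost step for step: Taylor-expand the PMI into a rank-$m$ signal plus successively smaller spectrally-bounded error layers, verify the asymmetric spectral boundedness of the off-diagonal PMI blocks so that the robust whitening theorem (Theorem~\ref{thm:whiten}) applies, run the robust orthogonal tensor decomposition on the whitened tensor and track the error through Lemmas~\ref{lem:norm-to-boundedness} and~\ref{lem:tensor_norm_bound}, then un-whiten and invert $t\mapsto-\log(1-t)$ using the Lipschitz bound that $W_{ij}\le\nu_u$ guarantees. You have also correctly identified where assumptions~1, 2 and~3 enter (spectral boundedness of $G,H,L$; incoherence to pass spectral boundedness to the random blocks via Theorem~\ref{t:asymmetricspec}; and $\sigma_{\min}(F_{S_a})\gtrsim\sqrt{np}$), and the ``main obstacle'' you highlight --- keeping all errors relative so the condition number of $F$ never appears --- is exactly the crux the paper handles.

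The one place your bookkeeping is quantitatively loose is the un-whitening step. You bound $\|Q_a^{1/2}\|\approx\sigma_{\max}(A)$ via $\sigma_{\max}(F)\le\|F\|_F\lesssim\sqrt{mnp}$, which yields $\eta\lesssim\sqrt{m}\,\rho\,\tau^{3/2}$ rather than the stated $\eta=\sqrt{mp}\,\rho\,\tau^{3/2}$ --- you lose a factor $1/\sqrt{p}$. The paper's final calculation instead uses $\sigma_{\max}(F)\lesssim\sqrt{mn}\,p$ (equivalently $\sigma_{\max}(FF^{\top})\lesssim mnp^2$, which is what Lemma~\ref{l:boundevals} establishes in the random model: the spectral norm is dominated by the all-ones direction, where each column contributes $\approx np$ rather than the $\sqrt{np}$ that a column-norm argument suggests). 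To recover the claimed exponent in the deterministic setting you would need to add (or derive from the stated assumptions) the sharper bound $\sigma_{\max}(F)\lesssim\sqrt{mn}\,p$; the Frobenius-norm bound alone is not tight enough. Everything else in your argument is the same composition of the paper's structural theorems, just with less explicit arithmetic on the $\rho$-exponents, and would close once this one spectral bound is supplied.
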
 

Since the $\ell_2$ norm of $W_i$ is on the order of $\sqrt{np}$, the relative error in $\ell_2$-norm is as most $\sqrt{m}\rho \tau^{3/2}$, which mirrors the randomized case above.

The proofs of Theorems uses the overall strategy of Section~\ref{s:overview}, and is deferred to Section~\ref{s:missing_final}. We give a high level outline that demonstrates how the proofs depends on the machinery built in the subsequent sections.

Both Theorem~\ref{thm:main-random} and Theorem~\ref{thm:deterministic} are similarly proved -- the only technical difference being how the third and higher order terms are bounded. (Because of generative model assumption, for Theorem~\ref{thm:main-random} we can get a more precise control on them.) Hence, we will not distinguish between them in the coming overview.

Overall, we will follow the approach outlined in Section~\ref{s:overview}. Let us step through Algorithm~\ref{alg:main} line by line: 
\begin{enumerate}
\item The overall goal will be to recover the leading terms of the PMI tensor. Of course, we get samples only, so can merely get an empirical version of it. In Section~\ref{sec:sample}, we show that the simple plug-in estimator does the job -- and does so with polynomially many samples. 
\item Recall \emph{Difficulty 3} from Section~\ref{s:overview} : the PMI tensor and matrix expression is only accurate on the off-diagonal entries. In order to address this, in Section \ref{subsec:bias} we passed to a sub-tensor of the original tensor by partitioning the symptoms into three disjoint sets, and considering the induced tensor by this partition. 
\item In order to apply the robust tensor decomposition algorithm from Section~\ref{sec:tensor}, we need to first calculate whitening matrices. This is necessarily complicated by the fact that the diagonals of the PMI matrix are not accurate, as discussed in Section~\ref{subsec:whitening}. Section~\ref{s:whitening} gives guarantees on the procedure for calculating the whitening matrices.
\item This is main component of the algorithm: the robust tensor decomposition machinery. In Section~\ref{sec:tensor}, the conditions and guarantees for the success of the algorithm are formalized. There, we deal with the difficulties layed out in Section~\ref{subsec:relative} : namely that we have a substantial systematic error that we need to handle. (Both due to higher-order terms, and due to the missing diagonal entries)
\item This step, along with Step 6, is a post-processing step -- which allows us to recover the weight matrix $W$ after we have recovered the leading terms of the PMI tensor. 
\end{enumerate} 

We also give a short quantitative sense of the guarantee of the algorithm. (The reader can find the full proof in Section~\ref{s:missing_final}.)

To get quantitative bounds, we will first need a handle on spectral properties of the random model: these are located in Section~\ref{s:random}. 
As we mentioned above, the main driver of the algorithm is step 4, which uses our robust tensor decomposition machinery in Section~\ref{sec:tensor}. To apply the machinery, we first need to show that the second (and higher) order terms of the PMI tensor are spectrally bounded. This is done by applying Proposition~\ref{p:thirdspectral}, which roughly shows the higher-order terms are $O(\rho \log n)$-spectrally bounded by $\rho F F^{\top}$. The whitening matrices are calculated using machinery in Section~\ref{s:whitening}. We can apply these tools since the random model gives rise to a $O(1)$-incoherent $F$ matrix as shown in Lemma \ref{l:rowbound}. 

To get a final sense of what the guarantee is, the $l_2$ error which step 4 gives, via Theorem~\ref{thm:main_alg} roughly behaves like $\sqrt{\sigma_{\max}} \tau^{3/2}$, where $\sigma_{\max}$ is the spectral norm of the whitening matrices and $\tau$ is the spectral boundedness parameter. But, by Lemma~\ref{l:approx_evals} $\sigma_{\max}$ is approximately the spectral norm of $\rho F F^{\top}$ -- which on the other hand by Lemma~\ref{l:boundevals}  is on the order of $m n p^2 \rho$. Plugging in these values, we get the theorem statement.

\section{Finding the Subspace under Heavy Perturbations}\label{sec:proofs_proof_overview}
	\newcommand{\hA}{\widehat{A}}

In this section, we show even if we perturb a matrix $SS^\top$ with an error whose spectral norm might be much larger than $\sigma_{min}(SS^\top)$, as long as $E$ is spectrally bounded the top singular subspace of $S$ is still preserved. We defer the proof of the asymmetric case (Theorem~\ref{thm:whiten}) to Section~\ref{s:whitening}. We note that such type of perturbation bounds, often called relatively perturbation bounds, have been studied in~\cite{ipsen1998relative, li1998relative,li1998relativeII,li1997relativeiii}. The results in these papers either require the that signal matrix is full rank, or the perturbation matrix has strong structure. We believe our results are new and the way that we phrase the bound makes the application to our problem convenient. We recall Theorem~\ref{thm:relative-davis-kahan}, which was originally stated in Section~\ref{s:overview}. 

\begin{reptheorem}{thm:relative-davis-kahan} [matrix perturbation theorem for systematic error]
	Let $n\ge m$. Let $S\in \R^{n\times m}$ be  of full rank. Suppose positive semidefinite matrix $E\in \R^{n\times n}$ is $\epsilon$-\spb by $S\in \R^{n\times m}$ for $\epsilon \in (0,1)$.  Let $K, \widehat{K}$ the subspace of the top $m$ eigenvectors of $SS^{\top}$ and $SS^{\top} +E$. Then, 
	\begin{align}
	\Norm{\Id_{K} - \Id_{\widehat{K}}}\lesssim \epsilon \mper\nonumber
	\end{align}
\end{reptheorem}
\begin{proof}

	We can assume $\epsilon \le 1/10$ since otherwise the statement is true (with a hidden constant 10).  Since $E$ is a positive semidefinite matrix, we write $E =  RR^{\top}$ where $R = E^{1/2}$. 
	Since $A$ has full column rank, we can write $R = AS + B$ where $S\in \R^{m\times n}$ and the columns of $B$ are in the subspace $K^{\perp}$. (Specifically, we can choose $S = A^+R$ and $B = R-AA^+R = \Id_{K^{\perp}}B$.) By the definition of spectral boundedness, we have 
\begin{align}
BB^{\top} &= \Id_{K^{\perp}} RR^{\top}\Id_{K^{\perp}} \preceq \Id_{K^{\perp}} \epsilon\left(AA^{\top} + \sigma_m(AA^{\top})\Id_n\right)\Id_{K^{\perp}}\mper\nonumber\\
& = \epsilon\sigma_m(AA^{\top})\Id_{K^{\perp}}\mper\nonumber
\end{align}
Therefore, we have that $\norm{B}^2\le \epsilon \sigma_{\min}(AA^{\top})$. 
Moreover, we also have
	\begin{align}
	\Id_K RR^{\top} \Id_K\preceq \epsilon AA^{\top} + \epsilon \sigma_{\min}\Id_K\,,\nonumber\end{align}
	It follows that 
	\begin{align}
	ASS^{\top}A^{\top} \le 2\epsilon AA^{\top}\mper\nonumber
	\end{align}
	which implies $$\norm{SS^{\top}}\le \epsilon.$$ Let $P = \left(\Id_m + SS^{\top}\right)^{1/2}$. Then we write $AA^{\top} +E $ as,	\begin{align}
		AA^{\top} + E = AA^{\top} + RR^{\top} & = AA^{\top} + (AS+B)(AS+B)^{\top} \nonumber\\
		&= A(\Id+SS^{\top})A^{\top} + ASB^{\top} + BS^{\top}A^{\top} + BB^{\top}  \nonumber\\
	& = (AP + BS^{\top}P^{-1}) (AP + BS^{\top}P^{-1})^{\top} + BB^{\top} - BS^{\top}P^{-2}SB^{\top}\label{eqn:23}
	\end{align}
	Let $\widehat{A} = (AP + BS^{\top}P^{-1})$. Let $K'$ be the column span of $\widehat{A}$. We first prove that $\widehat{K}$ is close to $K'$. Note that 
	\begin{align}
	\Norm{BB^{\top} - BS^{\top}P^{-2}SB^{\top}}& \lesssim \Norm{B}^2 + \Norm{B}^2  \Norm{S^{\top}P^{-2}S}\lesssim \norm{B}^2 \tag{since $P= \Id +SS^{\top}\succeq SS^{\top} $}\\
	& \lesssim \epsilon \sigma_{\min}(AA^{\top})\mper\nonumber
	\end{align}
	Moreover, we have $\sigma_{\min}(\widehat{A}\widehat{A}^{\top}) = \sigma_{\min}(\widehat{A})^2 = \left(\sigma_{\min}(AP)- \norm{BS^{\top}P^{-1}}\right)^2\ge (1-O(\epsilon))\sigma_{\min}(A)^2$. Therefore, using  Wedin's Theorem (Lemma~\ref{lem:perturb-subspace}) on equation~\eqref{eqn:23}, we have that 
	\begin{align}
	\norm{\Id_{\widehat{K}}-\Id_{K'}}\lesssim \epsilon\mper\label{eqn:24}
	\end{align}
	Next we show $K'$ and $K$ are also close. We have 
\begin{align}
\norm{\hA - AP}\le\norm{BS^{\top}P^{-1}} \le \epsilon \sqrt{\sigma_{\min}(A)^2}\tag{since $\norm{S}\lesssim \sqrt{\epsilon}, \norm{B}\lesssim \sqrt{\epsilon}$}
\end{align}
 Therefore, by Wedin's Theorem, $K'$, as the span of top $m$ left singular vectors of $\hA$, is close to the span of the top left singular vector of $AP$, namely, $K$
	\begin{align}
	\norm{\Id_K - \Id_{K'}}\lesssim \epsilon\mper\label{eqn:25}
	\end{align}
Therefore using equation~\eqref{eqn:24} and~\eqref{eqn:25} and triangle inequality, we complete the proof. 
\end{proof}

\section{Robust Tensor Decomposition with Systematic Error} \label{sec:tensor}

In this section we discuss how to robustly find the tensor decomposition even in presence of systematic error. We first illustrate the main techniques in an easier setting of orthogonal tensor decomposition (Section~\ref{sec:tensorwarmup}), then we describe how it can be generalized to the general setting that we require for our algorithm (Section~\ref{sec:generaltensor}).

\subsection{Warm-up: Approximate Orthogonal Tensor Decomposition}
\label{sec:tensorwarmup}
We start with decomposing an orthogonal tensor with systematic error. The algorithm we use here is a slightly more general version of an algorithm in \cite{MSS16}.

\begin{algorithm}\caption{Robust orthogonal tensor decomposition}\label{alg:orthogonal} 
	{\bf Inputs: } Tensor $T \in \R^{d\times d\times d}$,  number $\delta, \epsilon\in (0,1)$. 
	
	{\bf Outputs: } Set $S = \{(\tilde{a}_i,\tilde{b}_i,\tilde{c}_i)\}$	
	\begin{enumerate}
		\item $S =\emptyset$
				\item {\bf For $s=1$ to $O(d^{1+\delta}\log d)$}		\item \quad\quad  Draw $g\sim \N(0,\Id_n)$, and compute $		M = \left(\Id_n\otimes \Id_n\otimes g^{\top}\right)\cdot T\mper$\footnotemark{}
		\item \quad\quad  Compute the top left and right singular vectors $u,v\in \R^d$ of $M$. Let $z = (u^{\top}\otimes v^{\top}\otimes \Id_n)\cdot T$. 
		\item \quad\quad  If $(u^{\top}\otimes v^{\top}\otimes z^{\top})\cdot T\ge 1-\zeta$, where $\zeta = O(\epsilon)$, and $u$ is $1/2$-far away from any of $u_i$'s with $(u_i,v_i,w_i)\in S$, then add $(u,v,w)$ to $S$. 
		\item {\bf Return} S
	\end{enumerate}
\end{algorithm}
\footnotetext{Recall that product of two tensor $(A\otimes B\otimes C) \cdot (E\otimes D\otimes F) = AE\otimes BD\otimes CF$}

\begin{theorem}[Stronger version of Theorem~\ref{thm:main_orthogonal_alg_overview}]\label{thm:main_orthogonal_technical}
	Suppose $\{u_i\},\{v_i\},\{w_i\}$ are three collection $\epsilon$-approximate orthonormal vectors. Suppose tensor $T$ is of the form 
	\begin{align}
	T = \sum_{i=1}^{r}u_i\otimes v_i\otimes w_i + Z\nonumber
	\end{align}
	with $\Norm{Z}_{\{2\}\{1,3\}}\le \tau$ and $\Norm{Z}_{\{1\}\{2,3\}}\le \tau$. 
	Then, with probability at least $0.9$, Algorithm~\ref{alg:orthogonal} returns  $S = \{(\tilde{u}_i,\tilde{v}_i,\tilde{w}_i)\}$ which is guaranteed to be  $O((\tau+\epsilon)/\delta)$-close to $\{(u_i,v_i,w_i)\}$ in $\ell_2$-norm up to permutation.  \end{theorem}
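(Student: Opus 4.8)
The plan is to analyze Algorithm~\ref{alg:orthogonal} by understanding what happens in a single iteration of the loop when we draw a random Gaussian vector $g$ and contract it against the third mode of $T$. Write $T = \sum_{i=1}^r u_i \otimes v_i \otimes w_i + Z$. Then $M = (\Id_n \otimes \Id_n \otimes g^\top)\cdot T = \sum_{i=1}^r \langle g, w_i\rangle\, u_i v_i^\top + Z_g$, where $Z_g = (\Id \otimes \Id \otimes g^\top)\cdot Z$. The idea is that $M$ is (up to the error term $Z_g$) a matrix whose ``eigenvalues'' are the scalars $\lambda_i := \langle g, w_i\rangle$, which are standard Gaussians (with near-unit variance and small pairwise correlations because the $w_i$ are $\epsilon$-approximately orthonormal). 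First I would establish two facts about the error: (i) $\|Z_g\| = \|Z_g\|_{\{1\}\{2\}} \le \|Z\|_{\{1\}\{2,3\}}\cdot \|g\|$-type bound is too lossy, so instead one argues $\|Z_g\|$ is small using that $Z_g$ is a Gaussian-averaged slice — more precisely, for a fixed unit $x$, $\|x^\top Z_g\|$ has the right tail, and a net/union-bound argument over $x$ gives $\|Z_g\| \lesssim \tau \sqrt{d^{\,\delta}}$ or similar with high probability; and (ii) the gap structure of the $\lambda_i$'s: when we run $O(d^{1+\delta}\log d)$ trials, with high probability in a constant fraction of the trials there is a unique dominant $\lambda_{i^*}$ with $|\lambda_{i^*}| - \max_{j\neq i^*}|\lambda_j| \gtrsim \delta$ (this is where the $1/\delta$ in the final bound comes from — using anti-concentration of the max-gap of $d$ Gaussians, which happens with probability $\gtrsim \delta$ per trial, so $d^{1+\delta}\log d$ trials hit every index).

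Given such a trial, the second step is a Davis–Kahan / Wedin argument: $M = \lambda_{i^*} u_{i^*}v_{i^*}^\top + (\text{rest})$, where the ``rest'' has spectral norm $\le \max_{j\neq i^*}|\lambda_j| + \|Z_g\| + O(\epsilon)$ (the $\epsilon$ absorbing the non-orthonormality cross-terms $\sum_j \lambda_j \langle \cdot\rangle$ corrections). Since the gap to $\lambda_{i^*}$ is $\gtrsim \delta$, the top singular vectors $u,v$ of $M$ satisfy $\|u - u_{i^*}\|, \|v - v_{i^*}\| \lesssim (\|Z_g\| + \epsilon)/\delta \lesssim (\tau+\epsilon)/\delta$ (after rescaling $\tau$ appropriately to kill the $\sqrt{d^\delta}$, which is the reason $\tau$ must be taken a bit smaller than the target accuracy, or the reason the theorem is stated with that slack). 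Then $z = (u^\top \otimes v^\top \otimes \Id)\cdot T \approx w_{i^*}$ with the same order of error, by plugging the approximations for $u,v$ into the decomposition. The test $(u^\top \otimes v^\top \otimes z^\top)\cdot T \ge 1 - \zeta$ with $\zeta = O(\epsilon)$ is what certifies we have found a genuine component (close to $1$) rather than spurious output: I would show the completeness direction (real components pass) and the soundness direction (anything passing must be close to some $(u_i,v_i,w_i)$) by expanding the trilinear form and using $\|Z\|_{\{1\}\{2,3\}}, \|Z\|_{\{2\}\{1,3\}} \le \tau$ to bound the contribution of $Z$, together with the approximate orthonormality.

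The third step is the bookkeeping over the whole loop: the $1/2$-separation check ensures no index is output twice, and the coupon-collector-style argument (each index is the dominant one with probability $\gtrsim \delta$ in a given trial, and $O(d^{1+\delta}\log d)$ trials suffice by a union bound over the $r \le d$ indices) ensures every index is output at least once. Combining, $S$ is in bijection with $\{1,\dots,r\}$ up to permutation, with each triple $O((\tau+\epsilon)/\delta)$-close, which is the claim.

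The main obstacle I expect is step two's control of $\|Z_g\|$ and, relatedly, the anti-concentration estimate that a nontrivial gap $\gtrsim \delta$ in the Gaussian values $\lambda_i$ occurs with probability $\gtrsim \delta$. The spectral-norm bound on $Z_g$ cannot go through the crude inequality $\|Z_g\| \le \|Z\|_{\{1\}\{2,3\}}\|g\|$ because $\|g\| \approx \sqrt{d}$ would be far too large; one needs that $Z_g$ is a Gaussian process whose supremum over the unit sphere is governed by $\|Z\|_{\{1\}\{2,3\}}$ and $\|Z\|_{\{2\}\{1,3\}}$ up to polylogarithmic and small-polynomial-in-$d^\delta$ factors, and getting the exponent to interact correctly with the $d^{1+\delta}\log d$ number of trials (so that the union bound over trials still leaves the failure probability below $0.1$) is the delicate part. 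This is presumably exactly the place where the proof leans on \cite{MSS16} and where the slight generalization (allowing $\epsilon$-approximate rather than exactly orthonormal $u_i,v_i,w_i$) needs the extra care, since the cross terms from non-orthogonality must be folded into the ``rest'' matrix without blowing up the gap analysis.
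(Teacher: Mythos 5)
Your overall plan matches the paper's: contract the tensor against a random Gaussian $g$, argue that a single component's coefficient $\langle g,w_{i^*}\rangle$ dominates with a quantifiable gap, bound the slice of the noise $Z_g=(\Id\otimes\Id\otimes g^\top)\cdot Z$, invoke Wedin, recover $w_{i^*}$ by contracting again, filter with the trilinear test, and coupon-collect over $O(d^{1+\delta}\log d)$ trials. But the part you flag as ``the main obstacle'' is precisely where your proposal goes wrong, and the errors are not cosmetic.

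\emph{The bound on $\|Z_g\|$.} You guess $\|Z_g\|\lesssim \tau\sqrt{d^\delta}$. The correct bound, which is the content of \cite[Theorem 6.5]{MSS16} and is cited directly in the paper, is $\|Z_g\|\le 2\sqrt{\log d}\cdot\tau$ with probability $1-1/\poly(d)$. This is a chaining/Gaussian-process estimate in which the two flattened spectral norms $\|Z\|_{\{1\}\{2,3\}}$ and $\|Z\|_{\{2\}\{1,3\}}$ control the supremum; nothing polynomial in $d$ appears. Your attempt to then ``rescale $\tau$ to kill the $\sqrt{d^\delta}$'' is not available: $\tau$ is a hypothesis of the theorem, and the conclusion $O((\tau+\epsilon)/\delta)$ has no $d$-dependence, so no rescaling is permitted.

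\emph{The gap and the per-trial probability.} You posit an additive gap $|\lambda_{i^*}|-\max_{j\ne i^*}|\lambda_j|\gtrsim\delta$ occurring with probability $\gtrsim\delta$ per trial, and infer that $d^{1+\delta}\log d$ trials cover all indices. These two claims are inconsistent with each other (at probability $\gtrsim\delta$ per trial you would only need $\sim d\log d/\delta$ trials), and both are quantitatively wrong. The event used in the paper is multiplicative: setting $t=2\sqrt{\log d}$, one conditions on $\langle g,w_1\rangle\ge(1+\delta/3)t$ while $\langle g,w_j\rangle\le t$ for $j\ne 1$, which for a fixed target index happens with probability $\Theta\bigl(1/(d^{1+\delta}\polylog d)\bigr)$, not $\Theta(\delta)$ --- this is exactly what forces the $d^{1+\delta}\log d$ iteration count. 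The spectral gap you feed to Wedin is therefore $\Theta(\delta\sqrt{\log d})$, the noise is $\Theta(\tau\sqrt{\log d})$, and the two $\sqrt{\log d}$ factors cancel to give the clean $O(\tau/\delta)$; the $\epsilon$-approximate orthonormality contributes an additive $O(\epsilon)$ to the perturbation after replacing $u_i,v_i$ by their exactly orthonormal whitened versions $\bar u_i,\bar v_i$, giving $O((\tau+\epsilon)/\delta)$. As written, your argument would produce an error of order $\tau\sqrt{d^\delta}/\delta$, which does not prove the theorem, and your anti-concentration step does not justify the number of trials the algorithm actually runs.

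To repair the argument you need (a) the $\sqrt{\log d}$-scaled spectral bound on $Z_g$ from \cite[Theorem 6.5]{MSS16}, (b) the multiplicative gap event at scale $t=2\sqrt{\log d}$ with per-trial, per-index probability $\sim d^{-(1+\delta)}$, and (c) the observation that the $\sqrt{\log d}$ factors in the gap and in $\|Z_g\|$ cancel in Wedin. Your steps for handling $\epsilon$-non-orthonormality, for the verification test, and for the dedup/coupon-collector bookkeeping are consistent with the paper's proof sketch.
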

\begin{proof}[Proof Sketch of Theorem~\ref{thm:main_orthogonal_technical}]
	The Theorem is a direct extension of ~\cite[Theorem 10.2]{MSS16} to asymmetric and approximate orthogonal case. We only provide a proof sketch here. We start by writing 
	\begin{align}
	M = \left(\Id\otimes \Id \otimes g^{\top}\right)\cdot T	 = \underbrace{\sum_{i=1}^m \inner{g,w_i}u_iv_i^{\top}}_{:=M_s} + \underbrace{(\Id_n\otimes \Id_n \otimes g^{\top})\cdot Z}_{:=M_g}\label{eqn:200}
	\end{align}
	Since $\Norm{Z}_{\{2\}\{1,3\}}\le \tau$ and $\Norm{Z}_{\{1\}\{2,3\}}\le \tau$, \cite[Theorem 6.5]{MSS16} implies that with probability at least $1-d^2$ over the choice of $g$, 
	\begin{align}
	\Norm{(\Id_n\otimes \Id_n \otimes g^{\top})\cdot Z}\le 2\sqrt{\log d}\cdot \tau \nonumber
	\end{align}
	Let $t= 2\sqrt{\log d}$. We have that with probability $1/(d^{1+\delta}\log^{O(1)}d)$, $\inner{g,w_1}\ge (1+\delta/3)t$ and $\inner{g,w_j}\le t$ for every $j\neq 1$.  We condition on these events. Let $\bar{u}_i$ be a set of orthonormal vectors such that $E_u = [u_1,\dots, u_m] - [\bar{u}_1,\dots, \bar{u}_m]$ satisfies $\norm{E_u}\le \epsilon$ (we can take $\bar{u}_i$'s to be the whitening of $u_i$'s). Similarly define $\bar{v}_i$'s.  Then we have that the term (defined in equation~\eqref{eqn:200})  can be written as $\sum_{i}\inner{g,w_1}\bar{u}_i\bar{v}_i + E'$ where $\norm{E}'\lesssim \epsilon $. Let $\bar{M}_S = \sum_{i}\inner{g,w_1}\bar{u}_i\bar{v}_i$. Then $\bar{M}_S$ has top singular value  $\inner{g,w_1}\ge (1+\delta/3)t$, and second singular value at most $t$. Moreover, the term $M_g+E'$ has spectral norm bounded by $O(\tau+\epsilon)$.  Thus by Wedin's Theorem (Lemma~\ref{lem:perturb-subspace}), the top left and right singular vectors $u,v$ of $M_S+M_g = \bar{M}_S + M_g+E'$ are  $O((\tau+\epsilon)/\delta)$-close to $\bar{u}_1$ and $\bar{v}_1$ respectively. They are also $O((\tau+\epsilon)/\delta)$-close to $u_1,v_1$ since $u_1$ is close to $\bar{u}_1$.  Moreover, we have $(u^{\top}\otimes v^{\top}\otimes \Id)\cdot T$ is $O(\tau/\delta)$-close to $w_1$. 
	
	Therefore, with probability $1/(d^{1+\delta}\log^{O(1)}d)$, each round of the for loop in Algorithm~\ref{alg:orthogonal} will find $u_1,v_1,w_1$.  Line 5 is used to verify if the resulting vectors are indeed good using the injective norm as a test. It can be shown that if the test is passed then $(u,v,z)$ is close to one of the component. Therefore, after $d^{1+\delta}\log^{O(1)}d$ iterations, with high probability, we can find all of the components. 

\end{proof}

\subsection{General tensor decomposition}
\label{sec:generaltensor}

In many previous works, general tensor decomposition is reduced to orthogonal tensor decomposition via a whitening procedure. However, here in our setting we cannot estimate the exact whitening matrix because of the systematic error. Therefore we need a more robust version of approximate whitening matrix, which we define below:

\begin{definition}
	Let $r\le d$. A collection of $r$ vectors $\{a_1,\dots, a_r\}$ is $\epsilon$-\ac if the matrix $A$ with $a_i$ as columns satisfies 
	\begin{align}
	\Norm{A^{\top}A-\Id}\le \epsilon
	\end{align}
\end{definition}
\begin{definition}
	Let $d\ge r$ and $A = [a_1,\dots, a_r]\in \R^{d\times r}$. A PSD matrix $Q\in \R^{d\times d}$ is an $\epsilon$-approximate whitening matrix for $A$ if 
	$\Qminushalf A$ is $\epsilon$-\ac. 
\end{definition}

\begin{algorithm}\caption{Tensor decomposition with systematic error}\label{a:alg2} 
	{\bf Inputs: } Tensor $T \in \R^{n_1\times n_2\times n_3}$ and $\epsilon$-approximate whitening matrices $Q_a, Q_b, Q_c\in \R^{d\times d}$. 
	
	{\bf Outputs: } $\{\hat{a}_i,\hat{b_i},\hat{c}_i\}_{i\in [r]}$
	
	\begin{enumerate}
		\item Compute $\tilde{T} =  \minushalf{Q_a}\otimes \minushalf{Q_b}\otimes \minushalf{Q_c} \cdot T$
		\item Run orthogonal tensor decomposition (Algorithm~\ref{alg:orthogonal}) with input $\tilde{T}$, and obtain $\{\breve{a}_i,\breve{b}_i,\breve{c}_i\}$
		\item {\bf Return: } $\{Q_a^{1/2}\breve{a}_i,Q_b^{1/2}\breve{b}_i,Q_c^{1/2}\breve{c}_i\}$
			\end{enumerate}
\end{algorithm}
With this in mind, we can state the guarantee on the tensor decomposition algorithm (Algorithm~\ref{a:alg2}). 
\begin{theorem}\label{thm:main_alg}
		Let $d\ge r$, and $A,B,C\in \R^{d\times r}$ be full rank matrices. Let $\Gamma,\Delta, \Theta\in \R^{d\times \ell}$ . Let $a_i,b_i,c_i,\gamma_i,\delta_i,\theta_i$ be the columns of $A,B,C,\Gamma, \Delta, \Theta$ respectively. Suppose tensor $T$ is of the form 
		\begin{align}
	T & = \sum_{i=1}^{r} a_i\otimes b_i\otimes c_i + \sum_{i=1}^{\ell} \gamma_i\otimes \delta_i\otimes \theta_i + E  
	\end{align}
	Suppose  matrices $Q_a\in \R^{d\times d}, Q_b\in \R^{d\times d}, Q_c\in \R^{d\times d}$ are $\epsilon$-approximate whitening matrices for $A,B,C$, and suppose $\Gamma ,\Delta, \Theta $ are $\tau$-\spb by $Q_a^{1/2},Q_b^{1/2},Q_c^{1/2}$, respectively. 
	Then, Algorithm~\ref{a:alg2} returns $\hat{a}_i,\hat{b}_i,\hat{c}_i$ that are $O(\eta)$-close to $a_i,b_i,c_i$ in $\tilde{O}(d^{4+\delta})$ time with 
	$$\eta \lesssim  \max(\Norm{Q_a}, \Norm{Q_b}, \Norm{Q_c})^{1/2}  \cdot \left(\tau^{3/2}  + \sigma^{-3/2}\norm{E}_{\{1,2\}\{3\}}+\epsilon\right) \cdot 1/\delta$$
	where $\sigma = \min(\sigma_{\min}(Q_a), \sigma_{\min}(Q_b), \sigma_{\min}(Q_c)) $. \end{theorem}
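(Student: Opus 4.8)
The plan is to follow the three steps of Algorithm~\ref{a:alg2} — whiten $T$, run the robust orthogonal decomposition of Theorem~\ref{thm:main_orthogonal_technical} on the whitened tensor, then unwhiten — while carefully tracking how the two sources of systematic error, the structured second-order term $\sum_i\gamma_i\otimes\delta_i\otimes\theta_i$ and the residual $E$, propagate. The point of the tracking is that at the decomposition step these must be merged into a single noise tensor all of whose flattening norms are small, so that Theorem~\ref{thm:main_orthogonal_technical} applies once to their sum.

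First I would expand $\tilde T = Q_a^{-1/2}\otimes Q_b^{-1/2}\otimes Q_c^{-1/2}\cdot T$ using $(M_1\otimes M_2\otimes M_3)\cdot(u\otimes v\otimes w)=(M_1u)\otimes(M_2v)\otimes(M_3w)$, obtaining
\[
\tilde T=\sum_{i=1}^{r}(Q_a^{-1/2}a_i)\otimes(Q_b^{-1/2}b_i)\otimes(Q_c^{-1/2}c_i)+Z',
\]
where $Z'=\sum_{i=1}^{\ell}(Q_a^{-1/2}\gamma_i)\otimes(Q_b^{-1/2}\delta_i)\otimes(Q_c^{-1/2}\theta_i)+(Q_a^{-1/2}\otimes Q_b^{-1/2}\otimes Q_c^{-1/2})\cdot E$. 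By definition of an $\epsilon$-approximate whitening matrix, the families $\{Q_a^{-1/2}a_i\}$, $\{Q_b^{-1/2}b_i\}$, $\{Q_c^{-1/2}c_i\}$ are $\epsilon$-approximately orthonormal, so $\tilde T$ has exactly the shape required by Theorem~\ref{thm:main_orthogonal_technical} with noise tensor $Z'$. For the first summand of $Z'$ I would apply Lemma~\ref{lem:norm-to-boundedness}, taking its matrices $A,B,C$ to be square roots of $Q_a,Q_b,Q_c$ so that the matrices $(\cdot^+)^{1/2}$ appearing there are $Q_a^{-1/2},Q_b^{-1/2},Q_c^{-1/2}$: the assumption that $\Gamma,\Delta,\Theta$ are $\tau$-\spb by $Q_a^{1/2},Q_b^{1/2},Q_c^{1/2}$ is precisely the hypothesis of that lemma, so its conclusion gives $\norm{\sum_i(Q_a^{-1/2}\gamma_i)\otimes(Q_b^{-1/2}\delta_i)\otimes(Q_c^{-1/2}\theta_i)}_{\{1\}\{2,3\}}\le(2\tau)^{3/2}$, and, since this expression is symmetric under permuting the three modes, the same bound holds for the $\{2\}\{1,3\}$ and $\{1,2\}\{3\}$ norms. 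For the second summand, the $\{1\}\{2,3\}$-unfolding of $(Q_a^{-1/2}\otimes Q_b^{-1/2}\otimes Q_c^{-1/2})\cdot E$ equals $Q_a^{-1/2}$ times the $\{1\}\{2,3\}$-unfolding of $E$ times $(Q_b^{-1/2}\otimes Q_c^{-1/2})^\top$, so its spectral norm is at most $\sigma_{\min}(Q_a)^{-1/2}\sigma_{\min}(Q_b)^{-1/2}\sigma_{\min}(Q_c)^{-1/2}\norm{E}_{\{1\}\{2,3\}}\le\sigma^{-3/2}\norm{E}_{\{1\}\{2,3\}}$, and analogously for the other two unfoldings. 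Combining by the triangle inequality, every flattening norm of $Z'$ is $\lesssim\tau^{3/2}+\sigma^{-3/2}\norm{E}$, where $\norm{E}$ abbreviates the maximum of the three flattening norms of $E$ (the statement records this as $\norm{E}_{\{1,2\}\{3\}}$).

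Next I would apply Theorem~\ref{thm:main_orthogonal_technical} to $\tilde T$, with approximate-orthonormality parameter $\epsilon$ and noise parameter $\lesssim\tau^{3/2}+\sigma^{-3/2}\norm{E}$, to obtain (up to a permutation) vectors $\breve a_i,\breve b_i,\breve c_i$ that are $O\big((\tau^{3/2}+\sigma^{-3/2}\norm{E}+\epsilon)/\delta\big)$-close in $\ell_2$ to $Q_a^{-1/2}a_i$, $Q_b^{-1/2}b_i$, $Q_c^{-1/2}c_i$. Finally, since $a_i$ lies in the column space of $Q_a$ — a consequence of $Q_a$ being an $\epsilon$-approximate whitening matrix for $A$ with $\epsilon<1$, which forces $Q_a^{-1/2}A$ to have full column rank and hence $\range(A)=\range(Q_a)$ — we have $Q_a^{1/2}Q_a^{-1/2}a_i=a_i$, so
\[
\norm{\hat a_i-a_i}=\norm{Q_a^{1/2}\big(\breve a_i-Q_a^{-1/2}a_i\big)}\le\norm{Q_a}^{1/2}\cdot O\big((\tau^{3/2}+\sigma^{-3/2}\norm{E}+\epsilon)/\delta\big),
\]
and identically for $\hat b_i,\hat c_i$; taking the maximum of $\norm{Q_a}^{1/2},\norm{Q_b}^{1/2},\norm{Q_c}^{1/2}$ yields the stated bound on $\eta$. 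The running time is dominated by the $O(d^{1+\delta}\log d)$ iterations of Algorithm~\ref{alg:orthogonal}, each costing $\tilde{O}(d^3)$ for two tensor contractions and one $d\times d$ SVD, plus $O(d^4)$ to form $\tilde T$, for $\tilde{O}(d^{4+\delta})$ overall.

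I expect the main obstacle to be conceptual rather than computational: once Lemma~\ref{lem:norm-to-boundedness} and Theorem~\ref{thm:main_orthogonal_technical} are granted, the rest is bookkeeping, but the bookkeeping only works if the two error sources end up measured in a common currency — the flattening norms of the \emph{whitened} tensor $Z'$ — so that a single invocation of Theorem~\ref{thm:main_orthogonal_technical} handles their sum; this is exactly why the hypothesis is stated as $\tau$-spectral-boundedness by $Q_a^{1/2}$ (which is what feeds Lemma~\ref{lem:norm-to-boundedness}) rather than by $A$. A secondary point requiring care is that $Q_a$ may be rank-deficient, so one must confirm that $Q_a^{1/2}Q_a^{-1/2}$ acts as the identity on the signal components $a_i$; and one should double-check that the parameter $\epsilon$ fed to Algorithm~\ref{alg:orthogonal} is set consistently with the noise level, so that the verification step there (threshold $\zeta=O(\epsilon)$) neither rejects genuine components nor accepts spurious ones.
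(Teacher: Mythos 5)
Your proof is correct and follows the same route as the paper: whiten $T$ by the $\epsilon$-approximate whitening matrices, bound the flattening norms of the whitened second-order term via its spectral boundedness and of the whitened residual $E$ via $\sigma^{-3/2}$, invoke Theorem~\ref{thm:main_orthogonal_technical} once on the combined noise, then unwhiten at a cost of $\max(\Norm{Q_a},\Norm{Q_b},\Norm{Q_c})^{1/2}$. The only cosmetic difference is that you invoke Lemma~\ref{lem:norm-to-boundedness} directly (with $A,B,C$ taken to be $Q_a^{1/2},Q_b^{1/2},Q_c^{1/2}$), whereas the paper routes the identical bound through Lemma~\ref{lem:consequence_spb} followed by Lemma~\ref{lem:tensor_norm_bound}; since Lemma~\ref{lem:norm-to-boundedness} is precisely their composition, the two are interchangeable, and the side remark you make about $\range(A)\subseteq\range(Q_a)$ is a genuine subtlety that the paper's own proof leaves implicit as well.
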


Note that in our model, the matrix $E$ has very small spectral norm as it is the third order term in $\rho$ (and $\rho = O(1/n)$). The spectral boundedness of $\Gamma ,\Delta, \Theta$ are discussed in Section~\ref{s:random}. Therefore we can expect the RHS to be small.

In order to prove this theorem, we show after we apply whitening operation using the approximate whitening matrices, the tensor is still close to an orthogonal tensor.  To do that, we need the following lemma which is a useful technical consequence of the condition~\eqref{eqn:1}. 

\begin{lemma}\label{lem:consequence_spb} Suppose $F$ is $\tau$-\spb by $g$. Then, 
	\begin{align}
	\|G^{\top}(FF^{\top})^+ G\|\le 2\tau\mper\label{eqn:12}
	\end{align}
\end{lemma}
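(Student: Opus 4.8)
The plan is to reduce the operator-norm bound to a single Loewner-order inequality and then apply the definition of spectral boundedness directly. First I would rewrite the quantity of interest: since $(FF^{\top})^{+}$ is positive semidefinite, writing $(FF^{\top})^{+/2}$ for $\bigl((FF^{\top})^{+}\bigr)^{1/2}$ we have $G^{\top}(FF^{\top})^{+}G = \bigl((FF^{\top})^{+/2}G\bigr)^{\top}\bigl((FF^{\top})^{+/2}G\bigr)$, and therefore $\Norm{G^{\top}(FF^{\top})^{+}G} = \Norm{(FF^{\top})^{+/2}GG^{\top}(FF^{\top})^{+/2}}$. So it suffices to upper bound this last PSD matrix in Loewner order.

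Next I would invoke the hypothesis, which by Definition~\ref{def:spb} unpacks to $GG^{\top}\preceq \tau\bigl(FF^{\top}+\sigma_m(FF^{\top})\,\Id_n\bigr)$. Conjugating both sides by the PSD matrix $(FF^{\top})^{+/2}$ preserves the order, giving
\[
(FF^{\top})^{+/2}GG^{\top}(FF^{\top})^{+/2}\ \preceq\ \tau\Bigl((FF^{\top})^{+/2}FF^{\top}(FF^{\top})^{+/2}\ +\ \sigma_m(FF^{\top})\,(FF^{\top})^{+}\Bigr).
\]
Then I would simplify the right-hand side using standard pseudo-inverse identities. Letting $K=\range(F)$, the first term is $(FF^{\top})^{+/2}FF^{\top}(FF^{\top})^{+/2}=\Id_K\preceq\Id_n$; and since every nonzero eigenvalue of $FF^{\top}$ is at least $\sigma_m(FF^{\top})$, all eigenvalues of $\sigma_m(FF^{\top})(FF^{\top})^{+}$ lie in $[0,1]$, so this second term is also $\preceq\Id_n$. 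Hence the right-hand side is $\preceq 2\tau\,\Id_n$, which yields $\Norm{G^{\top}(FF^{\top})^{+}G}\le 2\tau$, as claimed.

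There is no genuine obstacle here; the only point requiring care is the bookkeeping with the Moore--Penrose pseudo-inverse — in particular that conjugation by $(FF^{\top})^{+/2}$ implicitly restricts everything to $\range(F)$, which is exactly where $(FF^{\top})^{+}$ is supported, so the identity $(FF^{\top})^{+/2}FF^{\top}(FF^{\top})^{+/2}=\Id_K$ and the spectral bound $\sigma_m(FF^{\top})(FF^{\top})^{+}\preceq\Id_n$ are legitimate. If $F$ is not of full column rank one simply replaces $m$ by $\mathrm{rank}(F)$ in $\sigma_m(FF^{\top})$ throughout, and the argument is unchanged.
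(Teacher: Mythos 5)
Your proof is correct and follows essentially the same route as the paper: both conjugate the defining inequality $GG^{\top}\preceq\tau(FF^{\top}+\sigma_m(FF^{\top})\Id)$ by $(FF^{\top})^{+/2}$ and bound the resulting right-hand side by $2\tau\,\Id$ using $\Id_K\preceq\Id$ and $\sigma_m(FF^{\top})(FF^{\top})^{+}\preceq\Id$. The only cosmetic difference is that you invoke $\Norm{M^{\top}M}=\Norm{MM^{\top}}$ up front, whereas the paper first bounds $\Norm{(FF^{\top})^{+/2}G}\le\sqrt{2\tau}$ and then squares.
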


\begin{proof}
	Let $K$ be the column span of $F$. Let $Q = FF^{\top}$.  
	Multiplying $\Qminushalf$ on both sides of equation~\eqref{eqn:1}, we obtain that 
	\begin{align}
	\Qminushalf GG^{\top} \Qminushalf& \preceq \tau (\Id_K + \sigma_m(Q) Q^{+}) \nonumber\\
	& \preceq \tau (\Id_K + \sigma_m(Q)\|Q^+\|\Id_K) \nonumber\\
	& \preceq 	2\tau \Id_K	\nonumber	\end{align}
	\sloppy	It follows that $	\|\minushalf{Q}G\|\le \sqrt{2\tau}$, which in turns implies that $\|G^{\top}(FF^{\top})^+ G\|=\norm{G^{\top}\minushalf{Q}\minushalf{Q}G}\le 2\tau$.\end{proof}

We also need to bound the $\{1,2\}\{3\}$ norm of the following systematic error tensor. This is important because we want to bound the spectral norm of the perturbation after the whitening operation.
\begin{lemma}[{Variant of~\cite[Theorem 6.1]{MSS16}}]\label{lem:tensor_norm_bound} Let $\Gamma,\Delta, \Theta\in \R^{d\times \ell}$. Let $\gamma_i,\delta_i,\theta_i$ be the $i$-th column of $\Gamma, \Delta, \Theta$, respectively. Then, 
	\begin{align}
	\Norm{\sum_{i\in [\ell]}\gamma_i\otimes \delta_i\otimes \theta_i }_{\{1,2\}\{3\}}\le \Norm{\Gamma}\cdot\Norm{\Theta}\cdot \Norm{\Delta}_{1\to 2}\le\Norm{\Gamma}\cdot\Norm{\Theta}\cdot \Norm{\Delta} 	\end{align}
\end{lemma}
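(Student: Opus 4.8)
The plan is to unfold the tensor along the $\{1,2\}$ versus $\{3\}$ split so that the norm in question becomes an ordinary spectral norm of an explicit matrix, and then to control that spectral norm via a Schur-product (Hadamard-product) inequality for positive semidefinite matrices. Concretely, by the definition of $\Norm{\cdot}_{\{1,2\}\{3\}}$ (the analogue of \eqref{eqn:def-spectral} with modes $1,2$ grouped), $\Norm{\sum_{i\in[\ell]}\gamma_i\otimes\delta_i\otimes\theta_i}_{\{1,2\}\{3\}}$ equals the spectral norm of the matrix $M\in\R^{d^2\times d}$ with $M_{(i,j),k}=\sum_{l\in[\ell]}(\gamma_l)_i(\delta_l)_j(\theta_l)_k$. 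Writing $v_l=\gamma_l\otimes\delta_l\in\R^{d^2}$ for the Kronecker product of the two column vectors and $V=[v_1,\dots,v_\ell]\in\R^{d^2\times\ell}$, one checks directly that $M=V\Theta^{\top}$, so $\Norm{M}\le\Norm{V}\cdot\Norm{\Theta}$, and it remains only to bound $\Norm{V}$.

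Next I would pass to the Gram matrix: $(V^{\top}V)_{ll'}=\inner{\gamma_l\otimes\delta_l,\ \gamma_{l'}\otimes\delta_{l'}}=\inner{\gamma_l,\gamma_{l'}}\inner{\delta_l,\delta_{l'}}$, so $V^{\top}V=(\Gamma^{\top}\Gamma)\circ(\Delta^{\top}\Delta)$, the entrywise product of two PSD matrices, and hence $\Norm{V}^2=\Norm{(\Gamma^{\top}\Gamma)\circ(\Delta^{\top}\Delta)}$. Here I would invoke the elementary fact that for PSD matrices $P,Q$ one has $P\circ Q\preceq \Norm{P}\cdot\diag(Q)$, and therefore $\Norm{P\circ Q}\le\Norm{P}\cdot\max_i Q_{ii}$; I would include its short proof (write $Q=\sum_k\mu_k w_kw_k^{\top}$ with $\mu_k\ge 0$, so $P\circ Q=\sum_k\mu_k D_{w_k}PD_{w_k}\preceq\Norm{P}\sum_k\mu_k D_{w_k}^2=\Norm{P}\diag(Q)$, where $D_{w_k}=\diag(w_k)$). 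Applying this with $P=\Gamma^{\top}\Gamma$ and $Q=\Delta^{\top}\Delta$ gives $\Norm{V}^2\le\Norm{\Gamma}^2\cdot\max_i(\Delta^{\top}\Delta)_{ii}=\Norm{\Gamma}^2\cdot\max_i\Norm{\delta_i}^2=\Norm{\Gamma}^2\,\Norm{\Delta}_{1\to 2}^2$, where $\Norm{\Delta}_{1\to 2}$ is the largest $\ell_2$-norm among the columns of $\Delta$.

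Combining the two displays yields $\Norm{\sum_{i\in[\ell]}\gamma_i\otimes\delta_i\otimes\theta_i}_{\{1,2\}\{3\}}=\Norm{M}\le\Norm{V}\cdot\Norm{\Theta}\le\Norm{\Gamma}\cdot\Norm{\Theta}\cdot\Norm{\Delta}_{1\to 2}$, which is the first claimed bound; the second follows since $\Norm{\delta_i}=\Norm{\Delta e_i}\le\Norm{\Delta}$ for every $i$, hence $\Norm{\Delta}_{1\to 2}\le\Norm{\Delta}$. The only step carrying genuine content is the Schur-product norm inequality $\Norm{P\circ Q}\le\Norm{P}\max_iQ_{ii}$; everything else is bookkeeping with tensor unfoldings and Kronecker products, so I do not expect a real obstacle — the main thing to be careful about is getting the unfolding $M=V\Theta^{\top}$ and the identity $V^{\top}V=(\Gamma^{\top}\Gamma)\circ(\Delta^{\top}\Delta)$ exactly right.
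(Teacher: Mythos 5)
Your proof is correct, and it follows the same general contour as the paper's — both first factor the $\{1,2\}$-vs-$\{3\}$ flattening as $V\Theta^{\top}$ with $V=[\gamma_1\otimes\delta_1,\dots,\gamma_\ell\otimes\delta_\ell]$ (the paper does this implicitly via a Cauchy–Schwarz step, which is the same as submultiplicativity) and then need to bound $\Norm{V}$. Where you genuinely diverge is in that second step: the paper works with $VV^{\top}=\sum_i(\gamma_i\gamma_i^{\top})\otimes(\delta_i\delta_i^{\top})\in\R^{d^2\times d^2}$ and uses the monotonicity of the Kronecker product, $\delta_i\delta_i^{\top}\preceq(\max_i\Norm{\delta_i}^2)\Id$, to pull out the column-norm factor. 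You instead pass to the Gram matrix $V^{\top}V=(\Gamma^{\top}\Gamma)\circ(\Delta^{\top}\Delta)\in\R^{\ell\times\ell}$ and invoke the Schur-product inequality $\Norm{P\circ Q}\le\Norm{P}\max_iQ_{ii}$ (which you correctly prove from the decomposition $P\circ(w w^{\top})=D_w P D_w$). Both are correct; the paper's route is arguably slightly more elementary in that it uses nothing beyond the semidefinite order for Kronecker products, while your route recognizes a well-known Hadamard-product identity and repackages the same column-norm quantity $\max_i\Norm{\delta_i}^2$ as $\max_i(\Delta^{\top}\Delta)_{ii}$. The trade-off is purely cosmetic — working in $\R^{\ell\times\ell}$ rather than $\R^{d^2\times d^2}$ — since the two formulations are linked by $\Norm{VV^{\top}}=\Norm{V^{\top}V}$.
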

\begin{proof}[Proof of Lemma~\ref{lem:tensor_norm_bound}]
		Using the definition of $\norm{\cdot}_{\{1,2\}\{3\}}$ we have that 
	\begin{align}
	\Norm{\sum_{i\in []} \gamma_i\otimes \delta_i \otimes \theta_i  }_{\{1,2\}\{3\}} & =  	\Norm{\sum_{i\in [\ell]} (\gamma_i\otimes \delta_i) \theta_i^{\top}  } \\\nonumber
	& \le \Norm{\sum_{i\in [\ell]} (\gamma_i\otimes \delta_i)  (\gamma_i\otimes \delta_i)}^{1/2}\Norm{\sum_{i\in [\ell]} \theta_i \theta_i^{\top}}^{1/2} \tag{by Cauchy-Schwarz inequality}\\
	& = \Norm{\sum_{i\in [\ell]} (\gamma_i\gamma_i^{\top})\otimes (\delta_i\delta_i^{\top})}^{1/2}\Norm{\Theta} \nonumber
	\end{align}
	Next observe that we have that for any $i$, $\delta_i\delta_i^{\top}\preceq (\max\Norm{\delta_i}^2)\Id $ and therefore, 
	\begin{align}
	(\gamma_i\gamma_i^{\top})\otimes (\delta_i\delta_i^{\top}) \preceq \gamma_i\gamma_i^{\top} \otimes (\max\Norm{\delta_i}^2)\Id \mper
	\end{align}
	It follows that 
	\begin{align}
	\Norm{\sum_{i\in [r]} \gamma_i\otimes \delta_i \otimes \theta_i  }_{\{1,2\}\{3\}}	 & \le \Norm{\sum_{i\in [r]} \gamma_i\gamma_i^{\top} \otimes (\max\Norm{\delta_i}^2)\Id }^{1/2}\Norm{\Theta} \nonumber\\
	& = \Norm{\Gamma}\cdot\Norm{\Theta}\cdot \Norm{\Delta}_{1\to 2}\nonumber\mper
	\end{align}
	
\end{proof}

With this in mind, we prove the main theorem: 

\begin{proof}[Proof of Theorem~\ref{thm:main_alg}]
		Let $\tilde{A} = \minushalf{Q_a} A$, $\tilde{B} = \minushalf{Q_b} B$, $\tilde{C} = \minushalf{Q_c} C$. Moreover, let $\tilde{\Gamma} = \minushalf{Q_a} \Gamma$ and define $\tilde{\Delta}$, $\tilde{\Theta}$ similarly. Let $\tilde{a}_1,\tilde{b}_i,\tilde{c}_i, \tilde{\gamma}_i,\tilde{\delta}_i,\tilde{\theta}_i$ be their columns. Then we have that $\tilde{T}$ as defined in Algorithm \ref{a:alg2} satisfies
\begin{align}
\tilde{T}  =  \sum_{i=1}^{r} \tilde{a}_i\otimes \tilde{b}_i\otimes \tilde{c}_i + \sum_{i=1}^{\ell} \tilde{\gamma}_i\otimes \tilde{\delta}_i\otimes \tilde{\theta}_i + \tilde{E}  
\end{align}
where $\tilde{E} = \minushalf{Q_a}\otimes \minushalf{Q_b}\otimes \minushalf{Q_c}  \cdot E$. We will show that $\tilde{T}$ meets the condition of Theorem~\ref{thm:main_orthogonal_alg}. 
	Since $Q_a$ is an $\epsilon$-approximate whitening matrix of $A$, by Definition, $\tilde{A} = \minushalf{Q_a}A$ is $\epsilon$-\ac. Similarly, $\tilde{B},\tilde{C}$ are $\epsilon$-\ac. 
	
 $\Gamma$ is $\tau$-\spb by $Q_a$, hence by Lemma~\ref{lem:consequence_spb}, we have that $\norm{\tilde{\Gamma}}\le \sqrt{2\tau}$. Similarly, $\norm{\Theta}, \norm{\Delta}\le \sqrt{2\tau}$. Applying Lemma~\ref{lem:tensor_norm_bound}, we have, 
	\begin{align}
\Norm{\sum_{i=1}^{\ell} \tilde{\gamma}_i\otimes \tilde{\delta}_i\otimes \tilde{\theta}_i}_{\{1,2\}\{3\}}\le (2\tau)^{3/2}
	\end{align}
Moreover, we have $\norm{\tilde{E}}_{\{1,2\}\{3\}}\le \norm{\minushalf{Q_a}}\cdot  \norm{\minushalf{Q_c}}\cdot  \norm{\minushalf{Q_c}} \norm{E}_{\{1,2\}\{3\}}\le \sigma^{-3/2} \norm{E}_{\{1,2\}\{3\}}$, where $\sigma = \min \{\sigma_{\min}(Q_a), \sigma_{\min}(Q_b), \sigma_{\min}(Q_c) \}$. Therefore, using Theorem~\ref{thm:main_orthogonal_alg} (with $a_i,b_i,c_i$ there replaced by $\tilde{a}_i,\tilde{b}_i,\tilde{c}_i$, and $Z$ there replaced by $\sum_{i=1}^{\ell} \tilde{\gamma}_i\otimes \tilde{\delta}_i\otimes \tilde{\theta}_i+ \tilde{E}$), we have that a set of vectors  $\{\breve{a}_i,\breve{b}_i,\breve{c}_i\}$ that are $\epsilon$-close to $\{\tilde{a}_i,\tilde{b}_i,\tilde{c}_i\}$ with $\epsilon = (2\tau)^{3/2} + \sigma^{-3/2}\norm{\tilde{E}}_{\{1,2\}\{3\}}$. Therefore, we obtain that $\norm{a_i-Q^{1/2}\breve{a}_i} \le \norm{Q_a}^{1/2} \epsilon$. Similarly we can control the error for $b_i$ and $c_i$ and complete the proof. \end{proof}

\section{Conclusions}
We have presented theoretical progress on the longstanding open problem of presenting a polynomial-time algorithm for learning noisy-or networks given sample outputs from the network. In particular it is enouraging that linear algebraic methods like tensor decomposition can play a role. Earlier there were no good approaches for this problem; even heuristics fail for realistic sizes like $n=1000$. 

Can sample complexity be reduced, say to subcubic? (Cubic implies more than one billion examples for networks with $1000$ outputs.) Possibly this requires exploiting some hierarchichal structure --e.g. groupings of diseases and symptoms--- in practical noisy-OR networks  but exploring such possibilities using the current version of QMR-DT is difficult because it has been scrubbed of labels for diseases and symptoms.) 

Various more practical versions of our algorithm are also easy to conceive and will be tested in the near future. This could be somewhat analogous to topic modeling, for which discovery of provable polynomial-time algorithms soon led to very efficient algorithms.

\subsection*{Acknowledgments: } We thank Randolph Miller and Vanderbilt University for providing the current version of this network for our research.

\bibliography{ref,ref_tensor_2}
\bibliographystyle{alpha}
\appendix
\section{Formal expression for the PMI tensor} \label{sec:pmi}

In this section we formally derive the expressions for the PMI tensors and matrices, which we only informally did in Section~\ref{s:overview}. 

As a notational convenience for $l \in \mathbb{N}$, we will denote by $\tilde{\taylor}_l$ the matrix which has as columns the vectors $1-\exp(-l \bw_k), k \in [m]$.
Furthermore, for a subset $S_a \subseteq [n]$, we will introduce the notation 
$$\taylor_{l, S_a} = \sum_{k \in [m]} \left((\tilde{\taylor}_l)_{k,S_a}\right) \left((\tilde{\taylor}_l)_{k,S_a}\right)^{\top} = \sum_{k \in [m]} \left(1-\exp(-l \bw_k)_{S_a}\right) \left(1-\exp(-l \bw_{k})_{S_a}\right)^{\top}$$  
These matrices will appear naturally in the expressions for the higher-order terms in the Taylor expansion for the PMI matrix and tensor. 

We first compute the formally the moments of the noisy-or model. 

\begin{lemma} \label{claim:moments}
		We have
	\begin{align*}
	\log \Pr[s_i = 0]  &= \sum_{k\in [m]} \log \left(1 - \rho(1-\exp(\bw_{ik}))\right)\\
	\forall i\neq j \log \Pr\left[s_i = 0 \wedge s_j= 0\right] &=\sum_{k\in [m]} \log \left(1 - \rho(1-\exp(\bw_{ik}+\bw_{jk}))\right) \\
	\forall \textup{ distinct } i,j,k\in [n], \log \Pr\left[s_i = 0 \wedge s_j= 0\wedge s_{\ell} = 0\right] &=\sum_{k\in [m]} \log \left(1 - \rho(1-\exp(\bw_{ik}+\bw_{jk}+\bw_{\ell k}))\right) 
	\end{align*}
\end{lemma}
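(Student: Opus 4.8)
The plan is to compute each probability by marginalizing over the hidden variables $d \in \{0,1\}^m$, using independence of the $d_k$'s and the conditional independence of the $s_i$'s given $d$.

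First I would handle $\Pr[s_i = 0]$. By the law of total probability,
\begin{align}
\Pr[s_i = 0] = \Exp_d\left[\Pr[s_i = 0 \mid d]\right] = \Exp_d\left[\exp(-\inner{W^i, d})\right] = \Exp_d\left[\prod_{k\in[m]} \exp(-\bw_{ik} d_k)\right]\nonumber
\end{align}
using \eqref{eqn:noisyor1}. Since the $d_k$'s are independent, this factorizes as $\prod_{k\in[m]} \Exp_{d_k}[\exp(-\bw_{ik} d_k)]$. Each factor is an expectation over $d_k \sim \bern(\rho)$, so $\Exp_{d_k}[\exp(-\bw_{ik} d_k)] = (1-\rho)\cdot 1 + \rho \exp(-\bw_{ik}) = 1 - \rho(1 - \exp(-\bw_{ik}))$. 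Taking logarithms gives $\log\Pr[s_i=0] = \sum_{k\in[m]} \log(1 - \rho(1-\exp(-\bw_{ik})))$. (I note the statement as written has $\exp(\bw_{ik})$ rather than $\exp(-\bw_{ik})$; I would double-check the sign convention, but the derivation is identical either way up to that typo.)

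The pairwise and triple cases are completely analogous. For distinct $i,j$, conditional independence of $s_i, s_j$ given $d$ gives $\Pr[s_i = 0 \wedge s_j = 0 \mid d] = \exp(-\inner{W^i,d})\exp(-\inner{W^j,d}) = \prod_{k} \exp(-(\bw_{ik}+\bw_{jk})d_k)$, and then marginalizing over $d$ with independence of the $d_k$'s yields $\prod_k \left(1 - \rho(1-\exp(-(\bw_{ik}+\bw_{jk})))\right)$; take logs. The triple case for distinct $i,j,\ell$ is identical with $\bw_{ik}+\bw_{jk}$ replaced by $\bw_{ik}+\bw_{jk}+\bw_{\ell k}$.

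There is no real obstacle here: the only thing to be careful about is that the factorization $\Pr[s_i=0\wedge s_j=0\mid d] = \Pr[s_i=0\mid d]\Pr[s_j=0\mid d]$ uses the conditional independence of the observations given $d$ (which is part of the model definition), and that the distinctness of indices is not actually needed for these particular identities — it only matters later, when one wants to also read off the low-rank structure of the PMI tensor, since then diagonal entries would involve $\bw_{ik}+\bw_{ik}=2\bw_{ik}$ and behave differently. For the purposes of Lemma~\ref{claim:moments} as stated, the distinctness hypothesis guarantees the three symptoms are genuinely different coordinates so that the conditional-independence factorization applies cleanly.
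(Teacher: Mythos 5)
Your proof is correct and takes essentially the same route as the paper: marginalize over the hidden $d$, use conditional independence of the $s_i$'s given $d$ and independence of the $d_k$'s to factorize, evaluate each Bernoulli expectation, and take logs. You also correctly spot the sign typo in the statement ($\exp(\bw_{ik})$ should be $\exp(-\bw_{ik})$), consistent with how $F = 1-\exp(-W)$ is used downstream.

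One small correction to your closing remark: the distinctness hypothesis \emph{is} needed for the identities as stated, not merely ``later.'' If $i=j$, the event $\{s_i=0\wedge s_j=0\}$ collapses to $\{s_i=0\}$, so the left-hand side equals $\sum_k\log(1-\rho(1-\exp(-\bw_{ik})))$, whereas the right-hand side formula would read $\sum_k\log(1-\rho(1-\exp(-2\bw_{ik})))$; these differ. You do observe this yourself when you mention the diagonal behaving differently, so the issue is just the phrasing of the first clause of that remark.
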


\begin{proof}[Proof of Lemma~\ref{claim:moments}]
	We only give the proof for the second equation. The rest can be shown analogously. 
	\begin{align*}
	\log \Pr\left[s_i = 0 \wedge s_j = 0\right] & = \Exp\left[\Pr[s_i = 0 \vert d]\cdot \Pr[s_j = 0 \vert d]\right] = \Exp\left[\exp(-(\bw_{i}+\bw_{j})^{\top}d)\right]\\
	& = \prod_{k\in [m]} \Exp\left[\exp(-(\bw_{ik}+\bw_{jk})d_k))\right] \\
	& = \prod_{k\in [m]} \left(1 - \rho(1-\exp(-(\bw_{ik}+\bw_{jk})))\right)\mper 
	\end{align*}
\end{proof}

With this in mind, we give the expression for the PMI tensor along with all the higher-order terms.

\begin{proposition} 

For any equipartition $S_a, S_b, S_c$ of $[n]$, the restriction of the PMI tensor $\pmitensor_{S_a, S_b, S_c}$ satisfies, 
for any $L \geq 2$, 
\small
\begin{equation} \pmitensor_{S_a, S_b, S_c} = \frac{\prior}{1-\prior} \sum_{k \in [m]} F_{k,S_a} \otimes F_{k,S_b} \otimes F_{k,S_c} + 
\sum_{l = 2}^L (-1)^{l+1} \left(\frac{1}{l}\left(\frac{\prior}{1-\prior}\right)^l\right) \sum_{k \in [m]} (\tilde{\taylor}_l)_{k,S_a} \otimes (\tilde{\taylor}_l)_{k,S_b} \otimes (\tilde{\taylor}_l)_{S_c} + E_L \label{ex:exprpmi} \end{equation} 
\normalsize
where 
$$\|E_L\|_{\{1,2\},\{3\}} \leq \frac{(mn)^3}{L} \frac{\left(\frac{\prior}{1-\prior}\right)^L}{1 - \left(\frac{\prior}{1-\prior}\right)^L}$$  

\label{p:firstspectral}
\end{proposition}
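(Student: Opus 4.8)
The plan is to derive the PMI tensor expression directly from the moment formulas in Lemma~\ref{claim:moments} by expanding each logarithm as a power series in $\rho/(1-\rho)$, and then carefully truncating at order $L$ while controlling the tail. First I would fix distinct indices $i \in S_a$, $j \in S_b$, $k \in S_c$ and write $\pmitensor_{i,j,k}$ using its definition~\eqref{eqn:PMItriple} in terms of $\log\Pr[s_i=0]$, $\log\Pr[s_i=0\wedge s_j=0]$, and $\log\Pr[s_i=0\wedge s_j=0\wedge s_k=0]$ (the single-, double-, and triple-moments). Substituting the closed forms from Lemma~\ref{claim:moments}, every term is of the form $\log(1 - \rho(1-e^{t}))$ for various exponents $t$ built from $\bw_{ik'}$'s; since the restriction to $S_a,S_b,S_c$ guarantees the indices are distinct, all three moment formulas apply simultaneously.

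Next I would apply the Taylor expansion $\log(1 - x) = -\sum_{l \ge 1} x^l / l$ with $x = \rho(1-e^{t})$, valid since $\rho < 1$ and the weights are bounded. Collecting terms by the power $l$, the degree-$l$ contribution to $\pmitensor_{i,j,k}$ is
\begin{align}
-\frac{1}{l}\left(\frac{\rho}{1-\rho}\right)^l \sum_{k' \in [m]} \Big[ (1-e^{-\bw_{ik'}})^l + (1-e^{-\bw_{jk'}})^l + (1-e^{-\bw_{kk'}})^l \nonumber \\
 \quad - (1-e^{-(\bw_{ik'}+\bw_{jk'})})^l - (1-e^{-(\bw_{jk'}+\bw_{kk'})})^l - (1-e^{-(\bw_{kk'}+\bw_{ik'})})^l \nonumber \\
 \quad + (1-e^{-(\bw_{ik'}+\bw_{jk'}+\bw_{kk'})})^l \Big]. \nonumber
\end{align}
The key algebraic observation — which should be verified by a short inclusion-exclusion computation — is that the bracketed expression factors: writing $u = 1-e^{-\bw_{ik'}}$, $v = 1-e^{-\bw_{jk'}}$, $w = 1-e^{-\bw_{kk'}}$ and noting $1 - e^{-(a+b)} = u + v - uv$ etc., one gets that the alternating sum of $l$-th powers collapses, and in particular for $l=1$ it equals exactly $uvw = F_{ik'}F_{jk'}F_{kk'}$, giving the leading term. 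For general $l \ge 2$, the correct grouping yields $(\tilde P_l)_{ik'}(\tilde P_l)_{jk'}(\tilde P_l)_{kk'}$ up to the stated sign $(-1)^{l+1}$; this is essentially the statement that $1 - e^{-l(a+b)} $-type quantities also obey the noisy-or composition, so the same inclusion-exclusion telescopes. Establishing this identity cleanly is the main obstacle — it is purely combinatorial but needs the right change of variables to make the cancellation transparent.

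Finally I would bound the truncation error $E_L$, whose $(i,j,k)$ entry is the sum of all degree-$l$ contributions for $l > L$. Each such term is bounded in absolute value by $\frac{1}{l}\left(\frac{\rho}{1-\rho}\right)^l \cdot 7m$ (seven bracket terms, each at most $m$ summands each at most $1$ in magnitude since the $1-e^{-t}$ factors lie in $[0,1)$ for nonnegative weights), and summing the geometric-type tail over $l > L$ gives a bound of order $\frac{m}{L}\frac{(\rho/(1-\rho))^{L+1}}{1 - \rho/(1-\rho)}$ per entry. To convert this entrywise bound into the claimed bound on $\|E_L\|_{\{1,2\},\{3\}}$ — the spectral norm of the $(mn)\times n$ flattening, or rather the $n^2 \times n$ flattening — I would use the crude estimate that the spectral norm is at most the Frobenius norm, which is at most $(n^3)^{1/2}$ times the entrywise max, or more simply absorb all dimension factors into the stated $(mn)^3$ prefactor, which is clearly generous. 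The remaining care is just to track that $1 - (\rho/(1-\rho))^L$ in the denominator comes from summing the geometric series $\sum_{l>L}(\rho/(1-\rho))^l$ and bounding $1/l \le 1/L$; no subtlety there.
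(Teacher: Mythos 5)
There is a genuine gap: the ``key algebraic observation'' you invoke is false as stated. You claim that the alternating sum of $l$-th powers of the noisy-or moment factors (the bracket)
\[
(1-e^{-a-b})^l + (1-e^{-b-c})^l + (1-e^{-c-a})^l - (1-e^{-a})^l - (1-e^{-b})^l - (1-e^{-c})^l - (1-e^{-a-b-c})^l
\]
collapses to a single product $\pm (1-e^{-la})(1-e^{-lb})(1-e^{-lc})$. This is correct only for $l=1$. Already at $l=2$ it fails: taking $e^{-a}=e^{-b}=e^{-c}=1/2$, the bracket equals $3(3/4)^2 - 3(1/4) - (7/8)^2 = 11/64$, whereas $(1-(1/2)^2)^3 = 27/64$. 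The correct identity, which the paper uses, is that the bracket of $t$-th powers is a \emph{binomial sum}:
\[
\sum_{l=1}^t \binom{t}{l}(-1)^l\,(1-e^{-la})(1-e^{-lb})(1-e^{-lc}),
\]
and one must then swap the order of the double summation over $t$ and $l$ and apply the identity $\sum_{t\geq l}\frac{\rho^t}{t}\binom{t}{l} = \frac{1}{l}\left(\frac{\rho}{1-\rho}\right)^l$ to arrive at the claimed form with coefficients $\frac{1}{l}\left(\frac{\rho}{1-\rho}\right)^l$.

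Relatedly, your intermediate formula for ``the degree-$l$ contribution'' is not what the direct Taylor expansion produces. Expanding $\log(1-x)$ with $x=\rho(1-e^t)$ gives coefficients $\rho^t/t$, not $\frac{1}{l}\left(\frac{\rho}{1-\rho}\right)^l$; the latter only emerges after the binomial regrouping and resummation described above. So the proposal skips the crucial double-summation and resummation step and replaces it with a factorization that does not hold. The structure of the tail bound is sound (crude per-entry bound, absorb dimension factors), but it is bounding a quantity whose derivation has a hole. If you fix the algebraic identity to the binomial form and add the swap-and-resum step, the rest of the argument goes through and matches the paper's proof.
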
 
\begin{proof}
The proof will proceed by Taylor expanding the log terms. Towards that, using Lemma \ref{claim:moments}, we have :
\small
\begin{align*} & \pmitensor_{ijl} = \\
& \sum_{k\in [m]} \log \frac{\left(1 - \prior (1-\exp(-\bw_{ik} - \bw_{jk}))\right) \left(1 - \prior (1-\exp(-\bw_{ik} - \bw_{lk}))\right) \left(1 - \prior (1-\exp(-\bw_{jk} -\bw_{lk}))\right)}{\left(1 - \prior (1-\exp(-\bw_{ik}-\bw_{jk} - \bw_{lk}))\right)\left(1 - \prior(1-\exp(-\bw_{ik}))\right)\left(1 - \prior(1-\exp(-\bw_{jk}))\right) \left(1 - \prior(1-\exp(-\bw_{lk}))\right)}\end{align*}
\normalsize
By the Taylor expansion of $\log (1 - x)$, we get that 
\begin{equation*}
\begin{multlined} \pmi_{ijl} =  \\ 
 -\sum_{t=1}^{\infty} \frac{1}{t} \sum_{k \in [m]} \prior^t (\left( \left(1-\exp(-\bw_{ik}-\bw_{jk})\right)\right)^t + \left( \left(1-\exp(-\bw_{ik}-\bw_{lk})\right)\right)^t + \left( \left(1-\exp(-\bw_{jk}-\bw_{lk})\right)\right)^t - \\
 \left(1-\exp(-\bw_{ik})\right)^t - \left(1-\exp(-\bw_{jk})\right)^t - \left(1-\exp(-\bw_{lk})\right)^t -  
 \left(1-\exp(-\bw_{ik} -\bw_{jk} - \bw_{lk})\right)^t ) \end{multlined}\end{equation*}
Furthermore, note that

\begin{equation*}
\begin{multlined}
\left( \left(1-\exp(-\bw_{ik}-\bw_{jk})\right)\right)^t + \left( \left(1-\exp(-\bw_{ik}-\bw_{lk})\right)\right)^t + \left( \left(1-\exp(-\bw_{jk}-\bw_{lk})\right)\right)^t - \\
 \left(1-\exp(-\bw_{ik})\right)^t - \left(1-\exp(-\bw_{jk})\right)^t - \left(1-\exp(-\bw_{lk})\right)^t -  
 \left(1-\exp(-\bw_{ik} - \bw_{jk} - \bw_{lk})\right)^t = \\ 
\sum_{l=1}^t {t \choose l} (-1)^l \left(1-\exp(-l \bw_{ik})\right) \left(1-\exp(-l \bw_{jk})\right) \left(1-\exp(-l \bw_{lk})\right)
\end{multlined} 
\end{equation*} 
by simple regrouping of the terms. By exchanging $l$ and $t$, we get

\small
\begin{align} \pmitensor_{S_a, S_b, S_c} & = \sum_{l=1}^{\infty} \sum_{t \geq l} (-1)^{l+1} \left( \prior^t \frac{1}{t} {t \choose l} \right) \sum_{k \in [m]} \left(1-\exp(-l \bw_k)\right)_{S_a} \otimes \left(1-\exp(-l \bw_k)\right)_{S_b} \otimes \left(1-\exp(-l \bw_k)\right)_{S_c}  \nonumber \\ 
& = \sum_{l=1}^{\infty} (-1)^{l+1} \left(\frac{1}{l}\left(\frac{\prior}{1-\prior}\right)^l\right)   \sum_{k \in [m]} \left(1-\exp(-l \bw_k)\right)_{S_a} \otimes \left(1-\exp(-l \bw_k)\right)_{S_b} \otimes \left(1-\exp(-l \bw_k)\right)_{S_c}  \label{eq:exprpmi}\end{align}
\normalsize

where the last equality holds by noting that 
$$ \sum_{t \geq l} \prior^t \frac{1}{t} {t \choose l} =  \frac{1}{l} \left(\frac{\prior}{1-\prior}\right)^l $$

The term corresponding to $t=1$ is easily seen to be 
$$ \frac{\prior}{1-\prior} \sum_{k \in [m]} F_{k,S_a} \otimes F_{k,S_b} \otimes F_{k,S_c} $$
therefore we to show the statement of the lemma, we only need bound the contribution of the terms with $\l \geq L$. 

Toward that, note that $\forall l,k \|1 - \exp(-l \bw_k)\| \leq n$. Hence, we have by Lemma \ref{lem:tensor_norm_bound}, 
 
$$ \left\|\sum_{k=1}^m \left(1-\exp(-l \bw_k)\right)_{S_a} \otimes \left(1-\exp(-l \bw_k)\right)_{S_b} \otimes \left(1-\exp(-l \bw_k)\right)_{S_c}\right\|_{\{1,2\},\{3\}} \leq (mn)^3 $$ 

Therefore, subadditivity of the $\{12\},\{3\}$ norm gives
\begin{align*} 
& \left\|\sum_{l=L}^{\infty} (-1)^{l+1} \left(\frac{\left(\frac{\prior}{1-\prior}\right)^l}{l}\right)   \sum_{k \in [m]} \left(1-\exp(-l \bw_k)\right)_{S_a} \otimes \left(1-\exp(-l \bw_k)\right)_{S_b} \otimes \left(1-\exp(-l \bw_k)\right)_{S_c}\right\|_{\{1,2\},\{3\}}  \\
& \leq (m n)^3 \sum_{l=L}^{\infty} \left(\frac{\left(\frac{\prior}{1-\prior}\right)^l}{l}\right) \leq \frac{(mn)^3}{L} \sum_{l=L}^{\infty} \left(\frac{\prior}{1-\prior}\right)^l = \frac{(mn)^3}{L} \frac{\left(\frac{\prior}{1-\prior}\right)^L}{1 - \left(\frac{\prior}{1-\prior}\right)^L} \\
\end{align*}

which gives us what we need. 
\end{proof} 

A completely analogous proof gives a similar expression for the PMI matrix: 

\begin{proposition} 

For any subsets $S_a, S_b$ of $[n]$, s.t. $S_a \cap S_b = \emptyset$, the restriction of the PMI matrix $\pmi_{S_a,S_b}$ satisfies, 
for any $L \geq 2$, 
\small
\begin{equation} \pmi_{S_a, S_b} = \frac{\prior}{1-\prior} \sum_{k \in [m]} F_{k,S_a} F_{k,S_b} ^{\top} + 
\sum_{l = 2}^L (-1)^{l+1} \left(\frac{1}{l}\left(\frac{\prior}{1-\prior}\right)^l\right) \sum_{k \in [m]} (\tilde{\taylor}_l)_{k,S_a} ((\tilde{\taylor}_l)_{k,S_b})^{\top} + E_L \end{equation} 
\normalsize
where 
$$\|E_L\|_{\{1,2\},\{3\}} \leq \frac{(mn)^2}{L} \frac{\left(\frac{\prior}{1-\prior}\right)^L}{1 - \left(\frac{\prior}{1-\prior}\right)^L}$$  

\end{proposition}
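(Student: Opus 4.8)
The plan is to mirror the proof of Proposition~\ref{p:firstspectral} almost verbatim, one Taylor order lower. Starting from Lemma~\ref{claim:moments}, I would write
\[
\pmi_{ij} = \log\frac{\Pr[s_i=0\wedge s_j=0]}{\Pr[s_i=0]\Pr[s_j=0]} = \sum_{k\in[m]} \log\frac{1-\prior\left(1-\exp(-\bw_{ik}-\bw_{jk})\right)}{\left(1-\prior(1-\exp(-\bw_{ik}))\right)\left(1-\prior(1-\exp(-\bw_{jk}))\right)}\,.
\]
Since every entry of $\bw$ is nonnegative and $\prior<1$, each argument $\prior\left(1-\exp(-\,\cdot\,)\right)$ lies in $[0,1)$, so the expansion $\log(1-x)=-\sum_{t\ge1}x^t/t$ applies to each of the three factors, and the resulting double series is absolutely convergent; this legitimizes all the reorderings below.

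Collecting the coefficient of $\prior^t$ gives
\[
\pmi_{ij} = -\sum_{t\ge1}\frac1t\sum_{k\in[m]}\prior^t\Big[\left(1-\exp(-\bw_{ik}-\bw_{jk})\right)^t - \left(1-\exp(-\bw_{ik})\right)^t - \left(1-\exp(-\bw_{jk})\right)^t\Big]\,.
\]
The key algebraic step is the two-variable analogue of the regrouping identity used in the tensor proof: for $a,b\in[0,1]$,
\[
(1-ab)^t - (1-a)^t - (1-b)^t = \sum_{l=1}^t \binom{t}{l}(-1)^l (1-a^l)(1-b^l)\,,
\]
which follows by binomial-expanding all three terms, noting that the coefficient of $\binom{t}{l}(-1)^l$ equals $(1-a^l)(1-b^l)-1$, that $\sum_{l=0}^t\binom{t}{l}(-1)^l=0$ kills the trailing $-1$ for $t\ge1$, and that the $l=0$ term of $(1-a^l)(1-b^l)$ vanishes. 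Applying this with $a=\exp(-\bw_{ik})$, $b=\exp(-\bw_{jk})$ and interchanging the order of summation over $t$ and $l$ yields
\[
\pmi_{ij} = \sum_{l\ge1}(-1)^{l+1}\Big(\sum_{t\ge l}\frac{\prior^t}{t}\binom{t}{l}\Big)\sum_{k\in[m]}\left(1-\exp(-l\bw_{ik})\right)\left(1-\exp(-l\bw_{jk})\right)\,,
\]
and the identity $\sum_{t\ge l}\frac{\prior^t}{t}\binom{t}{l}=\frac1l\left(\frac{\prior}{1-\prior}\right)^l$ (already invoked in Proposition~\ref{p:firstspectral}) turns this into $\pmi_{ij}=\sum_{l\ge1}(-1)^{l+1}\frac1l\left(\frac{\prior}{1-\prior}\right)^l\sum_k\left(1-\exp(-l\bw_{ik})\right)\left(1-\exp(-l\bw_{jk})\right)$. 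Restricting the first index to $S_a$ and the second to $S_b$ expresses $\pmi_{S_a,S_b}$ as $\sum_{l\ge1}(-1)^{l+1}\frac1l\left(\frac{\prior}{1-\prior}\right)^l\sum_k(\tilde{\taylor}_l)_{k,S_a}\big((\tilde{\taylor}_l)_{k,S_b}\big)^\top$; the $l=1$ term is exactly $\frac{\prior}{1-\prior}\sum_kF_{k,S_a}F_{k,S_b}^\top$, the terms $2\le l\le L$ form the displayed partial sum, and $E_L$ is what remains.

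For the tail bound, I would argue exactly as in the tensor case: each vector $1-\exp(-l\bw_k)$ has all entries in $[0,1)$ hence norm at most $n$, so the matrix $(\tilde{\taylor}_l)_{S_a}$ with these columns restricted to $S_a$ has spectral norm at most $mn$, and likewise for $S_b$; therefore $\Norm{\sum_k(\tilde{\taylor}_l)_{k,S_a}\big((\tilde{\taylor}_l)_{k,S_b}\big)^\top}=\Norm{(\tilde{\taylor}_l)_{S_a}(\tilde{\taylor}_l)_{S_b}^\top}\le (mn)^2$, using the matrix analogue of Lemma~\ref{lem:tensor_norm_bound} (i.e. $\Norm{UV^\top}\le\Norm{U}\Norm{V}$). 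Subadditivity of the norm then gives $\Norm{E_L}\le(mn)^2\sum_{l\ge L}\frac1l\left(\frac{\prior}{1-\prior}\right)^l\le\frac{(mn)^2}{L}\sum_{l\ge L}\left(\frac{\prior}{1-\prior}\right)^l$, and summing the geometric series exactly as in the proof of Proposition~\ref{p:firstspectral} produces the stated bound. I do not anticipate a genuine obstacle: the only points needing care are the derivation of the two-variable regrouping identity and the justification — via absolute convergence, equivalently $\frac{\prior}{1-\prior}<1$, which holds since $\prior$ is tiny — that the interchanges of infinite sums are valid; everything else is bookkeeping identical to the tensor computation done above.
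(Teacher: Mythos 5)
Your proposal is correct and is precisely the ``completely analogous proof'' the paper refers to: you mirror the proof of Proposition~\ref{p:firstspectral} with the two-variable regrouping identity $(1-ab)^t - (1-a)^t - (1-b)^t = \sum_{l=1}^t \binom{t}{l}(-1)^l (1-a^l)(1-b^l)$ replacing the three-variable one, and the rest (interchange of sums, the $\sum_{t\ge l}\tfrac{\rho^t}{t}\binom{t}{l}=\tfrac1l(\tfrac{\rho}{1-\rho})^l$ identity, the norm/tail bound) is identical bookkeeping one tensor order lower. One small caveat, inherited directly from the paper's own tensor proof and therefore not a flaw in your reasoning: the geometric-series step actually yields a denominator of $1-\tfrac{\rho}{1-\rho}$ rather than the $1-(\tfrac{\rho}{1-\rho})^L$ printed in both propositions; the latter is slightly tighter than what the computation establishes, so this looks like a typo in the paper that you have faithfully reproduced.
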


\section{Spectral properties of the random model} 
\label{s:random}

The goal of this section is to prove that the random model specified in Section~\ref{s:intro} satisfies the incoherence property \ref{d:incoherence} on the weight matrix and the
spectral boundedness property of the PMI tensor. (Recall, the former is required for the whitening algorithm, and the later for the tensor decomposition algorithm.)

Before delving into the proofs, we will need a few simple bounds on the singular values of $P_l$. 

\begin{lemma} Let $S_a \subseteq [n]$, s.t. $|S_a| = \Omega(n)$. With probability $1 - \exp(-\log^2 n)$ over the choice of $W$,  and for all $l = O(\mbox{poly}(n))$, 
$$\sigma_{\min}(\taylor_{l,S_a}) \gtrsim n \spr $$ 
and 
$$\sigma_{\max}(\taylor_{l,S_a}) \lesssim m n \spr^2 $$   
\label{l:boundevals} 
\end{lemma}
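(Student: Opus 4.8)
The plan is to write the rank-$m$ PSD matrix $\taylor_{l,S_a}=VV^{\top}$, where $V\in\R^{|S_a|\times m}$ is the matrix whose $k$-th column is $(\tilde{\taylor}_l)_{k,S_a}=\bigl(1-\exp(-l\bw_k)\bigr)_{S_a}$, so that $\sigma_{\min}(\taylor_{l,S_a})=\sigma_{\min}(V)^2$ and $\sigma_{\max}(\taylor_{l,S_a})=\Norm{V}^2$; it thus suffices to control the extreme singular values of $V$. Its entries $V_{ik}=1-\exp(-l\bw_{ik})$ are i.i.d.\ over $(i,k)\in S_a\times[m]$, equal to $0$ with probability $1-\spr$ and to $1-\exp(-l\widetilde{W}_{ik})\in[0,1)$ with probability $\spr$. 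A first, routine step records moment bounds \emph{uniform in $l\ge 1$}: Markov's inequality applied to \eqref{eqn:tildeWij} gives $\Pr[\widetilde{W}_{ik}\ge 1]\ge \nu_l$, hence $1-\exp(-l\widetilde{W}_{ik})\ge 1-e^{-1}$ with probability $\ge\nu_l$ for every $l\ge 1$, and together with $V_{ik}\le 1$ this yields $\mu:=\E[V_{ik}]=\Theta(\spr)$, $\E[V_{ik}^2]=\Theta(\spr)$ and $\Var(V_{ik})=\Theta(\spr)$, with constants depending only on $\nu_l,\nu_u$. Consequently $\E[VV^{\top}]=m\bigl(\Var(V_{ik})\Id_{S_a}+\mu^2\allones\allones^{\top}\bigr)$, whose spectrum is dominated by the $\allones$-direction with top eigenvalue $\Theta(mn\spr^2)$ and all remaining eigenvalues $\Theta(m\spr)$; throughout we use $m\spr\gtrsim 1$, which is in fact necessary already for the claimed upper bound.

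For the upper bound on $\sigma_{\max}$ I would split $V=\mu\allones\allones^{\top}+U$ into its mean and a mean-zero remainder $U$ whose entries are independent, bounded by $1$, with variance $\Theta(\spr)$. Then $\Norm{V}\le \mu\Norm{\allones_{|S_a|}}\Norm{\allones_m}+\Norm{U}=\Theta\bigl(\spr\sqrt{mn}\bigr)+\Norm{U}$, and standard matrix-concentration bounds for a random matrix with independent, bounded, variance-$\Theta(\spr)$ entries (e.g.\ matrix Bernstein applied to $U^{\top}U=\sum_{i\in S_a}\tilde r_i\tilde r_i^{\top}$, after conditioning on the high-probability event that every row of $V$ has at most $O(m\spr)$ nonzero entries) give $\Norm{U}\lesssim \sqrt{n\spr}$. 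Squaring and using $n\spr\le mn\spr^2$ then yields $\sigma_{\max}(\taylor_{l,S_a})\lesssim mn\spr^2$; Talagrand's convex-concentration inequality, applied to the convex $1$-Lipschitz function $U\mapsto\Norm{U}$ of the independent entries, provides the $1-\exp(-\log^2 n)$ tail control.

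The heart of the proof is the lower bound $\sigma_{\min}(V)^2\gtrsim n\spr$; this is delicate precisely because $\E[VV^{\top}]$ carries one eigenvalue of size $\Theta(mn\spr^2)\gg n\spr$ and all others only of size $\Theta(n\spr)$, so a direct perturbation/concentration bound on $VV^{\top}$ around its mean cannot possibly succeed (this is exactly the regime Definition~\ref{def:spb} is meant to finesse). My plan is four-fold. (i) Show $\sigma_{\min}(U)\gtrsim \sqrt{n\spr}$ by matrix Chernoff applied to $U^{\top}U=\sum_{i\in S_a}\tilde r_i\tilde r_i^{\top}$, using $\E[\tilde r_i\tilde r_i^{\top}]=\Var(V_{ik})\Id_m$ and, on the row-sparsity event, $\Norm{\tilde r_i}^2=O(m\spr)$; the resulting failure probability $m\exp(-\Theta(n/m))$ is where we must assume $n$ exceeds $m$ by a $\polylog(n)$ factor. (ii) Show $\Norm{V\,\allones_m/\sqrt m}^2=\tfrac1m\Norm{m\mu\allones_{|S_a|}+U\allones_m}^2=\Theta(mn\spr^2)$ by concentration. (iii) Bound the correlation between the $\allones$-direction and the bulk: for any unit $\hat y\perp\allones_m$ we have $V\hat y=U\hat y\in\range(U)$, and since $\Norm{P_{\range(U)}\allones_{|S_a|}}^2\le \Norm{U^{\top}\allones}^2/\sigma_{\min}(U)^2=O(mn\spr)/\Theta(n\spr)=O(m)=o(|S_a|)$, the angle $\theta$ between $V\,\allones_m/\sqrt m$ and $U\hat y$ satisfies $\cos^2\theta=O\bigl(m/n+1/(m\spr)\bigr)=o(1)$, uniformly over $\hat y$. (iv) Conclude: every unit $x\in\R^m$ decomposes as $\alpha\,\allones_m/\sqrt m+\beta\hat y$ with $\hat y\perp\allones_m$, so $Vx=\alpha\,V\,\allones_m/\sqrt m+\beta\,U\hat y$ and $\Norm{Vx}^2$ is at least the smallest eigenvalue of the $2\times 2$ Gram matrix $G$ of the pair $\{V\,\allones_m/\sqrt m,\;U\hat y\}$; since $G_{11}\gg G_{22}$ and $\cos^2\theta=o(1)$, one gets $\lambda_{\min}(G)\ge \det G/\operatorname{tr}G\ge \tfrac12\Norm{U\hat y}^2(1-\cos^2\theta)\gtrsim \sigma_{\min}(U)^2\gtrsim n\spr$.

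Finally, since all of the above depends on $l$ only through the moment bounds $\mu=\Theta(\spr)$, $\Var(V_{ik})=\Theta(\spr)$ (which hold simultaneously for all $l\ge 1$ with the same constants) and through the row-sparsity event (which does not involve $l$), each fixed $l$ fails with probability $\exp(-\Omega(\log^2 n))$, and a union bound over the $\poly(n)$ relevant values of $l$ costs only a factor $\exp(O(\log n))$, preserving the claimed guarantee. I expect the main obstacle to be parts (iii)--(iv) of the lower bound --- cleanly separating the dominant rank-one $\allones$-direction from the well-conditioned bulk $U$ and ruling out that the two ``conspire'' to create a small singular value of $V$ --- which is also where the mild asymptotic requirement $n\gtrsim m\,\polylog(n)$ is forced.
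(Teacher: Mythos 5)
Your proposal is correct, but it takes a genuinely different and somewhat more laborious route than the paper's. Where you explicitly surgically separate the $\allones_m$--direction from the bulk and then wrestle with the resulting $2\times 2$ Gram matrix, the paper sidesteps the whole rank-one obstruction at once via a whitening trick. Concretely, the paper writes $L^{\top}L = \sum_{i\in S_a} r_i r_i^{\top}$ (sum over rows), computes the $m\times m$ second-moment matrix $Q = \E[r_i r_i^{\top}]$, which is exactly $\spr^2\E[\cdot]^2\allones\allones^{\top} + (\spr\E[\cdot^2]-\spr^2\E[\cdot]^2)\Id_m$ (so one eigenvalue $\Theta(m\spr^2)$ and $m-1$ of size $\Theta(\spr)$), and applies matrix Bernstein to the isotropized sum $\sum_i Z_i Z_i^{\top}$ with $Z_i = Q^{-1/2} r_i$. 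Since $\E\left[\sum_i Z_i Z_i^{\top}\right] = |S_a|\,\Id_m$ and $\|Z_i\|^2 \lesssim m$, Bernstein yields $\sum_i Z_i Z_i^{\top} \succeq \tfrac{n}{2}\Id_m$, hence $L^{\top}L \succeq \tfrac{n}{2}Q$ and $\sigma_{\min}(\taylor_{l,S_a}) \gtrsim n\,\sigma_{\min}(Q) \gtrsim n\spr$ --- so what you call the ``delicate'' regime where a na\"ive deviation bound on $VV^{\top}$ must fail is handled in one stroke, because the whitening automatically rescales the $\allones$--direction. Your steps (i)--(iv) reach the same conclusion and are sound (step (i) is essentially the matrix Chernoff step applied to the centered rows, step (ii) controls $\|V\allones_m\|$, and your $\cos^2\theta \lesssim m/n + 1/(m\spr)$ together with the $\det/\operatorname{tr}$ bound on $\lambda_{\min}$ of the $2\times 2$ Gram matrix does close the argument), but they make explicit the geometric decoupling that the paper achieves implicitly by passing through $Q^{-1/2}$; in effect you are re-deriving a special case of a relative Davis--Kahan bound for this particular rank-one spike, rather than normalizing it away. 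Both arguments require the same asymptotic hygiene (notably $n \gtrsim m\,\polylog n$ and $m\spr$ growing), and for the upper bound the two routes are essentially equivalent --- the paper applies Bernstein to the column-wise sum while you split off the mean and invoke a concentration bound on $\|U\|$ --- so the real difference is the lower-bound strategy. The paper's whitening approach is worth internalizing: it is shorter, uniform in $l$ for free (since $Q$'s structure is the same for every $l$), and is precisely the move that makes the rest of the spectral-boundedness machinery in the paper go through.
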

\begin{proof}
 
Let us proceed to the lower bound first. 

If we denote by $L$ the matrix which has as columns $(\tilde{\taylor}_l)_{k,S_a}$, $k \in [m]$, it's clear that $\taylor_{l,S_a} = LL^{\top}$. Since
$$\sigma_{\min} (LL^{\top}) = \sigma_{\min}(L^{\top}L)$$  
we will proceed to bound the smallest eigenvalue of $L^{\top} L$. 

Note that 
$$L^{\top}L = \sum_{k \in S_a} (1-\exp(-l W^k)) (1-\exp(-l W^k))^{\top} $$ 

Since the matrices $(1- \exp(-l W^k)) (1-\exp(-lW^k))^{\top}$ are independent, the bound will follow from a matrix Bernstein bound. Denoting 
$$Q = \E\left[ (1- \exp(-l W^k)) (1-\exp(-l W^k))^{\top}\right]$$ 
by a simple calculation we have

\begin{equation} Q  = \spr^2 \E\left[1 - \exp(-l\tilde{W})\right]^2 \mathbf{1}\mathbf{1}^{\top} + \left(\spr \E\left[(1 - \exp(-l\tilde{W}))^2\right]-\spr^2\E\left[1 - \exp(-l\tilde{w})\right]^2\right) \Id_m \label{eq:exprq}\end{equation} 
where $\mathbf{1}$ is the all-ones vector of size $m$, and $\Id_m$ is the identity of the same size. Furthermore, $\tilde{W}$ is a random variable following the distribution $\mathcal{D}$ of all the $\tilde{W}_{i,j}$.

Note that \eqref{eqn:tildeWij} together with the assumption $\nu = \Omega(1)$ gives $\sigma_{\min}(Q) = \Omega(\spr)$ 

Let $Z^i = Q^{-1/2} (1- \exp(-l W_i))$. Then we have that $ \Exp\left[\sum_{i \in S_a} Z^i (Z^i)^{\top}\right] = |S_a| \cdot\Id_m$, and with high probability, $\|Z^i\|^2 \le 1/\sigma_{\min}(Q)\cdot \norm{F^i}^2\lesssim m $ and it's a sub-exponential random variable. Moreover, $$r^2 = \Norm{\sum_i\Exp\left[\left(\sum Z^i (Z^i)^{\top}\right)^2\right] } \lesssim m \Norm{\E\left[\sum_i Z^i(Z^i)^{\top}\right]}\le mn \mper$$

Therefore, by Bernstein inequality we have that w.h.p, 
\begin{align}
\Norm{\sum_{i \in S_a} Z^i(Z^i)^{\top} - n\Id_m}\lesssim \sqrt{r^2\log n} + \max \|Z^i\|^2 \log n = \sqrt{mn\log n} + m  \log n\mper \nonumber
\end{align}

It follows that\begin{align}
\sum_{i \in S_a} Z^i(Z^i)^{\top}\succeq \left(n- O\left( \sqrt{mn\log n} \right)\right)\Id_m \succeq n \Id_m\mper\nonumber
\end{align}
which in turn implies that $\taylor_{l,S_a} \succeq n Q$. But this immediately implies $\sigma_{\min}(\taylor_{l,S_a}) \gtrsim n \spr$ with high probability. Union bounding over all $l$, we get the first part of the lemma. 

The upper bound will be proven by a Chernoff bound. 
Note that the matrices $$(1- \exp(-l W_k))_{S_a} (1-\exp(-lW_k))_{S_a}^{\top}, k \in [m]$$ are independent. Furthermore, $\|(1- \exp(-l W_k))_{S_a} (1-\exp(-lW_k))_{S_a}^{\top}\|^2 \leq p n$ with high probability, and the variable $\|(1- \exp(-l W_k))_{S_a} (1-\exp(-lW_k))_{S_a}^{\top}\|^2 $ is sub-exponential. Finally, 
\begin{align*} r^2 & = \left\| \E\left[\sum_{k=1}^m ((1- \exp(-l W_k))_{S_a} (1-\exp(-lW_k))_{S_a}^{\top})^2\right] \right\| \\
& \leq \sum_{k=1}^m \left\| \E\left[((1- \exp(-l W_k)_{S_a}) (1-\exp(-lW_k))_{S_a}^{\top})^2\right] \right\| \\ 
& \leq m \left\|1- \exp(-l W_k)_{S_a}\right\|^2 \E\left[((1- \exp(-l W_k))_{S_a} (1-\exp(-l W_k))_{S_a}^{\top})\right] \leq m n^2 p^2\end{align*}
Similarly as in the lower bound, 
$$\E[\taylor_{l,S_a}] =  \spr^2 \E\left[1 - \exp(-l\tilde{W})\right]^2 \mathbf{1}\mathbf{1}^{\top} + \left(\spr \E\left[(1 - \exp(-l\tilde{W}))^2\right]-\spr^2\E\left[1 - \exp(-l\tilde{w})\right]^2\right) \Id_{|S_a|}$$ 
\sloppy where $\mathbf{1}$ is the all-ones vector of size $|S_a|$, and $\Id_{|S_a|}$ is the identity of the same size. Again, $\tilde{W}$ is a random variable following the distribution $\mathcal{D}$ of all the $\tilde{W}_{i,j}$. 
This immediately gives
$$ \taylor_l \preceq \E[\taylor_l] + r \log n \Id_{|S_a|} \preceq m n \spr^2 + \sqrt{m n^2 \spr^2} \log n \Id_{|S_a|} \preceq O( m n \spr^2 ) \Id_{|S_a|}$$ 

A union bound over all values of $l$ gives the statement of the Lemma. 

\end{proof} 

\subsection{Incoherence of matrix $F$}	
	
First, we proceed to show the incoherence property \ref{d:incoherence} on the weight matrix.
\begin{lemma} Suppose $n$ is a multiple of 3. 
	Let $\bfr = \svdU \Sigma \svdV$ be the singular value decomposition of $\bfr$.  
	Let $S_a,S_b,S_c$ be  a uniformly random equipartition of the rows of $[n]$. Suppose $F$ is $\mu$-incoherent with $\mu \le n/(m\log n)$.  Then, with high probability over the choice of  and $S_a,S_b,S_c$, we have for every $i \in \{a,b,c\}$,  
	$$ \Norm{\left(\svdU^{\subsvd_i}\right)^{\top} \svdU^{\subsvd_i} - \frac{1}{3}\Id } \lesssim \sqrt{\frac{\mu m}{n} \log n}\mper$$
	\label{lem:incoherentconc}
\end{lemma}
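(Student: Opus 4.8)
The plan is a routine matrix-concentration argument. Writing $\svdU^r$ for the $r$-th row of $\svdU$, we have $(\svdU^{\subsvd_i})^{\top}\svdU^{\subsvd_i} = \sum_{r\in \subsvd_i}(\svdU^r)^{\top}\svdU^r$, a sum over a uniformly random subset of size $n/3$ of the rank-one PSD matrices $(\svdU^r)^{\top}\svdU^r$. Since $\svdU$ has orthonormal columns, $\sum_{r\in[n]}(\svdU^r)^{\top}\svdU^r = \svdU^{\top}\svdU = \Id$, so the expectation of this sum is exactly $\tfrac13\Id$; the task is to prove concentration for one fixed $i$ and then union bound over $i\in\{a,b,c\}$ (only three terms, so this is harmless).

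To avoid the mild dependence introduced by sampling without replacement, I would first pass to an independent model: let $\delta_1,\dots,\delta_n$ be i.i.d.\ $\bern(1/3)$, set $M=\sum_{r\in[n]}\delta_r(\svdU^r)^{\top}\svdU^r$, prove the concentration bound for $M$, and then condition on the event $\{\sum_r\delta_r = n/3\}$, whose probability is $\Omega(1/\sqrt{n})$; under this conditioning $M$ has exactly the law of $(\svdU^{\subsvd})^{\top}\svdU^{\subsvd}$ for a uniformly random size-$n/3$ set $\subsvd$, and conditioning costs only a $\poly(n)$ factor in an otherwise $n^{-\Omega(1)}$ failure probability. (Alternatively one could invoke a matrix Bernstein inequality stated directly for random submatrices.)

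Then I would apply matrix Bernstein to $M-\tfrac13\Id = \sum_r Y_r$ with $Y_r = (\delta_r-\tfrac13)(\svdU^r)^{\top}\svdU^r$, which are independent, symmetric, and mean zero. The incoherence hypothesis $\Norm{\svdU^r}\le\sqrt{\mu m/n}$ gives the uniform bound $\Norm{Y_r}\le\Norm{\svdU^r}^2\le \mu m/n =: R$, and bounds the variance proxy by $\Norm{\sum_r \E[Y_r^2]} = \Norm{\sum_r \Var(\delta_r)\,\Norm{\svdU^r}^2\,(\svdU^r)^{\top}\svdU^r} \le \tfrac{\mu m}{n}\,\Norm{\sum_r(\svdU^r)^{\top}\svdU^r} = \tfrac{\mu m}{n} =: \sigma^2$, using $\Var(\delta_r)\le 1$. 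Matrix Bernstein then gives $\Pr[\Norm{\sum_r Y_r}\ge t]\le 2m\exp(-\Omega(t^2/(\sigma^2+Rt)))$. Taking $t = C\sqrt{\tfrac{\mu m}{n}\log n} = C\sqrt{\sigma^2\log n}$ yields $t^2/\sigma^2 = C^2\log n$ while $Rt/\sigma^2 = C\sqrt{\tfrac{\mu m}{n}\log n}\le C$, where the last step is exactly the hypothesis $\mu\le n/(m\log n)$. Hence $\sigma^2+Rt \lesssim \sigma^2$, the exponent is $\Omega(\log n)$, and for a large enough constant $C$ (using $m\le n$) the failure probability is $n^{-\Omega(1)}$; a union bound over the three blocks finishes it.

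I do not expect a real obstacle here; the only two points requiring care are the reduction from without-replacement to i.i.d.\ sampling and the verification that the term $Rt$ does not dominate $\sigma^2$ in the Bernstein exponent — which is precisely the role of the assumption $\mu\le n/(m\log n)$, and which is also why the bound cannot be improved below $\sqrt{\mu m\log n/n}$ by this method.
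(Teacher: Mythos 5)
Your proof is correct and follows essentially the same route as the paper: a matrix concentration bound for the random submatrix $U^{S_i}$, with the incoherence hypothesis controlling both the uniform bound and the variance proxy, then a union bound over the three blocks. The only cosmetic difference is how the without-replacement dependence is handled — the paper invokes a matrix Chernoff inequality for negatively associated random variables directly, while you pass to i.i.d.\ Bernoulli sampling and condition on the total count; both are standard and give the same result.
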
  
\begin{proof} 
Let $S = S_a$. 
Then, since $U^{\top}U = \Id_m$, we have  
$$\E\left[\sum_{i \in S}  (U^i) (U^i)^{\top}\right] = \frac{1}{3} \cdot \Id_m\mper$$ By the assumption on the row norms of $U$, $\|\svdU^i (\svdU^i)^{\top}\|_2 = \|\svdU^i\|^2 \leq \mu \frac{m}{n}$. 
By the incoherence assumption, we have that $\max_i \norm{U^i}^2\le \mu m/n$.  

We also note that $U^i$'s are negatively associated random variables. Therefore by the matrix Chernoff inequality for negatively associated random variables, we have with high probability,
\begin{align}
\left\|(\svdU^{S})^{\top}\svdU^{S}-\E\left[(\svdU^{S})^{\top}\svdU^{S}\right]\right\| \lesssim \sqrt{\frac{\mu m\log n}{n}}\mper\nonumber
\end{align}

But, an analogous argument holds for $S_b, S_c$ as well -- so by a union bound over $k$, we complete the proof. 
\end{proof}

\begin{lemma} 
	Suppose $n \gtrsim m\log n$. 
	Under the generative assumption in Section~\ref{s:intro} for $W$, we have that 
	we have that $\bfr = 1-\exp(-W)$ is $O(1)$-incoherent. 
\end{lemma}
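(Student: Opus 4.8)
The plan is to reduce the statement to two quantitative spectral bounds on $F$. Write the (thin) singular value decomposition $F = U\Sigma V^{\top}$, and for $i \in [n]$ let $F^{i} \in \R^{m}$ denote the $i$-th row of $F$, regarded as a column vector. Since the columns of $U$ span the column space of $F$, the orthogonal projector onto that space is $UU^{\top} = F(F^{\top}F)^{+}F^{\top}$, so the row norm appearing in Definition~\ref{d:incoherence} satisfies $\|U_{i}\|^{2} = (UU^{\top})_{ii} = (F^{i})^{\top}(F^{\top}F)^{+}F^{i} \le \|F^{i}\|^{2}/\sigma_{\min}(F^{\top}F)$, where $\sigma_{\min}$ denotes the least nonzero singular value. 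Hence it suffices to show, with high probability, the upper bound $\max_{i}\|F^{i}\|^{2} \lesssim pm$ and the lower bound $\sigma_{\min}(F^{\top}F) \gtrsim np$; combining them gives $\|U_{i}\|^{2} \lesssim pm/(np) = m/n$, i.e. $F$ is $O(1)$-incoherent.

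For the upper bound, I would use that the entries $F_{ij} = 1 - \exp(-W_{ij})$ are i.i.d., equal to $0$ with probability $1-p$ and to $1-\exp(-\widetilde{W}_{ij}) \in [0,1-e^{-\nu_{u}}]$ with probability $p$, so $\|F^{i}\|^{2} \le (1-e^{-\nu_{u}})^{2}\cdot \#\{j : W_{ij}\neq 0\}$ with $\#\{j : W_{ij}\neq 0\} \sim \mathrm{Bin}(m,p)$; a Chernoff bound together with a union bound over $i\in[n]$ then yields $\max_{i}\#\{j:W_{ij}\neq 0\}\lesssim pm$ (this is where a mild lower bound $pm = \Omega(\log n)$ enters, consistent with the QMR-DT parameters). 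For the lower bound I would reuse the matrix Bernstein computation already carried out in the proof of Lemma~\ref{l:boundevals}, specialized to $l=1$ and the full index set (so that $\taylor_{1,[n]}$ there is exactly $FF^{\top}$, which has the same nonzero singular values as $F^{\top}F$): this gives $\sigma_{\min}(F^{\top}F) = \sigma_{\min}(FF^{\top}) \gtrsim np$, using the hypothesis $n\gtrsim m\log n$ to make the Bernstein error $o(n)$. The only input to that argument needing attention is the claim $\sigma_{\min}(Q) = \Omega(p)$ for $Q = \E[F^{i}(F^{i})^{\top}] = (p\mu_{2}-p^{2}\mu_{1}^{2})\Id_{m} + p^{2}\mu_{1}^{2}\,\mathbf{1}\mathbf{1}^{\top}$, where $\mu_{1} = \E_{\widetilde{W}\sim\mathcal{D}}[1-e^{-\widetilde{W}}]$ and $\mu_{2} = \E_{\widetilde{W}\sim\mathcal{D}}[(1-e^{-\widetilde{W}})^{2}]$; since $\sigma_{\min}(Q) = p\mu_{2}-p^{2}\mu_{1}^{2} \ge (1-p)p\mu_{2} \ge \tfrac{2}{3}p\mu_{2}$ (using $\mu_{1}^{2}\le\mu_{2}$ and $p\le 1/3$), it is enough to show $\mu_{2} = \Omega(1)$.

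The one genuinely new micro-step is thus to deduce $\mu_{2} = \Omega(1)$ from condition~\eqref{eqn:tildeWij}. I would prove the elementary scalar inequality $(1-e^{-x})^{2} \ge \tfrac14(1-e^{-x^{2}})$ for all $x\ge 0$: on $[0,1]$ use $1-e^{-x} \ge x/2$ and $1-e^{-x^{2}} \le x^{2}$, and on $[1,\infty)$ use $(1-e^{-x})^{2} \ge (1-e^{-1})^{2} > \tfrac14 \ge \tfrac14(1-e^{-x^{2}})$. Since $\widetilde{W}\ge 0$ this gives $\mu_{2} = \E[(1-e^{-\widetilde{W}})^{2}] \ge \tfrac14\E[1-e^{-\widetilde{W}^{2}}] = \tfrac14(1-\E[e^{-\widetilde{W}^{2}}]) \ge \tfrac14\nu_{l} = \Omega(1)$ by~\eqref{eqn:tildeWij} and $\nu_{l} = \Omega(1)$. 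The hard part is not conceptual but bookkeeping: in the matrix Bernstein step the summands $Q^{-1/2}F^{i}(F^{i})^{\top}Q^{-1/2}$ are only sub-exponential rather than bounded, so the tail bound must be justified by the same truncation/union-bound device used in Lemma~\ref{l:boundevals}, and one must track the two regime assumptions ($n\gtrsim m\log n$ for the concentration, $pm=\Omega(\log n)$ for the row-norm bound) that make the final constant $\mu = O(1)$.
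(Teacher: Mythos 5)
Your proof follows essentially the same route as the paper's: reduce incoherence to the pair of bounds $\max_i\|F^i\|^2\lesssim pm$ (Chernoff on $\mathrm{Bin}(m,p)$ row supports) and $\sigma_{\min}(F^\top F)\gtrsim np$ (the matrix-Bernstein computation in Lemma~\ref{l:boundevals} specialized to $\ell=1$), then divide. Your write-up is in fact a bit tighter than the paper's — you avoid its slightly garbled identity ``$\|F^i\|^2=\Sigma_{ii}^2\|U^i\|^2$'' by going through $UU^\top=F(F^\top F)^+F^\top$, you supply the elementary inequality $(1-e^{-x})^2\ge\tfrac14(1-e^{-x^2})$ that the paper leaves implicit when invoking~\eqref{eqn:tildeWij} to get $\sigma_{\min}(Q)=\Omega(p)$, and you correctly flag the implicit requirement $pm=\Omega(\log n)$ that the union bound over rows needs but the lemma statement omits.
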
 
\label{l:rowbound}
\begin{proof}
	
We have that  $\bfr\bfr^{\top} = \svdU \Sigma^2 \svdU^T$ and therefore,  $\|\bfr^i\|^2 =  \Sigma^2_{i,i} \|\svdU^i\|^2$. This in turn implies that \begin{align}
\|\svdU^i\|^2 \leq \frac{1}{\min \Sigma_{ii}^2}\Norm{F^i} = \frac{1}{\sigma^2_{\min}(\bfr)} \|\bfr^i\|^2 \nonumber\mper
\end{align}
Since $\Norm{F^i}^2 \le \spr m + \sqrt{\spr m} \leq 2 \spr m $ with high probability,  
we only need to bound $\sigma_{\min}(\bfr)$ from below. 
Note that 
$
\sigma^2_{\min}(\bfr) = \sigma_{\min}(\bfr^{\top}\bfr) $. Therefore it suffices to control $\sigma_{\min}(F^{\top}F)$. 
But by Lemma~\ref{l:boundevals} we have  $\sigma^2_{\min}(\bfr) \gtrsim n \spr$.

Therefore, we have that 
\begin{align}
\|\svdU^i\|^2 \leq \frac{1}{\sigma^2_{\min}(\bfr)} \|\bfr^i\|^2 = O( \frac{m}{n}) \nonumber
\end{align}

\end{proof}

\subsection{Spectral boundedness}

The main goal of the section is to show that the bias terms in the PMI tensor are spectrally bounded by the PMI matrix (which we can estimate from samples). 
Furthermore, we show that we can calculate an approximate whitening matrix for the leading terms of the PMI tensor using the machinery in Section \label{s:whitening}.

The main proposition we will show is the following:

\begin{proposition}
Let $W$ be sampled according to the random model in Section \ref{s:intro}  with $\rho = o\left(\frac{1}{\log n}\right), \spr = \omega\left(\frac{\log n}{ \sqrt{m} n}\right)$. 
Let $S_a \subseteq [n]$, $|S_a| = \Omega(n)$. If $R_{S_a}$ is the matrix that has as columns the vectors $\left(\frac{1}{l}\left(\frac{\prior}{1-\prior}\right)^l\right)^{1/3} (\tilde{\taylor}_l)_{j,S_a}$, $l \in [2,L], j \in [m]$ for $L = O(\mbox{poly}(n))$, and $A$ is the matrix that has as columns the vectors $\left(\frac{\prior}{1-\prior}\right)^{1/3} (\tilde{\taylor}_1)_{j,S_a}$ for $j \in [m]$, with high probability it holds that
$R_{S_a}R_{S_a}^{\top}$ is $O(\prior^{2/3} \log n)$-spectrally bounded by $A$. 

\label{p:thirdspectral}
\end{proposition}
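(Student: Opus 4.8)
The plan is to reduce the claim to a per-$l$ spectral comparison and then sum a fast-decaying series. Writing $\taylor_{l,S_a} = \sum_{k\in[m]}(\tilde{\taylor}_l)_{k,S_a}(\tilde{\taylor}_l)_{k,S_a}^{\top}$, we have $AA^{\top} = (\prior/(1-\prior))^{2/3}\,\taylor_{1,S_a}$ and $R_{S_a}R_{S_a}^{\top} = \sum_{l=2}^{L}\big(\tfrac1l(\tfrac{\prior}{1-\prior})^{l}\big)^{2/3}\taylor_{l,S_a}$. Since $\sum_{l\ge 2}\big(\tfrac1l(\tfrac{\prior}{1-\prior})^{l}\big)^{2/3}\lesssim \prior^{4/3}$ when $\prior$ is bounded away from $1$, and $\taylor_{1,S_a}+\sigma_{\min}(\taylor_{1,S_a})\Id = (\tfrac{1-\prior}{\prior})^{2/3}\big(AA^{\top}+\sigma_{\min}(AA^{\top})\Id\big)$, it suffices to show that, w.h.p., for every $l\in[2,L]$ one has $\taylor_{l,S_a}\preceq O(\log n)\big(\taylor_{1,S_a} + \sigma_{\min}(\taylor_{1,S_a})\Id\big)$: summing this against the above coefficients gives $R_{S_a}R_{S_a}^{\top}\preceq O(\prior^{2/3}\log n)\big(AA^{\top}+\sigma_{\min}(AA^{\top})\Id\big)$. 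Throughout I invoke Lemma~\ref{l:boundevals}, which gives $\sigma_{\min}(\taylor_{l,S_a})\gtrsim n\spr$ and $\sigma_{\max}(\taylor_{l,S_a})\lesssim mn\spr^{2}$ simultaneously for all $l = O(\poly(n))$.

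Fix $l$ and let $K = \range(F_{S_a})$. Since any PSD matrix is $\preceq$ twice its block-diagonal part in the splitting $\R^{|S_a|} = K\oplus K^{\perp}$, it is enough to bound $\proj{K}\taylor_{l,S_a}\proj{K}$ and $\proj{K^{\perp}}\taylor_{l,S_a}\proj{K^{\perp}}$ separately. Using $1-e^{-lx}=(1-e^{-x})(1+e^{-x}+\cdots+e^{-(l-1)x})$ pointwise, write $(\tilde{\taylor}_l)_{k,S_a} = F_{k,S_a} + V_k$ where $(V_k)_i = e^{-W_{ik}}(1-e^{-(l-1)W_{ik}})\in[0,1]$ is supported on $\supp(W_{k})\cap S_a$ (at most $O(\spr n)$ indices w.h.p.). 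For the $K^{\perp}$ block, any unit $v\perp K$ has $v^{\top}\taylor_{l,S_a}v = \sum_{k}\inner{v,V_k}^{2}$ since $v\perp F_{k,S_a}$; splitting $V_k = \xi_l\allones_{S_a}+\tilde V_k$ with $\xi_l = \E[(V_k)_i]=\Theta(\spr)$ and $\E\tilde V_k=0$ gives $\sum_k\inner{v,\xi_l\allones_{S_a}}^2\le m\xi_l^{2}\norm{\proj{K^{\perp}}\allones_{S_a}}^{2}$ and $\sum_k\inner{v,\tilde V_k}^2\le\Norm{\sum_k\tilde V_k\tilde V_k^{\top}}\lesssim \spr n\log n$, the latter by matrix Bernstein ($m$ independent rank-one terms, $\E\sum_k\tilde V_k\tilde V_k^{\top}=\Theta(m\spr)\Id$, $\max_k\norm{\tilde V_k}^{2}=O(\spr n)$, $m=O(n)$). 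The remaining input is $\norm{\proj{K^{\perp}}\allones_{S_a}}^{2}\lesssim n/(m\spr)$, obtained from the approximate preimage $x=\frac1{m\mu_1}\allones_m$ with $\mu_1=\E[1-e^{-W_{ik}}]=\Theta(\spr)$: then $\norm{\allones_{S_a}-F_{S_a}x}^{2}=\sum_{i\in S_a}(1-Y_i/(m\mu_1))^{2}$ with the $Y_i=\sum_k(1-e^{-W_{ik}})$ independent across $i$, $\E Y_i=m\mu_1$, $\Var(Y_i)=O(m\spr)$, which has expectation $O(n/(m\spr))$ and concentrates. (Both $\xi_l,\mu_1=\Theta(\spr)$ use $\nu_u=O(1)$ and \eqref{eqn:tildeWij} to keep $\tilde W$ bounded away from $0$ and above.) Hence $\proj{K^{\perp}}\taylor_{l,S_a}\proj{K^{\perp}}\preceq O(\spr n\log n)\Id\preceq O(\log n)\,\sigma_{\min}(\taylor_{1,S_a})\Id$ using $\sigma_{\min}(\taylor_{1,S_a})\gtrsim n\spr$.

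For the $K$ block, let $L_l$ be the matrix with columns $(\tilde{\taylor}_l)_{k,S_a}$. Since $\taylor_{1,S_a}=F_{S_a}F_{S_a}^{\top}$ has range $K$ and $(F_{S_a}F_{S_a}^{\top})^{+}=F_{S_a}(F_{S_a}^{\top}F_{S_a})^{-2}F_{S_a}^{\top}$, the bound $\proj{K}\taylor_{l,S_a}\proj{K}\preceq O(\log n)\taylor_{1,S_a}$ is equivalent to $\Norm{(F_{S_a}^{\top}L_l)^{\top}(F_{S_a}^{\top}F_{S_a})^{-2}(F_{S_a}^{\top}L_l)}\lesssim\log n$. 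Both the cross-Gram $F_{S_a}^{\top}L_l=\sum_{i\in S_a}(1-e^{-W^{i}})(1-e^{-lW^{i}})^{\top}$ and the Gram $F_{S_a}^{\top}F_{S_a}=\sum_{i\in S_a}(1-e^{-W^{i}})(1-e^{-W^{i}})^{\top}$ are sums of $|S_a|$ independent $m\times m$ outer products; matrix Bernstein (exactly as in the proof of Lemma~\ref{l:boundevals}) makes each equal $|S_a|$ times its expectation up to spectral error $o(n\spr)$. The two expectations both have the shape $c_1^{(l)}\allones_m\allones_m^{\top}+c_2^{(l)}\Id_m$ with $c_1^{(l)}=\Theta(\spr^{2})$, $c_2^{(l)}=\Theta(\spr)$ \emph{uniformly in $l$} (again by $\nu_u=O(1)$ and \eqref{eqn:tildeWij}, so $\E[(1-e^{-l\tilde W})^{a}]=\Theta(1)$ with $l$-independent constants), and for such matrices $(c_1^{(l)}\allones\allones^{\top}+c_2^{(l)}\Id)(c_1^{(1)}\allones\allones^{\top}+c_2^{(1)}\Id)^{-2}(c_1^{(l)}\allones\allones^{\top}+c_2^{(l)}\Id)$ has operator norm $O(1)$ (it is simultaneously diagonalized in $\{\allones_m,\allones_m^{\perp}\}$ with $O(1)$ entries). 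Because the Bernstein error $o(n\spr)$ lies below $\sigma_{\min}(F_{S_a}^{\top}F_{S_a})\gtrsim n\spr$, propagating it through the inverse costs a $1+o(1)$ factor, so the norm is $O(\polylog n)$. A union bound over the $O(\poly(n))$ values of $l$ (with Bernstein deviations chosen so each failure probability is $\le e^{-\log^{2}n}$) finishes the per-$l$ bound, hence the proposition.

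The main obstacle is the $K$-block estimate: it is precisely where one must use that $F_{S_a}$ and every $(\tilde{\taylor}_l)_{S_a}$ are dominated by the \emph{same} direction $\allones_{S_a}$ with comparable magnitude. The crude bound $\Norm{(F_{S_a}^{\top}F_{S_a})^{-1}F_{S_a}^{\top}L_l}^{2}\le\sigma_{\max}(\taylor_{l,S_a})/\sigma_{\min}(\taylor_{1,S_a})\lesssim m\spr$ would only yield $\tau=O(m\spr\,\prior^{2/3})$, much weaker than the claimed $\tilde O(\prior^{2/3})$; avoiding it requires carrying the matrix-Bernstein errors through $(F_{S_a}^{\top}F_{S_a})^{-2}$ without a condition-number loss, which works only because that error is genuinely below $\sigma_{\min}(F_{S_a}^{\top}F_{S_a})$. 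The parallel pinch point for the $K^{\perp}$ block is the geometric estimate $\norm{\proj{K^{\perp}}\allones_{S_a}}^{2}\lesssim n/(m\spr)$.
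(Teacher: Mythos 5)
Your proof takes a genuinely different route from the paper's. The paper establishes a two--sided sandwich that is \emph{universal in} $l$: Lemmas~\ref{l:spectralupper} and \ref{l:spectrallower} show that every $\taylor_{l,S_a}$ satisfies $\taylor_{l,S_a}\preceq \tfrac{5}{2}m\spr^{2}J + 10n\spr\log n\,\Id$ and $\taylor_{l,S_a}+6n\spr\log n\,\Id\succsim m\spr^{2}J$, with the same rank-one matrix $J=\allones\allones^{\top}$ on both sides, and then pairwise spectral boundedness (Lemma~\ref{l:spectralbound_taylor}) follows by algebra; the probabilistic work is a matrix Bernstein over the $m$ \emph{columns} $\tilde\taylor_{k,S_a}$, where $m\le n$ makes the deviation $O(n\spr\log n)$ comfortably. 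You instead split along $K=\range(F_{S_a})\oplus K^{\perp}$ and compare $\taylor_{l,S_a}$ to $\taylor_{1,S_a}$ directly, which requires controlling the cross-Gram $F_{S_a}^{\top}L_l$ and the projection $\Norm{\proj{K^{\perp}}\allones_{S_a}}$; the $K^{\perp}$-block argument with the approximate preimage $x=\allones_m/(m\mu_1)$ is a nice independent way to get $\Norm{\proj{K^{\perp}}\allones_{S_a}}^{2}\lesssim n/(m\spr)$.

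The one place you overreach is the Bernstein estimate for the unwhitened (cross-)Grams. You claim a spectral error of $o(n\spr)$, ``exactly as in Lemma~\ref{l:boundevals}.'' But Lemma~\ref{l:boundevals}'s proof whitens first and only yields a \emph{multiplicative} $(1\pm o(1))$ bound on $F_{S_a}^{\top}F_{S_a}$; translating that to an additive spectral error gives $\Norm{Q}\cdot O(\sqrt{mn\log n}+m\log n)$, which is not $o(n\spr)$ once $m\spr^{2}$ is appreciable. Doing Bernstein directly on $N:=F_{S_a}^{\top}L_l=\sum_{i}(1-e^{-W^{i}})(1-e^{-lW^{i}})^{\top}$ gives a variance of order $nm^{2}\spr^{3}$ (dominated by the $\allones$-direction) and a range of order $m\spr$, so the deviation is $O(m\spr^{3/2}\sqrt{n\log n}+m\spr\log n)=O(n\spr\log n)$ under $m\lesssim n$, not $o(n\spr)$. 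Carried through $\bar M^{-1}$ (where $\Norm{\bar M^{-1}}=\Theta(1/(n\spr))$, with the largest eigenvalue coming from $\allones^{\perp}$), this gives $\Norm{\bar M^{-1}N}\lesssim\log n$ rather than $O(1)$, hence $\Norm{(F_{S_a}^{\top}L_l)^{\top}(F_{S_a}^{\top}F_{S_a})^{-2}(F_{S_a}^{\top}L_l)}\lesssim\log^{2}n$ --- which is consistent with the ``$O(\polylog n)$'' you write, but an extra $\log$ factor compared with the paper's $O(\log n)$ spectral-boundedness constant, and compared with the stated proposition. The paper's two-sided-sandwich route sidesteps the cross-Gram entirely (it never multiplies a noisy estimate of $N$ by a noisy estimate of $M^{-2}$), which is what buys it the clean $O(\log n)$ and lets the same argument apply to arbitrary $l,l'$ pairs rather than only $l'=1$.

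Two minor points worth flagging: (i) in the $K^{\perp}$-block estimate, $\xi_l=\E[(V_k)_i]$ depends on $l$, so in fact $\xi_l\to \spr\E[e^{-\tilde W}]$ as $l\to\infty$ and is $\Theta(\spr)$ uniformly, as you assert --- but you should also check that $\xi_1=0$ when $l=1$ is not in the sum (it isn't, since $l\ge 2$); and (ii) the decomposition ``any PSD $M\preceq 2\,\mathrm{diag}$-block'' picks up a factor $2$, which you silently absorb. Neither affects correctness.
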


The main element for the proposition is the following Lemma: 

\begin{lemma} 
	For any set $S_a \subseteq [n]$, $|S_a| = \Omega(n)$, with high probability over the choice of $\weight$, for every $\ell, \ell' = O(\mbox{poly}(n))$ 
	$\taylor_{l,S_a}\taylor_{l,S_a}^{\top}$ is $O(\log n)$-spectrally bounded by $\taylor_{l',S_a}$. 
\label{l:spectralbound_taylor}
\end{lemma}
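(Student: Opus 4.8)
Write $L_l:=(\tilde{\taylor}_l)_{S_a}\in\R^{|S_a|\times m}$ for the matrix whose $k$-th column is $v_k^{(l)}:=(1-\exp(-l\bw_k))_{S_a}$, so $\taylor_{l,S_a}=L_lL_l^{\top}$; unpacking Definition~\ref{def:spb}, the claim to prove is $\taylor_{l,S_a}\preceq\tau\bigl(\taylor_{l',S_a}+\sigma_{\min}(\taylor_{l',S_a})\Id\bigr)$ with $\tau=O(\log n)$. The plan is to translate this $|S_a|\times|S_a|$ inequality into a statement about the $m\times m$ Gram matrices $L_l^{\top}L_{l'}$, whose expectations commute and have a clean form, then concentrate them, and finally transfer back via a push-through identity, relying on an algebraic cancellation to avoid the naive condition-number loss. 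For the expectations: for $\tilde W\sim\mathcal D$ put $m_{1,l}:=\E[1-\exp(-l\tilde W)]$, $m_{2,l}:=\E[(1-\exp(-l\tilde W))^2]$ and $m_{l,l'}:=\E[(1-\exp(-l\tilde W))(1-\exp(-l'\tilde W))]$; from $1-\exp(-x)\ge(1-\exp(-x))^2\ge\tfrac14(1-\exp(-x^2))$, monotonicity of $l\mapsto 1-\exp(-lx)$, and assumption~\eqref{eqn:tildeWij}, all three quantities lie in $[\nu_l/4,1]=[\Omega(1),1]$. A direct moment computation over the randomness of $\bw$ gives $\tfrac1{|S_a|}\E[L_l^{\top}L_{l'}]=Q_{l,l'}$ where, with $\mathbf 1\in\R^m$,
\[
Q_{l,l'}=p^2m_{1,l}m_{1,l'}\,\mathbf 1\mathbf 1^{\top}+\bigl(pm_{l,l'}-p^2m_{1,l}m_{1,l'}\bigr)\Id ,\qquad Q_l:=Q_{l,l};
\]
all $Q_{l,l'}$ commute, their $\mathbf 1\mathbf 1^{\top}$-coefficients are $\Theta(p^2)$ and their $\Id$-coefficients $\Theta(p)$, $\sigma_{\min}(Q_{l'})=pm_{2,l'}-p^2m_{1,l'}^2=\Theta(p)$, and — the identity that drives the cancellation below — $(p^2m_{1,l}m_{1,l'})^2=(p^2m_{1,l}^2)(p^2m_{1,l'}^2)$.

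Next, concentration: following the normalization-plus-matrix-Bernstein argument in the proof of Lemma~\ref{l:boundevals} — write the rows of $L_l$ as $Q_l^{1/2}Z^i$ so that $\E\sum_i Z^i(Z^i)^{\top}=|S_a|\Id$, bound $\|Z^i\|^2\lesssim\|(\tilde{\taylor}_l)^i\|^2/\sigma_{\min}(Q_l)\lesssim m$ w.h.p.\ and the matrix variance by $\lesssim m|S_a|$, and apply the sub-exponential matrix Bernstein inequality — one obtains that w.h.p.\ (under the standing assumptions, in particular $n\gtrsim m\log n$) and for every $l,l'=O(\poly(n))$, $L_l^{\top}L_{l'}=|S_a|Q_{l,l'}$ up to a relative error $\tilde O(\sqrt{m/n})$; this is the origin of the $\log n$ in the conclusion, and a union bound over the $\poly(n)$ pairs $(l,l')$ keeps it high-probability. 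In particular $\sigma_{\min}(\taylor_{l',S_a})=\Theta(|S_a|p)$.

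Finally, the transfer, which is \emph{the crux}. Set $\sigma:=\sigma_{\min}(\taylor_{l',S_a})$; by the push-through identity $(L_{l'}L_{l'}^{\top}+\sigma\Id)^{-1}=\sigma^{-1}\bigl(\Id-L_{l'}(L_{l'}^{\top}L_{l'}+\sigma\Id)^{-1}L_{l'}^{\top}\bigr)$, the target is equivalent to
\[
L_l^{\top}L_l-L_l^{\top}L_{l'}\,(L_{l'}^{\top}L_{l'}+\sigma\Id)^{-1}\,L_{l'}^{\top}L_l\preceq\tau\sigma\,\Id .
\]
Substituting $L_\bullet^{\top}L_\bullet\approx|S_a|Q_\bullet$ and diagonalizing in $\{\mathbf 1/\sqrt m\}\cup\mathbf 1^{\perp}$ reduces this to two scalar inequalities of the form $\lambda(Q_l)-\lambda(Q_{l,l'})^2/\bigl(\lambda(Q_{l'})+\sigma_{\min}(Q_{l'})\bigr)\le\tau\,\sigma_{\min}(Q_{l'})$ for the two distinct eigenvalues; by the identity $(p^2m_{1,l}m_{1,l'})^2=(p^2m_{1,l}^2)(p^2m_{1,l'}^2)$ the $\Theta(mp^2)$ rank-one parts cancel to leading order, leaving a residual of order $\max\bigl(pm_{2,l},\,(m_{1,l}^2/m_{1,l'}^2)\,pm_{2,l'}\bigr)=O(1/\nu_l^2)\cdot\sigma_{\min}(Q_{l'})$, so $\tau=O(1)$ for the limiting matrices and $\tau=O(\log n)$ once the $\tilde O(\sqrt{m/n})$ concentration errors are carried through. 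The delicate point — and the main obstacle — is that the errors of $L_l^{\top}L_l$, of $(L_{l'}^{\top}L_{l'}+\sigma\Id)^{-1}$, and of $L_l^{\top}L_{l'}$ must \emph{not} be estimated separately and then multiplied: in the $\mathbf 1$-direction each is a factor $\Theta(mp)$ larger than the final quantity, so only the cancellation keeps the combination of size $\Theta(\sigma)$. As in the robust-whitening Theorem~\ref{thm:whiten}, the remedy is to re-decompose $L_l^{\top}L_{l'}(L_{l'}^{\top}L_{l'}+\sigma\Id)^{-1}L_{l'}^{\top}L_l=\bigl(L_l^{\top}L_{l'}(L_{l'}^{\top}L_{l'})^{-1/2}\bigr)\bigl((L_{l'}^{\top}L_{l'})^{1/2}(L_{l'}^{\top}L_{l'}+\sigma\Id)^{-1}(L_{l'}^{\top}L_{l'})^{1/2}\bigr)\bigl((L_{l'}^{\top}L_{l'})^{-1/2}L_{l'}^{\top}L_l\bigr)$ and to control the relative error of each of the three factors, so that no spurious condition-number factor enters.
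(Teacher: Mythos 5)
Your approach is genuinely different from the paper's, so let me first describe what the paper does and then point out where your sketch has a gap.

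The paper does not use a Schur-complement / push-through reduction at all, and it does not rely on any cancellation. It works directly with the $|S_a|\times|S_a|$ matrices. Writing $e=\mathbf 1/\sqrt{|S_a|}$, $\Id_{\mathbf 1}=ee^{\top}$, $\Id_{-\mathbf 1}=\Id-ee^{\top}$ and $J=\mathbf 1\mathbf 1^{\top}$, it proves two \emph{one-sided} bounds (Lemmas~\ref{l:spectralupper} and~\ref{l:spectrallower}):
\begin{align}
\taylor_{l,S_a} \;\preceq\; C_1\, np\log n\,\Id \;+\; C_2\, m p^2\, J,
\qquad
\taylor_{l',S_a} + C_3\, np\log n\,\Id \;\succeq\; C_4\, m p^2\, J\nn
\end{align}
(each proved by splitting $\taylor=(\Id_{\mathbf 1}+\Id_{-\mathbf 1})\taylor(\Id_{\mathbf 1}+\Id_{-\mathbf 1})$, absorbing cross terms via an AM--GM inequality $\pm(\Id_{\mathbf 1}\taylor\Id_{-\mathbf 1}+\Id_{-\mathbf 1}\taylor\Id_{\mathbf 1})\preceq\frac14\Id_{\mathbf 1}\taylor\Id_{\mathbf 1}+4\Id_{-\mathbf 1}\taylor\Id_{-\mathbf 1}$, then bounding $\Id_{\mathbf 1}\taylor\Id_{\mathbf 1}$ by a scalar Chernoff bound along $\mathbf 1$ and $\Id_{-\mathbf 1}\taylor\Id_{-\mathbf 1}$ by matrix Bernstein). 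Eliminating $J$ between the two displays gives $\taylor_{l,S_a}\preceq O(1)\,\taylor_{l',S_a}+O(np\log n)\Id$, and since $\sigma_{\min}(\taylor_{l',S_a})=\Theta(np)$ (Lemma~\ref{l:boundevals}) this is exactly $O(\log n)$-spectral boundedness. Because $\taylor_l$ and $\taylor_{l'}$ are bounded \emph{separately} in terms of the same two building blocks $J$ and $\Id$, nothing large ever has to cancel against anything else, and the $\log n$ comes solely from matrix Bernstein.

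Your route is more delicate. The push-through identity, the reduction to the $m\times m$ Gram matrices $L_l^\top L_{l'}$, the observation that the expectations $Q_{l,l'}$ commute and the identity $(p^2m_{1,l}m_{1,l'})^2=(p^2m_{1,l}^2)(p^2m_{1,l'}^2)$ which kills the rank-one parts — all of this is correct for the expectations, and does give $\tau=O(1)$ in the infinite-sample limit, which is sharper than the paper's. The gap is in the transfer step you yourself flag as ``the crux''. Your diagonalization argument reduces the Loewner inequality to scalar inequalities in the common eigenbasis $\{\mathbf 1/\sqrt m\}\cup\mathbf 1^\perp$ of the $Q$'s, but the finite-sample matrices $L_l^\top L_l$, $L_l^\top L_{l'}$, $(L_{l'}^\top L_{l'}+\sigma\Id)^{-1}$ do \emph{not} commute with each other or with the $Q$'s, so one cannot diagonalize them simultaneously. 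When you substitute $L_\bullet^\top L_\bullet=|S_a|Q_\bullet+Q_\bullet^{1/2}\Delta_\bullet Q_\bullet^{1/2}$ with the uniform bound $\|\Delta_\bullet\|\lesssim\epsilon=\tilde O(\sqrt{m/n})$ coming from the normalized Bernstein argument, the perturbations contribute an $O(\epsilon)\Id_m$ error after the push-through, and this has to be dominated by $\tau\sigma\, Q_l^{-1}$, whose smallest eigenvalue (along $\mathbf 1$) is only $\Theta(\tau\sigma_{\min}(Q_{l'})/\lambda_{\max}(Q_l))=\Theta(\tau/(mp))$. With the uniform $\epsilon$ this forces $\sqrt{m/n}\lesssim\log n/(mp)$, i.e.\ $m^3p^2\lesssim n\log^2 n$, a constraint not assumed anywhere in the paper. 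The constraint is in fact an artifact of the uniform bound: the $(\mathbf 1,\mathbf 1)$-entry of $\Delta_{l,l'}$ is really $\tilde O(1/\sqrt{mpn})\ll\epsilon$ because it is controlled by a scalar sum, but establishing this requires an entry-by-entry (or block-by-block, $\mathbf 1$ vs.\ $\mathbf 1^\perp$) concentration analysis that your sketch does not do. Your proposed ``re-decomposition'' of $L_l^\top L_{l'}(L_{l'}^\top L_{l'}+\sigma\Id)^{-1}L_{l'}^\top L_l$ into three relative-error-controlled factors is a plan, not a proof: the same issue reappears inside each factor, and the remedy would again require the finer, direction-dependent concentration. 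So either add that analysis, or add the assumption $m^3p^2\lesssim n$ — but then you have proved a weaker statement than the paper. The paper's two one-sided bounds sidestep the whole problem precisely because they never need cancellation.
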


Before proving the Lemma, let us see how the proposition follows from it:

\begin{proof}[Proof of \ref{p:thirdspectral}]

We have 
\begin{align*}  R_{S_a} R_{S_a}^{\top} = \sum_{l = 2}^{L}  \left(\frac{1}{l} \left(\frac{\prior}{1-\prior}\right)^l\right)^{2/3} \taylor_{l,S_a}  \end{align*}

By Lemma \ref{l:spectralbound_taylor}, we have that $\forall l > 1$, $\tilde{\taylor}_{l,S_a}$ is $\tau$-spectrally bounded by $\tilde{\taylor}_{1,S_a}$, for some $\tau = O( \log n)$. 
Hence, 

\begin{align*}
 R_{S_a} R_{S_a}^{\top} & \preceq  \sum_{l = 2}^{\infty}  \left(\frac{1}{l} \left(\frac{\prior}{1-\prior}\right)^l\right)^{2/3} \tau( \taylor_{1,S_a} + \sigma_{\min}(\taylor_{1,S_a})) \\ 
\\& \precsim \rho^{4/3} \tau (\taylor_{1,S_a} + \sigma_{\min}(\taylor_{1,S_a}))
\end{align*} 

Since $AA^{\top} = (\frac{\rho}{1-\rho})^{2/3} \taylor_{1,S_a}$, the claim of the Proposition follows. 

It is clear analogous statements hold for $S_b$ and $S_c$. 
\end{proof} 

Finally, we proceed to show Lemma \ref{l:spectralbound_taylor}.

For notational convenience, we will denote by $J_{m \times n}$ the all ones matrix with dimension $m\times n$. (We will omit the dimensions when clear from the context.)

This statement will immediately follow from the following two lemmas: 

\begin{lemma} For any set $S_a \subseteq [n]$, $|S_a| = \Omega(n)$, with probability $1 - \exp(-\log^2 n)$ over the choice of $\weight$, for all $\ell \leq O(\mbox{poly}(n))$, 
$$ \taylor_{l,S_a} \preceq 10 n \spr \log n \Id + \frac{5}{2} m \spr^2 J $$ 
\label{l:spectralupper} 
\end{lemma}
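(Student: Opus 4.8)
Write $v_k:=(\tilde{\taylor}_l)_{k,S_a}=\bigl(1-\exp(-l\bw_k)\bigr)_{S_a}\in[0,1]^{|S_a|}$, so that $\taylor_{l,S_a}=\sum_{k\in[m]}v_kv_k^{\top}$ is a sum of $m$ independent PSD rank-one matrices. Its coordinates are i.i.d.: $(v_k)_i=0$ w.p.\ $1-\spr$ and $(v_k)_i=1-\exp(-l\tilde W_{ik})\in(0,1)$ w.p.\ $\spr$. Let $\mu_l=\E[1-\exp(-l\tilde W)]$ and $\sigma_l^2=\E[(1-\exp(-l\tilde W))^2]$, both in $(0,1)$; then $\E[(v_k)_i]=\spr\mu_l$ and $\mathrm{Var}((v_k)_i)=\spr\sigma_l^2-\spr^2\mu_l^2\le\spr$. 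The content of the lemma is that the bulk of $\taylor_{l,S_a}$ lies along $\allones$ --- which is exactly where $v_k$ has a large mean $\spr\mu_l\allones$ --- so this part must be absorbed by the $J$ term, while the $\Id$ term need only catch fluctuations, which are only polylogarithmically larger than $n\spr$.

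Accordingly the plan is to center the \emph{vectors}: put $v_k=\bar v_k+\spr\mu_l\allones$ with $\E\bar v_k=0$, let $w=\sum_k\bar v_k$, and expand
\begin{equation*}
\taylor_{l,S_a}=\underbrace{\textstyle\sum_k\bar v_k\bar v_k^{\top}}_{=:T_1}+\underbrace{\spr\mu_l\bigl(w\allones^{\top}+\allones w^{\top}\bigr)}_{=:T_2}+\underbrace{m\spr^2\mu_l^2 J}_{=:T_3}.
\end{equation*}
Then $T_3\preceq m\spr^2 J$ goes straight into the $J$ slot. For $T_2$: it annihilates $\allones^{\perp}$, and for every unit $x$, $x^{\top}T_2x\le 2\spr\mu_l\|w\|\sqrt{x^{\top}Jx}$, so by AM--GM (choosing the split weight $\tfrac32 m\spr^2$) one gets $T_2\preceq\tfrac32 m\spr^2 J+O(\|w\|^2/m)\,\Id$; a scalar Bernstein bound on $\|w\|^2=\sum_{i\in S_a}w_i^2$ (each $w_i$ a sum of $m$ i.i.d.\ centered bounded variables with $\mathrm{Var}(w_i)\le m\spr$) gives $\|w\|^2\lesssim nm\spr$ with probability $1-\exp(-\Omega(\log^2 n))$, hence $T_2\preceq\tfrac32 m\spr^2 J+O(n\spr)\,\Id$.

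The crux is $T_1=\sum_k\bar v_k\bar v_k^{\top}$, a sum of independent PSD rank-one matrices. Since the coordinates of $v_k$ are i.i.d.\ and $\bar v_k$ is mean-zero, $\E[\bar v_k\bar v_k^{\top}]=\mathrm{Var}((v_k)_i)\,\Id\preceq\spr\,\Id$, and --- the key point --- $\E[\|\bar v_k\|^2\bar v_k\bar v_k^{\top}]$ is \emph{also} a multiple of $\Id$: every off-diagonal entry factors through $\E[(\bar v_k)_i]=0$ and vanishes, and the diagonal value is $\E[(\bar v_k)_i^4]+(|S_a|-1)\mathrm{Var}((v_k)_i)^2\lesssim\spr+n\spr^2$. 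Thus the matrix-variance proxy of $T_1$ is $\bigl\lVert\sum_k\E[(\bar v_k\bar v_k^{\top})^2]\bigr\rVert\lesssim m(\spr+n\spr^2)$, rather than the $\Theta(mn^2\spr^3)$ one would get \emph{without} centering (the uncentered variance still sees the $\allones$ direction). Bounding the per-term operator norm through the column sparsity $\|v_k\|_0\sim\mathrm{Bin}(|S_a|,\spr)$ --- a Chernoff tail plus a union bound over $k\in[m]$ yields $\max_k\|\bar v_k\|^2\lesssim n\spr+\mathrm{poly}(\log n)$ --- and feeding both quantities into a matrix Bernstein inequality gives $\lVert T_1-\E T_1\rVert\lesssim\sqrt{m(\spr+n\spr^2)\log^2 n}+(n\spr+\mathrm{poly}\log n)\log n\lesssim n\spr\log n$, where the last step uses $m\le n$. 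Hence $T_1\preceq O(n\spr\log n)\,\Id$.

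Adding the three bounds gives $\taylor_{l,S_a}\preceq O(n\spr\log n)\,\Id+\tfrac52 m\spr^2 J$, which is the claim after the constants are fixed (the explicit $10$ and single $\log$ absorb the $\mathrm{poly}\log$ slack from the tail bounds, which is in any case dominated in the regime $n\spr\gtrsim\log n$ relevant to the application). Each event above holds with probability $1-\exp(-\Omega(\log^2 n))$ for a fixed $l$; since the column-sparsity event does not depend on $l$ at all and the moment bounds on $\bar v_k$ and $w$ are uniform in $l$, a union bound over the $\mathrm{poly}(n)$ admissible values of $l$ preserves the stated probability. I expect the one genuinely non-routine step to be the observation that centering the vectors $v_k$ (and not the matrices $v_kv_k^{\top}$) collapses $\E[\bar v_k\bar v_k^{\top}]$ and $\E[(\bar v_k\bar v_k^{\top})^2]$ to multiples of $\Id$, which is precisely what strips the spurious $\allones$-direction mass out of the matrix variance and replaces $mn^2\spr^3$ by $m(\spr+n\spr^2)$; everything else is a routine, if slightly tedious, application of scalar and matrix Bernstein inequalities to $\|v_k\|_0$, $\|w\|^2$, and the fourth moment $\E[(\bar v_k)_i^4]$.
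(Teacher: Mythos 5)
Your proof is correct, and the route you take is close in spirit to the paper's but genuinely different in mechanism, so a comparison is worthwhile. The paper isolates the $\allones$ direction by conjugation with the projectors $\Id_{\allones}=ee^{\top}$ and $\Id_{-\allones}=\Id-ee^{\top}$, i.e.\ it removes the \emph{sample} mean along $\allones$ from each column, and disposes of the cross terms by the algebraic identity $0\preceq(2\Id_{-\allones}-\tfrac12\Id_{\allones})\taylor(2\Id_{-\allones}-\tfrac12\Id_{\allones})$, which gives $\Id_{-\allones}\taylor\Id_{\allones}+\Id_{\allones}\taylor\Id_{-\allones}\preceq\tfrac14\Id_{\allones}\taylor\Id_{\allones}+4\Id_{-\allones}\taylor\Id_{-\allones}$ deterministically, with no extra random event to control; the two remaining blocks are then bounded by a scalar Chernoff (for $\Id_{\allones}\taylor\Id_{\allones}$) and a matrix Bernstein (for $\Id_{-\allones}\taylor\Id_{-\allones}$). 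You instead subtract the \emph{population} mean $\spr\mu_l\allones$ from each column and expand into $T_1+T_2+T_3$; this buys you genuinely i.i.d.\ mean-zero coordinates in $\bar v_k$, which makes $\E[\bar v_k\bar v_k^{\top}]$ and $\E[\|\bar v_k\|^2\bar v_k\bar v_k^{\top}]$ exact scalar multiples of $\Id$ --- a cleaner variance computation than the paper's, which bounds $\|\Id_{-\allones}\tilde\taylor_k\|^2\le n\spr$ inside an expectation a little loosely --- but costs you an extra concentration event (the bound on $\|w\|^2$) to dispatch the cross term $T_2$, precisely because centering by the population mean leaves a random residual in the $\allones$ direction that the paper's projectors kill exactly. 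The two payoffs roughly offset: your $T_1$ analysis is tighter and more transparent, the paper's cross-term handling is leaner. Both arrive at the same $O(n\spr\log n)\Id+O(m\spr^2)J$ bound under the same regime $\sqrt{m}\,n\spr\gtrsim\log n$, $m\lesssim n$. One small point to tighten: as written you invoke a high-probability bound on $\max_k\|\bar v_k\|^2$ inside matrix Bernstein; to be rigorous you should either condition on the good sparsity event first and apply Bernstein to the conditioned (still independent) variables, or use a sub-exponential variant --- the paper is hand-wavy in exactly the same spot, so this is not a gap specific to your argument.
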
 

\begin{lemma} For any set $S_a \subseteq [n]$, $|S_a| = \Omega(n)$, with probability $1 - \exp(-\log^2n)$ over the choice of $\weight$, $\forall \ell = \mbox{poly}(n)$,  
$$ \taylor_{l,S_a} + 6 n \spr \log n \Id \precsim m \spr^2 J  $$ 
\label{l:spectrallower} 
\end{lemma}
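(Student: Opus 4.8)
The content of the lemma---and the way it will be combined with Lemma~\ref{l:spectralupper} to prove Lemma~\ref{l:spectralbound_taylor}---is that the rank-one ``signal'' matrix $m\spr^2 J$ is spectrally dominated by $\taylor_{l,S_a}$ up to an $O(n\spr\log n)$ multiple of the identity. So the plan is to show that, with probability $1-\exp(-\log^2 n)$ and simultaneously for all $l = O(\poly(n))$,
\begin{align*}
m\spr^2\,J \;\preceq\; c\,\big(\taylor_{l,S_a} + 6 n\spr\log n\,\Id\big)
\end{align*}
for a universal constant $c$ (reading the displayed inequality in this direction is also what makes it the natural ``lower bound'' companion of Lemma~\ref{l:spectralupper}, whose right side is $10n\spr\log n\,\Id + \tfrac52 m\spr^2 J$).

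\emph{Step 1: the expectation already dominates $m\spr^2 J$.} Writing $\mu_l := \Exp_{\widetilde W\sim\mathcal{D}}[1-\exp(-l\widetilde W)]$ and $\sigma_l^2 := \Exp_{\widetilde W\sim\mathcal{D}}[(1-\exp(-l\widetilde W))^2]$, the same moment computation as in Lemma~\ref{l:boundevals} (cf.\ \eqref{eq:exprq}) gives $\Exp[\taylor_{l,S_a}] = m\spr^2\mu_l^2\,J + m(\spr\sigma_l^2 - \spr^2\mu_l^2)\,\Id$, and since $\spr\le 1/3$ and $\mu_l^2\le\sigma_l^2$ the $\Id$-coefficient is nonnegative, so $\Exp[\taylor_{l,S_a}]\succeq m\spr^2\mu_l^2\,J$. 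It then suffices to lower-bound $\mu_l$ by a constant uniformly in $l\ge 1$: since $\widetilde W\ge 0$ we have $\mu_l\ge\mu_1$, and the hypotheses $\widetilde W\le\nu_u$ a.s.\ and $\Exp[\exp(-\widetilde W^2)]\le 1-\nu_l$ with $\nu_l=\Omega(1)$ force $\widetilde W$ away from $0$ with constant probability: $1-\exp(-\widetilde W^2)\in[0,1]$ has mean $\ge\nu_l$, so a reverse-Markov argument gives $\Pr[\widetilde W\ge\tau_0]\ge\nu_l/2$ with $\tau_0 := \sqrt{\log\tfrac{1}{1-\nu_l/2}}=\Omega(1)$, whence $\mu_1\ge\tfrac{\nu_l}{2}(1-e^{-\tau_0})=\Omega(1)$. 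Thus $\Exp[\taylor_{l,S_a}]\succeq \Omega(m\spr^2)\,J$.

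\emph{Step 2: concentration, handled separately along $\allones$ and on $\allones^\perp$.} Write $\taylor_{l,S_a}=\sum_{k=1}^m v_kv_k^\top$ with $v_k := (1-\exp(-l\bw_k))_{S_a}$ an i.i.d.\ family of rank-one PSD summands, and set $\hat{\allones} := \allones/\sqrt{|S_a|}$, $\Pi := \hat{\allones}\hat{\allones}^\top$, $\Pi^\perp := \Id-\Pi$. One must \emph{not} bound $\Norm{\taylor_{l,S_a}-\Exp[\taylor_{l,S_a}]}$ globally: matrix Bernstein only yields a deviation of order $n\spr\sqrt{m\log n}$, which dwarfs the $n\spr\log n$ budget---but this deviation sits essentially entirely along $\allones$, which is exactly where the target $m\spr^2 J$ lives too. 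So I would control the two pieces. (i) On $\allones^\perp$: since $\Pi^\perp J\Pi^\perp=0$, $\Exp[\Pi^\perp\taylor_{l,S_a}\Pi^\perp]$ has norm $O(m\spr)$, and the summands $\Pi^\perp v_kv_k^\top\Pi^\perp$ have norm $\le\Norm{v_k}^2\lesssim n\spr$ with high probability (the same scalar concentration used in the proof of Lemma~\ref{l:boundevals}), so matrix Bernstein \emph{restricted to this subspace} gives $\Norm{\Pi^\perp\taylor_{l,S_a}\Pi^\perp}\lesssim m\spr + \sqrt{mn\spr\log n} + n\spr\log n \lesssim n\spr\log n$ (using $m\lesssim n/\log n$ and the lower bound on $\spr$ assumed in this section). (ii) In the $\allones$ direction: $\inner{\hat{\allones},\taylor_{l,S_a}\hat{\allones}} = \sum_k\inner{v_k,\hat{\allones}}^2 \gtrsim m n\spr^2$ with high probability, by a scalar Bernstein bound for a sum of i.i.d.\ nonnegative terms with mean $\ge|S_a|\spr^2\mu_l^2=\Omega(n\spr^2)$ and per-term maximum controlled (again as in Lemma~\ref{l:boundevals}). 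Given (i)--(ii), for any unit $x = \alpha\hat{\allones} + \beta y$ with $y\in\allones^\perp$, the elementary Cauchy--Schwarz/AM--GM estimate $x^\top\taylor_{l,S_a}x \ge \tfrac12\alpha^2\inner{\hat{\allones},\taylor_{l,S_a}\hat{\allones}} - 2\beta^2\Norm{\Pi^\perp\taylor_{l,S_a}\Pi^\perp}$, together with adding $6n\spr\log n\,\Id$ (the constant chosen large enough to swallow twice the $O(n\spr\log n)$ of part (i)), yields
\begin{align*}
x^\top\big(\taylor_{l,S_a}+6n\spr\log n\,\Id\big)x \;\ge\; \tfrac12\alpha^2\cdot\Omega(mn\spr^2) \;\ge\; \Omega(1)\cdot m\spr^2\,(x^\top\allones)^2,
\end{align*}
which is exactly $m\spr^2 J\preceq c(\taylor_{l,S_a}+6n\spr\log n\,\Id)$.

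\emph{Uniformity and the main obstacle.} All of Step~1 is uniform in $l$ because $\mu_1\le\mu_l\le 1$ and $\sigma_1^2\le\sigma_l^2\le 1$; the only $l$-dependence lies in the two random events of Step~2, and since $l$ ranges over at most $\poly(n)$ values, a union bound costs only $O(\log n)$ in the exponent and preserves the $1-\exp(-\log^2 n)$ guarantee. The genuinely non-routine step is (i): the crude global deviation bound is too weak by a $\sqrt m$ factor, and getting it down to $O(n\spr\log n)$ hinges on the structural observation that the large fluctuations of $\taylor_{l,S_a}$ are confined to the one-dimensional $\allones$-direction---where the signal $m\spr^2 J$ is itself large enough to absorb them---so that on $\allones^\perp$ only the small variance $O(mn\spr)$ matters. (This is the same phenomenon that makes Lemma~\ref{l:spectralupper} hold.)
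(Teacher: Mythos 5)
Your proposal is correct and takes essentially the same route as the paper: decompose along $\allones$ and its orthogonal complement, control the $\allones^\perp$ block by matrix Bernstein, lower-bound the quadratic form along $\allones$ by scalar concentration, and absorb the cross term by Cauchy--Schwarz/AM--GM (the paper phrases this last step as the PSD inequality $(\tfrac12\Id_{\mathbf{1}}+2\Id_{-\mathbf{1}})\taylor(\tfrac12\Id_{\mathbf{1}}+2\Id_{-\mathbf{1}})\succeq 0$, which is the same computation). You also correctly read $\precsim$ in the statement as a typo for $\succsim$, which is what the paper's own proof establishes and what Lemma~\ref{l:spectralbound_taylor} needs.
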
 

Before showing these lemmas, let us see how Lemma~\ref{l:spectralbound_taylor} is implied by them: 
\begin{proof}[Proof of Lemma \ref{l:spectralbound_taylor}]

Let $\kappa$ be the constant in ~\ref{l:spectrallower}, s.t. $\taylor_{l,S_a} + 6 n \spr \log n \Id \precsim m \spr^2 J$. 
Putting the bounds from Lemmas \ref{l:spectralupper} and \ref{l:spectrallower} together along with a union bound, we have that with high probability, $\forall l,l' = O(\mbox{poly}(n))$
$$ \taylor_{l,S_a} - \frac{5}{2} \kappa \taylor_{l',S_a} \preceq \left(10n \spr \log n + \frac{15}{2} \kappa  n \spr \log n\right) \Id \preceq O(m \spr \log n) \Id$$
 
But, note that $\sigma_{\min}(\taylor_{l'}) = \Omega(n \spr)$, by Lemma~\ref{l:boundevals}.
Hence, $\taylor_l - \frac{5}{2} \kappa \taylor_{l',S_a} \preceq r \log n \sigma_{\min}(\taylor_{l'})$, for some sufficiently large constant $r$. This implies 
$$\taylor_l - r \log n \taylor_{l',S_a} \preceq \taylor_l - \frac{5}{2} \kappa \taylor_{l'} \preceq  r \log n \sigma_{\min}(\taylor_{l',S_a}) $$ 
from which the statement of the lemma follows.   

\end{proof}

We proceed to the first lemma: 
\begin{proof}[Proof of Lemma \ref{l:spectralupper}]
To make the notation less cluttered, we will drop $l$ and $S_a$ and use $\taylor = \taylor_{l,S_a}$. 
Furthermore, we will drop $S_a$ when referring to columns of $\tilde{\taylor}$ so we will denote $\tilde{\taylor}_k = \tilde{\taylor}_{k,S_a}$. 
           
Let's denote by $e = \frac{1}{\sqrt{|S_a|}} \mathbf{1}$ . Let's furthermore denote $\Id_{\mathbf{1}} = ee^{\top}$, and $\Id_{-\mathbf{1}} = \Id - ee^{\top}$. Note first that trivially, since $\Id_{\mathbf{1}} + \Id_{-\mathbf{1}} = \Id$,
\begin{equation} \taylor = ( \Id_{\mathbf{1}} +  \Id_{-\mathbf{1}}) \taylor ( \Id_{\mathbf{1}} + \Id_{-\mathbf{1}}) \label{eq:idrandso}\end{equation}
Furthermore, it also holds that  
\begin{align*} 0 & \preceq ( 2 \Id_{-\mathbf{1}} - \frac{1}{2} \Id_{\mathbf{1}}) \taylor (2 \Id_{-\mathbf{1}} - \frac{1}{2} \Id_{\mathbf{1}}) \\ 
                & = \frac{1}{4} \Id_{\mathbf{1}} \taylor \Id_{\mathbf{1}} + 4 \Id_{-\mathbf{1}} \taylor \Id_{-\mathbf{1}} - \Id_{-\mathbf{1}} \taylor \Id_{\mathbf{1}} - \Id_{\mathbf{1}} \taylor \Id_{-\mathbf{1}}   
\end{align*} 
 where the first inequality holds since 
$$ ( 2 \Id_{-\mathbf{1}} - \frac{1}{2} \Id_{\mathbf{1}}) \taylor (2 \Id_{-\mathbf{1}} - \frac{1}{2} \Id_{\mathbf{1}}) = 
\left(( 2 \Id_{-\mathbf{1}} - \frac{1}{2} \Id_{\mathbf{1}}) \tilde{\taylor}\right)\left(( 2 \Id_{-\mathbf{1}} - \frac{1}{2} \Id_{\mathbf{1}}) \tilde{\taylor}\right)^{\top} $$

From this we get that 
\begin{equation} \Id_{-\mathbf{1}} \taylor \Id_{\mathbf{1}} + \Id_{\mathbf{1}} \taylor \Id_{-\mathbf{1}} \preceq \frac{1}{4} \Id_{\mathbf{1}} \taylor \Id_{\mathbf{1}} + 4 \Id_{-\mathbf{1}} \taylor \Id_{-\mathbf{1}}  \label{eq:idrand2so}								
\end{equation}

We proceed to upper bound both the terms on the RHS above. More precisely, we will show that 
\begin{equation} \Id_{\mathbf{1}} \taylor \Id_{\mathbf{1}} \leq 2 m \spr^2 n J \label{eq:id1term} \end{equation} 
\begin{equation} \Id_{-\mathbf{1}} \taylor \Id_{-\mathbf{1}} \preceq 2 n \spr \log n \Id \label{eq:id2term}  \end{equation} 

Let us proceed to showing \eqref{eq:id1term}. The LHS can be rewritten as 
$$  \Id_{\mathbf{1}} \taylor \Id_{\mathbf{1}} = ee^{\top} \left(e^{\top} \tilde{\taylor} \tilde{\taylor}^{\top} e \right)  $$
Note that 
$$ e^{\top} \tilde{\taylor} \tilde{\taylor}^{\top} e = \frac{1}{n} \left( \sum_{k=1}^m \langle  \mathbf{1}, \tilde{\taylor}_k  \rangle^2  \right)$$ 
All the terms $\langle \mathbf{1}, \tilde{\taylor}_k \rangle^2$ are independent and satisfy
and $\E[\langle \mathbf{1}, \tilde{\taylor}_k \rangle^2] \leq (\E[\langle \mathbf{1}, \tilde{\taylor}_k \rangle ])^2 \leq \spr^2 n^2$. 
By Chernoff, we have that 
$$ \sum_{k=1}^m \langle  \mathbf{1}, \tilde{\taylor}_k  \rangle^2 \leq m \spr^2 n^2 + \sqrt{m \spr^2 n^2} \log n \leq 2 m \spr^2 n^2 $$
with probability at least $1 - \exp(-\log^2 n)$, where the second inequality holds because $\spr = \omega(\frac{\log n}{ \sqrt{m} n})$. Hence, $e^{\top} \tilde{\taylor} \tilde{\taylor}^{\top} e  \leq 2 m \spr^2 n$ with high probability.

We proceed to \eqref{eq:id2term}, which will be shown by a Bernstein bound. Towards that, note that

\begin{align*} \E[(\Id_{-\mathbf{1}}\tilde{\taylor}_k (\Id_{-\mathbf{1}}\tilde{\taylor}_k)^{\top}] & = \Id_{-\mathbf{1}} \E[\tilde{\taylor}_k (\tilde{\taylor}_k)^{\top}] \Id_{-\mathbf{1}} \\ 
& = \Id_{-\mathbf{1}} \left(\spr^2 \E[1-\exp(\tilde{W})]^2\mathbf{1}\mathbf{1}^{\top}  + (\spr - \spr^2) \E[(1-\exp(\tilde{W}))^2] \Id \right)\Id_{-\mathbf{1}} \\ 
& \preceq \spr \Id  
\end{align*}
where $\tilde{W}$ is a random variable following the distribution $\mathcal{D}$ of all the $\tilde{W}_{i,j}$. 
The second line can be seen to follow from the independence of the coordinates of $\tilde{\taylor}_k$ according to our model.

Furthermore, with high probability $\|\Id_{-\mathbf{1}}\tilde{\taylor}_k\|^2_2 \leq \|\tilde{\taylor}_k\|^2_2 \leq n \spr$ and the random variable $\|\Id_{-\mathbf{1}}\tilde{\taylor}_k\|^2$ is sub-exponential
 Finally,  
\begin{align*} r^2 = \left\|\E\left[\sum_{k=1}^m \left((\Id_{-\mathbf{1}}\tilde{\taylor}_k) (\Id_{-\mathbf{1}}\so_k)^{\tilde{\taylor}}\right)^2\right] \right\|  & \leq 
\left\|\tilde{\taylor}_k\right\|^2_2 \left\|\E\left[\sum_{k=1}^m (I_{-\mathbf{1}}\tilde{\taylor}_k) (\Id_{-\mathbf{1}}\tilde{\taylor}_k)^{\top}\right] \right\| \\ & \leq n \spr m \|\E[(\Id_{-\mathbf{1}}\tilde{\taylor}_k) (\Id_{-\mathbf{1}}\tilde{\taylor}_k)^{\top}]\| \leq n \spr^2 m \end{align*} 
Therefore, applying a matrix Bernstein bound, we get 
$$ \Id_{-\mathbf{1}} \taylor \Id_{-\mathbf{1}} \preceq m \spr \Id + n \spr \log n \Id + r \log n \Id \preceq 2 n \spr \log n \Id $$ 
with high probability. 

Combining this with \eqref{eq:idrandso} and \eqref{eq:idrand2so}, we get 
\begin{align*} \taylor & \preceq \frac{5}{4} \Id_{\mathbf{1}} \taylor \Id_{\mathbf{1}} + 5 \Id_{-\mathbf{1}} \taylor \Id_{-\mathbf{1}} \\
                              & \preceq \frac{5}{2} m \spr^2 J + 10 n \spr \log n \Id
\end{align*}

\end{proof} 

Let us proceed to the second inequality, which essentially follows the same strategy: 

\begin{proof}[Proof of Lemma \ref{l:spectrallower}]
Similarly as in the proof of Lemma \ref{l:spectralupper}, for notational convenience, let's denote  by $\tilde{\taylor}$ the matrix which has column $k$ the vector $\left(1-\exp(l \bw_k)\right)$. 

Reusing the notation from Lemma \ref{l:spectrallower}, we have that 
\begin{equation} \taylor = ( \Id_{\mathbf{1}} +  \Id_{-\mathbf{1}}) \taylor ( \Id_{\mathbf{1}} + \Id_{-\mathbf{1}}) \label{eq:idrandfo}\end{equation}
and 
\begin{align*} 0 & \preceq (\frac{1}{2} \Id_{\mathbf{1}} + 2 \Id_{-\mathbf{1}}) \taylor (\frac{1}{2} \Id_{\mathbf{1}} + 2 \Id_{-\mathbf{1}}) \\ 
                & = \Id_{-\mathbf{1}} \taylor' \Id_{\mathbf{1}} + \Id_{\mathbf{1}} \taylor \Id_{-\mathbf{1}} + \frac{1}{4} \Id_{\mathbf{1}} \taylor \Id_{\mathbf{1}} + 4 \Id_{-\mathbf{1}} \taylor \Id_{-\mathbf{1}} 
\end{align*} 
for similar reasons as before. 
From this we get that 
\begin{equation} \Id_{-\mathbf{1}} \taylor \Id_{\mathbf{1}} + \Id_{\mathbf{1}} \taylor \Id_{-\mathbf{1}} \succeq -\frac{1}{4} \Id_{\mathbf{1}} \taylor \Id_{\mathbf{1}} - 4 \Id_{-\mathbf{1}} \taylor \Id_{-\mathbf{1}}  \label{eq:idrand2fo}								
\end{equation}
 Putting \eqref{eq:idrandfo} and \eqref{eq:idrand2fo} together, we get that 
\begin{equation} \taylor + 3 \Id_{-\mathbf{1}} \taylor \Id_{-\mathbf{1}} \succeq \frac{3}{4} \Id_{\mathbf{1}} \taylor \Id_{\mathbf{1}} \label{eq:lower1fo} \end{equation}

We will proceed to show an upper bound $\Id_{-\mathbf{1}} \taylor \Id_{-\mathbf{1}} \preceq 2 n  \spr \log n\Id$ on second term of the LHS.  
We will do this by a Bernstein bound as before. Namely, analogously as in Lemma \ref{l:spectrallower}, $$ \Id_{-\mathbf{1}} \taylor \Id_{-\mathbf{1}}  = \sum_{k=1}^m (\Id_{-\mathbf{1}}\tilde{\taylor}_k) (\Id_{-\mathbf{1}}\tilde{\taylor}_k)^{\top}  $$
and $\E[(\Id_{-\mathbf{1}}\tilde{\taylor}_k) (\Id_{-\mathbf{1}}\tilde{\taylor}_k)^{\top}] \preceq \spr \Id$ and 
and $\|\Id_{-\mathbf{1}}\tilde{\taylor}_k\|^2_2 \leq \|\tilde{\taylor}_k\|^2_2 \leq n \spr$ are satisfied so  
\begin{align*} r^2 = \left\|\E\left[\sum_{k=1}^m \left((\Id_{-\mathbf{1}}\tilde{\taylor}_k) (\Id_{-\mathbf{1}}\tilde{\taylor}_k)^{\top}\right)^2\right] \right\| \leq n \spr^2 m \end{align*} 
Therefore, applying a matrix Bernstein bound, we get 
$$ \Id_{-\mathbf{1}} \taylor \Id_{-\mathbf{1}} \preceq m \spr \Id + n \spr \log n \Id + r \preceq 2 n \spr \log n \Id $$ 
with high probability. 

Plugging this in in \eqref{eq:lower1fo}, we get 
$$\taylor  + 6 n \spr \log n \Id \succeq \frac{3}{4} \Id_{\mathbf{1}} \taylor \Id_{\mathbf{1}} = \frac{3}{4} ee^{\top} \tilde{\taylor} \tilde{\taylor}^{\top} ee^{\top} = \frac{3}{4} ee^{\top} \left(e^{\top} \tilde{\taylor} \tilde{\taylor}^{\top} e \right) $$  
Since we have
$ e^{\top} \tilde{\taylor} \tilde{\taylor}^{\top} e = \frac{1}{n} \left( \sum_{k=1}^m \langle \tilde{\taylor}_k , \mathbf{1} \rangle^2  \right)$ 
with the goal of applying Chernoff, we will lower bound $\E[\langle \mathbf{1}, \fo_k \rangle^2]$. More precisely, we will show 
$\E[\langle \mathbf{1}, \tilde{\taylor}_k \rangle]^2 = \Omega (n^2 \spr^2) $. 
In order to do this, we have 
\begin{align*}
\E[\langle \mathbf{1}, \tilde{\taylor}_k \rangle^2] & = \sum_{j \in S_a} \E[(\tilde{\taylor}_k)^2_j] + \sum_{j \neq j'; j,j' \in S_a} \E[(\tilde{\taylor}_k)_j]\E[(\tilde{\taylor}_k)_{j'}] \\
& = \sum_{j \in S_a} \spr \E[(1 - \exp(-\tilde{lW})^2] +  \sum_{j \neq j'; j,j' \in S_a} \spr^2 \E[1 - \exp(-\tilde{lW})]^2 \\
& \geq \sum_{j \in S_a} \spr \E[(1 - \exp(-\tilde{W})^2] +  \sum_{j \neq j'; j,j' \in S_a} \spr^2 \E[1 - \exp(-\tilde{W})]^2 \\
& = \Omega (n^2 \spr^2) \end{align*}
where $\tilde{W}$ is a random variable following the distribution $\mathcal{D}$ of all the $\tilde{W}_{i,j}$. 
and the last inequality holds because of \eqref{eqn:tildeWij}. 

 So by Chernoff, we get that 
$e^{\top} \tilde{\taylor} \tilde{\taylor}^{\top} e  = \frac{1}{n} (\Omega( m n^2 \spr^2 ) - \sqrt{n^2 \spr^2 m}) = \Omega(m n \spr^2)$ 
with high probability.
Altogether, this means 
$$\taylor   + 6 n \spr \log n \Id   \succsim m \spr^2 J $$

\end{proof}

\section{Robust whitening}
\label{s:whitening}

\begin{algorithm}\caption{Obtaining whitening matrices}\label{a:alg1} 
	{\bf Inputs: } Random partitioning $S_a, S_b, S_c$ of $[n]$. Empirical PMI matrix $\widehat{\pmi}$. 
	
	{\bf Outputs: } Whitening matrices $Q_a, Q_b, Q_c\in \R^{d\times d}$
	\begin{enumerate}
		\item Output 
		\begin{align*} & Q_a = \rho^{-1/3} \widehat{\pmi}_{S_a,S_b} (\widehat{\pmi}^{+}_{S_b,S_c})^{\top} \widehat{\pmi}_{S_c,S_a}, \\
		& Q_b = \rho^{-1/3} \widehat{\pmi}_{S_b,S_c} (\widehat{\pmi}^{+}_{S_c,S_a})^{\top} \widehat{\pmi}_{S_a,S_b}, \\
		& Q_c = \rho^{-1/3} \widehat{\pmi}_{S_c,S_a} (\widehat{\pmi}^{+}_{S_a,S_b})^{\top} \widehat{\pmi}_{S_b,S_c}\end{align*}
	\end{enumerate}
\end{algorithm}
In this section, we show the formula $Q_a = \rho^{-1/3} \widehat{\pmi}_{S_a,S_b} (\widehat{\pmi}^{+}_{S_b,S_c})^{\top} \widehat{\pmi}_{S_c,S_a}$ computes an approximation of the true whitening matrix $AA^\top$, so that the error is $\epsilon$-spectrally bounded by $A$. We recall Theorem~\ref{thm:whiten}. 
\begin{reptheorem}{thm:whiten}
	Let $n\ge m$ and $A,B,C\in \R^{n\times m}$. Suppose $\Sigma_{ab}, \Sigma_{bc},\Sigma_{ca}\in \R^{n\times n}$ are of the form, 
	$$
	\Sigma_{ab} = AB^\top + E_{ab},~~\Sigma_{bc} = BC^\top + E_{bc},~~\textup{and}~~\Sigma_{ca} = CA^\top + E_{ca}.
	$$
	where $E_{ab}, E_{bc}, E_{ca}$ are $\epsilon$-spectrally bounded by $(A,B)$, $(B,C)$, $(C,A)$ respectively.
	Then,  the matrix matrix  
	$$Q_a= \Sigma_{ab}[\Sigma_{bc}^\top]_m^+\Sigma_{ca}$$
	is a good approximation of $AA^{\top}$ in the sense that $Q_a = \Sigma_{ab}[\Sigma_{bc}^\top]_m^+\Sigma_{ca}-AA^{\top}$ is $O(\epsilon)$-\spb by $A$. Here $[\Sigma]_m$ denotes the best rank-$m$ approximation of $\Sigma$. 

\end{reptheorem}

Towards proving Theorem~\ref{thm:whiten}, an intermediate step is to understand the how the space of singular vectors of $BC^{\top}$ are aligned with the noisy version $\Sigma_{bc}$. The following explicitly represent $BC^{\top} +E$ as the form $B'R(C')^{\top} + \Delta'$. Here the crucial benefit to do so is that the resulting $\Delta'$ is small in every direction. In other words, we started with a relative error guarantees on $E$ and the Lemma below converts to it an absolute error guarantees on $\Delta'$ (though the signal term changes slightly).

\begin{lemma}\label{lem:represent}
Suppose $B,C$ are $n\times m$ matrices with $n\ge m$. Suppose a matrix $E$ is $\epsilon$-spectrally bounded by $(B,C)$, then $BC^\top+E$ can be written as
$$
BC^\top+E = (B+\Delta_B)R_{BC}(C+\Delta_C)^\top + \Delta_{BC}'\,, 
$$
where $\Delta_B,\Delta_C,\Delta_{BC}'$ are small and $R_{BC}$ is close to identity in the sense that, 
\begin{align}
\|\Delta_B\| & \le O(\epsilon \sigma_{min}(B))\nonumber\\
\|\Delta_C\| &\le O(\epsilon\sigma_{min}(C))\nonumber\\
\|\Delta_{BC}'\| &\le O(\epsilon \sigma_{min}(B)\sigma_{min}(C))\nonumber\\
\|R_{BC}-\Id\|& \le O(\epsilon)\nonumber
\end{align}
\end{lemma}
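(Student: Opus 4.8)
The plan is to unpack Definition~\ref{def:asymmetric} and then \emph{absorb} the two ``cross'' error terms $B\Delta_2^\top$ and $\Delta_3 C^\top$ into small multiplicative perturbations of $B$ and of $C$, leaving only a residual that is small in the \emph{absolute} sense $O(\epsilon\,\sigma_{\min}(B)\sigma_{\min}(C))$. The central term $\Id+\Delta_1$ will become $R_{BC}$.

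First I would reduce to the case $\epsilon\le 1/2$ (for $\epsilon$ larger than a constant the statement is vacuous and can be obtained trivially). By Definition~\ref{def:asymmetric} I can write $E = B\Delta_1 C^\top + B\Delta_2^\top + \Delta_3 C^\top + \Delta_4$ with $\|\Delta_1\|\le\epsilon$, $\|\Delta_2\|\le\epsilon\sigma_{\min}(C)$, $\|\Delta_3\|\le\epsilon\sigma_{\min}(B)$, $\|\Delta_4\|\le\epsilon\sigma_{\min}(B)\sigma_{\min}(C)$, so that $BC^\top+E = B(\Id+\Delta_1)C^\top + B\Delta_2^\top + \Delta_3 C^\top + \Delta_4$. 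The natural choice is then $R_{BC}:=\Id+\Delta_1$; since $\|\Delta_1\|\le\epsilon\le 1/2$ this is invertible with $\|R_{BC}^{-1}\|\le 1/(1-\epsilon)\le 2$, and it trivially satisfies $\|R_{BC}-\Id\|=\|\Delta_1\|\le\epsilon$.

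Next I would set $\Delta_B:=\Delta_3 R_{BC}^{-1}$ and $\Delta_C:=\Delta_2 (R_{BC}^\top)^{-1}$, and expand $(B+\Delta_B)R_{BC}(C+\Delta_C)^\top$ termwise. The two middle terms collapse exactly, using $R_{BC}^{-1}R_{BC}=\Id$: one gets $\Delta_B R_{BC}C^\top=\Delta_3 C^\top$ and $B R_{BC}\Delta_C^\top=B\Delta_2^\top$, while the last term is $\Delta_B R_{BC}\Delta_C^\top=\Delta_3 R_{BC}^{-1}\Delta_2^\top$. Hence $(B+\Delta_B)R_{BC}(C+\Delta_C)^\top = B(\Id+\Delta_1)C^\top + B\Delta_2^\top + \Delta_3 C^\top + \Delta_3 R_{BC}^{-1}\Delta_2^\top$, and defining $\Delta_{BC}' := BC^\top+E-(B+\Delta_B)R_{BC}(C+\Delta_C)^\top = \Delta_4 - \Delta_3 R_{BC}^{-1}\Delta_2^\top$ gives the claimed decomposition. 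The norm bounds follow by submultiplicativity: $\|\Delta_B\|\le\|\Delta_3\|\|R_{BC}^{-1}\|=O(\epsilon\sigma_{\min}(B))$, likewise $\|\Delta_C\|=O(\epsilon\sigma_{\min}(C))$, and $\|\Delta_{BC}'\|\le\|\Delta_4\|+\|\Delta_3\|\|R_{BC}^{-1}\|\|\Delta_2\|\le \epsilon\sigma_{\min}(B)\sigma_{\min}(C)+O(\epsilon^2\sigma_{\min}(B)\sigma_{\min}(C))=O(\epsilon\sigma_{\min}(B)\sigma_{\min}(C))$.

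There is no substantial obstacle here; the one point worth flagging is that the re-factorization generates the extra cross-term $\Delta_3 R_{BC}^{-1}\Delta_2^\top$, which is $O(\epsilon^2\sigma_{\min}(B)\sigma_{\min}(C))$ — a genuinely lower-order quantity — and this is exactly what lets the residual $\Delta_{BC}'$ inherit the \emph{absolute} bound $O(\epsilon\sigma_{\min}(B)\sigma_{\min}(C))$ even though we started only from the relative-error hypothesis of Definition~\ref{def:asymmetric}. Everything else is bookkeeping with operator norms.
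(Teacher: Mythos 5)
Your proof is correct and is essentially the same decomposition the paper uses: both set $R_{BC}=\Id+\Delta_1$, absorb the cross terms by choosing $\Delta_B=\Delta_3 R_{BC}^{-1}$ and $\Delta_C=\Delta_2 (R_{BC}^{-1})^\top$, and verify that the only residual is $\Delta_4-\Delta_3 R_{BC}^{-1}\Delta_2^\top$, which is $O(\epsilon\,\sigma_{\min}(B)\sigma_{\min}(C))$. (If anything, your consistent use of $R_{BC}^{-1}$ cleans up the $\Id-\Delta_1$ versus $\Id+\Delta_1$ typos in the paper's displayed identity.)
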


\begin{proof}
The key intuition is if the perturbation is happening in the span of columns of $B$ and $C$, they cannot change the subspace. By Definition~\ref{def:asymmetric}, we can write $E$ as

$$
E = B\Delta_1 C^\top + B\Delta_2^\top + \Delta_3 C^\top + \Delta_4.
$$

Now since $\|\Delta_1\| \le \epsilon < 1$, we know $(\Id-\Delta_1)$ is invertible, so we can write

$$
(BC^\top + E) = (B+\Delta_3(\Id+\Delta_1)^{-1})(\Id+\Delta_1) (C + \Delta_2(\Id-\Delta_1)^{-\top})^\top - \Delta_3(\Id-\Delta_1)^{-1}\Delta_2^\top + \Delta_4.
$$

This is already in the desired form as we can let $\Delta_B = \Delta_3(\Id+\Delta_1)^{-1}$, $R_{BC} = (\Id+\Delta_1)$, $\Delta_C = \Delta_2(\Id-\Delta_1)^{-\top}$, and $\Delta_{BC}' = - \Delta_3(\Id-\Delta_1)^{-1}\Delta_2^\top + \Delta_4$. By Weyl's Theorem we know $\sigma_{min}(\Id+\Delta_1) \ge 1-\epsilon$, therefore $\|\Delta_B\| \le \|\Delta_3\|\sigma_{min}^{-1}(\Id+\Delta) \le \frac{\epsilon}{1-\epsilon} \sigma_{min}(B)$. Other terms can be bounded similarly.

Now we prove that the top $m$ approximation of $BC^\top+E$ has similar column/row spaces as $BC^\top$. Let $U_B$ be the column span of $B$, $U_B'$ be the column span of $(B+\Delta_B)$, and $U_B''$ be the top $m$ left singular subspace of $(BC^\top+E)$. Similarly we can define $U_C$, $U_C'$, $U_C''$ to be the column spans of $C$, $C+\Delta_C$ and the top $m$ right singular subspace of $(BC^\top+E)$.

For $B+\Delta_B$, we can apply Weyl's Theorem and Wedin's Theorem. By Weyl's Theorem we know $\sigma_{min}(B+\Delta_B) \ge \sigma_{min}(B)-\|\Delta_B\| \ge (1-O(\epsilon))\sigma_{min}(B)$. By Wedin's Theorem we know $U_B'$ is $O(\epsilon)$-close to $U_B$. Similar results apply to $C+\Delta_C$.

Now we know $\sigma_{min}((B+\Delta_B)R_{BC}(C+\Delta_C)^\top)) \ge \sigma_{min}(B+\Delta_B)\sigma_{min}(R_{BC})\sigma_{min}(C+\Delta_C) \ge \Omega(\sigma_{min}(B)\sigma_{min}(C)$. Therefore we can again apply Wedin's Theorem, considering $(B+\Delta_B)R_{BC}(C+\Delta_C)^\top)$ as the original matrix and $\Delta'_{BC}$ as the perturbation. As a result, we know $U_B''$ is $O(\epsilon)$ close to $U_B'$, $U_C''$ is $O(\epsilon)$ close to $U_B'$. The distance between $U_B, U_B''$ (and $U_C,U_C''$) then follows from triangle inequality.

\end{proof}

As a direct corollary of Lemma~\ref{lem:represent}, we obtain that the $BC^{\top}$ and $BC^{\top} +E$ have similar subspaces of singular vectors. 
\begin{corollary}
	In the setting of Lemma~\ref{lem:represent},  let $[BC^\top + E]_m$ be the best rank-$m$ approximation of $BC^\top + E$. Then, the span of columns of $[BC^\top + E]_m$ is $O(\epsilon)$-close to the span of columns of $B$, span of rows of $[BC^\top + E]_m$ is $O(\epsilon)$-close to the span of columns of $C$.
	
	 Furthermore,  we can write 
	$	[BC^\top+E]_m = (B+\Delta_B)R_{BC}(C+\Delta_C)^\top + \Delta_{BC}.$
	Here $\Delta_B$, $\Delta_C$ and $R_{BC}$ as defined in Lemma~\ref{lem:represent} and $\Delta_{BC}$ satisfies $\|\Delta_{BC}\| \le O(\epsilon \sigma_{min}(B)\sigma_{min}(C))$.
\end{corollary}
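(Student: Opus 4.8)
The plan is to recognize that this corollary is essentially a repackaging of the decomposition already built inside the proof of Lemma~\ref{lem:represent}, together with one short extra argument that transfers the conclusions from $BC^\top+E$ to its best rank-$m$ truncation $[BC^\top+E]_m$.

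First I would recall from Lemma~\ref{lem:represent} that $BC^\top+E = (B+\Delta_B)R_{BC}(C+\Delta_C)^\top + \Delta_{BC}'$ with $\|\Delta_{BC}'\| \le O(\epsilon\,\sigma_{\min}(B)\sigma_{\min}(C))$, and that $\|\Delta_B\|,\|\Delta_C\|$ are small relative to $\sigma_{\min}(B),\sigma_{\min}(C)$ while $R_{BC}$ is within $O(\epsilon)$ of the identity. In particular, writing $M := (B+\Delta_B)R_{BC}(C+\Delta_C)^\top$, which has rank at most $m$, we get $\sigma_m(M) \ge \Omega(\sigma_{\min}(B)\sigma_{\min}(C))$ by Weyl's theorem, and the column span of $M$ equals the column span of $B+\Delta_B$ (since $R_{BC}$ is invertible and $C+\Delta_C$ has full column rank), similarly for rows.

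Next I would bound $\Delta_{BC}$. Since $M$ is a rank-$\le m$ matrix, the best rank-$m$ approximation of $BC^\top+E$ cannot be farther from $BC^\top+E$ than $M$ is, so $\|[BC^\top+E]_m - (BC^\top+E)\| \le \|\Delta_{BC}'\|$; defining $\Delta_{BC} := [BC^\top+E]_m - M$ and using the triangle inequality gives $\|\Delta_{BC}\| \le 2\|\Delta_{BC}'\| = O(\epsilon\,\sigma_{\min}(B)\sigma_{\min}(C))$, and $[BC^\top+E]_m = (B+\Delta_B)R_{BC}(C+\Delta_C)^\top + \Delta_{BC}$ by construction. For the subspace statement I would note that the column span of $[BC^\top+E]_m$ is the top-$m$ left singular subspace of $BC^\top+E = M + \Delta_{BC}'$; since $\sigma_m(M)$ dominates $\|\Delta_{BC}'\|$, Wedin's theorem (Lemma~\ref{lem:perturb-subspace}) makes this subspace $O(\epsilon)$-close to the column span of $M$, hence to the column span of $B+\Delta_B$, which is in turn $O(\epsilon)$-close to the column span of $B$ by a further application of Wedin using $\|\Delta_B\| \le O(\epsilon)\sigma_{\min}(B)$. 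A triangle inequality on principal-angle distances finishes the column-span claim, and the row-span claim is symmetric.

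I do not expect a serious obstacle here; the proof is short once Lemma~\ref{lem:represent} is in hand. The one place that needs a moment of care is making sure the error bound on $\Delta_{BC}$ keeps the same form $O(\epsilon\,\sigma_{\min}(B)\sigma_{\min}(C))$ as the bound on $\Delta_{BC}'$ — this works precisely because the truncation is being compared against the rank-$m$ matrix $M$, not against $BC^\top+E$ itself, so no condition-number factor creeps in.
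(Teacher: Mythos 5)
Your proposal is correct and follows essentially the same route as the paper: the bound on $\Delta_{BC}$ is obtained exactly as in the paper's proof (optimality of the rank-$m$ truncation against the rank-$m$ matrix $M=(B+\Delta_B)R_{BC}(C+\Delta_C)^\top$, then triangle inequality giving the factor $2$), and your Wedin-plus-triangle argument for the subspace closeness is the same one the paper already carries out inside the proof of Lemma~\ref{lem:represent}.
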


\begin{proof}
Since $[BC^\top +E]_m$ is the best rank-$m$ approximation, because $(B+\Delta_B)R_{BC}(C+\Delta_C)^\top)$ is a rank $m$ matrix, in particular we have
	$$
	\|BC^\top +E - [BC^\top +E]_m\| \le \|BC^\top - (B+\Delta_B)R_{BC}(C+\Delta_C)^\top)\| \|\Delta'_{BC}\|.
	$$
	
	Therefore
	\begin{align*}
	\|\Delta_{BC}\| & = \|[BC^\top +E]_m - (B+\Delta_B)R_{BC}(C+\Delta_C)^\top)\| \\ &\le \|BC^\top +E - [BC^\top +E]_m\|+\|BC^\top + E - (B+\Delta_B)R_{BC}(C+\Delta_C)^\top)\| \\ &\le 2\|\Delta'_{BC}\|.
	\end{align*}
	
\end{proof}

In order to fix this problem, we notice that the matrix $[\Sigma_{bc}^\top]_m^+$ is multiplied by $\Sigma_{ab}$ on the left and $\Sigma_{ca}$ on the right. Assuming $\Sigma_{ab} = AB^\top$, $\Sigma_{ca} = CA^\top$, we should expect $[\Sigma_{bc}^{\top}]_m^+$ to ``cancel'' with the $B^\top$ factor on the left and the $C$ factor on the right, giving us $AA^\top$. Therefore, we should really measure the error of the middle term $[\Sigma_{bc}^{\top}]_m^+$ after left multiplying with $B^\top$ and right multiplying with $C$. We formalize this in the following lemma:
\begin{lemma}\label{lem:mainperturbterm}
Suppose $\Sigma_{bc}$ is as defined in Theorem~\ref{thm:whiten}, let $\Delta = [\Sigma_{bc}^{\top}]_m^+ - [CB^\top]^+$, then we have 
\begin{align*}
\|B^\top \Delta C\| = O(\epsilon), &\quad 
\|B^\top \Delta\| \le O(\frac{\epsilon}{\sigma_{min}(C)}), \\ \|\Delta C\| \le O(\frac{\epsilon}{\sigma_{min}(B)}),&\quad\|\Delta\|\le O(\frac{\epsilon}{\sigma_{min}(B)\sigma_{min}(C)}).
\end{align*}
\end{lemma}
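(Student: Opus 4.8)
The plan is to pass to the factored representation of $[\Sigma_{bc}^\top]_m$ provided by Lemma~\ref{lem:represent} and its corollary, and then compute the four quantities in the statement by tracking exact cancellations, so that no condition number of $B$ or $C$ enters.

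First, note that $\Sigma_{bc}^\top = CB^\top + E_{bc}^\top$, and $E_{bc}^\top$ is $\epsilon$-spectrally bounded by $(C,B)$: this is immediate from Definition~\ref{def:asymmetric} by transposing the decomposition $E_{bc} = B\Delta_1 C^\top + B\Delta_2^\top + \Delta_3 C^\top + \Delta_4$ and relabelling. Applying the corollary of Lemma~\ref{lem:represent} to $CB^\top + E_{bc}^\top$ yields
$$M := [\Sigma_{bc}^\top]_m = \tilde C R\tilde B^\top + \Delta_{CB},\qquad \tilde C = C + \Delta_C,\quad \tilde B = B + \Delta_B,$$
with $\|\Delta_C\| = O(\epsilon\sigma_{\min}(C))$, $\|\Delta_B\| = O(\epsilon\sigma_{\min}(B))$, $\|R - \Id\| = O(\epsilon)$, $\|\Delta_{CB}\| = O(\epsilon\sigma_{\min}(B)\sigma_{\min}(C))$, and such that the column span of $M$ is $O(\epsilon)$-close to $\mathrm{colspan}(C)$ and its row span is $O(\epsilon)$-close to $\mathrm{colspan}(B)$. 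By Weyl's inequality, $\sigma_{\min}(M) = \Omega(\sigma_{\min}(B)\sigma_{\min}(C))$. Record also that $[CB^\top]^+ = (B^+)^\top C^+$, so $B^\top[CB^\top]^+ C = \Id_m$, $B^\top[CB^\top]^+ = C^+$, and $[CB^\top]^+ C = (B^+)^\top$.

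Next I would write $M^+$ in a form adapted to $B$ and $C$. Take the thin SVD $M = \hat U\hat\Sigma\hat V^\top$ with $\hat U,\hat V\in\R^{n\times m}$, and decompose $\hat U = CA_U + E_U$, $\hat V = BA_V + E_V$, where $E_U$, $E_V$ are the components of $\hat U$, $\hat V$ orthogonal to $\mathrm{colspan}(C)$, $\mathrm{colspan}(B)$ respectively; the subspace-closeness gives $\|E_U\|,\|E_V\| = O(\epsilon)$, and $\hat U^\top\hat U = \hat V^\top\hat V = \Id_m$ together with $E_U^\top C = 0$, $E_V^\top B = 0$ forces $A_U^\top(C^\top C)A_U = \Id - E_U^\top E_U$ and $A_V^\top(B^\top B)A_V = \Id - E_V^\top E_V$ (so $\|A_U\|\le 1/\sigma_{\min}(C)$, $\|A_V\|\le 1/\sigma_{\min}(B)$). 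Since $B^\top E_V = 0$ and $E_U^\top C = 0$, substitution into $M^+ = \hat V\hat\Sigma^{-1}\hat U^\top$ produces clean identities such as
$$B^\top M^+ C = (B^\top B\,A_V)\,\hat\Sigma^{-1}\,(A_U^\top C^\top C),\qquad M^+ C = \hat V\,\hat\Sigma^{-1}\,(A_U^\top C^\top C),$$
and similar ones for $B^\top M^+$ and $M^+$. The key remaining input is control of $\hat\Sigma$: comparing $M^\top M = \hat V\hat\Sigma^2\hat V^\top$ with its leading term $B(C^\top C)B^\top$ (which is exactly $(CB^\top)^\top(CB^\top)$) shows that the large singular values carried by $B^\top B\,A_V$ and $A_U^\top C^\top C$ are precisely compensated by $\hat\Sigma^{-1}$, whose scale is $\Theta(1/(\sigma_{\min}(B)\sigma_{\min}(C)))$. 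Carrying out this bookkeeping yields $B^\top M^+ C = \Id_m + O(\epsilon)$, $B^\top M^+ = C^+ + O(\epsilon/\sigma_{\min}(C))$, $M^+ C = (B^+)^\top + O(\epsilon/\sigma_{\min}(B))$, and $M^+ = (B^+)^\top C^+ + O(\epsilon/(\sigma_{\min}(B)\sigma_{\min}(C)))$ (errors in spectral norm); subtracting $[CB^\top]^+$ and its one-sided contractions recorded above gives exactly the four claimed bounds on $\Delta$. The four distinct error scales reflect how many of the cancellations $B^\top(B^+)^\top = \Id$, $C^+C = \Id$ survive after multiplying by $B^\top$ and/or $C$ (two, one, one, zero).

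The main obstacle is precisely this condition-number cancellation. A naive route — bounding $\|M^+ - (\tilde C R\tilde B^\top)^+\|$ by standard (Wedin-type) pseudoinverse perturbation and then multiplying by the norms of the outer factors — produces error terms carrying a spurious factor $\sigma_{\max}(B)$ or $\sigma_{\max}(C)$ (for instance from a term like $(\Id - MM^+)C$), which is dimension-dependent in our setting and would be fatal; indeed, as the text notes, the statement is delicate even under an absolute error assumption. The fix is to never estimate $M^+$ (or any intermediate pseudoinverse) in isolation: keep it sandwiched between the relevant $B^\top$ / $C$ factors and use the factored form of $\hat U,\hat V,\hat\Sigma$ so that the conditioning of $B$ and $C$ telescopes against $\hat\Sigma^{-1}$. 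Once $\hat\Sigma$ is pinned down, each of the four estimates reduces to a short computation using that $\|\Delta_B\|$, $\|\Delta_C\|$, $\|E_U\|$, $\|E_V\|$, $\|R-\Id\|$ are all $O(\epsilon)$ up to the appropriate scaling.
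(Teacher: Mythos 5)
Your high-level strategy is aligned with the paper's: both pass to the factored representation from Lemma~\ref{lem:represent}, and the guiding principle of never isolating a pseudoinverse but keeping it sandwiched between $B^\top$ and $C$ is exactly the right one. Your identity $B^\top M^+ C = (B^\top B A_V)\hat\Sigma^{-1}(A_U^\top C^\top C)$ is also correct, and is in fact equivalent to the paper's algebraic setup (with $\hat V, \hat U$ playing the roles of $U_B', U_C'$ and $A_V = B^+\hat V$, $A_U = C^+\hat U$).

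The gap is in the step you label ``the key remaining input is control of $\hat\Sigma$.'' The proposed route --- comparing $M^\top M = \hat V\hat\Sigma^2\hat V^\top$ against its leading term $B(C^\top C)B^\top$ --- cannot, by itself, deliver the cancellation. Such a comparison gives at most a spectral-norm perturbation bound on $\hat\Sigma^2$, or (worse) requires full eigendecomposition perturbation bounds which in turn need eigenvalue gaps you do not have. What is actually needed is a \emph{matrix} identity, not a scalar ``scale $\Theta(1/(\sigma_{\min}(B)\sigma_{\min}(C)))$'' estimate for $\hat\Sigma^{-1}$: the inner singular values of $\hat\Sigma$ range all the way from $\Theta(\sigma_{\min}(B)\sigma_{\min}(C))$ up to $\Theta(\sigma_{\max}(B)\sigma_{\max}(C))$, and a uniform bound on $\hat\Sigma^{-1}$ multiplied by norm bounds on $B^\top B A_V$ and $A_U^\top C^\top C$ (each of norm up to $\sigma_{\max}(B)$, $\sigma_{\max}(C)$ respectively) loses the condition numbers you are trying to avoid. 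The paper sidesteps this cleanly: writing $P = B^\top\hat V$, $Q = \hat U^\top C$ and factoring these into the inverse gives $B^\top M^+ C = \bigl(Q^{-1}(\hat U^\top M\hat V) P^{-1}\bigr)^{-1}$, after which one substitutes $M = \tilde C R\tilde B^\top + \Delta_{CB}$ and uses the tight (non-product) bounds $\|P^{-1}\| = O(1/\sigma_{\min}(B))$, $\|Q^{-1}\| = O(1/\sigma_{\min}(C))$ (which follow because $U_B^\top\hat V$ is $O(\epsilon)$-close to orthogonal, not from the crude $\|A_V^{-1}\|\|(B^\top B)^{-1}\|$ product) to conclude $Q^{-1}(\hat U^\top M\hat V)P^{-1} = R + O(\epsilon)$. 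That single reduction to $(R + X)^{-1}$ with $\|X\| = O(\epsilon)$ is the content of the paper's Lemma~\ref{lem:maintermsimple}, and without it --- or an explicit equivalent --- your ``carrying out this bookkeeping'' line is where the entire proof still lives. You should also record that the paper splits $\Delta$ into $\Delta'$ (handled by Lemma~\ref{lem:maintermsimple} applied to $(C+\Delta_C)R_{BC}(B+\Delta_B)^\top + \Delta_{BC}$) and $\Delta'' = ((C+\Delta_C)R_{BC}(B+\Delta_B)^\top)^+ - (CB^\top)^+$, the latter treated by a direct computation; collapsing these two stages, as you attempt, is fine in principle but again needs the explicit inverse identity to go through.
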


We will first prove Theorem~\ref{thm:whiten} assuming Lemma~\ref{lem:mainperturbterm}. 
\begin{proof} [Proof of Theorem \ref{thm:whiten}]
By Lemma~\ref{lem:represent}, we know $\Sigma_{ab}$ can be written as 
$$
(A+\Delta^1_A)R_{AB}(B+\Delta^1_B)^\top + \Delta_{AB}.
$$
Similarly $\Sigma_{ca}$ can be written as
$$
(C+\Delta^3_C)R_{CA}(A+\Delta^3_A)^\top + \Delta_{CA}.
$$
Here the $\Delta$ terms and $R$ terms are bounded as in Lemma~\ref{lem:represent}.

Now let us write the matrix $\Sigma_{ab}[\Sigma_{bc}^{\top}]_m^+\Sigma_{ca}$ as
$$
\left((A+\Delta^1_A)R_{AB}(B+\Delta^1_B)^\top + \Delta_{AB}\right)([CB^\top]^+ + \Delta_{BC})\left((C+\Delta^3_C)R_{CA}(A+\Delta^3_A)^\top + \Delta_{CA}\right)$$

We can now view $\Sigma_{ab}[\Sigma_{bc}^{\top}]_m^+\Sigma_{ca}$ as the product of three terms, each term is the sum of two matrices. Therefore we can expand the product into 8 terms. In each of the three pairs, we will call the first matrix the main matrix, and the second matrix the perturbation.

In the remaining proof, we will do calculations to show the product of the main terms is close to $AA^\top$, and all the other 7 terms are small.

Before doing that, we first prove several Claims about PSD matrices

\begin{claim}\label{clm:tool}
If $\|\Delta\| \le \epsilon$, then $A\Delta A^\top \preceq \epsilon AA^\top$. If $\|\Gamma\| \le \epsilon\sigma_{min}(A)$, then $\frac{1}{2}(A\Gamma^\top + \Gamma A^\top) \preceq \epsilon AA^\top + \epsilon\sigma_{min}^2(A)\Id$ .
\end{claim}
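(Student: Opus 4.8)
The plan is to verify both parts of the claim by reducing each PSD inequality to a one-line bound on a quadratic form. Recall that for a symmetric matrix $M$ one has $M\preceq N$ iff $x^\top M x\le x^\top N x$ for every $x$, and when $M$ is not symmetric we read $M\preceq N$ as the same statement on quadratic forms (equivalently, as a bound on the symmetric part of $M$). With this in hand, both inequalities become elementary consequences of the operator-norm hypotheses together with Cauchy--Schwarz and Young's inequality.

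For the first part, fix $x$ and set $y=A^\top x$. Then
\[
x^\top A\Delta A^\top x = y^\top \Delta y \le \|\Delta\|\,\|y\|^2 \le \epsilon\,\|y\|^2 = \epsilon\, x^\top A A^\top x,
\]
using $|y^\top\Delta y|\le\|y\|\,\|\Delta y\|\le\|\Delta\|\,\|y\|^2$ (valid for any, not necessarily symmetric, $\Delta$). Since $x$ was arbitrary, this is exactly $A\Delta A^\top\preceq\epsilon AA^\top$.

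For the second part, fix $x$, set $y=A^\top x$ and $z=\Gamma^\top x$, so that $x^\top(A\Gamma^\top+\Gamma A^\top)x = 2\,y^\top z \le 2\,\|y\|\,\|z\|$. By hypothesis $\|z\|\le\|\Gamma\|\,\|x\|\le\epsilon\,\sigma_{\min}(A)\,\|x\|$, and applying Young's inequality $2ab\le\lambda a^2+\lambda^{-1}b^2$ with $a=\|y\|$, $b=\epsilon\,\sigma_{\min}(A)\,\|x\|$, and $\lambda=\epsilon$ gives
\[
2\,\|y\|\,\|z\| \le \epsilon\,\|y\|^2 + \epsilon\,\sigma_{\min}^2(A)\,\|x\|^2 = \epsilon\, x^\top AA^\top x + \epsilon\,\sigma_{\min}^2(A)\,\|x\|^2,
\]
which is the desired $\tfrac12(A\Gamma^\top+\Gamma A^\top)\preceq\epsilon AA^\top+\epsilon\,\sigma_{\min}^2(A)\,\Id$.

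I do not expect any real obstacle: this is a routine ``tool'' claim whose only delicate point is the convention for $\preceq$ between non-symmetric matrices noted above. Its role is purely instrumental — in the proof of Theorem~\ref{thm:whiten} it is invoked repeatedly to convert the operator-norm control on the various perturbation pieces $\Delta_B,\Delta_C,\Delta_{AB},\Delta_{CA},\dots$ (supplied by Lemma~\ref{lem:represent} and Lemma~\ref{lem:mainperturbterm}) into the relative, spectral-boundedness-type bounds that, summed over the seven cross terms, show $Q_a-AA^\top$ is $O(\epsilon)$-\spb by $A$.
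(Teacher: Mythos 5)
Your proof is correct and follows essentially the same quadratic-form argument as the paper: part one is the same one-liner, and part two uses the same splitting (the paper writes it as Cauchy--Schwarz on $\langle\sqrt{\epsilon}\,A^\top x,\epsilon^{-1/2}\Gamma^\top x\rangle$, you write it as Young's inequality, but these are the same inequality). Your version is in fact slightly cleaner — you carry the $\tfrac12$ consistently and make the non-symmetric $\preceq$ convention explicit, both of which the paper glosses over.
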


\begin{proof}
Both inequalities can be proved by consider the quadratic form. We know for any $x$, $x^\top A\Delta A^\top x \le \|\Delta\| \|A^\top x\|^2 \le \epsilon x^\top AA^\top x$, so the first part is true.

For the second part, for any $x$ we can apply Cauchy-Schwartz inequality
$$
x^\top \frac{1}{2}(A\Gamma^\top + \Gamma A^\top) x = \inner{\sqrt{\epsilon}A^\top x, \epsilon^{-1/2}\Gamma^\top x} \le \epsilon\|A^\top x\|^2+\epsilon^{-1}\|\Gamma^\top x\|^2 = x^\top (\epsilon AA^\top + \epsilon\sigma_{min}^2(A)\Id)x.
$$

\end{proof}

Now, we will first prove the product of three main matrices is close to $AA^\top$:

\begin{claim}
We have
$\left((A+\Delta^1_A)R_{AB}(B+\Delta^1_B)^\top\right) (CB^\top)^+\left((C+\Delta^3_C)R_{CA}(A+\Delta^3_A)^\top \right) = AA^\top + E_{A}$, where $E_{A}$ is $O(\epsilon)$-spectrally bounded by $AA^\top$.
\end{claim}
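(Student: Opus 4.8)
The plan is to expand the product of the three main matrices and identify the "signal" term that yields $AA^\top$, then control the remaining perturbation contributions. First I would note that $(CB^\top)^+ = (C^\top)^+ B^+$ when both $B,C$ have full column rank (since then $CB^\top = C \cdot B^\top$ with $C$ having full column rank and $B^\top$ full row rank, so the pseudo-inverse factors cleanly). Plugging this in, the product of the three main matrices becomes
\begin{align*}
(A+\Delta^1_A)R_{AB}(B+\Delta^1_B)^\top (C^\top)^+ B^+ (C+\Delta^3_C)R_{CA}(A+\Delta^3_A)^\top \mper
\end{align*}
The key algebraic observation is that $(B+\Delta^1_B)^\top (C^\top)^+$ and $B^+(C+\Delta^3_C)$ are each close to projections: $B^+(C+\Delta^3_C) = B^+ C + B^+ \Delta^3_C$, and $B^+C$ has operator norm $O(1)$ only if $B$ and $C$ share a column space — which they do \emph{not} in general. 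So the cleaner route is to write $(B+\Delta^1_B)^\top (C^\top)^+ = \Id_m + (\text{small})$ only after observing that the relevant products telescope: $B^\top (C^\top)^+ = B^\top (C^\top)^+$ is \emph{not} the identity either.

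**The correct grouping.** The honest way to see the cancellation: write everything as $A \cdot M \cdot A^\top$ where
\begin{align*}
M = R_{AB}(B+\Delta^1_B)^\top (C^\top)^+ B^+ (C+\Delta^3_C)R_{CA} + (\text{terms involving } \Delta^1_A, \Delta^3_A)\mper
\end{align*}
Then I would argue $M = \Id_m + O(\epsilon)$ in operator norm, which by Claim~\ref{clm:tool} gives $AMA^\top = AA^\top + E_A$ with $E_A$ being $O(\epsilon)$-spectrally bounded by $AA^\top$. To show $M = \Id_m + O(\epsilon)$: the central factor is $(B+\Delta^1_B)^\top (C^\top)^+ B^+ (C+\Delta^3_C)$. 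Expand $(B+\Delta^1_B)^\top = B^\top + (\Delta^1_B)^\top$ and $(C+\Delta^3_C) = C + \Delta^3_C$. The main-main piece is $B^\top (C^\top)^+ B^+ C$. Now $(C^\top)^+ = C(C^\top C)^{-1}$ and $B^+ = (B^\top B)^{-1} B^\top$, so $B^\top(C^\top)^+ B^+ C = B^\top C (C^\top C)^{-1}(B^\top B)^{-1}B^\top C$. This is \emph{not} $\Id_m$ in general. So this grouping does not work directly either, and the real content must come from the $[\Sigma_{bc}^\top]_m^+$ versus $[CB^\top]^+$ comparison in Lemma~\ref{lem:mainperturbterm} — i.e., $(CB^\top)^+$ here is a red herring from my side; the actual statement uses $[CB^\top]^+$ which, since $CB^\top$ has rank $m$, equals $(C^\top)^+ B^+$ only up to the subspace subtleties already handled.

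**Revised plan.** Re-examining: the product being analyzed is $\big((A+\Delta^1_A)R_{AB}(B+\Delta^1_B)^\top\big)(CB^\top)^+\big((C+\Delta^3_C)R_{CA}(A+\Delta^3_A)^\top\big)$ where I should interpret $(CB^\top)^+$ as the genuine pseudo-inverse of the rank-$m$ matrix $CB^\top$. The cleanest approach: let $\widehat{B} = (B+\Delta^1_B)$, $\widehat{C} = (C+\Delta^3_C)$. By Lemma~\ref{lem:represent} and its corollary, $\widehat{B}$ has column space $O(\epsilon)$-close to that of $B$ and likewise $\widehat{C}$ to $C$, with $\sigma_{\min}(\widehat{B}) \gtrsim \sigma_{\min}(B)$. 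I would then show directly that $\widehat{B}^\top (CB^\top)^+ \widehat{C} = \Id_m + O(\epsilon)$ by writing $\widehat{B}^\top = B^\top + (\Delta^1_B)^\top$ and $\widehat{C} = C + \Delta^3_C$ and using $B^\top (CB^\top)^+ C$: since $CB^\top$ has full rank $m$, $(CB^\top)^+ = B(B^\top B)^{-1}(C^\top C)^{-1}C^\top$, giving $B^\top B (B^\top B)^{-1}(C^\top C)^{-1} C^\top C = \Id_m$. That is the cancellation. The error terms — $(\Delta^1_B)^\top(CB^\top)^+ C$, $B^\top(CB^\top)^+\Delta^3_C$, the cross term, and the $\Delta^1_A, \Delta^3_A, R_{AB}-\Id, R_{CA}-\Id$ contributions — are each bounded using $\|(CB^\top)^+\| \le 1/(\sigma_{\min}(B)\sigma_{\min}(C))$, $\|\Delta^1_B\| = O(\epsilon \sigma_{\min}(B))$, $\|C\| \le \sigma_{\max}(C)$, together with the structural bound $\|B^\top(CB^\top)^+\| = \|(C^\top)^+\| \le 1/\sigma_{\min}(C)$ which avoids picking up condition numbers. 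Finally, absorbing all these into $A \cdot (\text{perturbation}) \cdot A^\top$ and invoking Claim~\ref{clm:tool} (both parts, depending on whether the perturbation matrix is two-sided small or one-sided) yields that $E_A$ is $O(\epsilon)$-spectrally bounded by $AA^\top$.

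**Main obstacle.** The delicate point, as the paper itself flags, is ensuring no condition-number factor $\sigma_{\max}/\sigma_{\min}$ of $B$ or $C$ leaks into the bound. This forces one to never bound $\|(CB^\top)^+\|$ in isolation multiplied by $\|C\|$ or $\|B\|$; instead every product must be regrouped so that a factor like $B^\top (CB^\top)^+$ (whose norm is $1/\sigma_{\min}(C)$, not $\sigma_{\max}(B)/(\sigma_{\min}(B)\sigma_{\min}(C))$) appears as a unit. Getting every one of the eight expanded terms into such a favorable grouping, and matching each residual against the right side of the spectral-boundedness inequality $\tau(AA^\top + \sigma_m(AA^\top)\Id)$ using the two cases of Claim~\ref{clm:tool}, is the real work; the algebra of the leading cancellation is straightforward once $(CB^\top)^+ = B(B^\top B)^{-1}(C^\top C)^{-1}C^\top$ is written down.
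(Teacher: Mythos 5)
Your ``revised plan'' is essentially the paper's proof: show the middle factor $\widehat{B}^\top (CB^\top)^+ \widehat{C}$ is $O(\epsilon)$-close to $\Id_m$ via the clean cancellation $B^\top (CB^\top)^+ C = \Id_m$, fold in $R_{AB}, R_{CA}$, and then absorb the remaining $\Delta^1_A, \Delta^3_A$ perturbations using Claim~\ref{clm:tool} exactly as you outline. The regrouping principle you highlight --- never bound $\|(CB^\top)^+\|$ in isolation, always bound units like $B^\top (CB^\top)^+ = C^+$ so no $\sigma_{\max}/\sigma_{\min}$ leaks --- is precisely what the paper does.

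One genuine error to fix, though. In your first paragraph and again mid-discussion you assert $(CB^\top)^+ = (C^\top)^+ B^+$; this is wrong. For $M = UV$ with $U$ of full column rank and $V$ of full row rank, $M^+ = V^+ U^+$; here $U = C$, $V = B^\top$, so the correct identity is $(CB^\top)^+ = (B^\top)^+ C^+ = B(B^\top B)^{-1}(C^\top C)^{-1}C^\top$. (That last explicit form is what you write in the revised plan, so you have the right object there, but the earlier claim that it ``equals $(C^\top)^+ B^+$ only up to subspace subtleties'' should be deleted --- it is not an approximation issue, the order is simply reversed.) With the correct identity, the middle factor collapses immediately to
\begin{align*}
(B+\Delta^1_B)^\top (B^\top)^+ C^+ (C+\Delta^3_C) = (\Id + B^+\Delta^1_B)^\top(\Id + C^+\Delta^3_C)\,,
\end{align*}
because $B^\top (B^\top)^+ = C^+ C = \Id_m$ for full-column-rank $B, C$, and $\|B^+\Delta^1_B\|, \|C^+\Delta^3_C\| = O(\epsilon)$ by Lemma~\ref{lem:represent}. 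This is the paper's one-line version of the cancellation and avoids the detour through $(B^\top B)^{-1}, (C^\top C)^{-1}$ entirely. The rest of your sketch (expanding $(A+\Delta^1_A)\widehat{R}_{AB}(A+\Delta^3_A)^\top - AA^\top$ into four terms and applying the two cases of Claim~\ref{clm:tool}) matches the paper.
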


\begin{proof}
We will first prove the middle part of the matrix $(B+\Delta^1_B)^\top(CB^\top)^+(C+\Delta^3_C)$ is $O(\epsilon)$ close to identity matrix $\Id$. Here we observe that both $B,C$ have full column rank so $(CB^\top)^+ = (B^\top)^+C^+$. Therefore we can rewrite the product as $(\Id + B^+\Delta^1_B)^\top (\Id + C^+\Delta^3_C)$. Since $\|\Delta^1_B\| \le O(\epsilon\sigma_{min}(B))$ by Lemma~\ref{lem:represent} (and similarly for $C$),  we know $\|B^+\Delta^1_B\| \le O(\epsilon)$. Therefore the middle part is $O(\epsilon)$ close to $\Id$.  Now since $\epsilon\ll 1$ we know
$\widehat{R}_{AB} = R_{AB}(B+\Delta^1_B)^\top (CB^\top)^+(C+\Delta^3_C)R_{CA}$ is $O(\epsilon)$-close to $\Id$.

Now we are left with $(A+\Delta^1_A)\widehat{R}_{AB}(A+\Delta^3_A)^\top$, for this matrix we know
$$
(A+\Delta^1_A)\widehat{R}_{AB}(A+\Delta^3_A)^\top - AA^\top = A(\widehat{R}_{AB} - \Id)A^\top + \Delta^1_A\widehat{R}_{AB}A^\top + A\widehat{R}_{AB}(\Delta^3_A)^\top + \Delta^1_A\widehat{R}_{AB}(\Delta^3_A)^\top.
$$
The first term $A(\widehat{R}_{AB} - \Id)A^\top \preceq O(\epsilon)AA^\top$ (Claim~\ref{clm:tool}); the fourth term $\Delta^1_A\widehat{R}_{AB}(\Delta^3_A)^\top\preceq O(\epsilon\sigma_{min}^2(A))\Id$ (by the norm bounds of $\Delta^1_A$ and $\Delta^3_A$. For the cross terms, we can bound them using the second part of Claim~\ref{clm:tool}.
\end{proof}

Next we will try to prove the remaining 7 terms are small. We partition them into three types depending on how many $\Delta$ factors they have. We proceed to bound them in each of these cases.

For the terms with only one $\Delta$, we claim:

\begin{claim}
The three terms
$\Delta_{AB}(CB^\top)^+(C+\Delta^3_C)R_{CA}(A+\Delta^3_A)^\top$, $(A+\Delta^1_A)R_{AB}(B+\Delta^1_B)^\top \Delta_{AB} (C+\Delta^3_C)R_{CA}(A+\Delta^3_A)^\top $, $\left((A+\Delta^1_A)R_{AB}(B+\Delta^1_B)^\top\right)(CB^\top)^+\Delta_{CA}$ are all $O(\epsilon)$ spectrally bounded by $AA^\top$.
\end{claim}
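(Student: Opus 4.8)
The plan is to handle the three terms one at a time with a common recipe: regroup the triple product so that a factor $A$ (or $A+\Delta^1_A$, or $A+\Delta^3_A$) is exposed on the relevant side(s), split off the corrections $\Delta^i_A$ to $A$ as genuinely second-order pieces, and then control the symmetrization of what remains with Claim~\ref{clm:tool}. Two facts make the regrouping work. First, since $B$ and $C$ have full column rank, $[CB^\top]^+ = (B^\top)^+C^+$, and hence $(B+\Delta^1_B)^\top(B^\top)^+ = \Id_m + (\Delta^1_B)^\top(B^\top)^+$ with $\|(\Delta^1_B)^\top(B^\top)^+\|\le \|\Delta^1_B\|/\sigma_{\min}(B) = O(\epsilon)$ by Lemma~\ref{lem:represent}, and symmetrically $C^+(C+\Delta^3_C) = \Id_m + O(\epsilon)$. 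Second, for the middle perturbation $\Delta_{BC}$ we have the direction-sensitive bounds of Lemma~\ref{lem:mainperturbterm}. Throughout I also use $\|\Delta^i_A\|\le O(\epsilon\,\sigma_{\min}(A))$, $\|R_{AB}-\Id\|,\|R_{CA}-\Id\|\le O(\epsilon)$, and $\|\Delta_{AB}\| = O(\epsilon\,\sigma_{\min}(A)\sigma_{\min}(B))$, $\|\Delta_{CA}\| = O(\epsilon\,\sigma_{\min}(C)\sigma_{\min}(A))$ from the corollary of Lemma~\ref{lem:represent}. Here "$O(\epsilon)$-spectrally bounded by $A$" for a non-symmetric term $T$ should be read as $\pm\tfrac12(T+T^\top)\preceq O(\epsilon)(AA^\top + \sigma_m(AA^\top)\Id)$, which is what Claim~\ref{clm:tool} is designed to produce.

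For the two terms whose lone perturbation is an \emph{outer} factor, take $M_1[CB^\top]^+\Delta_{CA}$ with $M_1 = (A+\Delta^1_A)R_{AB}(B+\Delta^1_B)^\top$. Using the cancellation above, $M_1[CB^\top]^+ = (A+\Delta^1_A)\widehat R\,C^+$ with $\|\widehat R\| = O(1)$, so the term equals $(A+\Delta^1_A)X$ where $X = \widehat R\,C^+\Delta_{CA}$ and $\|X\|\le O(1)\cdot\sigma_{\min}(C)^{-1}\cdot O(\epsilon\,\sigma_{\min}(C)\sigma_{\min}(A)) = O(\epsilon\,\sigma_{\min}(A))$. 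The piece $\Delta^1_A X$ has operator norm $O(\epsilon^2\sigma_{\min}(A)^2)$, hence is dominated by $O(\epsilon\,\sigma_m(AA^\top))\Id$ after symmetrization; the piece $AX$ (with its transpose) is handled by the second part of Claim~\ref{clm:tool} with $\Gamma = X^\top$ and $\Gamma = -X^\top$, giving $\pm\tfrac12(AX+X^\top A^\top)\preceq O(\epsilon)(AA^\top + \sigma_{\min}(A)^2\Id)$. The term $\Delta_{AB}[CB^\top]^+M_3$ is identical after noting $[CB^\top]^+M_3 = Y(A+\Delta^3_A)^\top$ with $\|Y\|\le O(1/\sigma_{\min}(B))$, so that the whole term is $X'(A+\Delta^3_A)^\top$ with $X' = \Delta_{AB}Y$, $\|X'\|\le O(\epsilon\,\sigma_{\min}(A))$.

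For the middle term $M_1\Delta_{BC}M_3$ the regrouping needs Lemma~\ref{lem:mainperturbterm}: expanding $(B+\Delta^1_B)^\top\Delta_{BC}(C+\Delta^3_C)$ into its four summands and bounding them with $\|B^\top\Delta_{BC}C\| = O(\epsilon)$, $\|B^\top\Delta_{BC}\| = O(\epsilon/\sigma_{\min}(C))$, $\|\Delta_{BC}C\| = O(\epsilon/\sigma_{\min}(B))$, $\|\Delta_{BC}\| = O(\epsilon/(\sigma_{\min}(B)\sigma_{\min}(C)))$ together with the norm bounds on $\Delta^1_B,\Delta^3_C$, shows that this inner block has operator norm $O(\epsilon)$; absorbing the $O(1)$ factors $R_{AB},R_{CA}$ leaves $(A+\Delta^1_A)\Xi(A+\Delta^3_A)^\top$ with $\|\Xi\| = O(\epsilon)$. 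Expanding once more, $A\Xi A^\top$ is controlled by applying the first part of Claim~\ref{clm:tool} to $\tfrac12(\Xi+\Xi^\top)$; the two cross terms each have the form $A\cdot(\text{matrix of norm }O(\epsilon^2\sigma_{\min}(A)))$ and fall under the second part of Claim~\ref{clm:tool}; and $\Delta^1_A\Xi(\Delta^3_A)^\top$ has norm $O(\epsilon^3\sigma_{\min}(A)^2)$ and is bounded trivially. Summing the finitely many contributions in each case gives that each of the three terms is $O(\epsilon)$-spectrally bounded by $A$. The one genuinely delicate point, and the main obstacle, is exactly this middle term: using the crude bound $\|\Delta_{BC}\| = O(\epsilon/(\sigma_{\min}(B)\sigma_{\min}(C)))$ instead of the direction-sensitive bounds of Lemma~\ref{lem:mainperturbterm} would inflate the estimate by $\sigma_{\max}(A)^2\,\kappa(B)\kappa(C)/\sigma_{\min}(A)^2$, which is dimension-dependent and useless; it is essential that the $B$- and $C$-factors sitting inside $M_1$ and $M_3$ be allowed to cancel against $\Delta_{BC}$ before any norm is taken. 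The same mechanism, in the much easier guise of $(B+\Delta^1_B)^\top(B^\top)^+ = \Id_m + O(\epsilon)$, is what prevents a condition-number blow-up in the other two terms.
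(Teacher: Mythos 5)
Your proof is correct and follows essentially the same route as the paper: for the two outer-$\Delta$ terms you exploit $(CB^\top)^+ = (B^\top)^+C^+$ to cancel the $B$- and $C$-factors before taking norms and then apply Claim~\ref{clm:tool}, and for the middle term you expand $(B+\Delta^1_B)^\top\Delta_{BC}(C+\Delta^3_C)$ into four pieces controlled by Lemma~\ref{lem:mainperturbterm} and finish again with Claim~\ref{clm:tool}. Your write-up is somewhat more explicit about which pieces are genuinely second-order and why the crude $\|\Delta_{BC}\|$ bound would incur condition-number losses, but the decomposition and the key lemmas invoked coincide with the paper's argument.
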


\begin{proof}
For the first term, note that both $B,C$ have full column rank, and hence $(CB^\top)^+ = (B^\top)^+C^+$. Therefore the first term can be rewritten as 
$$
[\Delta_{AB}(B^\top)^+] [(\Id + C^+\Delta^3_C)R_{CA}] (A+\Delta^3_A)^\top.
$$
\sloppy By Lemma~\ref{lem:represent}, we have spectral norm bounds for $\Delta_{AB}, \Delta^3_C, \Delta^3_A, R_{CA}$. Therefore we know $\|\Delta_{AB}(B^\top)^+\|\le O(\epsilon\sigma_{min}(A))$ and $[(\Id + C^+\Delta^3_C)R_{CA}]$ is $O(\epsilon)$ close to $\Id$. Therefore $\|[\Delta_{AB}(B^\top)^+] [(\Id + C^+\Delta^3_C)R_{CA}] (\Delta^3_A)^\top\| \le O(\epsilon\sigma_{min}^2(A))$ is trivially $O(\epsilon)$ spectrally bounded, and $[\Delta_{AB}(B^\top)^+] [(\Id + C^+\Delta^3_C)R_{CA}]A^\top$ is $O(\epsilon)$ spectrally bounded by Claim~\ref{clm:tool}. The third term is exactly symmetric.

For the second part, we will first prove the middle part of the matrix $\widehat{\Delta}_{BC} = (B+\Delta^1_B)^\top\Delta_{BC}(C+\Delta^3_C)$ has spectral norm $O(\epsilon)$. This can be done y expanding it to the sum of 4 terms, and use appropriate spectral norm bounds on $\Delta_{BC}$ and its products with $B^\top$ and $C$ from Lemma~\ref{lem:mainperturbterm}. Now we can show $(A+\Delta^1_A)R_{AB} \widehat{\Delta}_{BC}R_{CA}(A+\Delta^3_A)^\top$ is $O(\epsilon)$ spectrally bounded by the first part of Claim~\ref{clm:tool}.
\end{proof}

Next we try to bound the terms with two $\Delta$ factors. 

\begin{claim}
The three terms
$\Delta_{AB}\Delta_{BC}(C+\Delta^3_C)R_{CA}(A+\Delta^3_A)^\top$, $\Delta_{AB} (CB^\top)^+ \Delta_{CA}$,\\ $\left((A+\Delta^1_A)R_{AB}(B+\Delta^1_B)^\top\right)\Delta_{BC}\Delta_{CA}$ are all $O(\epsilon^2)$ spectrally bounded by $AA^\top$.
\end{claim}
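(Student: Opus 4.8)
The plan is to handle each of the three double-$\Delta$ terms by the same philosophy used for the single-$\Delta$ terms, taking care never to pass to a bare product of operator norms of all the factors: doing so for the two terms that contain a \emph{perturbed copy of $A$} on one side would cost a spurious $\sigma_{\max}(A)/\sigma_{\min}(A)$ factor, which can be dimension-dependent. The middle term $\Delta_{AB}(CB^\top)^+\Delta_{CA}$ is immediate. Since $B,C$ have full column rank, $(CB^\top)^+=(B^\top)^+C^+$ with $\|(CB^\top)^+\|\le \sigma_{\min}(B)^{-1}\sigma_{\min}(C)^{-1}$, while Lemma~\ref{lem:represent} gives $\|\Delta_{AB}\|\le O(\epsilon\,\sigma_{\min}(A)\sigma_{\min}(B))$ and $\|\Delta_{CA}\|\le O(\epsilon\,\sigma_{\min}(A)\sigma_{\min}(C))$; multiplying, the $\sigma_{\min}(B),\sigma_{\min}(C)$ factors cancel and $\|\Delta_{AB}(CB^\top)^+\Delta_{CA}\|\le O(\epsilon^2)\sigma_{\min}(A)^2 = O(\epsilon^2)\sigma_m(AA^\top)$, so this term and its transpose are both $\preceq O(\epsilon^2)\sigma_m(AA^\top)\,\Id$, hence $O(\epsilon^2)$-spectrally bounded by $A$.

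For $T_1:=\Delta_{AB}\Delta_{BC}(C+\Delta^3_C)R_{CA}(A+\Delta^3_A)^\top$ the outermost factor $(A+\Delta^3_A)^\top$ has norm $\approx\sigma_{\max}(A)$, so I would pull it out rather than bound it. First absorb $(C+\Delta^3_C)$ into $\Delta_{BC}$: by Lemma~\ref{lem:mainperturbterm}, $\|\Delta_{BC}C\|\le O(\epsilon/\sigma_{\min}(B))$, and $\|\Delta_{BC}\Delta^3_C\|\le\|\Delta_{BC}\|\,\|\Delta^3_C\|\le O(\epsilon^2/\sigma_{\min}(B))$, so $\|\Delta_{BC}(C+\Delta^3_C)\|\le O(\epsilon/\sigma_{\min}(B))$. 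Together with $\|\Delta_{AB}\|\le O(\epsilon\,\sigma_{\min}(A)\sigma_{\min}(B))$ and $\|R_{CA}\|\le 1+O(\epsilon)$ (Lemma~\ref{lem:represent}), the matrix $X:=\Delta_{AB}\Delta_{BC}(C+\Delta^3_C)R_{CA}$ satisfies $\|X\|\le O(\epsilon^2)\sigma_{\min}(A)$, the $\sigma_{\min}(B)$'s cancelling again. Writing $T_1 = XA^\top + X(\Delta^3_A)^\top$, the second summand has norm $\le\|X\|\,\|\Delta^3_A\| = O(\epsilon^3)\sigma_{\min}(A)^2$ and is negligible, while the second part of Claim~\ref{clm:tool} applied with $\Gamma=X$ (its parameter being $O(\epsilon^2)$), and also with $-X$, gives $\pm\tfrac12(XA^\top+AX^\top)\preceq O(\epsilon^2)(AA^\top+\sigma_{\min}(A)^2\Id)$. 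Hence $T_1$ (symmetrized) is $O(\epsilon^2)$-spectrally bounded by $A$.

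The third term $T_3 := (A+\Delta^1_A)R_{AB}(B+\Delta^1_B)^\top\Delta_{BC}\Delta_{CA}$ is the mirror image, pulling $A$ out on the \emph{left}: using $\|B^\top\Delta_{BC}\|\le O(\epsilon/\sigma_{\min}(C))$ (Lemma~\ref{lem:mainperturbterm}) and $\|(\Delta^1_B)^\top\Delta_{BC}\| = O(\epsilon^2/\sigma_{\min}(C))$ we get $\|(B+\Delta^1_B)^\top\Delta_{BC}\|\le O(\epsilon/\sigma_{\min}(C))$, hence $Y:=R_{AB}(B+\Delta^1_B)^\top\Delta_{BC}\Delta_{CA}$ has $\|Y\|\le O(\epsilon^2)\sigma_{\min}(A)$ (the $\sigma_{\min}(C)$'s cancel against $\|\Delta_{CA}\|\le O(\epsilon\,\sigma_{\min}(A)\sigma_{\min}(C))$). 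Then $T_3 = AY + \Delta^1_A Y$, the second summand again negligible, and Claim~\ref{clm:tool} with $\Gamma=Y^\top$ finishes exactly as for $T_1$. The only genuine obstacle here is the one already anticipated before Lemma~\ref{lem:mainperturbterm}: a factor-by-factor spectral-norm estimate loses a condition number, so one must exploit the cancellations in the ``mixed'' bounds of Lemma~\ref{lem:mainperturbterm} ($\|B^\top\Delta_{BC}\|$, $\|\Delta_{BC}C\|$) and use Claim~\ref{clm:tool} to convert a ``$\|X\|\le O(\epsilon^2)\sigma_{\min}(A)$ with one copy of $A$ extracted'' bound into a spectral-boundedness statement; everything else is bookkeeping of the norm bounds supplied by Lemma~\ref{lem:represent}.
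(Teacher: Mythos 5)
Your proof is correct and takes essentially the same route as the paper: bound $\Delta_{AB}\Delta_{BC}(C+\Delta^3_C)R_{CA}$ in spectral norm by $O(\epsilon^2)\sigma_{\min}(A)$ using the ``mixed'' bound $\|\Delta_{BC}(C+\Delta^3_C)\|\lesssim\epsilon/\sigma_{\min}(B)$ from Lemma~\ref{lem:mainperturbterm} together with $\|\Delta_{AB}\|\lesssim\epsilon\sigma_{\min}(A)\sigma_{\min}(B)$, then invoke Claim~\ref{clm:tool}; bound the middle term directly by $O(\epsilon^2)\sigma_{\min}(A)^2$; and treat the third term as the mirror image of the first. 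You are merely more explicit than the paper in two bookkeeping steps the paper leaves implicit, namely splitting off the negligible $X(\Delta^3_A)^\top$ part before applying Claim~\ref{clm:tool} to $XA^\top$, and applying Claim~\ref{clm:tool} to both $X$ and $-X$ to obtain the two-sided semidefinite bound required by the definition of spectral boundedness.
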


\begin{proof}
For the first term, notice that $\|\Delta_{BC}(C+\Delta^3_C)\|$ is bounded by $O(\epsilon/\sigma_{min}(B))$ by Lemma~\ref{lem:mainperturbterm}, and $\|\Delta_{AB}\| = O(\epsilon\sigma_{min}(A)\sigma_{min}(B))$. Therefore we know $\|\Delta_{AB}\Delta_{BC}(C+\Delta^3_C)R_{CA}\| \le O(\epsilon^2 \sigma_{min}(A))$, so by Claim~\ref{clm:tool} we know this term is $O(\epsilon^2)$ spectrally bounded by $AA^\top$. Third term is symmetric.

For the second term, by Lemma~\ref{lem:represent} we can directly bound its spectral norm by $O(\epsilon^2\sigma_{min}^2(A))$, so it is trivially $O(\epsilon^2)$ spectrally bounded by $AA^\top$.
\end{proof}

Finally, for the product $\Delta_{AB}\Delta_{BC}\Delta_{CA}$, we can get the spectral norm for the three factors by Lemma~\ref{lem:represent} and Lemma~\ref{lem:mainperturbterm}. As a result $\|\Delta_{AB}\Delta_{BC}\Delta_{CA}\| \le O(\epsilon^3 \sigma^2_{min}(A))$ which is trivially $O(\epsilon^3)$ spectrally bounded by $AA^\top$.

Combining the bound for all of the terms we get the theorem.
\end{proof}

With that, we now try to prove Lemma~\ref{lem:mainperturbterm}
\newcommand{\tE}{\tilde{E}}

We first prove a simpler version where the perturbation is simply bounded in spectral norm

\begin{lemma}\label{lem:maintermsimple}
Suppose $B,C$ are $n\times m$ matrices and $n \ge m$. Let $R$ be an $n\times n$ matrix such that $\|R-\Id\|\le \epsilon$, and $E$ is a perturbation matrix with $\|E\| \le \epsilon\sigma_{min}(B)\sigma_{min}(C)$ and $(CRB^\top + E)$ is also of rank $m$.

Now let $\Delta = (CRB^\top+E)^+ - (CRB^\top)^+$, then when $\epsilon \ll 1$ we have 
\begin{align*}
\|B^\top \Delta C\| = O(\epsilon), &\quad 
\|B^\top \Delta\| \le O(\frac{\epsilon}{\sigma_{min}(C)}), \\ \|\Delta C\| \le O(\frac{\epsilon}{\sigma_{min}(B)}),&\quad\|\Delta\|\le O(\frac{\epsilon}{\sigma_{min}(B)\sigma_{min}(C)}).
\end{align*}

\begin{proof}
We first give the proof for $\|B^\top \Delta C\|$. Other terms are similar.

Let $U_B$ be the column span of $B$, and $U'_B$ be the row span of $(CRB^\top+E)$. Similarly let $U_C$ be the column span of $C$ and $U'_C$ be the column span of $(CRB^\top+E)$. By Wedin's theorem, we know $U'_B$ is $O(\epsilon)$ close to $U_B$ and $U'_C$ is $O(\epsilon)$ close to $U_C$. As a result, suppose the SVD of $B$ is $U_BD_BV_B^\top$, we know 
$$\sigma_{min}(B^\top U'_B) = \sigma_{min}(V_BD_B U_B^\top U'_B) \ge (1-O(\epsilon)) \sigma_{min}(B).$$
The same is true for $C$: $\sigma_{min}(C^\top U'_C) \ge (1-O(\epsilon)) \sigma_{min}(C)$.

By the property of pseudoinverse, the column span of $(CRB^\top+E)^+$ is $U'_B$, and the row span of $(CRB^\top+E)^+$ is $U'_C$, further, $(CRB^\top+E)^+ = U'_B[(U'_C)^\top (CRB^\top+E)U'_B]^{-1} U'_C$, therefore we can write

$$
B^\top (CRB^\top+E)^+ C = B^\top U'_B[(U'_C)^\top (CRB^\top+E)U'_B]^{-1} (U'_C)^\top C.
$$
Note that now the three matrices are all $n\times n$ and invertible! We can write $B^\top U'_B = ((B^\top U'_B)^{-1})^{-1}$ (and do the same thing for $(U'_C)^\top C$. Using the fact that $P^{-1}Q^{-1} = (QP)^{-1}$, we have
\begin{align*}
B^\top (CRB^\top+E)^+ C & = (((U'_C)^\top C)^{-1} (U'_C)^\top (CRB^\top+E)U'_B(B^\top U'_B)^{-1})^{-1} \\
& = (R + ((U'_C)^\top C)^{-1} (U'_C)^\top E U'_B(B^\top U'_B)^{-1})^{-1}  =: (R+X)^{-1}.
\end{align*}

Here we defined $X = ((U'_C)^\top C)^{-1} (U'_C)^\top E U'_B(B^\top U'_B)^{-1}$. The spectral norm of $X$ can be bounded by
\begin{align*}
\|X\| & \le \|((U'_C)^\top C)^{-1}\| \|E\|\|(B^\top U'_B)^{-1}\|
\\ & = \|E\|\sigma_{min}^{-1}(B^\top U'_B)\sigma_{min}^{-1}(C^\top C'_B) 
\\ & \le O(\epsilon).
\end{align*}
We can write $B^\top \Delta C = B^\top (CRB^\top+E)^+ C - \Id = (\Id + (R-\Id+X))^{-1} - \Id$, and we now know $\|(R-\Id+X)\| \le O(\epsilon)$, as a result $\|B^\top \Delta C \|\le O(\epsilon)$ as desired.

For the term $\|B^\top \Delta\|$, by the same argument we have
\begin{align*}
B^\top (CRB^\top+E)^+  & = ((U'_C)^\top (CRB^\top+E)U'_B(B^\top U'_B)^{-1})^{-1} (U'_C)^\top \\
& =  ((U'_C)^\top C R + (U'_C)^\top E U'_B(B^\top U'_B)^{-1})^{-1}(U'_C)^\top \\
& =  ((U'_C)^\top C (R+X))^{-1} (U'_C)^\top\\
& = (R+X)^{-1} ((U'_C)^\top C)^{-1}(U'_C)^\top.
\end{align*}

On the other hand, we know $B^\top (CRB^\top)^+ = R^{-1}C^+ = R^{-1} ((U_C)^\top C)^{-1} U_C^\top$. We can match the three factors:
\begin{align*}
\|R^{-1} - (R+X)^{-1}\| \le O(\epsilon), &\quad \|R^{-1}\| \le 1+O(\epsilon) \\
\|((U_C)^\top C)^{-1} - ((U'_C)^\top C)^{-1}\| \le O(\epsilon/\sigma_{min}(C)), & \quad \|((U_C)^\top C)^{-1}\| = O(1/\sigma_{min}(C)) \\
\|U_C-U'_C\| \le O(\epsilon), & \quad \|U_C\| = 1.
\end{align*}
Here, first and third bound are proven before. The second bound comes if we consider the SVD of $C = U_CD_CV_C^\top$ and notice that $\|(U'_C)^\top U_C - \Id\| \le O(\epsilon)$. We can write $\Delta_1 = R^{-1} - (R+X)^{-1}$, $\Delta_2 = ((U_C)^\top C)^{-1} - ((U'_C)^\top C)^{-1}$, $\Delta_3 = U_C-U'_C$, then we have
\begin{align*}
B^\top \Delta & = B^\top (CRB^\top+E)^+ - B^\top (CRB^\top)^+ \\ & =( R^{-1}-\Delta_1) (((U_C)^\top C)^{-1} - \Delta_2) (U_C-\Delta_3)^\top -  R^{-1} ((U_C)^\top C)^{-1} U_C^\top.
\end{align*}

Expanding the last equation, we get 7 terms and all of them can be bounded by $O(\epsilon/\sigma_{min}(C))$. The bounds on $\|\Delta C\|$ and $\|\Delta\|$ can be proved using similar techniques.
\end{proof}

\end{lemma}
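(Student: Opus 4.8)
The plan is to produce closed ``sandwich'' formulas for both pseudoinverses $(CRB^\top+E)^+$ and $(CRB^\top)^+$ in which the ill-conditioned part of $B$ and $C$ sits on the \emph{outside} of the product, and then read off the four bounds by matching factors. First I would record the structural facts. (Here $R$ is $m\times m$, so that $CRB^\top$ is well defined, and invertible with $\|R^{-1}\|,\|R^{-1}-\Id\|=O(\epsilon)$.) The matrix $CRB^\top$ has rank exactly $m$, with column space $\mathrm{col}(C)$ and row space $\mathrm{col}(B)$, and the reverse-order law $(CRB^\top)^+ = (B^\top)^+R^{-1}C^+$ holds (an easy direct check of the four Moore--Penrose axioms, since $C$ has full column rank and $B^\top$ full row rank). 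By Weyl's inequality $\sigma_m(CRB^\top)\ge(1-\epsilon)\sigma_{\min}(B)\sigma_{\min}(C)$, hence $\sigma_m(CRB^\top+E)\ge(1-2\epsilon)\sigma_{\min}(B)\sigma_{\min}(C)$ while $\sigma_{m+1}(CRB^\top+E)=0$; so Wedin's Theorem (Lemma~\ref{lem:perturb-subspace}) shows the column and row spaces of $CRB^\top+E$ are $O(\epsilon)$-close to $\mathrm{col}(C)$ and $\mathrm{col}(B)$. I then fix thin SVDs $B=U_BD_BV_B^\top$, $C=U_CD_CV_C^\top$ and choose orthonormal bases $U'_C,U'_B$ of the column and row spaces of $CRB^\top+E$ aligned so that $\|U'_B-U_B\|,\|U'_C-U_C\|=O(\epsilon)$ (the standard alignment of $\epsilon$-close subspaces).

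Next I would derive the perturbed formula. Since $CRB^\top+E$ has column space $\mathrm{col}(U'_C)$ and row space $\mathrm{col}(U'_B)$, we have $(CRB^\top+E)^+ = U'_B M'^{-1}(U'_C)^\top$ with $M'=(U'_C)^\top(CRB^\top+E)U'_B = P'RQ' + (U'_C)^\top E U'_B$, where $P'=(U'_C)^\top C$, $Q'=B^\top U'_B$. A short argument from subspace-closeness (writing $Cx=P_C Cx$ and using $\|(\Id-P'_C)P_C\|=\|(\Id-P'_C)(P_C-P'_C)\|=O(\epsilon)$) gives $\sigma_{\min}(P')\ge(1-O(\epsilon))\sigma_{\min}(C)$ and $\sigma_{\min}(Q')\ge(1-O(\epsilon))\sigma_{\min}(B)$, so $M'^{-1}=Q'^{-1}(R+X)^{-1}P'^{-1}$ with $X=P'^{-1}(U'_C)^\top E U'_B Q'^{-1}$ and $\|X\|\le\|P'^{-1}\|\|E\|\|Q'^{-1}\|=O(\epsilon)$. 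Substituting the SVDs and setting $W_B=U_B^\top U'_B$, $W_C=(U'_C)^\top U_C$ (so $\|W_B-\Id\|,\|W_C-\Id\|=O(\epsilon)$), both pseudoinverses take the \emph{same} five-factor shape
$$(CRB^\top+E)^+ = \big(U'_B W_B^{-1}\big)\,D_B^{-1}\,\big(V_B^\top(R+X)^{-1}V_C\big)\,D_C^{-1}\,\big(W_C^{-1}(U'_C)^\top\big),\qquad (CRB^\top)^+ = U_B\,D_B^{-1}\,\big(V_B^\top R^{-1}V_C\big)\,D_C^{-1}\,U_C^\top,$$
where the first, third and fifth bracketed factors are pairwise $O(\epsilon)$-close and each of norm $O(1)$, while the two diagonal factors are identical with norms $1/\sigma_{\min}(B)$, $1/\sigma_{\min}(C)$.

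Finally I would read off the four bounds. For $\|B^\top\Delta C\|$: left-multiplying by $B^\top=V_BD_BU_B^\top$ and right-multiplying by $C=U_CD_CV_C^\top$ makes the two outer factors \emph{and} the two diagonal factors cancel telescopically (e.g.\ $B^\top U'_B W_B^{-1}D_B^{-1}=V_BD_BW_BW_B^{-1}D_B^{-1}=V_B$, etc.), leaving $B^\top(CRB^\top+E)^+C=(R+X)^{-1}$ and $B^\top(CRB^\top)^+C=R^{-1}$, so $\|B^\top\Delta C\|=\|(R+X)^{-1}-R^{-1}\|\le\|(R+X)^{-1}\|\,\|X\|\,\|R^{-1}\|=O(\epsilon)$ by the resolvent identity $(R+X)^{-1}-R^{-1}=-(R+X)^{-1}XR^{-1}$. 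For $\|B^\top\Delta\|$ (resp.\ $\|\Delta C\|$, $\|\Delta\|$) I would cancel only the $B$-side (resp.\ only the $C$-side, resp.\ nothing) and telescope the remaining product of perturbed factors against its unperturbed version; each telescoping term contributes a single $O(\epsilon)$ from one bracketed factor together with $D_C^{-1}$ (resp.\ $D_B^{-1}$, resp.\ both), giving $O(\epsilon/\sigma_{\min}(C))$, $O(\epsilon/\sigma_{\min}(B))$, $O(\epsilon/(\sigma_{\min}(B)\sigma_{\min}(C)))$.

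The main obstacle is exactly the last point. The naive estimate — bound $\|(CRB^\top+E)^+-(CRB^\top)^+\|$ by standard pseudoinverse perturbation, roughly $\|E\|\,\sigma_m(CRB^\top)^{-2}=O(\epsilon/(\sigma_{\min}(B)\sigma_{\min}(C)))$, and then multiply by $\|B\|\,\|C\|$ to get $\|B^\top\Delta C\|$ — overshoots by a full condition number $\kappa(B)\kappa(C)$, which is dimension-dependent in our application. The point of the SVD bookkeeping above is that the factors $B^\top$ and $C$ literally absorb $D_B^{-1}$ and $D_C^{-1}$, so no $\sigma_{\max}/\sigma_{\min}$ ratio ever appears; the only delicate part is keeping track of which factors cancel and verifying the $O(\epsilon)$-closeness and $O(1)$-boundedness of the ones that do not.
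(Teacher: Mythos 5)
Your proposal is correct and follows essentially the same route as the paper: both use Wedin's theorem to show the column/row spaces of $CRB^\top+E$ are $O(\epsilon)$-close to those of $C$ and $B$, write the pseudoinverse as $U'_B\,[(U'_C)^\top(CRB^\top+E)U'_B]^{-1}(U'_C)^\top$, pull out the full-rank factors $(U'_C)^\top C$ and $B^\top U'_B$ to reduce the core to $(R+X)^{-1}$ with $\|X\|=O(\epsilon)$, and then obtain each of the four bounds by matching/telescoping the $O(\epsilon)$-close outer factors so that $\sigma_{\min}(B)^{-1}$, $\sigma_{\min}(C)^{-1}$ appear only where they must (your explicit five-factor SVD form is just the paper's three-factor matching written out, and it correctly yields $B^\top(CRB^\top)^+C=R^{-1}$ rather than the paper's slightly sloppy $\Id$, and fixes the statement's dimensional typo by taking $R$ to be $m\times m$).
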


Finally we are ready to prove the main Lemma~\ref{lem:mainperturbterm}:

\begin{proof}[Proof of Lemma~\ref{lem:mainperturbterm}]
Using Lemma~\ref{lem:represent}, let $E = E_{bc}^\top$, we can write the matrix before pseudoinverse as
$$
[CB^\top+E]_m = (C+\Delta_C)R_{BC}(B+\Delta_B)^\top + \Delta_{BC}.
$$

We can then apply Lemma~\ref{lem:maintermsimple} on $(C+\Delta_C)R_{BC}(B+\Delta_B)^\top + \Delta_{BC}$. As a result, we know if we let $\Delta' = [CB^\top+E]_m^+ - ((C+\Delta_C)R_{BC}(B+\Delta_B)^\top)^+$, we have the desired bound if we left multiply with $(B+\Delta_B)^\top$ or right multiply with $(C+\Delta_C)$.

We will now show how to prove the first bound, all the other bounds can be proved using the same strategy:

First, we can write 
$$B^\top \Delta' C = -(B+\Delta_B)^\top \Delta' (C+\Delta_C) + (B+\Delta_B)^\top \Delta' C + \Delta_B^\top \Delta' (C+\Delta_C) - \Delta_B^\top \Delta' \Delta_C.$$
All the four terms on the RHS can be bounded by Lemma~\ref{lem:maintermsimple} so we know $\|B^\top \Delta' C\| \le O(\epsilon)$.

On the other hand, let $\Delta'' = ((C+\Delta_C)R_{BC}(B+\Delta_B)^\top)^+ - (C^\top B)^+ = \Delta - \Delta'$. We will prove $\|B^\top \Delta''C\| \le O(\epsilon)$ and then the bound on $\|B^\top \Delta C\|$ follows from triangle inequality.

For $B^\top \Delta''C$, we know it is equal to
$$
B^\top [(B+\Delta_B)^\top]^+ R_{AB}^{-1} (C+\Delta_C)^+C - \Id
$$

\begin{claim}
$\|B^\top [(B+\Delta_B)^\top]^+ R_{AB}^{-1} (C+\Delta_C)^+C - \Id\| \le O(\epsilon)$
\end{claim}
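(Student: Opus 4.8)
The plan is to prove the claim by factoring the pseudoinverse of the rank-$m$ product and showing each resulting factor is $\Id_m + O(\epsilon)$ in spectral norm. First I would use the fact that for $\tilde{B} = B + \Delta_B$ and $\tilde{C} = C + \Delta_C$ of full column rank (which holds by Weyl's theorem since $\|\Delta_B\| \le O(\epsilon\sigma_{\min}(B))$ and $\|\Delta_C\| \le O(\epsilon\sigma_{\min}(C))$ by Lemma~\ref{lem:represent}) and $R_{BC}$ invertible (since $\|R_{BC}-\Id\|\le O(\epsilon) < 1$), the pseudoinverse splits as $\bigl((C+\Delta_C)R_{BC}(B+\Delta_B)^\top\bigr)^+ = \bigl((B+\Delta_B)^\top\bigr)^+ R_{BC}^{-1}(C+\Delta_C)^+$; multiplying on the left by $B^\top$ and on the right by $C$ then gives the three-factor product $\bigl(B^\top((B+\Delta_B)^\top)^+\bigr) R_{BC}^{-1} \bigl((C+\Delta_C)^+ C\bigr)$, which is what we must compare to $\Id_m$.

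Next I would estimate the three factors separately. Using $\tilde{B}^\top (\tilde{B}^\top)^+ = \Id_m$, write $B^\top(\tilde{B}^\top)^+ = \Id_m - \Delta_B^\top(\tilde{B}^\top)^+$, and bound $\|\Delta_B^\top(\tilde{B}^\top)^+\| \le \|\Delta_B\|/\sigma_{\min}(\tilde{B}) = O(\epsilon)$, using $\sigma_{\min}(\tilde{B}) \ge (1-O(\epsilon))\sigma_{\min}(B)$ from Weyl. Symmetrically, $(\tilde{C})^+ C = \Id_m - (\tilde{C})^+\Delta_C$ with $\|(\tilde{C})^+\Delta_C\| = O(\epsilon)$. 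For the middle factor, $\|R_{BC}^{-1}-\Id\| = O(\epsilon)$ and $\|R_{BC}^{-1}\| = 1 + O(\epsilon)$ by a Neumann series estimate from $\|R_{BC}-\Id\| \le O(\epsilon)$.

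Finally, the product equals $(\Id - E_1)R_{BC}^{-1}(\Id - E_2)$ with $\|E_1\|, \|E_2\| = O(\epsilon)$; expanding and applying the triangle inequality together with the bounds on $R_{BC}^{-1}$ yields $\|(\Id-E_1)R_{BC}^{-1}(\Id-E_2) - \Id\| = O(\epsilon)$, which is exactly the claim. I do not anticipate a genuine obstacle here: the only points that need care are checking that all matrices involved have the full rank required to justify the pseudoinverse product identity and the norm bounds on the pseudoinverses, and making sure every perturbation $\Delta_{(\cdot)}$ is measured \emph{relative} to the appropriate $\sigma_{\min}$ rather than only in absolute spectral norm — both of which are supplied directly by Lemma~\ref{lem:represent} together with Weyl's theorem.
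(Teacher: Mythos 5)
Your proposal is correct and takes essentially the same approach as the paper: factor the expression into the three matrices $B^\top[(B+\Delta_B)^\top]^+$, $R_{BC}^{-1}$, and $(C+\Delta_C)^+C$, show each is $O(\epsilon)$-close to $\Id$ (the paper uses the identical identity $(C+\Delta_C)^+C - \Id = -(C+\Delta_C)^+\Delta_C$ for the third factor, remarks the $B$ factor is symmetric, and invokes Lemma~\ref{lem:represent} for $R_{BC}^{-1}$), and then multiply. You are merely a bit more explicit than the paper, in particular by spelling out the pseudoinverse factorization $((C+\Delta_C)R_{BC}(B+\Delta_B)^\top)^+ = ((B+\Delta_B)^\top)^+R_{BC}^{-1}(C+\Delta_C)^+$ (which the paper takes as already done when stating the claim) and by writing out the $B$-side identity $B^\top(\tilde B^\top)^+ = \Id - \Delta_B^\top(\tilde B^\top)^+$.
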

\begin{proof}
We will show all three factors in the first term are $O(\epsilon)$ close to $\Id$. For $R_{AB}^{-1}$ this follows  immediately from Lemma~\ref{lem:represent}. For $(C+\Delta_C)^+C$, we know 
$$
(C+\Delta_C)^+C - \Id = - (C+\Delta_C)^+\Delta_C.
$$
Therefore its spectral norm bound is bounded by $\|\Delta_C\|\sigma_{min}^{-1}(C+\Delta_C) = O(\epsilon)$ (where the bound on $\|\Delta_C\|$ comes from Lemma~\ref{lem:represent}).
\end{proof}

With the claim we have now proven $\|B^\top \Delta''C\| \le O(\epsilon)$, therefore
$$
\|B^\top \Delta C\| \le \|B^\top (\Delta'+\Delta'')C\| \le \|B^\top \Delta'C\|+\|B^\top\Delta'' C\| \le O(\epsilon).
$$

\end{proof}

\subsection{Spectrally Boundedness and Incoherence}
\label{sec:incoherence}

Here we will show under mild incoherence conditions (defined below), if an error matrix $E$ is $\epsilon$-spectrally bounded by $FF^\top$, then the partial matrices satisfy the requirement of Theorem~\ref{thm:whiten}.

\begin{theorem}
If $F$ is $\mu$-incoherent for $\mu \le \sqrt{n/m \log^2 n}$, then when $n \ge \Omega(m\log^2 m)$, with high probability over the random partition of $F$ into $A,B,C$, we know $\sigma_{min}(A) \ge \sigma_{min}(F)/3$ (same is true for $B, C$).

As a corollary, if $E$ is $\epsilon$-spectrally bounded by $F$. Let $a,b,c$ be the subsets corresponding to $A,B,C$, and let $E_{a,b}$ be the submatrix of $E$ whose rows are in set $a$ and columns are in set $b$. Then $E_{a,b}$ (also $E_{b,c},E_{c,a}$) is $O(\epsilon)$-spectrally bounded by the corresponding asymmetric matrices $AB^\top$ ($BC^\top$, $CA^\top$).
\label{t:asymmetricspec}
\end{theorem}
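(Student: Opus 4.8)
The argument splits into the two parts of the statement.

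For the singular-value bound, the plan is to reduce everything to the incoherence-concentration estimate of Lemma~\ref{lem:incoherentconc}. Writing the thin SVD $F = U\Sigma V^{\top}$ (with $U\in\R^{n\times m}$ column-orthonormal, $\Sigma\in\R^{m\times m}$ diagonal, $V\in\R^{m\times m}$ orthogonal), the row-restriction is $A = F_{S_a} = U_{S_a}\Sigma V^{\top}$, so $A^{\top}A = V\Sigma(U_{S_a}^{\top}U_{S_a})\Sigma V^{\top}$ and, $V$ being orthogonal,
\begin{equation*}
\sigma_{\min}(A)^2 = \lambda_{\min}\!\big(\Sigma(U_{S_a}^{\top}U_{S_a})\Sigma\big) \ \ge\ \lambda_{\min}(U_{S_a}^{\top}U_{S_a})\cdot\sigma_{\min}(F)^2 .
\end{equation*}
Since $\mu \le \sqrt{n/(m\log^2 n)} \le n/(m\log n)$ (as $n\gg m$), Lemma~\ref{lem:incoherentconc} applies and gives, w.h.p., for each $i\in\{a,b,c\}$ that $\|U_{S_i}^{\top}U_{S_i} - \tfrac13\Id\| \lesssim \sqrt{\mu m\log n/n} \lesssim (m/n)^{1/4} = o(1)$, where the last step uses $n\ge\Omega(m\log^2 m)$. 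Hence $\lambda_{\min}(U_{S_i}^{\top}U_{S_i}) \ge \tfrac13 - o(1) \ge \tfrac19$, so $\sigma_{\min}(A),\sigma_{\min}(B),\sigma_{\min}(C) \ge \tfrac13\sigma_{\min}(F)$, which is the first part.

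For the corollary I would recycle the decomposition from the proof of Theorem~\ref{thm:relative-davis-kahan}. Assume (as holds in all applications, where the error is a Gram matrix) $E\succeq 0$, let $R = E^{1/2}$, and write $R = FS + B'$ with $S = F^{+}R$ and $B' = \Id_{K^{\perp}}R$, $K$ the column span of $F$. Exactly as there, $\epsilon$-spectral boundedness of $E$ by $F$ forces $\|S\| \lesssim \sqrt{\epsilon}$ and $\|B'\| \lesssim \sqrt{\epsilon}\,\sigma_{\min}(F)$. Let $\Pi_a,\Pi_b$ be the $\{0,1\}$ coordinate-selection matrices of $S_a,S_b$, so $\Pi_aF = A$, $\Pi_bF = B$, $\|\Pi_a\| = \|\Pi_b\| = 1$, and $E_{ab} = \Pi_aE\Pi_b^{\top} = \Pi_aRR^{\top}\Pi_b^{\top}$. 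Substituting $R = FS+B'$ and expanding,
\begin{equation*}
E_{ab} = A(SS^{\top})B^{\top} \ +\ A(\Pi_bB'S^{\top})^{\top} \ +\ (\Pi_aB'S^{\top})B^{\top} \ +\ \Pi_aB'(B')^{\top}\Pi_b^{\top},
\end{equation*}
which is precisely the form of Definition~\ref{def:asymmetric} with $\Delta_1 = SS^{\top}$, $\Delta_2 = \Pi_bB'S^{\top}$, $\Delta_3 = \Pi_aB'S^{\top}$, $\Delta_4 = \Pi_aB'(B')^{\top}\Pi_b^{\top}$.

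It then remains only to verify the four norm bounds: $\|\Delta_1\|\lesssim\epsilon$, $\|\Delta_2\|,\|\Delta_3\|\le\|B'\|\,\|S\|\lesssim\epsilon\,\sigma_{\min}(F)$, and $\|\Delta_4\|\le\|B'\|^2\lesssim\epsilon\,\sigma_{\min}(F)^2$. To put these in the form Definition~\ref{def:asymmetric} demands — bounds in terms of $\sigma_{\min}(A)$ and $\sigma_{\min}(B)$, not $\sigma_{\min}(F)$ — I would plug in the first part: $\sigma_{\min}(F)\le 3\sigma_{\min}(A)$, $\sigma_{\min}(F)\le 3\sigma_{\min}(B)$, hence $\sigma_{\min}(F)^2\le 9\,\sigma_{\min}(A)\sigma_{\min}(B)$. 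This yields $\|\Delta_2\|\lesssim\epsilon\,\sigma_{\min}(B)$, $\|\Delta_3\|\lesssim\epsilon\,\sigma_{\min}(A)$, $\|\Delta_4\|\lesssim\epsilon\,\sigma_{\min}(A)\sigma_{\min}(B)$, i.e. $E_{ab}$ is $O(\epsilon)$-spectrally bounded by $(A,B)$; $E_{bc}$ and $E_{ca}$ are handled identically. I do not anticipate a genuine obstacle: both ingredients (Lemma~\ref{lem:incoherentconc} and the $R=FS+B'$ splitting) already exist, and the only points requiring care are matching the four blocks of the expansion of $E_{ab}$ to the four slots of Definition~\ref{def:asymmetric}, and — the one place the two parts interact — using the singular-value bound to trade $\sigma_{\min}(F)$ for $\sigma_{\min}(A),\sigma_{\min}(B)$, without which the block bounds come out in the wrong quantities.
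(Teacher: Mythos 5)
Your proof is correct and follows essentially the same strategy as the paper: part 1 reduces to matrix concentration on the rows of $U$, and part 2 decomposes the restricted error into the four $\Delta$-blocks of Definition~\ref{def:asymmetric} and then trades $\sigma_{\min}(F)$ for $\sigma_{\min}(A),\sigma_{\min}(B)$ via part 1. The only (cosmetic) divergences are that you delegate the concentration step to Lemma~\ref{lem:incoherentconc} under the equipartition model, whereas the paper re-derives it inline for an independent-Bernoulli assignment of rows to $A,B,C$; and for the corollary you route through the PSD square root $R=E^{1/2}=FS+B'$ (recycled from Theorem~\ref{thm:relative-davis-kahan}) rather than projecting $E$ directly with $\Pi_U,\Pi_{U^\perp}$, but the resulting $\Delta_1,\dots,\Delta_4$ are the same up to transposition, so the block-matching and norm bounds coincide. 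Your explicit flagging of the $E\succeq 0$ hypothesis is actually slightly cleaner than the paper, which asserts the $\Delta$-bounds from spectral boundedness without spelling out that this needs PSD-ness (or a two-sided bound).
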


\begin{proof} Consider the singular value decomposition of $F$: $F = UDV^\top$. Here $U$ is a $n\times m$ matrix whose columns are orthonormal, $V$ is an $m\times m$ orthonormal matrix and $D$ is a diagonal matrix whose smallest diagonal entry is $\sigma_{min}(F)$.

Consider the following way of partitioning the matrix: for each row of $F$, we put it into $A,B$ or $C$ with probability $1/3$ independently.

Now, let $X_i = 1$ if row $i$ is in the matrix $A$, and 0 otherwise. Then $X_i$'s are Bernoulli random variables with probability $1/2$. Suppose $S$ is the set of rows in $A$, let $U_A$ be $U$ restricted to rows in $A$, then we have $A = U_A DV^\top$. We will show with high probability $\sigma_{min}(A) \ge 1/3$.

The key observation here is the expectation of $U_A^\top U_A = \sum_{i=1}^n X_i U_i U_i^\top$, where $U_i$ is the $i$-th row of $U$ (represented as a column vector). Since $X_i$'s are Bernoulli random variables, we know

$$
\E[U_A^\top U_A] = \E[\sum_{i=1}^n X_i U_i U_i^\top] = \frac{1}{3} \sum_{i=1}^n U_i U_i^\top = \frac{1}{3}\Id.
$$

Therefore we can hope to use matrix concentration to prove that $U_A^\top U_A$ is close to its expectation.

Let $M_i = X_i U_iU_i^\top - 1/3 U_iU_i^\top$. Clearly $\E[M_i] = 0$. By the Incoherence assumption, we know $\|U_i\| \le 1/\log n$. Therefore we know $\|M_i\| \le O(1/\log n)$. Also, we can bound the variance
$$
\|\E[\sum_{i=1}^n M_iM_i^\top]\| \le \|\E[\sum_{i=1}^n X_i U_iU_i^\top U_iU_i^\top]\| \le \max \|U_i\|^2\|\sum_{i=1}^n U_iU_i^\top\| \le O(1/\log^2 n).
$$
Here the last inequality is because $\sum_{i=1}^n X_i U_iU_i^\top U_iU_i^\top \preceq \|U_i\|^2 U_iU_i^\top$.

Therefore by Matrix Bernstein's inequality we know with high probability $\|\sum_{i=1}^n M_i\| \le 1/6$. When this happens we know
$$
\|U_A^\top U_A\| \ge \sigma_{min}(\E[U_A^\top U_A]) - \|\sum_{i=1}^n M_i\| \ge 1/6.
$$
Hence we have $\sigma_{min}(U_A) \ge \sqrt{1/6} > 1/3$, and $\sigma_{min}(A) \ge \sigma_{min}(U_A)\sigma_{min}(D) \ge \sigma_{min}(F)/3$. Note that matrices $B$, $C$ have exactly the same distribution as $A$ so the bounds for $B$,$C$ follows from union bound.

For the corollary, if a matrix $E$ is $\epsilon$ spectrally bounded, we can write it as $F\Delta_1 F^\top + F\Delta_2^\top + \Delta_2 F^\top + \Delta_4$, where $\|\Delta_1\|\le \epsilon$, $\|\Delta_2\| \le \epsilon \sigma_{min}(F)$ and $\|\Delta_4\|\le \epsilon \sigma_{min}^2(F)$. This can be done by considering different projections of $E$: let $U$ be the span of columns of $F$, then $F\Delta_1 F^\top$ term corresponds to $\Proj_U E \Proj_U$; $F\Delta_2^\top$ term corresponds to $\Proj_U E \Proj_{U^\perp}$; $\Delta_2 F^\top$ term corresponds to $\Proj_{U^\perp} E \Proj_{U}$; $\Delta_4$ term corresponds to $\Proj_{U^\perp} E \Proj_{U^\perp}$. The spectral bounds are necessary for $E$ to be spectrally bounded.

Now for $E_{a,b}$, we can write it as $A\Delta_1 B^\top + 
A (\Delta_2)_b^\top + (\Delta_2)_a B^\top + (\Delta_4)_{a,b}$, where we also take the corresponding submatrices of $\Delta$'s. Since the spectral norm of a submatrix can only be smaller, we know $\|\Delta_1\| \le \epsilon$, $\|(\Delta_2)_b\| \le \epsilon\sigma_{min}(F) \le 3\epsilon \sigma_{min}(B)$, $\|(\Delta_2)_a\| \le \epsilon\sigma_{min}(F) \le 3\epsilon\sigma_{min}(A)$ and $\|(\Delta_2)_{a,b}\| \le \epsilon\sigma_{min}^2(F) \le 9\epsilon\sigma_{min}(A)\sigma_{min}(B)$. Therefore by Definition~\ref{def:asymmetric} we know $E_{a,b}$ is $9\epsilon$ spectrally bounded by $AB^\top$.
\end{proof}

\section{Proof of Theorem~\ref{t:finalguarantee} and Theorem~\ref{t:finaldeterministic}} 
\label{s:missing_final} 

In this section, we provide the full proof of Theorem~\ref{t:finalguarantee}. 
We start with a simple technical Lemma. 
\begin{lemma}
	If $Q$ is an $\epsilon$-approximate whitening matrix for $A$, then 
	$\|Q\| \leq \frac{1}{1-\epsilon} \|AA^{\top}\|$, $\sigma_{\min}(Q) \geq \frac{1}{1-\epsilon} \|AA^{\top}\|$
	\label{l:approx_evals}  
\end{lemma}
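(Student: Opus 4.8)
The plan is to simultaneously diagonalize $Q$ and $AA^\top$ via the singular value decomposition of $A$. First I would unfold the hypothesis: by definition $Q$ is PSD and $\Qminushalf A$ is $\epsilon$-approximately orthonormal, i.e. $\Norm{A^\top Q^+ A - \Id} \le \epsilon$; since $\epsilon < 1$ this forces $A$ to have full column rank (a kernel vector of $A$ would be a $-1$-eigenvector of $A^\top Q^+A-\Id$), and it gives $(1-\epsilon)\Id \preceq A^\top Q^+ A \preceq (1+\epsilon)\Id$. I would also record that $Q$ may be taken to have range exactly the column span $K$ of $A$ — this is how $Q_a$ is produced by Theorem~\ref{thm:whiten}, and in general is the only regime in which the statement can hold, since adding a large PSD matrix supported on $K^{\perp}$ alters neither $\Qminushalf A$ nor the whitening property but blows up $\Norm{Q}$.

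Second, write the thin SVD $A = \svdU\Sigma\svdV^\top$ with $\svdU\in\R^{n\times m}$ having orthonormal columns spanning $K$, $\Sigma$ positive diagonal, and $\svdV$ orthogonal, so that $AA^\top = \svdU\Sigma^2\svdU^\top$. Since $Q$ is PSD of rank $m$ with range $K$, it equals $\svdU M\svdU^\top$ for a positive definite $M\in\R^{m\times m}$, and $Q^+ = \svdU M^{-1}\svdU^\top$. Substituting and using $\svdU^\top\svdU = \Id$, the hypothesis becomes $\Norm{\Sigma M^{-1}\Sigma - \Id}\le\epsilon$ (the orthogonal $\svdV$ cancels); congruence by $\Sigma^{-1}$ then gives $(1-\epsilon)\Sigma^{-2}\preceq M^{-1}\preceq(1+\epsilon)\Sigma^{-2}$, and by operator-antitonicity of inversion on the PD cone, $\tfrac{1}{1+\epsilon}\Sigma^2 \preceq M \preceq \tfrac{1}{1-\epsilon}\Sigma^2$.

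Finally I would read off the conclusions. Because $\svdU$ has orthonormal columns, $\Norm{Q}=\Norm{M}$ and $\sigma_{\min}(Q)=\sigma_{\min}(M)$, while $\Norm{\Sigma^2}=\Norm{AA^\top}$ and $\sigma_{\min}(\Sigma^2)=\sigma_{\min}(AA^\top)$. The right-hand inequality yields $\Norm{Q}\le\tfrac{1}{1-\epsilon}\Norm{AA^\top}$, the first claim; the left-hand one yields $\sigma_{\min}(Q)\ge\tfrac{1}{1+\epsilon}\sigma_{\min}(AA^\top)\ge(1-\epsilon)\sigma_{\min}(AA^\top)$, which is the intended content of the second displayed bound (with $\sigma_{\min}(AA^\top)$ in place of $\Norm{AA^\top}$ — necessarily so, since $\sigma_{\min}(Q)\le\Norm{Q}$). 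I do not expect a genuine obstacle: the argument is routine monotone-operator manipulation, and the only point requiring care is the range bookkeeping of the first step, so that the identity $A^\top Q^+A = \svdV\Sigma M^{-1}\Sigma\svdV^\top$ is exactly valid.
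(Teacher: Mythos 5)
Your proof is correct and arrives at exactly the same sandwich inequality $\tfrac{1}{1+\epsilon}AA^{\top} \preceq Q \preceq \tfrac{1}{1-\epsilon}AA^{\top}$ that the paper establishes; the paper goes $(1-\epsilon)\Id \preceq \Qminushalf AA^{\top}\Qminushalf \preceq (1+\epsilon)\Id$ and then conjugates by $Q^{1/2}$, whereas you diagonalize via the SVD of $A$ and reduce to an $m\times m$ inequality $\|\Sigma M^{-1}\Sigma - \Id\|\le\epsilon$. This is a repackaging rather than a genuinely different argument, but it buys you something real: the reduction makes the range bookkeeping visible. The paper's conjugation step silently uses $Q^{1/2}\Qminushalf = \Id_{\range(Q)}$, so that $\Qminushalf AA^{\top}\Qminushalf \preceq (1+\epsilon)\Id$ only gives $\Id_{\range(Q)}AA^{\top}\Id_{\range(Q)} \preceq (1+\epsilon)Q$; to get $AA^{\top}\preceq(1+\epsilon)Q$, and for $\sigma_{\min}(Q)$ (least nonzero eigenvalue) to be comparable to $\sigma_{\min}(AA^{\top})$ at all, one needs $\range(A)=\range(Q)$ and $\mathrm{rank}(Q)=m$. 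Your opening paragraph supplies precisely the discussion the paper omits, and your observation that enlarging $Q$ on $\range(A)^{\perp}$ preserves the whitening property while blowing up $\|Q\|$ correctly pins down why the extra hypothesis is necessary. You are also right that the second displayed bound in the lemma is a typo --- the intended conclusion is $\sigma_{\min}(Q)\ge\tfrac{1}{1+\epsilon}\sigma_{\min}(AA^{\top})$, which is how the lemma is in fact invoked in Section~\ref{s:missing_final}.
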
 
\begin{proof}
	
	By the definition of approximate-whitening, we have 
	$$1 - \epsilon \leq \sigma_{\min}( (Q^+)^{1/2} A^{\top} A (Q^+)^{1/2}), \sigma_{\max}( (Q^+)^{1/2} A^{\top} A (Q^+)^{1/2}) \leq 1+\epsilon$$ which implies that 
	$$1 - \epsilon \leq \sigma_{\min}( (Q^+)^{1/2} A A^{\top} (Q^+)^{1/2}), \sigma_{\max}( (Q^+)^{1/2} A A^{\top} (Q^+)^{1/2}) \leq 1+\epsilon$$ 
	\sloppy by virtue of the fact that $(Q^+)^{1/2} A A^{\top} (Q^+)^{1/2} = \left((Q^+)^{1/2} A^{\top} A (Q^+)^{1/2}\right)^{\top}$. Rewriting in semidefinite-order notation, 
	we get that
	$$(1-\epsilon) \Id \preceq (Q^+)^{1/2} A A^{\top} (Q^+)^{1/2} \preceq (1+\epsilon) \Id $$ 
	Multiplying on the left and right by $Q^{1/2}$, we get  
	$$(1-\epsilon) Q \preceq AA^{\top} \preceq (1+\epsilon) Q $$  
	This directly implies $\frac{1}{1+\epsilon} AA^{\top} \preceq Q \preceq \frac{1}{1-\epsilon} AA^{\top}$
	which is equivalent to the statement of the lemma. 
\end{proof} 
Towards proving Theorem~\ref{thm:main-random}, we will first prove the following proposition, which shows that we recover the $\exp(-W)$ matrix correctly:   

\begin{proposition}[Recovery of $\exp(-W)$] Under the random generative model defined in Section~\ref{s:intro}, if the number of samples satisfies

$$N =\poly(n,1/p/,1/\rho) $$ 
the vectors  $\tilde{W}_{i}, i \in [m]$ in Algorithm~\ref{alg:main} are 
 $O(\eta \sqrt{n p} )$-close to $\exp(-W_i)$  where 
$$\eta  = \tilde{O}\left( \sqrt{m \spr}  \rho \right) $$   

\label{p:beforelog}
\end{proposition}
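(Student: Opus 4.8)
The plan is to feed the guarantees of the three engines built above --- robust whitening (Theorem~\ref{thm:whiten}), robust tensor decomposition (Theorem~\ref{thm:main_alg}), and the spectral estimates for the random model (Section~\ref{s:random}) --- through Algorithm~\ref{alg:main}, while tracking the powers of $\rho$ carefully. First I would set up the decomposition: by Proposition~\ref{p:firstspectral} with $L$ a sufficiently large polynomial in $n$, the sub-tensor $\pmitensor_{S_a,S_b,S_c}$ equals $\sum_k \tl a_k\otimes\tl b_k\otimes\tl c_k+\sum_i\gamma_i\otimes\delta_i\otimes\theta_i+E_L$, where $\tl a_k=(\tfrac{\rho}{1-\rho})^{1/3}F_{k,S_a}$ (and likewise $\tl b_k,\tl c_k$) are the signal components, the $\gamma_i,\delta_i,\theta_i$ (indexed by $2\le l\le L,\ k\in[m]$) are the higher-order Taylor components collected into matrices $\Gamma,\Delta,\Theta$, and $\|E_L\|_{\{1,2\}\{3\}}$ is super-polynomially small; the PMI-matrix analogue of Proposition~\ref{p:firstspectral} does the same for $\pmi_{S_a,S_b}$ and its cyclic shifts. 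Using the plug-in-estimator analysis of Section~\ref{sec:sample}, I would take $N=\poly(n,1/p,1/\rho)$ large enough to push $\|\wh{\pmitensor}-\pmitensor\|_{\{1,2\}\{3\}}$ and $\|\wh\pmi-\pmi\|$ below any prescribed inverse polynomial and fold these into the residual. Writing $A=[\tl a_1,\dots,\tl a_m]$ and $B,C$ analogously (so $A$ is exactly the matrix of Proposition~\ref{p:thirdspectral}), the empirical tensor then has the form Theorem~\ref{thm:main_alg} expects, with a negligible residual $E$.

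Next I would pin down the spectral quantities. Under the generative model, Lemma~\ref{l:rowbound} gives that $F$ is $O(1)$-incoherent, and Lemma~\ref{l:boundevals} (with $l=1$) gives $\sigma_{\min}(F_{S_a})\gtrsim\sqrt{np}$ and $\sigma_{\max}(F_{S_a})\lesssim p\sqrt{mn}$ for each part w.h.p., hence $\sigma_{\min}(A)\gtrsim\rho^{1/3}\sqrt{np}$ and $\|AA^\top\|\lesssim\rho^{2/3}mnp^2$ (and likewise for $B,C$). Proposition~\ref{p:thirdspectral} gives that $\Gamma\Gamma^\top$ is $\tau$-\spb by $A$ with $\tau=O(\rho^{2/3}\log n)$; and combining the symmetric spectral-boundedness estimates of Section~\ref{s:random} (e.g.\ Lemma~\ref{l:spectralbound_taylor}) with Theorem~\ref{t:asymmetricspec}, the higher-order-plus-residual part of $\rho^{-1/3}\wh\pmi_{S_a,S_b}$ relative to $AB^\top$ is $O(\rho\log n)$-spectrally bounded by $(A,B)$, and cyclically --- the mismatch between the $\rho^{-1/3}$ used in Algorithm~\ref{a:alg1} and the exact prefactor $(\tfrac{\rho}{1-\rho})^{-1/3}$ costing only a further multiplicative $1+O(\rho)$. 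Then Theorem~\ref{thm:whiten} and its corollary show that $Q_a,Q_b,Q_c$ output by Algorithm~\ref{a:alg1} are $\epsilon$-approximate whitening matrices for $A,B,C$ with $\epsilon=O(\rho\log n)$, so by Lemma~\ref{l:approx_evals} $\|Q_a\|\lesssim\rho^{2/3}mnp^2$ and $\sigma_{\min}(Q_a)\gtrsim\rho^{2/3}np$, and since $Q_a\succeq(1-\epsilon)AA^\top$ the matrices $\Gamma,\Delta,\Theta$ are $O(\rho^{2/3}\log n)$-spectrally bounded by $Q_a^{1/2},Q_b^{1/2},Q_c^{1/2}$.

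With these in hand I would invoke Theorem~\ref{thm:main_alg} on $\wh{\pmitensor}_{S_a,S_b,S_c}$ with $\delta$ a small constant (so the $\tilde O(d^{4+\delta})$ runtime stays polynomial): plugging in $\max_i\|Q_i\|\lesssim\rho^{2/3}mnp^2$, $\min_i\sigma_{\min}(Q_i)\gtrsim\rho^{2/3}np$, $\tau=O(\rho^{2/3}\log n)$, $\epsilon=O(\rho\log n)$, and negligible $\|E\|_{\{1,2\}\{3\}}$, it returns $\hat a_i,\hat b_i,\hat c_i$ that are $O(\eta_0)$-close to $\tl a_i,\tl b_i,\tl c_i$ with
\[
\eta_0\;\lesssim\;\big(\rho^{2/3}mnp^2\big)^{1/2}\Big(\tau^{3/2}+\sigma^{-3/2}\|E\|_{\{1,2\}\{3\}}+\epsilon\Big)/\delta\;=\;\tilde O\!\big(\rho^{4/3}p\sqrt{mn}\big).
\]
(The per-mode permutation Theorem~\ref{thm:main_alg} delivers is common to the three factors, and the global sign of $\hat a_i$ is fixed because $\langle\allones,F_{i,S_a}\rangle$ is bounded away from $0$, so the labelling is consistent.) Finally, step~5 of Algorithm~\ref{alg:main} sets the $S_a$-block of $Y_i$ to $\allones-(\tfrac{1-\rho}{\rho})^{1/3}\hat a_i$; since $(\tfrac{1-\rho}{\rho})^{1/3}\tl a_i=F_{i,S_a}$, this equals $\exp(-W_i)_{S_a}=\allones-F_{i,S_a}$ up to an error $(\tfrac{1-\rho}{\rho})^{1/3}O(\eta_0)=\tilde O(\rho p\sqrt{mn})$, and similarly on the $S_b,S_c$ blocks; concatenating, $\|Y_i-\exp(-W_i)\|=\tilde O(\rho p\sqrt{mn})=O(\eta\sqrt{np})$ for $\eta=\tilde O(\rho\sqrt{mp})$, which is the claim.

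The hard part will be the spectral bookkeeping in the second and third steps: one must bound the whitening accuracy $\epsilon$ and the systematic-error parameter $\tau$ without ever paying a dimension-dependent condition number of $A,B,C$ --- exactly what Theorems~\ref{thm:whiten} and~\ref{thm:main_alg} are designed to avoid, but invoking them cleanly requires the \emph{asymmetric} spectral boundedness of the PMI sub-matrices with the right constants, which leans on the incoherence of $F$ (Lemma~\ref{l:rowbound}) and the boundedness estimates of Section~\ref{s:random}. One also has to verify that the exponents line up: $\|Q\|^{1/2}\lesssim\rho^{1/3}p\sqrt{mn}$ and $\tau^{3/2}=\tilde O(\rho)$ combine to $\tilde O(\rho^{4/3}p\sqrt{mn})$, and the $(\tfrac{1-\rho}{\rho})^{1/3}$ rescaling in the post-processing precisely restores the advertised $\tilde O(\rho p\sqrt{mn})$ column error --- so the bound is tight in exactly the way the statement predicts.
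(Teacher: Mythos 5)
Your proposal is correct and follows essentially the same route as the paper's proof of Proposition~\ref{p:beforelog}: Taylor-expand the PMI tensor via Proposition~\ref{p:firstspectral}, use Lemmas~\ref{l:rowbound} and~\ref{l:boundevals} plus Proposition~\ref{p:thirdspectral} and Theorem~\ref{t:asymmetricspec} to set up Theorems~\ref{thm:whiten} and~\ref{thm:main_alg}, then plug in and rescale by $(\tfrac{1-\rho}{\rho})^{1/3}$. One detail where you are actually a bit more careful than the paper's write-up: you derive $\epsilon=O(\rho\log n)$ for the whitening accuracy by using the fact that the PMI-matrix error carries the full $\rho^l/l$ coefficient (rather than the per-mode $(\rho^l/l)^{1/3}$ scaling that appears in the tensor decomposition), whereas the paper states $\tau=O(\rho^{2/3}\log n)$ for the whitening step and then silently omits the $+\epsilon$ term from the bound of Theorem~\ref{thm:main_alg}; your sharper $\epsilon$ is what makes that term genuinely negligible, so the bookkeeping closes cleanly.
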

\begin{proof} 

The proof will consist of checking the conditions for Algorithms \ref{a:alg1} and \ref{a:alg2} to work, so that we can apply Theorems \ref{thm:main_alg} and \ref{thm:whiten}.   

To get a handle on the PMI tensor, by Proposition~\ref{p:firstspectral}, for any equipartition $S_a, S_b, S_c$ of $[n]$, we can it as
\begin{align*} & \pmitensor_{S_a, S_b, S_c} = \\ 
& \frac{\prior}{1-\prior} \sum_{k \in [m]} F_{k,S_a} \otimes F_{k,S_b} \otimes F_{k,S_c} + 
\sum_{l = 2}^L (-1)^{l+1} \left(\frac{1}{l}\left(\frac{\prior}{1-\prior}\right)^l\right) \sum_{k \in [m]} (\tilde{\taylor}_l)_{k,S_a} \otimes (\tilde{\taylor}_l)_{k,S_b} \otimes (\tilde{\taylor}_l)_{k,S_c} + E_L \end{align*}

We can choose $\displaystyle L = \poly(\log(n,\frac{1}{\rho},\frac{1}{p}))$ to ensure 
\begin{equation} \|E_L\| = o\left(p^{5/2} \rho^{7/3} \sqrt{m} n^2\right) \label{eq:mb1} \end{equation}

Having an explicit form for the tensor, we proceed to check the spectral boundedness condition for Algorithm \ref{a:alg1}.

Let $S_a, S_b, S_c$ be a random equipartition. Let $R_{S_a}$ be the matrix that has as columns the vectors $\displaystyle \left(\frac{1}{l}\left(\frac{\prior}{1-\prior}\right)^l\right)^{1/3} (\tilde{\taylor}_l)_{j, S_a}$, for all $l \in [2,L], j \in [m]$ and let $A$ be the matrix that has as columns the vectors $\left({\frac{\prior}{1-\prior}}\right)^{1/3} F_{j, S_a}$, for all $j \in [m]$. Since $L$ is polynomially bounded in $n$, by Proposition~\ref{p:thirdspectral} we have that with high probability $R_{S_a}$ is $\tau$-spectrally bounded by $A$, for a $\displaystyle \tau = O(\prior^{2/3} \log n)$. Analogous statements hold for $S_b, S_c$. 

Next, we verify the conditions for calculating approximate whitening matrices (Algorithm \ref{a:alg1}). 

Towards applying Theorem~\ref{t:asymmetricspec}, note that if $R_{[n]}$ is the matrix that has as columns the vectors $\displaystyle \left(\frac{1}{l}\left(\frac{\prior}{1-\prior}\right)^l\right)^{1/3} (\tilde{\taylor}_l)_{j}$, for all $l \in [2,L], j \in [m]$, and $D$ is the matrix that has as columns the vectors $\left(\frac{\prior}{1-\prior}\right)^{1/3} F_{j}$, for all $j \in [m]$, $R_{[n]}$ then is $\tau$-spectrally bounded by $D$ for $\displaystyle \tau = O(\prior^{2/3} \log n)$. Furthermore, the matrix $F$ is $O(1)$-incoherent with high probability by Lemma \ref{l:rowbound}. 
Hence, we can apply Theorem~\ref{t:asymmetricspec}, the output of Algorithm \ref{a:alg1} are matrices $Q_a, Q_b, Q_c$ which are $\tau$-approximate whitening matrices for $A,B,C$ respectively.

Next, we will need bounds on 
$$ \min(\sigma_{\min}(Q_a), \sigma_{\min}(Q_b), \sigma_{\min}(Q_c)),  \max(\sigma_{\max}(Q_a), \sigma_{\max}(Q_b), \sigma_{\max}(Q_c))$$ 
to plug in the guarantee of Algorithm \ref{a:alg1}. 

By Lemma~\ref{l:approx_evals}, we have 
$$\sigma_{\max}(Q_a) \leq \frac{1}{1-\tau} \|AA^{\top}\| \lesssim (1+\tau) \|AA^{\top}\|, \sigma_{\min}(Q_a) \geq \frac{1}{1+\tau} \sigma_{\min}(AA^{\top}) \gtrsim (1-\tau) \sigma_{\min}(AA^{\top})$$ 
However, for the random model, applying Lemma~\ref{l:boundevals}, 
\begin{equation} \sigma_{\min}(AA^{\top}) \geq \left(\frac{\prior}{1-\prior}\right)^{2/3} n \spr \gtrsim \prior^{2/3} n \spr,  \sigma_{\max}(AA^{\top}) \leq \left(\frac{\rho}{1-\rho}\right)^{2/3} m n \spr^2 \lesssim \rho^{2/3} m n \spr^2 \label{eq:mb2}\end{equation}  
Analogous statements hold for $B$ and $C$.

Finally, we bound the error due to empirical estimates. Since $\prior \spr m = o(1)$, 
$$\Pr[s_i = 0 \wedge s_j = 0 \wedge s_k = 0] \geq 1 - \Pr[s_i = 1] - \Pr[s_j = 1] - \Pr[s_k = 1] \geq 1 - 3 \spr m \prior = \Omega(1)$$  
Hence, by Corollary \ref{c:samplespmi}, with a number of samples as stated in the theorem, 
\begin{equation} \|\hat{\pmitensor}_{S_a,S_b,S_c} - \pmitensor_{S_a,S_b,S_c}\|_{\{1,2\},\{3\}} \lesssim p^{5/2} \rho^{5/2} \sqrt{m} n^2 \label{eq:mb3}\end{equation}
as well. 

With that, invoking Theorem \ref{thm:main_alg} (with $\|E\|_{\{1,2\},\{3\}}$ taking into account both the $E_L$ term above, and the above error due to sampling), the output of Algorithm~\ref{a:alg2} will produce vectors $v_i$, $i \in [m]$, s.t. $v_i$ is $O(\eta')$-close to $\left(\frac{\rho}{1-\rho}\right)^{1/3} (1 - \exp(-W_i))$, for 

$$\eta' \lesssim  \max(\Norm{Q_a}, \Norm{Q_b}, \Norm{Q_c})^{1/2}  \cdot \left(\tau^{3/2}  + \sigma^{-3/2}(\|E_L\|_{\{1,2\},\{3\}} + \|\hat{\pmitensor}_{S_a,S_b,S_c} - \pmitensor_{S_a,S_b,S_c}\|_{\{1,2\},\{3\}})\right) $$

where $\sigma = \min(\sigma_{\min}(Q_a), \sigma_{\min}(Q_b), \sigma_{\min}(Q_c))$. 

Plugging in the estimates from \eqref{eq:mb1}, \eqref{eq:mb2}, \eqref{eq:mb3} as well as $\tau = O(\rho^{2/3} \log n)$, we get:
\begin{align*} 
\max(\Norm{Q_a}, \Norm{Q_b}, \Norm{Q_c})^{1/2} \tau^{3/2} & \lesssim \sqrt{m n \spr^2 \prior^{2/3}} (\prior^{2/3} \log n)^{3/2} = \prior^{4/3} \sqrt{mn} p \log^{3/2}n \\ 
\sigma^{-3/2}\|E_L\|_{\{1,2\},\{3\}} & \lesssim (\frac{1}{\rho n \spr})^{3/2} \|E_L\|_{\{1,2\},\{3\}} \lesssim \prior^{4/3} \sqrt{mn} p \log^{3/2}n \\
\sigma^{-3/2} \|\hat{\pmitensor}_{S_a,S_b,S_c} - \pmitensor_{S_a,S_b,S_c}\|_{\{1,2\},\{3\}} & \lesssim  \prior^{4/3} \sqrt{mn} p \log^{3/2}n 
\end{align*}
 
which implies the vectors $(\hat{a}_i, \hat{b}_i, \hat{c}_i)$ are $O(\eta')$-close to $\left(\frac{\rho}{1-\rho}\right)^{1/3} (1-\exp(W_i))$, for all $i \in [m]$. 

However, this directly implies that 
$\left(\frac{1-\rho}{\rho}\right)^{1/3} (\hat{a}_i, \hat{b}_i, \hat{c}_i)$ are $O( \eta'/\rho^{1/3}) $-close to $1-\exp(W_i)$, $i \in [m]$, which in turn implies $(\tilde{a}_i,\tilde{b}_i,\tilde{c}_i)$ are $O( \eta'/\rho^{1/3}) $ close to $\exp(W_i)$. 

This implies the statement of the Lemma. 

\end{proof}

Given that, we prove the main theorem. The main issue will be to ensure that taking $\log$ of the values 

\begin{proof}[Proof of Theorem~\ref{t:finalguarantee}]

By Proposition~\ref{p:beforelog}, the vectors $Y_i, i \in [m]$ in Algorithm~\ref{alg:main} are 
 $O(\eta \sqrt{n p})$-close to $\exp(-W_i)$ with $\eta = \tilde{O}\left( \sqrt{m \spr} \rho\right)$ . 
 Let $(Y_i')_j = (Y_i)_j$ if $(Y_i)_j > \exp(-\nu_u) $ and otherwise $(Y_i)_j' = \exp(-\nu_u)$. 
 
 Then we have that $\norm{Y_i'-W_i}\le \norm{Y_i-W_i}$. 
 By the Lipschitzness of $\log(\cdot)$ in the region $[\nu_i,\infty]$
 
 we have that 
 \begin{align}
|(\widehat{W}_i)_j - (W_i)_j| = |\log (Y_i')_j-\log (W_i)_j|\lesssim  |(Y_i)_j'-(W_i)_j|\nonumber
 \end{align}
 
It follows that 
\begin{align}
\norm{\widehat{W}_i - W_i} = \norm{\log Y_i'-\log W_i}\lesssim  \norm{Y_i'-W_i}\nonumber
\end{align}

Therefore recalling $\norm{Y_i'-W_i}\le \norm{Y_i-W_i}\le O(\eta \sqrt{n p})$ we complete the proof. 
\\
\end{proof}

\begin{proof}[Proof of Theorem~\ref{t:finaldeterministic}]

The proof will follow the same outline as the proof of Theorem~\ref{t:finalguarantee}. The difference is that since we only have a guarantee on the spectral boundedness of the second and third-order term, we will need to bound the higher-order terms in a different manner. Given that we have no information on them in this scenario, we will simply bound them in the obvious manner. We proceed to formalize this. 

The sample complexity is polynomial for the same reasons as in the proof of Theorem~\ref{t:finalguarantee}, so we will not worry about it here.

We only need to check the conditions for Algorithms \ref{a:alg1} and \ref{a:alg2} to work, so that we can apply Theorems \ref{thm:main_alg} and \ref{thm:whiten}.

Towards that, first we claim that we can write the PMI tensor for any equipartition $S_a, S_b, S_c$ of $[n]$ as
\begin{align} & \pmitensor_{S_a, S_b, S_c} = \nonumber\\ 
& \frac{\prior}{1-\prior} \sum_{k \in [m]} F_{k,S_a} \otimes F_{k,S_b} \otimes F_{k,S_c} 
- \left(\frac{1}{2}\left(\frac{\prior}{1-\prior}\right)^2\right) \sum_{k \in [m]} G_{k,S_a} \otimes G_{k,S_b} \otimes G_{k,S_c}  \nonumber\\
& + \left(\frac{1}{3}\left(\frac{\prior}{1-\prior}\right)^3\right) \sum_{k \in [m]} H_{k,S_a} \otimes H_{k,S_b} \otimes H_{k,S_c} +E \label{eq:tensor} \end{align}

where $\|E\|_{\{1,2\},\{3\}} \leq \rho^4 m (np)^{3/2}$. Towards achieving this, first we claim that Proposition~\ref{lem:tensor_norm_bound} implies that for any subsets $S_a, S_b, S_c$, 
\begin{align*} \left\|\sum_{k=1}^m \left(1-\exp(-l \bw_k)\right)_{S_a} \otimes \left(1-\exp(-l \bw_k)\right)_{S_b} \otimes \left(1-\exp(-l \bw_k)\right)_{S_c}\right\|_{\{1,2\},\{3\}} \leq m(np)^{3/2}
\end{align*} 
Indeed, if we put $\gamma_k =  \left(1-\exp(-l \bw_k)\right)_{S_a}$, $\delta_k = \left(1-\exp(-l \bw_k)\right)_{S_b}$, $\theta_k = \left(1-\exp(-l \bw_k)\right)_{S_c}$, then we have 
$\|\sum_k \gamma_k \gamma_k^{\top} \| \leq \sqrt{ m n p} $, and similarly for $\delta_k$. Since $\max_k \|\theta_k\| \leq (np)^{1/2}$, the claim immediately follows. 
Hence, \eqref{eq:tensor} follows.

Next, let $R_{S_a}$ be the matrix that has as columns the vectors $\left(\frac{1}{l}\left(\frac{\prior}{1-\prior}\right)^l\right)^{1/3} (\tilde{\taylor}_l)_{j,S_a}$, $l \in [2,L], j \in [m]$ and $A$ is the matrix that has as columns the vectors $\left(\frac{\prior}{1-\prior}\right)^{1/3} (\tilde{\taylor}_1)_{j,S_a}$ for $j \in [m]$ for some $L = O(\poly(n))$, similarly as in the proof of Theorem \ref{t:finalguarantee}.  

We claim that $R_{S_a}R^{\top}_{S_a}$ is $\tau$ spectrally bounded by $\rho FF^{\top}$. 

Indeed, for any $l > 2$, we have $\|\left(\frac{1}{l}\left(\frac{\prior}{1-\prior}\right)^l\right)^{1/3} \tilde{\taylor}_l\| \lesssim \prior^{l/3} \|\tilde{\taylor}_l\| \lesssim \prior^{l/3} \sqrt{m n} p$
Hence, 
\begin{align}
R_{S_a}R^{\top}_{S_a} & \preceq \rho^{2/3} GG^{\top}  + \rho^{4/3} HH^{\top} + \rho^2 LL^{\top} + \sum_{l \geq 4} \prior^{2l/3} m n p^2  \nonumber\\
& \preceq 3 \rho^{2/3} \tau (F F^{\top} + \sigma_{\min}(FF^{\top})) + \prior^{8/3} m n p^2 \precsim \rho^{2/3} \tau (F F^{\top} + \sigma_{\min} (FF^{\top}))
\end{align}
where the first inequality holds since $HH^{\top}, GG^{\top}, LL^{\top}$ are $\tau$-\spb bounded by $F$ and the second since $\sigma_{\min}(FF^{\top})\gtrsim n p$ and $\tau \ge 1$. 
Let $\tau' = \rho^{2/3} \tau$. 
Since we are assuming the matrix $F$ is $O(1)$-incoherent, we can apply Theorem~\ref{t:asymmetricspec}, and claim the output of Algorithm \ref{a:alg1} are matrices $Q_a, Q_b, Q_c$ which are $\tau$-approximate whitening matrices for $A,B,C$ respectively.

By Lemma~\ref{l:approx_evals}, we have again
$$\sigma_{\max}(Q_a) \leq \frac{1}{1-{\tau'}} \lesssim (1+{\tau'}) \|AA^{\top}\|, \sigma_{\min}(Q_a) \geq \frac{1}{1+{\tau'}} \gtrsim (1-{\tau'}) \sigma_{\min}(AA^{\top})$$ 

Then, applying Theorem~\ref{thm:main_alg}, we get that we recover vectors 
$(\hat{a}_i, \hat{b}_i, \hat{c}_i)$ are $O(\eta')$-close to $\left(\frac{\rho}{1-\rho}\right)^{1/3} (1-\exp(W_i))$, for all $i \in [m]$. 
for
$$\eta' \lesssim  \max(\Norm{Q_a}, \Norm{Q_b}, \Norm{Q_c})^{1/2}  \cdot \left({\tau'}^{3/2}  + \sigma^{-3/2}\|E\|_{\{1,2\},\{3\}}\right) $$
Recall that $\tau' = \rho^{2/3} \tau$, and $\norm{Q_a}\le \rho^{2/3} \sigma_{\max}(F) \lesssim \rho^{2/3} \sqrt{mn} p$ and $\|E\|_{\{1,2\},\{3\}} \le \rho^4 m(np)^{3/2}$ and $\sigma \gtrsim \rho^{2/3} np$, we obtain that 
$$\eta'\lesssim \rho^{1/3} \sqrt{mn} p \left((\tau \rho^{2/3})^{3/2} + \frac{\rho^4 m (np)^{3/2}}{(\rho^{2/3}np)^{3/2}} \right) \lesssim \rho^{4/3} \sqrt{m n} p \tau^{3/2}  $$ 
where the last inequality holds since $\rho^{3} m = o(1) = o(\tau)$.  

However, this directly implies that 
$(\frac{1-\rho}{\rho})^{1/3} (\hat{a}_i, \hat{b}_i, \hat{c}_i)$ are $O( \eta'/\rho^{1/3}) = O(\eta)$-close to $1-\exp(-W_i)$, $i \in [m]$, which in turn implies $(\tilde{a}_i,\tilde{b}_i,\tilde{c}_i)$ are $O(\eta)$ close to $\exp(-W_i)$. 

Argument for recovering $W_i$ from $\exp(-W_i)$ is then exactly the same as the one in Theorem \ref{t:finalguarantee}.

\end{proof}

\section{Sample complexity and bias of the PMI estimator}\label{sec:sample}

Finally, we consider the issue of sample complexity. 
The estimator we will use for the PMI matrix will simply be the plug-in estimator, namely:
\begin{equation}\hat{\pmi}_{i,j} = \log \frac{\hat{\Pr}[s_i=0 \wedge s_j = 0]}{\hat{\Pr}[s_i = 0] \hat{\Pr}[s_j = 0]}\label{eq:pluginest}\end{equation}
 Notice that this estimator is biased, but as the number of samples grows, the bias tends to zero. Formally, we can show:

\begin{lemma}  If the number of samples $\samples$ satisfies 
$$\samples \geq \frac{1}{\min_{i \neq j}\{\Pr[s_i=0 \wedge s_j = 0\}} \frac{1}{\delta^2} \log m$$ 
with high probability  $|\hat{\pmi}_{i,j} - \pmi_{i,j}| \leq \delta, \forall i \neq j$.   
\label{l:samplematrix}
\end{lemma}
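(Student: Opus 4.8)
The plan is to reduce the statement to two standard ingredients: (i) \emph{multiplicative} concentration of each of the three empirical probabilities appearing in the plug‑in estimator \eqref{eq:pluginest}, and (ii) the fact that $\log$ is Lipschitz on a bounded interval around $1$, which converts a relative error on the probabilities into an additive error on $\pmi$. Fix $i\neq j$ and write $p_{ij}=\Pr[s_i=0\wedge s_j=0]$, $p_i=\Pr[s_i=0]$, $p_j=\Pr[s_j=0]$, and let $\hat p_{ij},\hat p_i,\hat p_j$ be the corresponding empirical frequencies over the $N$ samples. Each of these is an average of $N$ i.i.d.\ Bernoulli variables, so the core estimate is a Chernoff bound applied to each.

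First I would apply the multiplicative Chernoff bound: for any $\beta\in(0,1)$, $\Pr[\,|\hat p_{ij}-p_{ij}|>\beta p_{ij}\,]\le 2\exp(-\Omega(\beta^2 N p_{ij}))$, and likewise for $\hat p_i,\hat p_j$ with $p_{ij}$ replaced by $p_i\ge p_{ij}$ and $p_j\ge p_{ij}$. Here the key structural observation is that the marginals dominate the pairwise probability, $p_i,p_j\ge p_{ij}\ge \min_{k\neq \ell}p_{k\ell}$, so \emph{the same} value of $N$ controls all three quantities and the worst case is governed by $\min_{k\neq\ell}p_{k\ell}$ — this is exactly why the multiplicative (rather than additive/Hoeffding) bound is the right tool, since it is the product $N p$ that appears in the exponent. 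Taking $\beta=\delta/c$ for a suitable absolute constant $c$, and using $N\gtrsim \frac{1}{\min_{k\neq\ell}p_{k\ell}}\cdot\frac{1}{\delta^2}\log m$ (up to the absolute constant hidden here, which is where one should be slightly careful: the union bound over the $O(m^2)$ pairs and $O(m)$ singletons forces the constant in front of $\log m$ to be large enough that each failure probability is $\le m^{-\Omega(1)}$ with exponent exceeding $2$), we get that with high probability, simultaneously for all $i\neq j$: $\hat p_{ij}\in(1\pm\delta/c)p_{ij}$, $\hat p_i\in(1\pm\delta/c)p_i$, $\hat p_j\in(1\pm\delta/c)p_j$.

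On this event all three empirical probabilities are positive, so $\hat{\pmi}_{ij}$ is well defined, and
\[
\hat{\pmi}_{ij}-\pmi_{ij}=\log\frac{\hat p_{ij}}{p_{ij}}-\log\frac{\hat p_i}{p_i}-\log\frac{\hat p_j}{p_j}.
\]
Since $|\log(1+x)|\le 2|x|$ for $|x|\le 1/2$ (and we may assume $\delta/c\le 1/2$, the regime of large $\delta$ being vacuous), each of the three terms is at most $2\delta/c$ in absolute value, so $|\hat{\pmi}_{ij}-\pmi_{ij}|\le 6\delta/c\le\delta$ once $c\ge 6$; enlarging $c$ only changes $N$ by a constant factor. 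This holds for all $i\neq j$ with high probability, which is the claim. The proof is essentially routine; the only genuinely delicate point is the one flagged above — arranging that a relative error of order $\delta$ in each probability yields additive error \emph{at most} $\delta$ in $\pmi$ while keeping the union bound over $\Theta(m^2)$ events, which is handled purely by bookkeeping of constants. (An entirely analogous argument, using in addition that $\Pr[s_i=0\wedge s_j=0\wedge s_k=0]$ is at most the probability of any two of those events, gives the corresponding bound for $\hat{\pmitensor}$.)
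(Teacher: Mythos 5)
Your proposal is correct and follows essentially the same route as the paper: apply a (multiplicative) Chernoff bound to each of $\hat{\Pr}[s_i=0\wedge s_j=0]$, $\hat{\Pr}[s_i=0]$, $\hat{\Pr}[s_j=0]$, observe that the marginals dominate the pairwise probability so the worst case is governed by $\min_{k\neq\ell}\Pr[s_k=0\wedge s_\ell=0]$, then use a Lipschitz-type bound on $\log(1+x)$ near zero to convert relative error in the probabilities into additive error in $\pmi$, finishing with a union bound over pairs. The only differences from the paper's argument are cosmetic (your $|\log(1+x)|\le 2|x|$ versus the paper's $\tfrac{2}{3}x\le\log(1+x)\le x$, and the explicit constant $c$ you carry through).
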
 
\begin{proof}
Denoting 
$\Delta_{i,j} = \hat{\Pr}[s_i=0 \wedge s_j = 0] - \Pr[s_i=0 \wedge s_j = 0]$ and $\Delta_i =  \hat{\Pr}[s_i=0] - \Pr[s_i=0]$, we get that   
\begin{align*}
\hat{\pmi}_{i,j} & = \log \frac{\hat{\Pr}[s_i=0 \wedge s_j = 0]}{\hat{\Pr}[s_i = 0] \hat{\Pr}[s_j = 0]} = \log \frac{\Pr[s_i=0 \wedge s_j = 0] + \Delta_{i,j}}{(\Pr[s_i = 0] + \Delta_{i}) (\Pr[s_j = 0]+\Delta_j)} \\ 
&= \pmi_{i,j} + \log\left(1 + \frac{\Delta_{i,j}}{\Pr[s_i=0 \wedge s_j = 0]}\right) - \log\left(1+\frac{\Delta_i}{\Pr[s_i = 0]}\right) - \log\left(1+\frac{\Delta_j}{\Pr[s_j=0]}\right) \\ 
\end{align*} 

Furthermore, we have that $\frac{2x}{2+x} \leq \log(1+x) \leq \frac{x}{\sqrt{x+1}}$, for $x \geq 0$, which implies that when $x \leq 1$, 
$\frac{2}{3} x \leq \log(1+x) \leq x$. From this it follows that if 
$$ \max\left(\max_{i,j} \frac{\Delta_{i,j}}{\Pr[s_i=0 \wedge s_j = 0]}, \max_i \frac{\Delta_i}{\Pr[s_i = 0]}\right) \leq \delta$$ 
we have 
$$\pmi_{i,j} - \frac{\delta}{3} \leq \hat{\pmi}_{i,j} \leq \pmi_{i,j} + \delta $$  
 
Note that it suffices to show that if $\samples >  \frac{1}{1 - 4\spr_{\max} m \prior_{\max}} \frac{1}{\delta^2} \log m$, we have 
\begin{equation}
\Pr\left[\frac{\Delta_i}{\Pr[s_i = 0]} > (1+\delta) \vee \frac{\Delta_i}{\Pr[s_i = 0]} < (1-\delta)\right] \leq \exp(-\log^2m) 
\label{eq:sampchernoff_singleton} 
\end{equation}
and 
\begin{equation}
\Pr\left[\frac{\Delta_{i,j}}{\Pr[s_i = 0 \wedge s_j=0]} > (1+\delta) \vee \frac{\Delta_{i,j}}{\Pr[s_i = 0 \wedge s_j=0]}  < (1-\delta)\right] \leq \exp(-\log^2m) 
\label{eq:sampchernoff_pairs} 
\end{equation}
since this implies 
$$ \max\left(\max_{i,j} \frac{\Delta_{i,j}}{\Pr[s_i=0 \wedge s_j = 0]}, \max_i \frac{\Delta_i}{\Pr[s_i = 0]}\right) \leq \delta$$ 
with high probability by a simple union bound. 

Both \eqref{eq:sampchernoff_singleton} and \eqref{eq:sampchernoff_pairs} will follow by a Chernoff bound. 

Indeed, consider \eqref{eq:sampchernoff_singleton} first. We have by Chernoff
$$\Pr\left[\Delta_{i} > \left(1+\sqrt{\frac{\log \samples}{\samples \Pr[s_i=0] } }\right) \Pr[s_i=0]\right] \leq \exp(-\log^2 \samples)$$ 
Hence, if $\samples > \frac{1}{\Pr[s_i=0] } \frac{1}{\delta^2} \log m$, we get that $1-\delta \leq \frac{\Delta_i}{\Pr[s_i = 0]} \leq 1+\delta$ with probability at least $1 - \exp(\log^2 m)$. 

The proof of \eqref{eq:sampchernoff_pairs} is analogous -- the only difference being that the requirement is that $\samples > \frac{1}{\Pr[s_i=0] } \frac{1}{\delta^2} \log m$
which gives the statement of the lemma.

\end{proof} 

Virtually the same proof as above shows that: 

\begin{lemma}  If the number of samples $\samples$ satisfies 
$$\samples \geq \frac{1}{\min_{i \neq j \neq k}\{\Pr[s_i=0 \wedge s_j = 0 \wedge s_k=0\}} \frac{1}{\delta^2} \log m$$ 
with high probability
$|\hat{\pmitensor}_{i,j,k} - \pmitensor_{i,j,k}| \leq \delta, \forall i \neq j \neq k$. 
\end{lemma}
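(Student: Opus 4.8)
The plan is to follow the proof of Lemma~\ref{l:samplematrix} almost verbatim, replacing pair-events by triple-events and keeping track of the three extra double-intersection factors. First I would expand the plug-in estimator. Writing $\Delta_{ijk} = \h{\Pr}[s_i=0\wedge s_j=0\wedge s_k=0] - \Pr[s_i=0\wedge s_j=0\wedge s_k=0]$, and analogously $\Delta_{ij},\Delta_{jk},\Delta_{ki}$ for the pair events and $\Delta_i,\Delta_j,\Delta_k$ for the singletons, the definition~\eqref{eqn:PMItriple} of $PMI3$ together with $\hat{\pmitensor}_{ijk}=PMI3(1-s_i,1-s_j,1-s_k)$, applied to the empirical probabilities, gives
\begin{align*}
\hat{\pmitensor}_{ijk} - \pmitensor_{ijk} = {}& \log\Bigl(1+\tfrac{\Delta_{ij}}{\Pr[s_i=0\wedge s_j=0]}\Bigr) + \log\Bigl(1+\tfrac{\Delta_{jk}}{\Pr[s_j=0\wedge s_k=0]}\Bigr) + \log\Bigl(1+\tfrac{\Delta_{ki}}{\Pr[s_k=0\wedge s_i=0]}\Bigr) \\
& - \log\Bigl(1+\tfrac{\Delta_{ijk}}{\Pr[s_i=0\wedge s_j=0\wedge s_k=0]}\Bigr) - \sum_{t\in\{i,j,k\}}\log\Bigl(1+\tfrac{\Delta_{t}}{\Pr[s_t=0]}\Bigr).
\end{align*}

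Second, I would bound each of these seven logarithmic terms. Using the elementary inequality $\tfrac23|x|\le|\log(1+x)|\le|x|$ valid for $|x|\le\tfrac12$ (the same estimate used in Lemma~\ref{l:samplematrix}), each term is, up to the constant $\tfrac32$, the relative error of the corresponding empirical probability; so by the triangle inequality it suffices to show that, with high probability, every relative error $|\Delta_{ij}|/\Pr[s_i=0\wedge s_j=0]$, $|\Delta_{ijk}|/\Pr[s_i=0\wedge s_j=0\wedge s_k=0]$, $|\Delta_t|/\Pr[s_t=0]$ appearing above is $O(\delta)$. Each empirical probability is the average of $\samples$ i.i.d.\ indicators with mean the corresponding true probability $p$, so the multiplicative Chernoff bound gives $|\h{p}-p|\le\sqrt{\log\samples/(\samples\,p)}\cdot p$ with probability $1-\exp(-\log^2\samples)$; for $\samples$ polynomially large this also forces $\sqrt{\log\samples/(\samples\,p)}\le\tfrac12$, so the log inequality is applicable. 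The binding constraint is the triple event: since intersecting more events only decreases probability, $\min_{i\neq j\neq k}\Pr[s_i=0\wedge s_j=0\wedge s_k=0]$ is no larger than any singleton or pair probability occurring above. Hence choosing $\samples\ge\frac{1}{\min_{i\neq j\neq k}\Pr[s_i=0\wedge s_j=0\wedge s_k=0]}\cdot\frac{1}{\delta^2}\log m$ forces every relative error to be $O(\delta)$ simultaneously, after rescaling $\delta$ by a fixed constant.

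Finally, I would union bound over the $\poly(n)$ distinct ordered triples $(i,j,k)$; the $\polylog$ cost of the union bound is absorbed by the $\exp(-\log^2\samples)$ tail, giving $|\hat{\pmitensor}_{ijk}-\pmitensor_{ijk}|\le\delta$ for all distinct $i,j,k$ with high probability. There is no real obstacle here — the argument is routine — and the only points that deserve a sentence of care are (i) verifying that the triple-intersection probability is the smallest among all events that enter the expansion, so that it is what governs the sample complexity, and (ii) noting that although $\hat{\pmitensor}$ is a biased estimator, its bias is controlled by the very same Chernoff deviations and is therefore also $O(\delta)$.
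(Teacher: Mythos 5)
Your proposal is correct and follows the same route the paper intends: the paper explicitly says this lemma is proved by ``virtually the same proof'' as Lemma~\ref{l:samplematrix}, and your write-up is exactly that adaptation (expand the seven log-factors of $PMI3$, bound each relative deviation by multiplicative Chernoff, observe the triple-intersection probability is the binding one, and union-bound over triples). No gaps.
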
 

As an immediate corollary, we get: 
\begin{corollary} If the number of samples $\samples$ satisfies 
$$\samples \geq \frac{1}{\min_{i \neq j \neq k}\{\Pr[s_i=0 \wedge s_j = 0 \wedge s_k=0\}} \frac{1}{\delta^2} \log m$$ 
\label{c:samplespmi}
$$\samples \geq \frac{1}{\min_{i \neq j \neq k}\{\Pr[s_i=0 \wedge s_j = 0 \wedge s_k=0\}} \frac{1}{\delta^2} \log m$$ 
with high probability for any equipartition $S_a, S_b, S_c$ 
$$\|\hat{\pmitensor}_{S_a, S_b, S_c} - \pmitensor_{S_a, S_b, S_c}\|_{\{1,2\},\{3\}} \lesssim n^3 \delta $$ 
\end{corollary}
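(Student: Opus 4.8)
The plan is to upgrade the entrywise estimate of the preceding lemma to an operator-norm estimate on the subtensor, using the fact that an equipartition produces a subtensor all of whose entries are indexed by pairwise distinct triples. First I would invoke the preceding lemma: with the stated sample complexity, with high probability the event
$$\mathcal{E} := \Big\{\, |\hat{\pmitensor}_{i,j,k} - \pmitensor_{i,j,k}| \le \delta \ \text{ for all pairwise distinct } i,j,k \in [n] \,\Big\}$$
holds. Note that $\mathcal{E}$ is a single high-probability event that simultaneously controls every distinct triple (the union bound over triples is already absorbed into the lemma), so in particular it controls every equipartition at once, with no further union bound needed.

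Next, fix an arbitrary equipartition $S_a, S_b, S_c$ of $[n]$. Since the three blocks are pairwise disjoint, any index triple $(i,j,k)$ with $i \in S_a$, $j \in S_b$, $k \in S_c$ has $i,j,k$ pairwise distinct. Hence, on the event $\mathcal{E}$, every one of the $|S_a|\,|S_b|\,|S_c| = (n/3)^3$ entries of the error subtensor $\hat{\pmitensor}_{S_a,S_b,S_c} - \pmitensor_{S_a,S_b,S_c}$ is bounded by $\delta$ in absolute value. This is precisely the point at which passing to a subtensor indexed by an equipartition pays off: it makes the lemma's ``distinct indices'' hypothesis apply to \emph{all} the entries in play.

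Finally, recall that $\Norm{\cdot}_{\{1,2\},\{3\}}$ is the spectral norm of the flattening of the tensor into an $|S_a|\,|S_b| \times |S_c|$ matrix, and the spectral norm of any matrix is at most its Frobenius norm. Therefore, on $\mathcal{E}$,
$$\Norm{\hat{\pmitensor}_{S_a,S_b,S_c} - \pmitensor_{S_a,S_b,S_c}}_{\{1,2\},\{3\}} \le \Big(\textstyle\sum_{i,j,k}(\hat{\pmitensor}_{i,j,k} - \pmitensor_{i,j,k})^2\Big)^{1/2} \le (n/3)^{3/2}\,\delta \lesssim n^3 \delta,$$
which is the claimed bound (with a great deal of room to spare — indeed even the crude estimate ``operator norm $\le$ number of entries $\times$ maximum entry'' would give it). There is essentially no obstacle here: the content is entirely the observation that disjointness of the blocks makes the entrywise bound applicable throughout, after which a single Frobenius-norm inequality suffices. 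The deliberately loose $n^3\delta$ (rather than the sharper $n^{3/2}\delta$ the Frobenius estimate actually yields) is all that later applications require, since they only need polynomial-in-$n$ control of this error.
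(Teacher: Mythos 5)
Your proof is correct, and it reconstructs exactly the argument the paper treats as immediate: disjointness of the equipartition blocks ensures every entry of the subtensor error is covered by the entrywise high-probability bound of the preceding lemma, and then the spectral-norm-of-flattening $\le$ Frobenius-norm inequality gives the (in fact stronger) bound $(n/3)^{3/2}\delta \lesssim n^3\delta$. Your observation that no further union bound over equipartitions is needed is also right, since the lemma's event already controls all distinct triples simultaneously.
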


\section{Matrix Perturbation Toolbox}

In this section we discuss standard matrix perturbation inequalities.
Many results in this section can be found in Stewart and Sun~\cite{stewart1977perturbation}.
Given $\wh{A} = A + E$, the perturbation in individual singular values can be bounded by Weyl's theorem:

\begin{theorem}[Weyl's theorem]\label{thm:weyl}
Given $\wh{A} = A+E$, we know $\sigma_k(A) - \|E\| \le \sigma_k(\wh{A}) \le \sigma_k(A) + \|E\|$.
\end{theorem}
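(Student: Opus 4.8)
The plan is to deduce the bound from the Courant--Fischer (min--max) variational characterization of singular values. Writing $A \in \mathbb{R}^{p \times q}$ and $r = \min(p,q)$, recall that for $1 \le k \le r$,
$$\sigma_k(A) = \min_{\substack{V \subseteq \mathbb{R}^q \\ \dim V = q-k+1}} \ \max_{\substack{x \in V \\ \norm{x}=1}} \norm{Ax}.$$
First I would observe that for every unit vector $x$, the triangle inequality gives $\bigl|\,\norm{\wh{A}x} - \norm{Ax}\,\bigr| \le \norm{(\wh{A}-A)x} = \norm{Ex} \le \norm{E}$, so that $\norm{Ax} - \norm{E} \le \norm{\wh{A}x} \le \norm{Ax} + \norm{E}$ holds uniformly in $x$.

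Next I would substitute the upper estimate into the variational formula: for any subspace $V$ with $\dim V = q-k+1$ we have $\max_{x \in V,\, \norm{x}=1} \norm{\wh{A}x} \le \bigl(\max_{x \in V,\, \norm{x}=1}\norm{Ax}\bigr) + \norm{E}$, and taking the minimum over all such $V$ yields $\sigma_k(\wh{A}) \le \sigma_k(A) + \norm{E}$. The lower bound then follows by symmetry: applying the inequality just established with the roles of $A$ and $\wh{A}$ interchanged (and perturbation $-E$, whose spectral norm is also $\norm{E}$) gives $\sigma_k(A) \le \sigma_k(\wh{A}) + \norm{E}$, i.e. $\sigma_k(\wh{A}) \ge \sigma_k(A) - \norm{E}$. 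Combining the two inequalities proves the theorem.

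There is essentially no obstacle here. The only points needing care are choosing a consistent form of the variational principle (the dual $\max$--$\min$ form over $k$-dimensional subspaces works equally well and gives the same conclusion), and the degenerate case $p \ne q$, which is handled by zero-padding $A$ and $E$ to square matrices without changing their nonzero singular values or spectral norms. Alternatively one could invoke the classical eigenvalue form of Weyl's inequality applied to the Gram matrices $A^\top A$ and $\wh{A}^\top\wh{A}$, or to the symmetric dilations of $A$ and $\wh{A}$, but the direct argument above is shorter and self-contained, so that is the route I would take.
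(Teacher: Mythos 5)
The paper does not actually prove Theorem~\ref{thm:weyl}; it is stated as a standard result and attributed to Stewart and Sun. Your proof via the Courant--Fischer min--max characterization of singular values is correct and is one of the standard ways to establish it: the uniform bound $\bigl|\,\norm{\wh{A}x}-\norm{Ax}\,\bigr|\le\norm{E}$ for unit $x$ feeds directly into the variational formula to give the upper bound, and the lower bound follows by swapping $A$ and $\wh{A}$. The alternatives you mention (passing to the Gram matrix $A^\top A$ or to the Hermitian dilation $\begin{pmatrix}0 & A\\A^\top & 0\end{pmatrix}$ and invoking the eigenvalue form of Weyl's inequality) would also work and are perhaps the more commonly cited route, but your direct argument is self-contained and shorter. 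No gaps.
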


For singular vectors, the perturbation is bounded by Wedin's Theorem:

\begin{lemma}[Wedin's theorem; Theorem 4.1, p.260 in \cite{stewart1990matrix}]
  \label{lem:perturb-subspace}
  Given matrices $A, E\in\R^{m\times n }$ with $m\ge n$. Let $A$ have the singular value
  decomposition
  \begin{align*}
    A = [U_1, U_2, U_3]\left[
    \begin{array}[c]{cc}
      \Sigma_1 & 0 \\ 0 & \Sigma_2 \\ 0 & 0
    \end{array}
  \right]
  [V_1, V_2]^\top.
  \end{align*}
  Let $\wh A = A + E$, with analogous singular value decomposition. Let $\Phi$ be the matrix of
  canonical angles between the column span of $U_1$ and that of $\wh U_1$, and $\Theta$ be the matrix of
  canonical angles between the column span of $V_1$ and that of $\wh V_1$. Suppose that there exists
  a $\delta$ such that
  \begin{align*}
    \min_{i,j}|[\Sigma_{1}]_{i,i}-[\Sigma_2]_{j,j}|>\delta,\quad \mbox{and } \quad \min_{i,i}|[\Sigma_1]_{i,i}|> \delta,
  \end{align*}
  then
  \begin{align*}
    \|\sin(\Phi)\|^2 + \|\sin(\Theta)\|^2 \le 2{\|E\|^2\over \delta^2}.
  \end{align*}
\end{lemma}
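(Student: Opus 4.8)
The plan is to run the classical residual-equation argument behind Wedin's $\sin\theta$ theorem. First I would record that the perturbed singular vectors satisfy $\hat A\hat V_1 = \hat U_1\hat\Sigma_1$ and $\hat A^\top\hat U_1 = \hat V_1\hat\Sigma_1$; substituting $\hat A = A+E$ and using $A = U_1\Sigma_1 V_1^\top + U_2\Sigma_2 V_2^\top$, I would left-multiply the first identity by $[U_2,U_3]^\top$ (whose columns span the orthogonal complement of the column span of $U_1$) and the second by $V_2^\top$ (spanning the orthogonal complement of the column span of $V_1$). Writing $G = U_2^\top\hat U_1$, $H = U_3^\top\hat U_1$, $K = V_2^\top\hat V_1$, this produces the coupled system
\begin{align*}
G\hat\Sigma_1 - \Sigma_2 K &= U_2^\top E\hat V_1,\\
K\hat\Sigma_1 - \Sigma_2 G &= V_2^\top E^\top\hat U_1,\\
H\hat\Sigma_1 &= U_3^\top E\hat V_1,
\end{align*}
whose right-hand sides all have norm at most $\|E\|$ (they are compressions of $E$ onto subspaces). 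The point of this step is that the sines of the canonical angles are $\|\sin\Phi\| = \|[U_2,U_3]^\top\hat U_1\|$ and $\|\sin\Theta\| = \|V_2^\top\hat V_1\|$ (the standard expressions via projections onto complementary subspaces), so the theorem reduces to bounding $G,H,K$ by $\|E\|$.

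Second, I would diagonalize $\hat\Sigma_1 = \diag(\hat\sigma_b)$ and $\Sigma_2 = \diag(\tau_a)$, which decouples the system entrywise: for each index pair $(a,b)$ the first two equations become a $2\times2$ system in $(G_{ab},K_{ab})$ with symmetric coefficient matrix $\left[\begin{smallmatrix}\hat\sigma_b & -\tau_a\\ -\tau_a & \hat\sigma_b\end{smallmatrix}\right]$ (eigenvalues $\hat\sigma_b\pm\tau_a$, hence smallest singular value $|\hat\sigma_b-\tau_a|$ since both entries are nonnegative), and the third becomes the scalar equation $H_{ab}\hat\sigma_b = (U_3^\top E\hat V_1)_{ab}$. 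Using Weyl's theorem (Theorem~\ref{thm:weyl}), the hypotheses $\min_i[\Sigma_1]_{ii} > \delta$ and $\min_{i,j}|[\Sigma_1]_{ii}-[\Sigma_2]_{jj}| > \delta$ pass to the perturbed quantities, giving $\hat\sigma_b \ge \delta$ and $|\hat\sigma_b - \tau_a|\ge\delta$, so each $2\times2$ system and each scalar equation is invertible with inverse of norm at most $1/\delta$. This yields $G_{ab}^2+K_{ab}^2\le\big((U_2^\top E\hat V_1)_{ab}^2+(V_2^\top E^\top\hat U_1)_{ab}^2\big)/\delta^2$ and $H_{ab}^2\le(U_3^\top E\hat V_1)_{ab}^2/\delta^2$.

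Third, I would sum over $(a,b)$. Since $\|U_2^\top E\hat V_1\|^2+\|U_3^\top E\hat V_1\|^2 = \|[U_2,U_3]^\top E\hat V_1\|^2\le\|E\|^2$ and $\|V_2^\top E^\top\hat U_1\|^2\le\|E\|^2$, adding the $G$-, $H$- and $K$-contributions gives
\begin{align*}
\|\sin\Phi\|^2 + \|\sin\Theta\|^2 \;=\; \|G\|^2+\|H\|^2+\|K\|^2 \;\le\; \frac{2\|E\|^2}{\delta^2},
\end{align*}
which is the claim. The accounting above is exact for the Frobenius norm; for the spectral norm one argues the same way, but keeps the compressed matrices $[U_2,U_3]^\top\hat U_1$ and $V_2^\top\hat V_1$ whole (rather than splitting off $H$) and bounds the induced Sylvester-type operator directly, both norms being unitarily invariant.

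The step I expect to be the main obstacle is the second one. Two things need care: (i) organizing the three residual equations so that the $G$- and $K$-equations are paired into a single Sylvester-type $2\times2$ operator while the left-nullspace block $H = U_3^\top\hat U_1$ is treated separately — this is exactly what keeps the final constant at $2$ rather than $3$; and (ii) transferring the separation hypothesis from the unperturbed $\Sigma_1$ to $\hat\Sigma_1$ via Weyl's theorem, which is what makes the diagonal $2\times2$ systems well-conditioned in the first place. Everything else is routine linear algebra.
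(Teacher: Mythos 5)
The paper does not prove this lemma: it is imported verbatim as a citation to Stewart and Sun (Theorem V.4.1), and is used as a black box. So there is no in-paper proof to compare against, but your reconstruction can still be evaluated on its own.

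Your plan is the standard residual-equation proof of Wedin's $\sin\theta$ theorem, and the skeleton is right: the identities $\hat A\hat V_1=\hat U_1\hat\Sigma_1$, $\hat A^{\top}\hat U_1=\hat V_1\hat\Sigma_1$, compression by $U_2^{\top},U_3^{\top},V_2^{\top}$, the three residual equations for $G=U_2^{\top}\hat U_1$, $K=V_2^{\top}\hat V_1$, $H=U_3^{\top}\hat U_1$, the identification $\|\sin\Phi\|=\|[U_2,U_3]^{\top}\hat U_1\|$, $\|\sin\Theta\|=\|V_2^{\top}\hat V_1\|$, the decoupling into $2\times 2$ blocks with smallest singular value $|\hat\sigma_b-\tau_a|$, and the scalar equation for $H$ — all of that is exactly the textbook argument, and the bookkeeping that keeps the constant at $2$ rather than $3$ is the right way to organize it.

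There is, however, a genuine gap in the step you yourself flag as delicate. You write that Weyl's theorem lets the hypotheses ``pass to the perturbed quantities, giving $\hat\sigma_b\ge\delta$ and $|\hat\sigma_b-\tau_a|\ge\delta$.'' That is not what Weyl gives. Weyl yields $\hat\sigma_b\ge\sigma_b-\|E\|$, hence only $\hat\sigma_b>\delta-\|E\|$ and likewise $|\hat\sigma_b-\tau_a|>\delta-\|E\|$; these do not recover the threshold $\delta$ unless $\|E\|=0$. The root of the difficulty is that the statement as transcribed in the paper puts the gap condition on the \emph{unperturbed} $\Sigma_1$, whereas the actual Stewart--Sun Theorem V.4.1 places the separation and floor conditions on $\hat\Sigma_1$ versus $\Sigma_2$ (mixed perturbed/unperturbed). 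With the book's hypotheses, the diagonal $2\times 2$ systems are well-conditioned \emph{by assumption} and no Weyl step is needed at all. If you insist on proving the version with $\Sigma_1$ in the hypothesis, you must either (i) dispose of the case $\|E\|\ge c\delta$ trivially (since $\|\sin\Phi\|,\|\sin\Theta\|\le 1$ the bound is vacuous there) and then run the argument with the degraded gap $\delta-\|E\|\gtrsim\delta$, which costs a constant factor and does not reproduce the stated $2\|E\|^2/\delta^2$; or (ii) restate the lemma with $\hat\Sigma_1$, matching the source. As written, the inference is simply false, so the proof has a hole at exactly the step you identified as the main obstacle.

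A smaller remark: the spectral-norm case is waved off with ``one argues the same way.'' That is true, but the mechanism is different from the entrywise $2\times 2$ reduction — one bounds the inverse of the Sylvester-type operator $(G,K)\mapsto(G\hat\Sigma_1-\Sigma_2K,\,K\hat\Sigma_1-\Sigma_2G)$ in operator norm using the same gap, rather than summing squared entries. Worth a sentence if you ever write the proof out in full.
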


\paragraph{Perturbation bound for pseudo-inverse}

When we have a lowerbound on $\sigma_{min}(A)$, it is easy to get bounds for the perturbation of pseudoinverse.

\begin{theorem}[Theorem 3.4 in \cite{stewart1977perturbation}]
  \label{sec:pert-bound-pseudo}
Consider the perturbation  of a matrix $A\in\R^{m\times n}$: $B = A+E$. Assume that $rank(A)
=rank(B)= n$, then
\begin{align*}
  \|B^\dag - A^\dag\|\le \sqrt{2}\|A^\dag\|\|B^\dag\| \|E\|.
\end{align*}
\end{theorem}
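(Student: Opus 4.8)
The plan is to reduce everything to the full-column-rank case, which is exactly our situation: since $A,B\in\R^{m\times n}$ with $m\ge n$ and $\mathrm{rank}(A)=\mathrm{rank}(B)=n$, we have $A^\dag A=B^\dag B=\Id_n$, and $P_A:=AA^\dag$, $P_B:=BB^\dag$ are the orthogonal projections onto $\range(A),\range(B)\subseteq\R^m$. First I would record the perturbation identity
\[
B^\dag-A^\dag \;=\; \underbrace{-\,B^\dag E A^\dag}_{=:M_1}\;+\;\underbrace{B^\dag(\Id_m-P_A)}_{=:M_2},
\]
checked directly: writing $E=B-A$ and using $B^\dag B=\Id_n$ gives $-B^\dag EA^\dag=-A^\dag+B^\dag P_A$, and adding $B^\dag(\Id_m-P_A)=B^\dag-B^\dag P_A$ collapses the right-hand side to $B^\dag-A^\dag$.

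The crucial observation --- and what buys a constant factor over a naive triangle-inequality bound --- is that $M_1$ and $M_2$ act on orthogonal parts of the domain: $A^\dag(\Id_m-P_A)=A^\dag-A^\dag AA^\dag=0$ shows $M_1=M_1P_A$, while $(\Id_m-P_A)P_A=0$ shows $M_2=M_2(\Id_m-P_A)$. Hence for any $x$, splitting $x=P_Ax+(\Id_m-P_A)x$ orthogonally and using Cauchy--Schwarz, $\|(B^\dag-A^\dag)x\|\le\|M_1\|\,\|P_Ax\|+\|M_2\|\,\|(\Id_m-P_A)x\|\le\sqrt{\|M_1\|^2+\|M_2\|^2}\,\|x\|$, so $\|B^\dag-A^\dag\|\le\sqrt{\|M_1\|^2+\|M_2\|^2}$. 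Now $\|M_1\|\le\|B^\dag\|\,\|E\|\,\|A^\dag\|$ is immediate. For $M_2$ I would use $B^\dag=B^\dag P_B$ to write $M_2=B^\dag P_B(\Id_m-P_A)$ and then bound $\|P_B(\Id_m-P_A)\|$: every $v=Bz\in\range(B)$ (with $z=B^\dag v$) satisfies $(\Id_m-P_A)v=(\Id_m-P_A)Ez$ because $(\Id_m-P_A)A=0$, so $\|(\Id_m-P_A)v\|\le\|E\|\,\|z\|\le\|E\|\,\|B^\dag\|\,\|v\|$; thus $\|M_2\|\le\|B^\dag\|^2\|E\|$, giving $\|B^\dag-A^\dag\|\le\|B^\dag\|\,\|E\|\,\sqrt{\|A^\dag\|^2+\|B^\dag\|^2}$.

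Finally I would symmetrize: running the same argument centered at $B$ (i.e.\ $A=B+(-E)$) gives $\|B^\dag-A^\dag\|\le\|A^\dag\|\,\|E\|\,\sqrt{\|A^\dag\|^2+\|B^\dag\|^2}$. If $\|B^\dag\|\le\|A^\dag\|$ the first bound already yields $\le\sqrt2\,\|A^\dag\|\,\|B^\dag\|\,\|E\|$, and otherwise the second does; either way we obtain the claimed inequality. (Alternatively one can avoid the case split by noting that $\|P_B(\Id_m-P_A)\|=\|P_A(\Id_m-P_B)\|$ --- valid because $\range(A)$ and $\range(B)$ have equal dimension, so the nonzero singular values of $(\Id_m-P_A)V$ and $(\Id_m-P_B)U$ coincide for orthonormal bases $U,V$ of the two subspaces --- which upgrades the bound on $\|M_2\|$ to $\|A^\dag\|\,\|B^\dag\|\,\|E\|$ directly.) The only genuine obstacle is recognizing that the two error terms must be combined in quadrature rather than additively; everything else is bookkeeping with pseudo-inverse identities and operator-norm inequalities.
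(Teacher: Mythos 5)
The paper imports this result from Stewart (1977) without proof, so there is no in-paper argument to compare against; what matters is whether your argument stands on its own, and it does. Your decomposition $B^\dag - A^\dag = -B^\dag E A^\dag + B^\dag(\Id_m - P_A)$ is the full-column-rank specialization of Wedin's identity (the third term $-(\Id_n - B^\dag B)A^\dag$ vanishes because $B^\dag B = \Id_n$), the orthogonality $M_1 = M_1 P_A$, $M_2 = M_2(\Id_m - P_A)$ correctly delivers the quadrature combination $\sqrt{\|M_1\|^2 + \|M_2\|^2}$ rather than the lossier sum, the bound $\|(\Id_m - P_A)P_B\| \le \|E\|\,\|B^\dag\|$ via $(\Id_m - P_A)Bz = (\Id_m - P_A)Ez$ is right, and the symmetrization (or, cleaner, the equal-dimension fact $\|P_B(\Id_m-P_A)\| = \|P_A(\Id_m-P_B)\|$, which follows from $\|\Id_k - M M^\top\| = \|\Id_k - M^\top M\|$ with $M = \tilde U^\top\tilde V$) closes the gap to $\sqrt{2}\,\|A^\dag\|\,\|B^\dag\|\,\|E\|$. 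This is in fact the standard Wedin--Stewart proof, so you have reconstructed it faithfully; the only thing I would tighten in the write-up is to state explicitly that $M_1 P_A = M_1$ uses $A^\dag A A^\dag = A^\dag$, since that identity is doing the work there.
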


Note that this theorem is not strong enough when the perturbation is only known to be $\tau$-spectrally bounded in our definition.

\end{document}